\documentclass[twoside]{article}



\usepackage[accepted]{icml2024}

\usepackage{graphicx}
\usepackage{adjustbox}
\usepackage{pgfplots}
\usepackage{tikz}
\usetikzlibrary{spy}
\usepackage{graphicx}
\usepackage{subcaption}
\usepackage{xspace}
\usepackage{amsfonts}
\usepackage{algorithm}
\usepackage{algorithmic}
\usepackage{mkolar_definitions}
\usepackage{amsthm,amsmath,amssymb}
\usepackage{bbm}
\usepackage{amsfonts}
\usepackage{commath}
\usepackage{color}
\usepackage{xcolor} 
\usepackage{tablefootnote}
\usepackage{enumitem}
\usepackage{booktabs} 
\usepackage{hyperref}
\definecolor{mydarkblue}{rgb}{0,0.08,0.45}
\hypersetup{
    colorlinks=true,
    linkcolor=mydarkblue,
    citecolor=mydarkblue,
    filecolor=mydarkblue,
    urlcolor=mydarkblue,
}

\newcommand{\Unif}{\mathrm{Unif}}
\newcommand{\prob}{\mathrm{Pr}}

\DeclareMathOperator\Onehot{Onehot}

\newcommand{\MDP}{\Mcal}
\newcommand{\pie}{\pi^{\mathrm{exp}}}
\newcommand{\incrdata}{D}
\newcommand{\incrdist}{\Dcal^{\mathrm{exp}}}

\newcommand{\Ensem}{E}
\newcommand{\ensem}{e}
\newcommand{\nrepeat}{T}

\newcommand{\iterloss}{F}
\newcommand{\Reg}{\mathrm{Reg}}
\newcommand{\smooth}{\sigma}
\newcommand{\based}{d_0}
\newcommand{\poisson}{\mathrm{Poi}}

\newcommand{\poi}{X}
\newcommand{\poiparam}{\lambda}
\newcommand{\poipp}{\tilde{\lambda}}
\newcommand{\incrdistbase}{\Dcal_0}
\newcommand{\pertdata}{Q}

\newcommand{\est}{g}

\newcommand{\distp}{\mathcal{P}}
\newcommand{\distq}{\mathcal{Q}}
\newcommand{\distr}{\mathcal{R}}
\newcommand{\tv}{\mathrm{TV}}
\newcommand{\group}{M}
\newcommand{\elem}{\zeta}
\newcommand{\elemc}{\xi}

\newcommand{\estgap}{\ensuremath{\mathrm{EstGap}}\xspace}

\newcommand{\algms}{\ensuremath{\textsc{MFTPL-P}}\xspace}
\newcommand{\algm}{\ensuremath{\textsc{MFTPL}}\xspace}
\newcommand{\alg}{\ensuremath{\textsc{Bootstrap-DAgger}}\xspace}

\newcommand{\MP}{\ensuremath{\textsc{MP-25}}}
\newcommand{\MPX}{\ensuremath{\textsc{MP-25$(X)$}}\xspace}
\newcommand{\MPF}{\ensuremath{\textsc{MP-25(15)}}\xspace}
\newcommand{\bd}{\ensuremath{\textsc{BD}}\xspace}
\newcommand{\bda}{\ensuremath{\textsc{BD-1}}\xspace}
\newcommand{\bdb}{\ensuremath{\textsc{BD-5}}\xspace}
\newcommand{\bdc}{\ensuremath{\textsc{BD-25}}\xspace}

\newcommand{\nsl}{\ensuremath{\textsc{SL}}\xspace}

\newcommand{\algbc}{Behavior Cloning\xspace}
\newcommand{\bc}{\ensuremath{\textsc{BC}}\xspace}
\newcommand{\dagge}{\ensuremath{\textsc{DAgger}}\xspace}

\newcommand{\trainbase}{\ensuremath{\textsc{Train}\textsc{-}\textsc{Base}}\xspace}
\newcommand{\aggpolicy}{\ensuremath{\textsc{Aggregate}\textsc{-}\textsc{Policies}}\xspace}

\newtheorem{assumption}{Assumption}

\begin{document}

%

%


\twocolumn[
\icmltitle{Agnostic Interactive Imitation Learning: New Theory and Practical Algorithms}



\icmlsetsymbol{equal}{*}

\begin{icmlauthorlist}


\icmlauthor{Yichen Li}{yyy}
\icmlauthor{Chicheng Zhang}{yyy}
\end{icmlauthorlist}

\icmlaffiliation{yyy}{Department of Computer Science, University of Arizona, Tucson, AZ, USA}

\icmlcorrespondingauthor{Yichen Li}{yichenl@arizona.edu}
\icmlcorrespondingauthor{Chicheng Zhang}{chichengz@cs.arizona.edu}

\icmlkeywords{Machine Learning, ICML}

\vskip 0.3in
]


\printAffiliationsAndNotice{}  

\begin{abstract}






We study interactive imitation learning, where a learner interactively queries a demonstrating expert for action annotations, aiming to learn a policy that has performance competitive with the expert, using as few annotations as possible.
We focus on the general agnostic setting where the expert demonstration policy may not be contained in the policy class used by the learner. 
We propose a new oracle-efficient algorithm \algms (abbreviation for Mixed Follow the Perturbed Leader with Poisson perturbations) with provable finite-sample guarantees, under the assumption that the learner is given access to samples from some ``explorative'' distribution over states. Our guarantees hold for any policy class, which is considerably broader than prior state of the art. 
We further propose \alg, a more practical variant that does not require additional sample access.
Empirically, \algms and \alg notably surpass online and offline imitation learning baselines in continuous control tasks.

\end{abstract}




\section{Introduction}



Imitation learning (IL) is a learning paradigm for training sequential decision making agents using expert demonstrations~\cite{bagnell2015invitation}. It seeks to learn a policy whose performance is on par with the expert, with as few expert demonstrations as possible. Compared to reinforcement learning which potentially requires intricate reward engineering,
imitation learning sidesteps this challenge,
making it apt for complex decision making problems~\cite{osa2018algorithmic}.


The format of expert demonstrations often comprises of (state, action) pairs, each representing the expert's action taken under respective state. At first sight, imitation learning seems similar to supervised learning, where one would like to train a policy that maps states to actions recommended by the expert. However, it is now well-known that  these two problems are different in nature~\cite{pomerleau1988alvinn}: compared to supervised learning, imitation learning agents are faced with a new fundamental challenge of \emph{covariate shift}: the state distribution the learner sees at the test stage can be very different from that at the training stage. To elaborate, upon deploying the trained policy, its imperfection in mimicking the expert can result in \emph{compounding error}: the learner may 
make an initial mistake and enter a state not covered by the distribution of expert demonstrations, and takes another incorrect action due to lack of training data coverage in this state. This may lead to sequences of consecutive states that were unseen in the expert demonstrations, ultimately resulting in poor performance. Addressing such problem calls for better data collection methods beyond naively collecting expert trajectories; as summarized by~\cite{bojarski2016end} in the context of imitation learning for autonomous driving, 
``Training with data from only the human driver is not sufficient. The network must learn how to recover from mistakes''.




To cope with the covariate shift challenge, the interactive imitation learning paradigm has been proposed and used in practice~\cite{ross2010efficient,ross2011reduction,ross2014reinforcement,sun2017deeply,pan2020imitation,celemin2022interactive}. Instead of having only access to offline expert trajectories, in interactive imitation learning, the learner has the freedom to select states to ask for expert annotations. This allows more targeted feedback and gives the learning agent opportunities to learn to ``recover from mistakes'', and thus achieve better performance.


Recent years have seen many exciting developments on designing provably efficient interactive imitation learning algorithms, using new algorithmic approaches such as 
distribution / moment matching~\cite{ke2020imitation,swamy2021moments} and online classification / regression~\cite{rajaraman2021value,sekhari2023selective}. Most of these works rely on some realizability assumption: they either assume that the learner is given a policy class that contains the expert policy~\cite{rajaraman2021value,sekhari2023selective,sun2017deeply}, or that the learner is given some function class that contains reward or value functions of the underlying environment MDP~\cite{swamy2021moments,swamy2022minimax,sun2019provably}.

Although realizability assumptions are useful in driving the development of theory and practical algorithms, misspecified model settings are common in practice (e.g. in  autonomous driving~\cite{pan2020imitation}). Thus, it is important to design 
policy search-based
imitation learning algorithms that can work in general agnostic (i.e., nonrealizable) settings. 
However, the fundamental computational and statistical limits of imitation learning in the general agnostic  setting are not well-understood. 
An important line of work~\cite{ross2011reduction,ross2014reinforcement,sun2017deeply} tackles this question by establishing a general reduction from agnostic interactive imitation learning to no-regret online learning~\cite{shalev2011online} (see Section~\ref{sec:prelims} below for a recap). Its key intuition is that, with access to a learner that can perform online prediction by swiftly adapting to nonstationary data, one can use it to find a policy that mostly agrees with the expert on its own state visitation distribution. In other words, online learning can be utilized to learn a policy that recovers from its own mistakes.
Utilizing this general reduction framework, agnostic imitation learning algorithms with provable computational 
and finite-sample statistical efficiency guarantees have been proposed~\cite{cheng2019accelerating,cheng2019predictor,li2022efficient}. In the discrete-action setting, the development so far has been limited: state-of-the-art efficient algorithms~\cite{li2022efficient} rely on a strong technical ``small separator'' assumption on the policy class, which is only known to hold for a few policy classes (such as disjunctions and linear classes)~\cite{syrgkanis2016efficient,dudik2020oracle}.
This motivates our main question: can we design computationally and statistically efficient imitation learning algorithms in the general agnostic setting for \emph{general policy classes}?











Our contributions are twofold:





\begin{itemize}
\item Theoretically, we design a provably efficient online imitation learning algorithm that enjoys no-regret guarantees for discrete action spaces for general policy classes. 
The no-regret property guarantees the learning agent's swift adaptation to data distributions it encounters, ensuring competitiveness to the expert policy~\cite{ross2011reduction}.
Specifically, our algorithm, \emph{Mixed Follow the Perturbed Leader with Poisson Perturbations} 
(abbrev. \algms), assumes sample access to a distribution $d_0$ that ``covers'' the state visitation distributions of all policies in the policy class of interest. Algorithmically, \algms can be viewed as maintaining an ensemble of policies, each member of which is trained using  historical expert demonstration data combined with noisy perturbation examples drawn from $d_0$. 

Inspired by recent analysis of efficient smoothed online learning algorithms~\cite{haghtalab2022oracle,block2022smoothed}, 
we prove that \algms: (1) has a sublinear regret guarantee, which can be easily converted into a guarantee of its output policy's suboptimality; (2) is computationally efficient, assuming access to an offline  learning oracle.
Our computational efficiency result relies on arguably much weaker assumptions than previous state-of-the-art efficient learning algorithms, whose guarantees require strong assumptions on the policy class~\cite{li2022efficient} or convexity of loss function~\cite{cheng2018convergence,cheng2019accelerating,cheng2019predictor,sun2017deeply}.





%

\item Empirically, we verify the utility of using sample-based perturbations in \algms. Furthermore, 
we evaluate \algms and show that it outperforms the popular \dagge and \algbc baselines across several continuous control benchmarks. 
Inspired by the ensemble nature of \algms, we also propose and evaluate a practical approximation of it, namely \alg, that avoids sample access to $d_0$ and achieves competitive performance. 
We also investigate the expert demonstration data collected by \alg and show that it gathers pertinent expert demonstration data more efficiently than \dagge.




\end{itemize}

\section{Related Work}
\label{sec:related_works}
\textbf{Imitation Learning with Interactive Expert.} Existing works in interactive imitation learning established a solid foundation to tackle covariate shift,  with the help of an interactive demonstration expert. Early works reduced interactive imitation learning to offline learning~\cite{ross2010efficient,daume2009search} and~\cite{ross2011reduction} proposed a general reduction from interactive imitation learning to online learning.
This major line of research~\cite{ross2011reduction,ross2014reinforcement,sun2017deeply,cheng2018convergence,cheng2019predictor,cheng2019accelerating,rajaraman2021value,li2022efficient} provided provably efficient online regret guarantees, which can be translated to guarantees of learned policy's competitiveness with expert policy. 
It has been shown in~\cite{rajaraman2021value} that interactive imitation learning can be significantly more sample efficient in favorable environments than its offline counterpart.
Recently,~\cite{sekhari2023selective} reduced interactive imitation learning to online regression, which assumes that the expected value of expert annotation as a function of state lies in a real-valued regressor class. 




As discussed in the introduction section, most of these works make some realizability assumptions.
In the absence of such realizability assumption (i.e., the agnostic setting), 
existing  guarantees  require 
convexity of the loss functions with respect to policy's pre-specified parameters (which may not hold for general policy classes) ~\cite{cheng2018convergence,cheng2019accelerating,cheng2019predictor,sun2017deeply}
or are only applicable to a few special policy classes~\cite{li2022efficient}. 

Another line of work studies interactive imitation learning with \emph{online expert intervention feedback}~\cite{zhang2016query,menda2017dropoutdagger,menda2019ensembledagger,cui2019uncertainty,kelly2019hg,hoque2021thriftydagger,spencer2022expert}; here it is assumed that the learner has access to an expert in real time; 
the learner may cede control to the expert to request for  demonstrations (machine-gated setting) or the expert can actively intervene (human-gated setting). Different from our learning objective, these works have a focus on ensuring training-time safety. 

\textbf{Imitation Learning without Interactive Expert.} 
Given only offline expert demonstrations, \algbc (\bc) naively reduced imitation learning to offline classification~\cite{ross2010efficient,syed2010reduction}.
By further assuming the ability to interact with the environment, generative adversarial imitation learning (GAIL)~\cite{ho2016generative} and following line of works~\cite{sun2019provably,spencer2021feedback} applied moment matching to tackle covariate shift.
\cite{chang2021mitigating} replaced the requirement of environment access by combining model estimation and pessimism.
DART \cite{laskey2017dart} accessed neighborhood states of the expert trajectories by injecting noise during the expert's demonstrations, achieving performance comparable to \dagge. 
DRIL~\cite{brantley2019disagreement} trained an ensemble of policies using expert demonstrations, then leveraged the variance among ensemble predictions as a cost, which was optimized through reinforcement learning together with the classification loss on expert set.
These works are different from ours, in that they do not assume access to an expert that can provide interactive demonstrations in the training process. 
~\cite{ke2020imitation,swamy2021moments,swamy2022minimax} formulates
imitation learning as a distribution matching problem, and further reduce it to solving two-player zero-sum games, which can be solved either interactively or offline, however their guarantees only hold under realizability assumptions. 
\textbf{Concentrability and Smoothness.}
One key assumption we make is that the learning agent has sample access of some covering distribution, so that all policy's visitation distributions are ``smooth'' with respect to it (Assumption~\ref{assumption:smooth-base}).
This allows us to design provably-efficient imitation learning algorithms using techniques for smoothed online learning~\cite{haghtalab2022oracle,block2022smoothed}.
Our assumption is related to the boundedness of the concentrability coefficient commonly used in offline reinforcement learning ~\cite{munos2007performance,munos2008finite,chen2019information,xie2020q,xie2021batch},
which was first introduced 
by Munos~\cite{munos2003error}. 
Concentrability in general holds for MDPs with ``noisy'' transitions (i.e. nontrivial probability of transitioning to all potential next states) but can also hold for deterministic transitions~\cite{szepesvari2005finite}. The concentrability assumptions most related to ours are  in~\cite{xie2020q,xie2022role}.
However, note that our Assumption~\ref{assumption:smooth-base} 
is solely with respect to distributions over states instead of (state, action) pairs. After all, unlike standard offline reinforcement learning, in IL, we neither seek to learn the optimal value function nor assume access to a candidate value function class.



\section{Preliminaries}
\label{sec:prelims}

\textbf{Basic notations.} Denote by $[N] = \cbr{1, \ldots, N}$. 
Given a set $\Ecal$, denote by $\Delta(\Ecal)$ the set of all probability distributions over it; when $\Ecal$ is finite, elements in $\Delta(\Ecal)$ can be represented by probability vectors in $\RR^{|\Ecal|}$: $\cbr{ (w_e)_{e \in \Ecal}: \sum_{e \in \Ecal} w_e = 1, w_e \geq 0, \forall e \in \Ecal}$.
Given a function $f: \Ecal \to \RR$, denote by $\| f \|_\infty := \max_{e \in \Ecal} |f(e)|$.

\textbf{Markov decision process.}
We study imitation learning for sequential decision making, where we model the environment as a  Markov decision process (MDP). 
An episodic MDP $\MDP$ is a tuple $(\Scal, \Acal, \rho, P, C, H)$, where $\Scal$ is the state space, $\Acal$ is the action space, $\rho$ is the initial state distribution, $P: \Scal \times \Acal \to \Delta(\Scal)$ is the transition probability distribution, $C: \Scal \times \Acal \to \Delta([0,1])$ is the cost distribution, 
$H \in \NN$ is the length of the time horizon. Without loss of generality, we assume that $\MDP$ is layered, in that $\Scal$ can be partitioned to $\cbr{\Scal_t}_{t=1}^H$, where for every step $t$, $s \in \Scal_t$ and action $a \in \Acal$, $P(\cdot \mid s, a)$ is supported on $\Scal_{t+1}$.

\textbf{Agent-environment interaction and policy.} An agent interacts with MDP in one episode by first observing initial state $s_1 \sim \rho$, and for every step $t = 1, \ldots, H$, takes an action $a_t$, receives cost $c_t \sim C(\cdot \mid s, a)$, and transitions to next state $s_{t+1} \sim \PP(\cdot \mid s_t, a_t)$. 
A stationary (history-independent) policy $\pi: \Scal \to \Delta(\Acal)$ maps a state to a distribution over actions, which can be used by the agent to take actions, i.e. $a_t \sim \pi(\cdot \mid s_t)$ for all $t$.

\textbf{Value Functions.} Given policy $\pi$, for every $t \in [H]$ and $s \in \Scal_t$, denote by $V^t_\pi(s) = \EE\sbr{ \sum_{t'=t}^{H} c_{t'} \mid s_t = s, \pi }$ and $Q^t_\pi(s, a) = \EE\sbr{ \sum_{t'=t}^{H} c_{t'} \mid (s_t, a_t) = (s,a), \pi }$ the state-value function and action-value function of $\pi$, respectively. When it is clear from context, we will abbreviate $V^t_\pi(s)$ and $Q^t_\pi(s,a)$ as $V_\pi(s)$ and $Q_\pi(s,a)$, respectively. Denote by $J(\pi) = \EE_{s_1 \sim \rho} \sbr{ V_\pi(s_1) }$ the expected cost of policy $\pi$.  Given policy $\pi$, denote by $d_\pi^{\MDP} (\cdot) = \frac1H \sum_{t=1}^H \PP( s_t = \cdot )$ the state visitation distribution of $\pi$ under $\MDP$; when it is clear from context, we will abbreviate $d_\pi^{\MDP}$ as $d_\pi$. 

To help the learner make decisions, the learner is given a policy class $\Bcal$, which is a structured set of deterministic policies $\pi: \Scal \to \Acal$. Denote by $\Pi_\Bcal = \cbr{ \pi_w(a \mid s) := \sum_{h \in \Bcal} w_h I(a = h(s)): w \in \Delta(\Bcal) }$ the mixed policy class induced by $\Bcal$. Denote by $\pie$ the expert's policy. The learning setting is said to be \emph{realizable} if it is known apriori that $\pie \in \Bcal$; otherwise, the learning setting is said to be  \emph{agnostic}.
We will use $S$, $A$, $B$ to denote $|\Scal|$, $|\Acal|$, and $|\Bcal|$ respectively.\footnote{Although we assume $S$ and $B$ to be finite, as we will see, our sample complexity results only scale with $\ln S$ and $\ln B$.} 


\begin{definition}
A (MDP, expert policy) pair $(\MDP,\pie)$
is said to be $\mu$-recoverable with respect to loss $\ell$, if 
for all $s \in \Scal$ and $a \in \Acal$, 
$
Q_{\pie}(s,a) - V_{\pie}(s) \leq \mu \cdot \ell( a, \pie(s) ). 
$
\end{definition}

Two notable examples are: 
\begin{itemize}[itemsep=0.3em]
\item $\ell(a, a') = I(a \neq a')$ is the 0-1 loss. Then $\mu$-recoverability implies that for all $a \neq \pie(s)$, $Q_{\pie}(s,a) - V_{\pie}(s) \leq \mu$, which is suitable for discrete-action settings~\cite{ross2011reduction}.

\item $\ell(a, a') = \| a - a' \|$ is the absolute loss. Then a sufficient condition for $\mu$-recoverability is that $Q_{\pie}(s,a)$ is $\mu$-Lipschitz in $a$ with respect to $\| \cdot \|$. 
This is suitable for continuous-action settings~\cite{pan2020imitation}. 
\end{itemize}

In prior works~\cite{ross2010efficient,rajaraman2021value}, it has been demonstrated that for cases where $(M, \pie)$ is $\mu$-recoverable with $\mu \ll H$, i.e., the expert can recover from mistakes with little extra cost,
interactive imitation learning can achieve a much lower sample complexity than offline imitation learning.  



\textbf{Reduction from Interactive Imitation Learning to Online Learning.} \citet{ross2011reduction} proposes a general reduction from interactive imitation learning to no-regret online learning, which we will frequently refer to as the \emph{online imitation learning framework} throughout the paper.
As our algorithm design and performance guarantees will be under this framework, we briefly recap it below. Its key insight is to simulate an $N$-round online learning game between the learner and the environment: at round $n$, the learner chooses a policy $\pi_n$, and the environment responds with loss function $\iterloss_n$. 
The learner then incurs a loss of $\iterloss_n(\pi_n)$,
and observes as feedback a sample-based approximation of $\iterloss_n(\cdot)$.
Here, 
$\iterloss_n (\pi) := \EE_{s \sim d_{\pi_n}, a \sim \pi(\cdot \mid s)} \ell(a, \pie(s))$ is carefully chosen as the expected loss of policy $\pi$ with respect to the expert policy $\pie$ under the state visitation distribution of $\pi_n$. 
$\iterloss_n(\cdot)$ are naturally approximated by
the average loss on supervised learning examples $(s, \pi^E(s))$, whose feature and label parts are sampled from $d_{\pi_n}$ and queried from expert $\pi^E$, respectively. 
\citet{ross2011reduction} shows that, if $\cbr{\pi_n}_{n=1}^N$ has a low online regret,
a policy returned uniformly at random from $\cbr{\pi_n}_{n=1}^N$ has an expected cost competitive with the expert. Formally: 
\begin{theorem}[\citet{ross2011reduction}]
\label{thm:reduction}
Suppose $(\MDP,\pie)$
is $\mu$-recoverable with respect to $\ell$. Define the regret of the sequence of policies $\cbr{\pi_n}_{n=1}^N$ w.r.t. policy class $\Bcal$ as:
\begin{equation}
\Reg(N)
:= 
\sum_{n=1}^N \iterloss_n(\pi_n) - \min_{\pi \in \Bcal} \sum_{n=1}^N \iterloss_n(\pi)
.
\label{eqn:regret-pi-n}
\end{equation}
Then $\hat{\pi}$, which is by choosing a policy uniformly at random from $\cbr{\pi_n}_{n=1}^N$ and adhering to it satisfies:
\[
J(\hat{\pi}) - J(\pie)
\leq 
\mu H \del{ 
\min_{\pi \in \Bcal} \frac{1}{N} \sum_{n=1}^N \iterloss_n(\pi) + \frac{\Reg(N)}{N} }.
\]
\end{theorem}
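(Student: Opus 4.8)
The plan is to invoke the performance difference lemma to rewrite the suboptimality $J(\hat\pi) - J(\pie)$ as an average of per-round advantages of the $\pi_n$'s against the expert, bound each advantage using $\mu$-recoverability, recognize the result as $\tfrac1N\sum_n \iterloss_n(\pi_n)$, and then split that quantity using the definition of $\Reg(N)$.

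First I would note that since $\hat\pi$ samples $n \sim \Unif([N])$ and then commits to $\pi_n$ for the whole episode, $J(\hat\pi) = \frac1N \sum_{n=1}^N J(\pi_n)$, so it suffices to control $\frac1N\sum_{n=1}^N \del{J(\pi_n) - J(\pie)}$. For a fixed $n$, I would apply the performance difference lemma; using that $\MDP$ is layered (so the time index is determined by the state) and that $d_{\pi_n}(\cdot) = \frac1H \sum_{t=1}^H \PP(s_t = \cdot)$, it gives
\[
J(\pi_n) - J(\pie) = H \cdot \EE_{s \sim d_{\pi_n}}\sbr{ \EE_{a \sim \pi_n(\cdot \mid s)} Q_{\pie}(s,a) - V_{\pie}(s) }.
\]

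Next I would plug in the $\mu$-recoverability hypothesis $Q_{\pie}(s,a) - V_{\pie}(s) \leq \mu\, \ell(a, \pie(s))$ pointwise, which bounds the bracketed term by $\mu\, \EE_{a \sim \pi_n(\cdot\mid s)} \ell(a,\pie(s))$; taking expectation over $s \sim d_{\pi_n}$ and matching this against the definition $\iterloss_n(\pi_n) = \EE_{s \sim d_{\pi_n}, a \sim \pi_n(\cdot\mid s)} \ell(a, \pie(s))$ yields $J(\pi_n) - J(\pie) \leq \mu H\, \iterloss_n(\pi_n)$. Averaging over $n$ gives $J(\hat\pi) - J(\pie) \leq \mu H \cdot \frac1N \sum_{n=1}^N \iterloss_n(\pi_n)$, and finally I would substitute $\sum_{n=1}^N \iterloss_n(\pi_n) = \min_{\pi \in \Bcal} \sum_{n=1}^N \iterloss_n(\pi) + \Reg(N)$ from the definition of regret in \eqref{eqn:regret-pi-n} to obtain the claimed inequality.

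The only real subtlety — and the step I would be most careful with — is the bookkeeping in the performance difference lemma: ensuring the $\frac1H$ normalization hidden in $d_{\pi_n}$ is exactly cancelled by the leading factor $H$, and that the advantage $Q_{\pie}(s,a) - V_{\pie}(s)$ is always evaluated at the time layer of $s$. Beyond that the argument is a chain of substitutions; no concentration or online-learning arguments are needed here, since $\Reg(N)$ is simply carried through as a symbol (the statistical cost of estimating $\iterloss_n$ from finitely many expert queries is handled separately, not in this theorem).
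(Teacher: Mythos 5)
Your proposal is correct and matches the paper's own proof essentially step for step: both invoke the performance difference lemma, bound the expert advantage via $\mu$-recoverability to recover $\frac1N\sum_n \iterloss_n(\pi_n)$, split by the definition of $\Reg(N)$, and use linearity to identify $J(\hat\pi)$ with $\frac1N\sum_n J(\pi_n)$. No gaps.
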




We will denote $\estgap := \mu H\frac{\Reg(N)}{N}$, which can be viewed as the ``estimation gap'' that bounds the performance gap between $\hat{\pi}$ and the best policy in hindsight. 
Meanwhile, $\min_{\pi \in \Bcal} \frac{1}{N} \sum_{n=1}^N \iterloss_n(\pi)$ measures the expressiveness of policy class $\Bcal$ with respect to the expert policy $\pie$: it is smaller with a larger $\Bcal$. 
In the special case that $\pie \in \Bcal$ (the realizable setting) 
and $\pie$ is deterministic, $\min_{\pi \in \Bcal} \frac{1}{N} \sum_{n=1}^N \iterloss_n(\pi) = 0$. 
For completeness, we provide Theorem~\ref{thm:reduction}'s proof in Appendix~\ref{sec: sec3_proof} and incorporate discussions on limitations of the reduction-based framework.

\section{Oracle-efficient Imitation Learning: Algorithm and Analysis for General Policy Classes}
\label{sec:mftplp}
In this section, we present an interactive imitation learning algorithm for  discrete action space settings with provable computational and statistical efficiency guarantees for general policy classes.
It is based on the aforementioned online IL framework and 
aims to guarantee sublinear regret under the 0-1 loss ($\ell(a,a') = I(a \neq a')$) in the general agnostic setting.

A line of works design practical interactive learning algorithms by assuming access to offline learning oracles~\cite{beygelzimer2010agnostic,dann2018oracle,simchi2022bypassing,dudik2020oracle,syrgkanis2016efficient,rakhlin2016bistro}. Following this, in designing computationally efficient IL algorithms, 
we also assume that our learner has access to an offline learning oracle that can output a policy that minimizes 0-1 classification loss given a input dataset of (state, action) pairs.


\begin{assumption}[Offline learning oracle]
\label{assum:oracle}
There is an offline classification oracle $\Ocal$ for policy class $\Bcal$; specifically, i.e. given any input multiset of classification examples $D = \cbr{(s, a)}$, $\Ocal$ returns 
$
\Ocal(D) = \argmin_{h \in \Bcal} \sum_{(s,a) \in D} I(h(s) \neq a).
$
\end{assumption}

We argue that this is a reasonable assumption -- without a offline learning oracle, one may not even efficiently solve supervised learning, a special case of imitation learning.





Note that without further assumptions, oracle efficient online algorithms are impossible~\cite{hazan2016computational}. 
To allow the design of efficient online algorithms, we also make an assumption that the learner has sampling access to an explorative state distribution that can ``cover'' the state visitation distribution of any policy in the policy class to learn from: 




\begin{assumption}[Sampling access to covering distribution]  
\label{assumption:smooth-base}
The learner has sampling access to a covering distribution $\based \in \Delta(\Scal)$, 
such that there exists  $\sigma \leq 1$, for any $\pi \in \Pi_\Bcal$, 
$d_\pi$ is $\sigma$-smooth with respect to $\based$; formally, 
$\norm{ \frac{d_\pi}{\based} }_\infty \leq \frac{1}{\smooth}$. 
\end{assumption}

Assumption~\ref{assumption:smooth-base} is closely related to the smoothed online learning problem 
setup~\cite{haghtalab2022oracle,block2022smoothed}: under this assumption, in the $N$-round online learning game induced in the online imitation learning framework, $d_{\pi_n}$, the distribution that induces the loss function $F_n$ at round $n$, is $\sigma$-smooth with respect to covering distribution $\based$ for all $n \in [N]$. 
The larger $\sigma$ is, the less variability $d_{\pi_n}$'s can have, indicating that the underlying online learning problem may be more stationary and 
easier to learn. 
The special case $d_0 = d_{\pi^E}$ has also 
been studied in~\citet{spencer2021feedback}, although they do not provide a finite-sample analysis.









\textbf{Challenges in applying existing approaches.} Based on the connection between imitation learning and no-regret online learning mentioned in Section~\ref{sec:prelims}, it may be tempting to directly apply existing oracle-efficient smoothed online learning algorithms \cite{haghtalab2022oracle,block2022smoothed} and establish regret guarantees. However, several fundamental challenges still remain. 
First and most fundamentally,  existing smoothed online learning formulations assume that the sampling distribution at round $n$ is chosen \emph{before} the learner commits to its decision $\pi_n$~\cite{haghtalab2022smoothed,haghtalab2022oracle,block2022smoothed}. 
Unfortunately, this assumption does not hold in the online imitation laerning framework -- specifically, $d_{\pi_n}$ can depend on $\pi_n$. Second, at each round of online imitation learning, 
the learner may collect and learn from a batch of examples, while~\cite{haghtalab2022oracle,block2022smoothed} only addresses the setting when the batch size is 1. 
Lastly, we consider general action set size $A$, meaning 
the learner needs to perform online multiclass classification, while~\cite{haghtalab2022oracle,block2022smoothed} only address binary classification and regression settings.





We address these challenges by proposing the \emph{Mixed Follow the Perturbed Leader with Poission perturbations} algorithm (\algms, Algorithm~\ref{alg:mftpl}). 
Specifically, we address the first challenge by making the following key observation: even though in the online IL framework, the sampling distribution at round $n$ 
can now directly depend on $\pi_n$, as long as the sequence of policies $\cbr{\pi_n}$ has a \emph{deterministic regret guarantee} 
in the original smoothed online learning problem, the same regret guarantee will carry over to the new online imitation learning problem. Such 
deterministic regret guarantee property, to the best of our knowledge, is not known to hold for randomized online learning algorithms such as Follow the Perturbed Leader (FTPL)~\cite{kalai2005efficient}, but 
holds for deterministic online learning algorithms such as an 
in-expectation version of FTPL~\cite{abernethy2014online} or
Follow the Regularized Leader (FTRL)~\cite{shalev2011online}. 


Using this observation, \algms aims to approximate an in-expectation of FTPL to guarantee a sublinear regret. 
It follows the online IL framework: 
at round $n$, in the data collection step (line 8)
, \algms rolls out the currently learned policy $\pi_n$ in the MDP multiple times to 
sample $K$ states from $d_{\pi_n}$. It then requests expert's demonstrations on them, obtaining a dataset $\incrdata_n$ of (state, action) pairs. 
In the policy update step (line 4
to
line 7),
\algms calls the \trainbase function $E$ times on the accumulated dataset $D$, to train a new policy $\pi_{n+1}$, which is an average of $E$ base policies $\cbr{ \pi_{n+1, e} }_{e=1}^E$.
Each $\pi_{n+1, e}$ can be seen as a freshly-at-random output of the FTPL algorithm with \emph{Poisson sample-based perturbations}~\cite{haghtalab2022oracle}: first drawing a Poisson random variable $X$ representing perturbation sample size, then drawing $Q$, a set of $X$ iid examples from covering distribution $\based$ with uniform-at-random labels  from $\Acal$; finally calling the offline oracle $\Ocal$ on the perturbed dataset $D \cup Q$. 
It can now be seen that $\pi_{n+1}$ approximates the output of an in-expectation version of FTPL: a larger $E$ yields a better approximation, which ensures high-probability regret guarantees. 
Finally, \algms returns a policy $\hat{\pi}$ uniformly at random from the historical policies $\cbr{\pi_n}$.

Algorithmically, \algms can be viewed as maintaining an ensemble of $E$ policies in an online fashion and use it to perform strategic collection of expert demonstration data. For this reason, we will refer to $E$ as the ensemble size parameter. 
Similar algorithmic approaches have been proposed in imitation learning with expert intervention feedback~\cite{menda2019ensembledagger}; however, as discussed in Section~\ref{sec:related_works}, these works focus on ensuring safety in training and do not provide finite-sample analysis. 









\begin{algorithm}[t]
\caption{\algms}
\begin{algorithmic}[1]
\STATE \textbf{Input:} MDP $\MDP$, expert $\pie$, policy class $\Bcal$, oracle $\Ocal$, covering distribution $\based$,
sample size per round $K$,  ensemble size $\Ensem$, perturbation budget $\poiparam$.
\STATE Initialize $\incrdata = \emptyset$. 
\FOR{$n=1,2,\ldots,N$}
\FOR{$\ensem=1,2,\ldots,\Ensem$}
\label{hex:train-ensemble-start}
\STATE $\pi_{n, e} \gets \trainbase(D, \based, \poiparam)$
\ENDFOR
\STATE Set $\pi_{n}(a \mid s) := \frac{1}{\Ensem}\sum_{\ensem = 1}^\Ensem I( \pi_{n, e}(s) = a )$.
\label{hex:train-ensemble-end}
\STATE $\incrdata_n = \cbr{ (s_{n,k}, \pie(s_{n,k})) }_{k = 1}^K \gets$ sample $K$ states i.i.d. from $d_{\pi_{n}}$ by rolling out $\pi_{n}$ in $\MDP$, and query expert $\pie$ on these states. 
\label{hex:collect-new-data}
\STATE Aggregate datasets  $\incrdata \gets \incrdata \cup \incrdata_n$.
\ENDFOR
\STATE \textbf{Return} $\hat{\pi} \gets \aggpolicy( \cbr{ \pi_{n,e}}_{n=1, e=1}^{N+1, E} )$

\STATE \textbf{function} \trainbase($D, \based, \poiparam$):
\STATE \hspace{1em} Sample $\poi \sim \poisson(\poiparam)$
\STATE \hspace{1em} Sample $\pertdata \gets $ draw i.i.d. perturbation samples $\{(\tilde{s}_{n,x},\tilde{a}_{n,x})\}_{x=1}^{\poi}$ from $\incrdistbase = \based \otimes \Unif(\Acal)$.

\STATE \textbf{Return} $h \gets \Ocal( D \cup Q)$.


\STATE \textbf{function} \aggpolicy$(\cbr{ \pi_{n,e}}_{n=1, e=1}^{N+1, E})$:

\STATE \hspace{1em} Sample $\hat{n} \sim \Unif([N])$


\STATE \hspace{1em} \textbf{Return} $\pi_{\hat{n}}(a \mid s) := \frac{1}{\Ensem}\sum_{\ensem = 1}^\Ensem I( \pi_{\hat{n}, e}(s) = a )$.

\end{algorithmic}
\label{alg:mftpl}
\end{algorithm}








%




We show the following theorem regarding the regret guarantee of \algms; we defer its full version (Theorem~\ref{thm: mftpl_poisson_main}), along with proofs to  Appendix~\ref{sec:mftplp-pf}. 


\begin{theorem}
\label{thm:soil-main}
For any $\delta \in (0,1]$, for large enough $N$, 
\algms with appropriate setting of its parameters $\Ensem, \lambda$
outputs $\{\pi_{n}\}_{n=1}^N$
that satisfies that with probability at least $1-\delta$, $\Reg(N)$ is at most 
\begin{align*}
\label{eqn:regret-after-tuning-lambda-main}
\tilde{O} \Bigg( & 
\sqrt{N} \rbr{\frac{A (\ln B)^2}{\sigma K^2}}^{\frac14}
+
\sqrt{N} \rbr{\frac{A \ln B}{\sigma}}^{\frac14} 
+
\sqrt{N \ln\frac1\delta } \Bigg),
\end{align*}
where we recall that $B$ denotes the size of the base policy class $\Bcal$.
\end{theorem}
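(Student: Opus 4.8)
The quantity to control is $\Reg(N)=\sum_{n=1}^N\iterloss_n(\pi_n)-\min_{\pi\in\Bcal}\sum_{n=1}^N\iterloss_n(\pi)$, with $\iterloss_n(\pi)=\EE_{s\sim d_{\pi_n}}[1-\pi(\pie(s)\mid s)]$, which is linear in the (mixed) policy. The plan is to introduce two auxiliary objects: the \emph{empirical} per-round loss $\hat\iterloss_n(\pi):=\tfrac1K\sum_{(s,a)\in D_n}(1-\pi(a\mid s))$, and the \emph{in-expectation FTPL} policy $\bar\pi_n(a\mid s):=\EE_{\poisson(\poiparam),\pertdata}[I(\Ocal(D_{1:n-1}\cup\pertdata)(s)=a)]$, which is a deterministic function of the accumulated data $D_{1:n-1}$ and which the ensemble $\pi_n(a\mid s)=\tfrac1{\Ensem}\sum_{\ensem=1}^{\Ensem}I(\pi_{n,\ensem}(s)=a)$ approximates in expectation over its $\Ensem$ internal perturbation draws. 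Then decompose $\Reg(N)=(\mathrm{I})+(\mathrm{II})+(\mathrm{III})+(\mathrm{IV})$ with $(\mathrm{I})=\sum_n(\iterloss_n(\pi_n)-\hat\iterloss_n(\pi_n))$, $(\mathrm{II})=\sum_n(\hat\iterloss_n(\pi_n)-\hat\iterloss_n(\bar\pi_n))$, $(\mathrm{III})=\sum_n\hat\iterloss_n(\bar\pi_n)-\min_{\pi\in\Bcal}\sum_n\hat\iterloss_n(\pi)$, and $(\mathrm{IV})=\min_\pi\sum_n\hat\iterloss_n(\pi)-\min_\pi\sum_n\iterloss_n(\pi)$.

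Terms (I), (II), (IV) are handled by concentration. Conditioned on all randomness up to and including the perturbation draws that form $\pi_n$, the set $D_n$ is $K$ i.i.d.\ draws from $d_{\pi_n}$ with expert labels, so for any fixed $\pi$ the sequence $\hat\iterloss_n(\pi)-\iterloss_n(\pi)$ is a martingale difference bounded by $1$; Azuma's inequality bounds (I) by $O(\sqrt{N\ln(1/\delta)})$ and, after a union bound over $\Bcal$ together with the fact that $\min$ over $\Bcal$ equals $\min$ over $\Pi_\Bcal$, bounds (IV) by $O(\sqrt{N\ln(B/\delta)})\subseteq\tilde{O}(\sqrt N)$. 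For (II), linearity of $\hat\iterloss_n$ gives $|\hat\iterloss_n(\pi_n)-\hat\iterloss_n(\bar\pi_n)|\le\sup_s\|\pi_n(\cdot\mid s)-\bar\pi_n(\cdot\mid s)\|_1$, and each coordinate $\pi_n(a\mid s)$ is an average of $\Ensem$ conditionally-i.i.d.\ indicators with mean $\bar\pi_n(a\mid s)$, so Hoeffding plus a union bound over $\Scal\times\Acal\times[N]$ makes this $\le A\sqrt{\ln(SAN/\delta)/\Ensem}$ uniformly; choosing the ensemble size $\Ensem$ polynomially large (permitted by ``appropriate setting of $\Ensem$'') forces $N$ times this below the target rate, and only $\ln S$ enters.

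The heart of the argument is term (III), the online regret of the deterministic learner $\bar\pi_n$ against the loss sequence $\{\hat\iterloss_n\}$. Two structural points drive it. First, since $\bar\pi_n$ is a deterministic function of $D_{1:n-1}$, the fact that $d_{\pi_n}$ (hence $\iterloss_n$ and $\hat\iterloss_n$) depends on $\pi_n$ causes no difficulty: from the online learner's viewpoint the losses are produced by a legitimate adaptive $\smooth$-smooth adversary, since every $\pi_n\in\Pi_\Bcal$ and so $d_{\pi_n}$ is $\smooth$-smooth w.r.t.\ $\based$ by Assumption~\ref{assumption:smooth-base}. This is exactly why the algorithm must play the \emph{in-expectation} FTPL rather than a fresh random draw -- a randomized learner's guarantee can be defeated by an adversary reacting to its realized coins, whereas the deterministic one's guarantee is robust to this. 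Second, $\bar\pi_n$ is a stochastically-smoothed perturbed leader, so it admits the standard perturbed-leader decomposition $(\mathrm{III})\le\EE_{\pertdata}[\text{perturbation gap at the comparator}]+K\sum_n\sup_s\|\bar\pi_n(\cdot\mid s)-\bar\pi_{n+1}(\cdot\mid s)\|_1$. I would bound the perturbation-gap term by a uniform-convergence argument over $\Bcal$ (it grows like $\sqrt\poiparam$ up to $\mathrm{poly}(\ln B,A)$ factors) and -- the crux -- bound the per-round stability by a smoothed-FTPL stability lemma adapted to our setting: appending one example to the hallucinated-ERM problem perturbs the output distribution at any state by $\tilde{O}(A/(\poiparam\smooth))$ in total variation, because the $\poisson(\poiparam)$ hallucinated examples drawn from the $\smooth$-smooth $\based$ endow the perturbed ERM with an $\Omega(\poiparam\smooth/A)$-scale ``local margin'' at every state, so appending the $K$-example batch of round $n$ moves $\bar\pi_n$ to $\bar\pi_{n+1}$ by $\tilde{O}(KA/(\poiparam\smooth))$. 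Substituting and tuning $\poiparam$ to balance the two terms yields the two $\tilde{O}(\sqrt N(\cdot/\smooth)^{1/4})$ contributions, the $K^{-2}$ term being the price of running FTPL on the $K$-example-per-round empirical losses rather than the population losses.

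I expect this stability lemma -- under \emph{batch} updates and \emph{multiclass} labels -- to be the main obstacle, since the existing oracle-efficient smoothed-FTPL analyses of \citet{haghtalab2022oracle,block2022smoothed} treat only batch size one and binary labels: one must redo the anti-concentration / local-margin argument for the perturbed ERM with $K$ simultaneously appended examples and $A$ possible labels (the hallucinated labels being uniform on $\Acal$, which shrinks the per-label margin by a factor $A$), while keeping the dependence on $\ln B$ rather than $B$; and one must upgrade the resulting bound from in-expectation to high probability, which, together with (I) and (IV), is what produces the explicit $\sqrt{N\ln(1/\delta)}$ term (and, together with the need for the tuned $\poiparam$ and a large $\Ensem$ to dominate lower-order terms, is where ``for large enough $N$'' is used). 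The remaining steps -- the martingale bounds, the ensemble union bound, and the final arithmetic of optimizing $\poiparam$ -- are routine.
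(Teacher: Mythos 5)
Your overall architecture matches the paper's: your four-way decomposition (I)--(IV) is precisely its Stage~1 (martingale concentration for the empirical losses, multinomial concentration of the ensemble around the in-expectation FTPL), your term (III) is its ``ideal regret,'' attacked by the same be-the-leader inequality, and your central observation --- that the in-expectation FTPL is deterministic given the history, so the dependence of $d_{\pi_n}$ on $\pi_n$ is harmless --- is exactly the paper's. The gap is in the crux, term (III), in two places. First, the stability claim is unsound. You assert that the $\mathrm{Poi}(\lambda)$ hallucinated samples from $d_0$ give the perturbed ERM an $\Omega(\lambda\sigma/A)$ ``local margin at every state,'' hence per-example sensitivity $\tilde{O}(A/(\lambda\sigma))$. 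But Assumption~\ref{assumption:smooth-base} only controls the ratio $d_\pi(s)/d_0(s)$; individual states may carry arbitrarily little $d_0$-mass, so there is no per-state margin and no worst-case (over the realized batch $D_n$) bound of this form. The paper bounds stability \emph{on average over} $D_n\sim d_{\pi_n}^{\otimes K}$, via a $\chi^2$/Ingster computation comparing a product-Poisson histogram with the $D_n$-mixture of its shifted versions; smoothness enters only through $\sum_s d_{\pi_n}(s)^2/d_0(s)\le 1/\sigma$, and the result is $O(K\sqrt{A/(\lambda\sigma)})$ per round --- the $1/\sqrt{\lambda}$ scaling of a Poisson shift, not your $1/\lambda$. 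Your own arithmetic betrays this: balancing $\sqrt{\lambda\ln B}/K$ against $N K A/(\lambda\sigma)$ gives an $N^{1/3}$-type rate, not the stated $\sqrt{N}(\cdot)^{1/4}$.

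Second, your decomposition of (III) into ``bias $+$ stability'' drops a term that supplies one of the two leading contributions of the theorem. Since the be-the-leader comparison is against the empirical losses $\hat F_n$ and $\bar\pi_{n+1}$ is trained on data that includes $D_n$, the per-round difference $\hat F_n(\bar\pi_n)-\hat F_n(\bar\pi_{n+1})$ is random in $D_n$ and contains, beyond the average-case stability, a \emph{generalization-error} term (the optimism of $\bar\pi_{n+1}$'s training loss on $D_n$ relative to its population loss under $d_{\pi_n}$), together with a martingale residual that is what actually delivers the high-probability statement. The paper controls the generalization error with a dedicated coupling argument --- with high probability the Poisson perturbation set hides $\Theta(\lambda\sigma/A)$ fresh i.i.d.\ samples from $d_{\pi_n}$ exchangeable with the examples of $D_n$ --- followed by Massart's lemma, yielding $\sqrt{A\ln N\,\ln B/(\lambda\sigma)}$ per round; after tuning $\lambda$ this is exactly the $K$-independent term $\sqrt{N}\,(A\ln B/\sigma)^{1/4}$ in the statement. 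Without this piece your argument cannot produce that term, and no purely TV-stability route can replace it.
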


Specialized to $A=2$ and $K=1$, this result is consistent with~\citet{haghtalab2022oracle} in the regret bound is dominated by  
$\sqrt{N} \rbr{\frac{(\ln B)^2}{\sigma}}^{\frac14}$; we remark again though, that our regret analysis needs to get around the three additional challenges mentioned above.




In view of Theorem~\ref{thm:reduction}, Theorem~\ref{thm:soil-main} translates to the following result on the sample complexity of expert demonstrations and the number of calls to the classification oracle, for  $\estgap = \frac{\mu H\Reg(N)}{N}$ to be at most $\epsilon$:



\begin{corollary}
\label{cor:soil-main}
For any small enough $\epsilon > 0$, \algms, by setting its parameters as in Theorem~\ref{thm:soil-main} and number of rounds 
$N = \tilde{O}\rbr{\frac{\mu^2H^2\sqrt{A\ln(B)}}{\epsilon^2 \sqrt{\sigma}}}$ and batch size $K = \sqrt{\ln B}$,
guarantees that $\frac{\mu H\Reg(N)}{N} \leq \epsilon$ with high probability,  using  $\tilde{O}\rbr{\frac{\mu^2H^2\sqrt{A}\ln B}{\epsilon^2 \sqrt{\sigma}}}$ expert demonstrations, and $\tilde{O}\del{ \frac{ \mu^4 H^4 A^2 (\ln B)^2}{ \epsilon^4 \sigma} }$
calls to $\Ocal$.
\end{corollary}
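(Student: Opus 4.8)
The plan is to obtain Corollary~\ref{cor:soil-main} as a direct consequence of Theorem~\ref{thm:soil-main}: choose the free parameters $K$ and $N$ so that the estimation gap $\estgap = \mu H\,\Reg(N)/N$ drops below $\epsilon$, and then count the expert queries and oracle calls that this choice entails. (Feeding $\estgap \le \epsilon$ into Theorem~\ref{thm:reduction} is what turns this into the end-to-end suboptimality guarantee, but the corollary itself only asserts the three displayed bounds.)

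First I would set $K = \sqrt{\ln B}$ in the regret bound of Theorem~\ref{thm:soil-main}. This makes $\frac{A(\ln B)^2}{\sigma K^2} = \frac{A\ln B}{\sigma}$, so the first two summands collapse into one and $\Reg(N) = \tilde{O}\big(\sqrt{N}(A\ln B/\sigma)^{1/4} + \sqrt{N\ln(1/\delta)}\big)$. Dividing by $N$, the requirement $\mu H\,\Reg(N)/N \le \epsilon$ becomes, up to logarithmic factors and treating $\delta$ as a constant, $\frac{1}{\sqrt N}(A\ln B/\sigma)^{1/4} \le \epsilon/(\mu H)$, i.e. $N = \tilde{O}\big(\tfrac{\mu^2 H^2}{\epsilon^2}\sqrt{A\ln B/\sigma}\big) = \tilde{O}\big(\tfrac{\mu^2 H^2\sqrt{A\ln B}}{\epsilon^2\sqrt\sigma}\big)$, which is the stated choice of $N$; the $\sqrt{N\ln(1/\delta)}$ term is handled the same way and absorbed into $\tilde{O}$ for constant failure probability. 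I would also check that, for $\epsilon$ small enough, this $N$ meets the ``large enough $N$'' precondition of Theorem~\ref{thm:soil-main}.

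The remaining two bounds are then bookkeeping. The algorithm collects $K$ labeled states per round, so the number of expert demonstrations is $NK = \tilde{O}\big(\tfrac{\mu^2 H^2\sqrt{A\ln B}}{\epsilon^2\sqrt\sigma}\big)\cdot\sqrt{\ln B} = \tilde{O}\big(\tfrac{\mu^2 H^2\sqrt A\,\ln B}{\epsilon^2\sqrt\sigma}\big)$, using $\sqrt{A\ln B}\cdot\sqrt{\ln B} = \sqrt A\,\ln B$. The algorithm makes $E$ calls to $\Ocal$ per round, one per ensemble member inside \trainbase, so the total is $NE$, where $E$ is the ensemble size prescribed by the full version of Theorem~\ref{thm:soil-main} (Theorem~\ref{thm: mftpl_poisson_main}); substituting that value and the above $N$ gives $\tilde{O}\big(\tfrac{\mu^4 H^4 A^2(\ln B)^2}{\epsilon^4\sigma}\big)$, consistent with $NE$ scaling as $N^2 \cdot A\ln B$.

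The part that takes the most care is justifying that $K = \sqrt{\ln B}$ is the right operating point and tracking the hidden polylog factors so the final exponents come out exactly: if $K$ is larger, the demonstration count $NK$ grows without $N$ shrinking further; if $K$ is smaller, the first regret term $\sqrt N\,(A(\ln B)^2/(\sigma K^2))^{1/4}$ starts to dominate, forcing $N$ -- and hence $NK$ -- up. Beyond this trade-off, the proof is a substitution-and-counting argument on top of Theorem~\ref{thm:soil-main}, since that theorem already absorbs the technical work (the deterministic-regret observation, the Poisson-perturbation analysis, and the tuning of $E$ and $\lambda$).
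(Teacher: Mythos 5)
Your proposal is correct and follows essentially the same route as the paper: the paper's proof simply instantiates the full version of Theorem~\ref{thm:soil-main} (packaged as Corollary~\ref{cor:soil-main-full}, with target accuracy $\alpha=\epsilon/(\mu H)$), verifies the precondition on $N$, and then counts $NK$ expert queries and $NE=\tilde O(N^2A)$ oracle calls exactly as you do. Your added remark on why $K=\sqrt{\ln B}$ balances the two regret terms is a sensible observation the paper leaves implicit, but the substance of the argument is identical.
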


In practice, the batch size $K$ may be considered as part of problem specification and chosen ahead of time; motivated by this, 
we provide a version of this corollary with general $K$, Corollary~\ref{cor:soil-main-full}, in Appendix~\ref{sec:mftplp-pf}. 



 
Table~\ref{table:comparison_main} compares \algms with~\cite{li2022efficient} and \algbc in terms of the number of expert demonstrations for $\estgap \leq \epsilon$, with a focus on comparing their dependences on $\mu$, $H$ and $\epsilon$. 
Both \algms and \cite{li2022efficient} have a coefficient of $\mu^2H^2$, much smaller than $H^4$ for \algbc,
while~\cite{li2022efficient} requires that the policy class $\Bcal$ has a small separator set~\cite{syrgkanis2016efficient,dudik2020oracle,li2022efficient}, which is only known to hold for a few $\Bcal$'s.

\begin{table}[h]
\centering
\caption{Number of expert demonstrations $C(\epsilon)$ for $\estgap \leq \epsilon$. }
\vskip 0.05in
\label{table:comparison_main}
\begin{tabular}{ccc}
\toprule
Algorithms   & $C(\epsilon)$ & Remarks \\
\midrule 
\algms (this work)  &  $\frac{\mu^2 H^2}{\epsilon^2} $ & General $\Bcal$, Access $\based$  
\\
~\citet{li2022efficient} 
& $\frac{\mu^2 H^2}{\epsilon^2} $ 
& $\Bcal$ small separator set 
\\
\algbc  & $\frac{H^4}{\epsilon^2}$ & General $\Bcal$\\
\bottomrule
\end{tabular}
\end{table}


\section{Experiments}
\label{sec:experiment}
In this section, we evaluate \algms and its variant, comparing them with online and offline IL baselines in 4 continuous control tasks from OpenAI Gym \cite{brockman2016openai}.
Our experiments are designed to answer the following questions:
\textbf{Q1:} Does sample-based perturbation provide any benefit in \algms?
\textbf{Q2:} How does the choice of covering distribution $\based$ affect the performance of \algms?
\textbf{Q3:} Does \algms outperform online and offline IL baselines?
\textbf{Q4:} Can we find a practical variant of \algms that achieves similar performance to \algms without additional sample access to some  covering distribution?
\textbf{Q5:} If Q3 and Q4 are true, which component of our algorithms confers this advantage?

Our experiment sections are organized as follows: Section~\ref{sec:experiment_settings} provides an introduction to our experimental settings. Section~\ref{sec: perturbation_utility} presents positive results for \textbf{Q1} and \textbf{Q2}, evaluating \algms on two continuous control tasks using a linear policy class $\Bcal$ with nonrealizable experts. Subsequently, Section~\ref{sec:main_experiment} affirmatively answers \textbf{Q3} and \textbf{Q4} and introduces \alg (abbreviated as \bd), a practical variant of \algms, and demonstrates the efficacy of our algorithms through neural network-based experiments. Across 4 continuous control tasks that encompass  realizable and nonrealizable settings, \bd and \algms outperform both online and offline IL baselines. Finally, for \textbf{Q5}, Section~\ref{sec:benefit_of_ensemble} investigates the underlying reasons for \bd's improvement over the \dagge baseline.

\subsection{Experiment Settings}
\label{sec:experiment_settings}
\textbf{Environment:} Following~\citet{brantley2019disagreement}, we use normalized states. The name and $\{$state dimension, action dimension$\}$ for each continuous control task are: Ant $\cbr{28,8}$, Half-Cheetah $\cbr{18,6}$, Hopper $\cbr{11,3}$, and Walker2D $\cbr{18,6}$. 



\textbf{Expert:} 
For each task, the expert policy $\pie$ is a multilayer perceptron (MLP) with 2 hidden layers of size 64 and corresponding state, action dimension, pretrained by TRPO~\cite{schulman2015trust}. 
We employ TRPO's stochastic policy, sampling expert actions from MLP output with Gaussian noise.

\textbf{Offline Learning Oracle:} 
Our \algms relies on offline learning oracle $\Ocal$; we describe their implementations below. 
In continuous control tasks, the training loss $\tilde{\ell}(a,a')$ is calculated by clipping input actions to the range $\sbr{-1,1}$ and computing the MSE loss~\cite{brantley2019disagreement}.
In Section~\ref{sec: perturbation_utility}, we first work on linear models and implement a deterministic offline learning oracle by outputting the Ordinary Least Squares (OLS) solution using the Moore-Penrose pseudoinverse~\cite{moore1920reciprocal}.  
Later, in Section~\ref{sec:main_experiment} and~\ref{sec:benefit_of_ensemble}, we use MLP for base policies and implement $\Ocal$ by conducting 2000 SGD iterations over its input dataset with batch size 200. See Appendix~\ref{sec: experiment_details} for results of 500 and 10000 iterations.




\textbf{Sampling Oracle:} 
We define the covering distribution $\based$ as the uniform distribution over states obtained from 10 independent runs collected by \dagge.
Note that this gives $\algms$ some unfair advantage over the baselines; we will subsequently propose practical variants of our algorithms that do not require knowledge of $d_0$.
Additionally, in Section~\ref{sec: perturbation_utility}, we consider an alternative $\based$, defined as the uniform distribution over state space.

\textbf{Algorithms:} Due to the sample-efficient nature of IL, we make the tasks more challenging by setting the sample size per round $K = 50$ for all algorithms~\cite{ho2016generative, laskey2017dart}. All policies in the first round $\pi_1$ are initialized at $0$ for linear policy and at random for MLPs.
We choose \dagge and \algbc(\bc) as online and offline IL baselines.
At round $n$, \bc receives $K$ additional (state, action) pairs sampled from expert's trajectories and calls the offline learning oracle on the accumulated dataset. 
In contrast, all other algorithms sample $K$ states from their current policy $\pi_n$'s trajectories and query the expert's action on them, while following dataset aggregation and calling the offline learning oracle to compute policies $\pi_{n+1}$ for the next round.
As a practical implementation of \algms, we choose ensemble size $\Ensem = 25$; in addition,
to facilitate parallel training of the ensembles~\cite{brantley2019disagreement},
instead of drawing sample sizes $X$ from a Poisson distribution, we choose $X$ as fixed numbers\footnote{It is well-known that Poission distribution has good concentration properties~\citep[e.g.][]{canonne2017short}.
so we do not expect this to deviate too much from the original algorithm.} -- we abbreviate this algorithm as \MPX.

\textbf{Evaluation:}
We run each algorithm 10 times with different seeds, treating each round $n$ as the final one and only returning the last trained policy $\pi_{n+1}$ for evaluation~\cite{cheng2019predictor, cheng2019accelerating}.
As in common practice \cite{menda2019ensembledagger,hoque2021thriftydagger,menda2017dropoutdagger}, we return the ensemble mean $\bar{\pi}_n(s) := \frac{1}{E} \sum_{e=1}^E \pi_{n+1,e}(s)$,
which is also known as Bagging~\cite{breiman1996bagging}.
Given a returned policy $\pi$, we roll out $\nrepeat=25$ trajectories (denote by $\cbr{\tau^\pi_i}_{i=1}^T$) and compute their average reward as an estimate of $\pi$'s expected reward.


\subsection{Utility of Sample-based Perturbation}
\label{sec: perturbation_utility}
We use linear policy classes along with OLS offline learning oracle for our first experiment. We study the impact of perturbation size $X$ and the choice of $\based$ on the performance of \MPX. Here, we choose \dagge as the baseline; note that this is equivalent to $\MP(0)$ given that the offline learning oracle returns OLS solutions deterministically.  
We consider two settings of $d_0$ in Section~\ref{sec:experiment_settings}.

\begin{figure}[t!]
  \centering
  \includegraphics[width=1.01\linewidth]{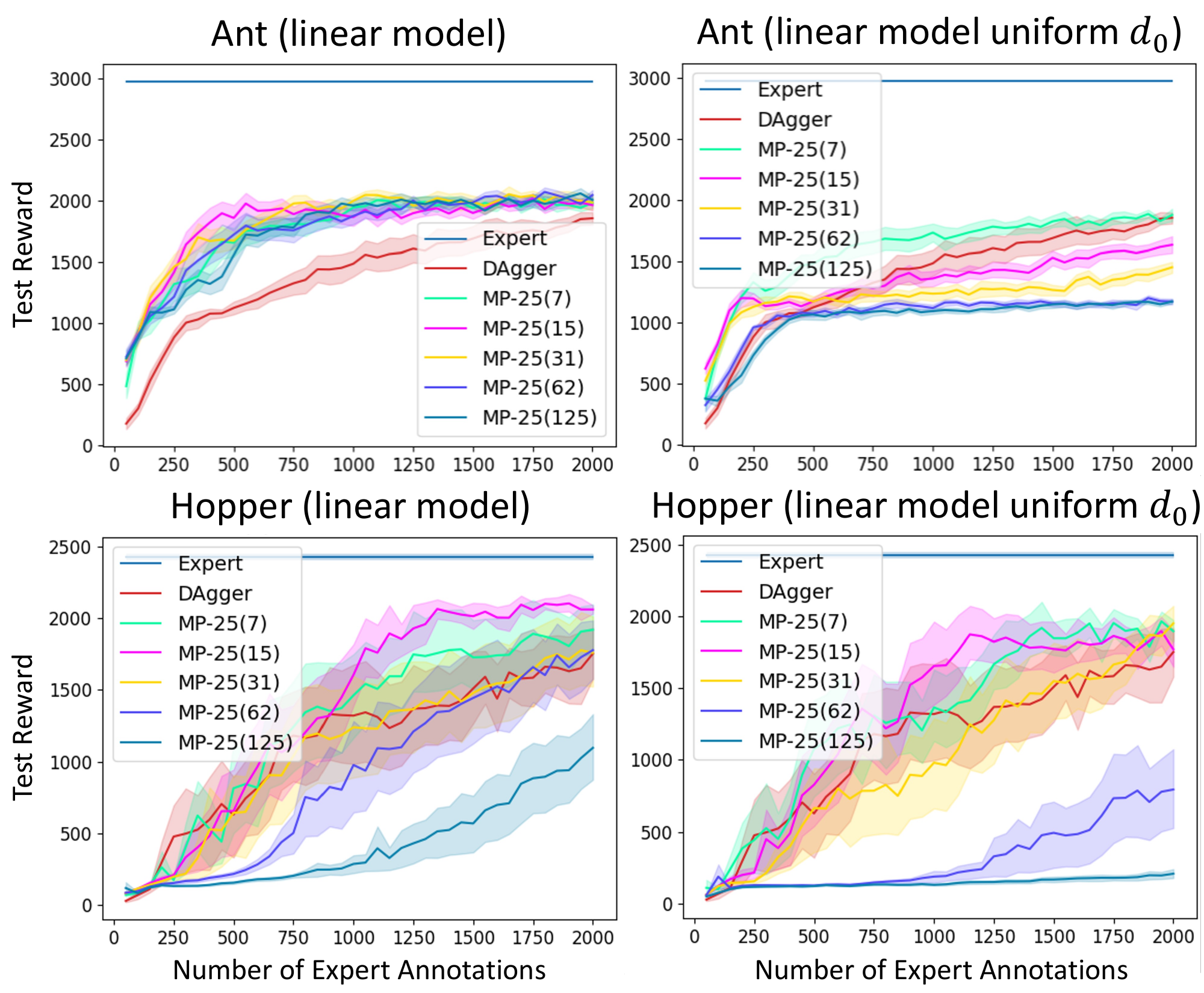}
  \vspace{.03in}
  \caption{Comparative performance of \algms using linear models with nonrealizable MLP experts: variation across different perturbation state sources and set sizes in Ant and Hopper. Shaded region represents range between $10^{\text{th}}$ and $90^{\text{th}}$ quantiles of bootstrap confidence interval~\cite{diciccio1996bootstrap}, computed over 10 runs. On the left, the perturbation example sources are states collected by \dagge on each task, while the right side uses uniform distribution over $[-2,2]^{28}$ (Ant) and $[-2,2]^{11}$ (Hopper). Overall, \MP$\text{s}$ on the left exceed their counterparts on the right. Meanwhile, \MP(15) leads in performance, except in the Ant with uniform $\based$ (upper right).}
  \label{fig:main_linear}
\end{figure}

The average reward of the trained policies as a function of the number of expert annotations for Ant and Hopper are shown in Figure~\ref{fig:main_linear} with $80\%$ bootstrap confidence bands~\cite{diciccio1996bootstrap}.
Surprisingly, though the expert (an MLP policy) is not contained in the policy class, \algms still learns policies with nontrivial performance.  
The overall performance of \MPX initially increases with the perturbation size and then decreases, matching our intuition. For \textbf{Q1}, since \MP{}(7) and \MPF outperform \dagge (\MP{}(0)) in most cases, we have strong evidence that sample-based perturbation benefits performance with proper choices of perturbation sample size. For \textbf{Q2}, by comparing the performance of the same \MPX on the left and right, it is evident that using states collected by \dagge for perturbation results in better performance than uniform samples over state spaces. 
Based on our observations, we focus on evaluating $\MPF$ for the following sections.  
Please see Appendix~\ref{sec:full_perturbation_utility} for performance of other algorithms under this setting.



\subsection{Performance Evaluation of \algms and Its Practical Variant \alg}

\label{sec:main_experiment}
\begin{figure*}[t]
  \centering
  \includegraphics[width=1\linewidth]{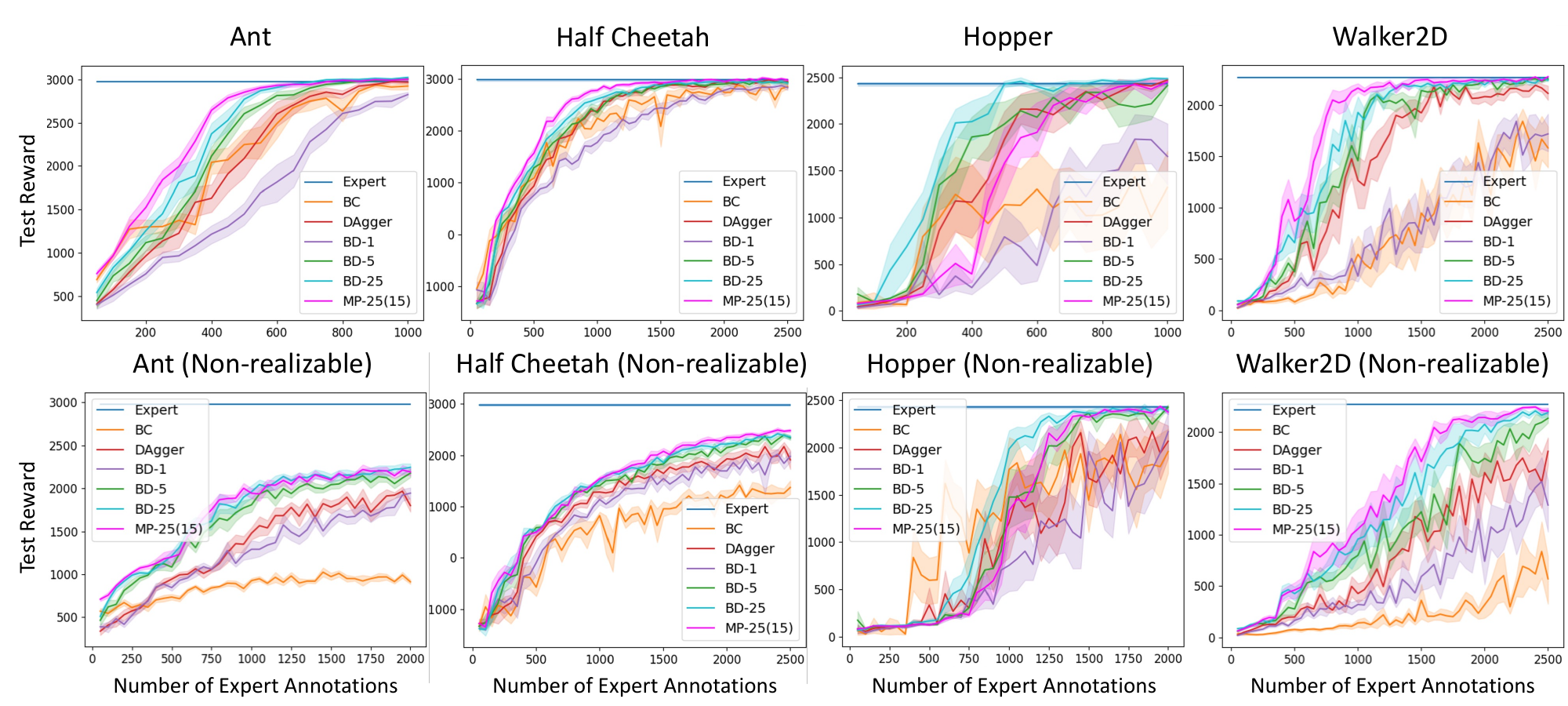}
  \caption{Results on continuous control tasks with realizable and non-realizable experts. Remarkably, \MPF (magenta), \bdc(blue-green) and \bdb (green) surpass baselines under both settings, with distinct performance gaps particularly evident in the non-realizable setting between \MPF, \bdc, \bdb, and the baselines.} 
  \label{fig:main_experiment_plots}
\end{figure*}

Though \algms is provably efficient, it requires additional sample access to $\based$ and proper choice of the perturbation sample size. 
We propose \alg (abbrev. \bd),
a variant of \algms without sample access to $d_0$, and evaluate our algorithms in the 4 continuous control tasks in Section~\ref{sec:experiment_settings}. 
\bd shares the same data collection scheme as \algms and only differs on the \trainbase function.

As can be seen in the \bd's \trainbase function in Algorithm~\ref{alg:bd} (see Appendix~\ref{sec: full_bd_alg} for the full \bd algorithm), \bd trains on different bootstrap subsamples of the accumulated dataset $D$ to obtain a diverse ensemble of policies, 
instead of training on the accumulated dataset union with different sample-based perturbations.
\bd is fundamentally different from~\cite{menda2019ensembledagger}, 
where the diversity of ensembles are attributed solely to the stochasticity of SGD. 
In the following, we study the performance of \bd with increasing size of ensembles, choosing $\Ensem=1,5,25$ (abbreviated as \bda,\bdb,\bdc).


\begin{algorithm}[t!]
\caption{\alg}
\begin{algorithmic}[1]

\STATE \textbf{function} \trainbase($\incrdata$):

\STATE \hspace{1em} $\tilde{\incrdata}$ $\gets$ Sample $|\incrdata|$ i.i.d. samples $\sim \Unif(\incrdata)$ with replacement.

\STATE \hspace{1em} \textbf{Return} $h \gets \Ocal(\tilde{\incrdata})$.
\end{algorithmic}
\label{alg:bd}
\end{algorithm}

We perform evaluations in realizable and non-realizable settings using MLPs as base policy classes. 
In the realizable setting, the base policy class contains the conditional mean function of the expert policy.
Meanwhile, the non-realizable setting considers the base policy class to be MLPs with one hidden layer and limited numbers of nodes (see Appendix~\ref{sec: experiment_details} and ~\ref{sec:full_main_experiment} for details).
As shown in Figure~\ref{fig:main_experiment_plots}, \MPF consistently outperforms others in  most cases.
Overall, \bd shows a notable improvement in performance as the ensemble size grows, with \bdc achieving performance on par with \MPF.
Perhaps unsurprisingly, the naive \bda falls short of matching \dagge's performance. This is attributed to the inherent limitations of bootstrapping, which omits a significant portion of the original sample. However, it is important to highlight the consistent and significant improvements from \bda to \bdb across 4 tasks, as they demonstrate the effectiveness of using model ensembles to mitigate the sample underutilization from bootstrapping.
Notice that the increase in performance from \bdb to \bdc is marginal, with \bdb outperforming the baselines in all cases except in the realizable Hopper, where \dagge achieves a similar level of performance.
Interestingly, as shown in the lower part of Figure~\ref{fig:main_experiment_plots},
\MPF, \bdc and \bdb not only learn faster than the baselines, but also converge to policies with higher performance.





For running time and space requirements, under realizable settings, all algorithms consume similar memory (1400 MB) on GPU, while \bdc and \MPF run 5 times longer than \bdb, \bc, and \dagge (see Appendix~\ref{sec: experiment_details} for details). Notably, \bdb maintains strong performance without imposing significant computational overhead, taking just twice the running time of \dagge. Therefore, we recommend using \bdb for practical applications.

\subsection{Explaining the benefit of \alg}
\label{sec:benefit_of_ensemble}
Though \bdb outperforms \dagge, the underlying reason of this improvement demands further investigation.
We hypothesize two possible factors behind \bdb's success: (1) \bdb collects data of higher quality during the training stage; (2) Given the same expert demonstration dataset, 
\bdb returns a better policy via ensemble averaging, similar to the benefit of Bagging in supervised learning~\cite{breiman1996bagging}.


\begin{figure}[ht!]
  \centering
  \includegraphics[width=1.01\linewidth]{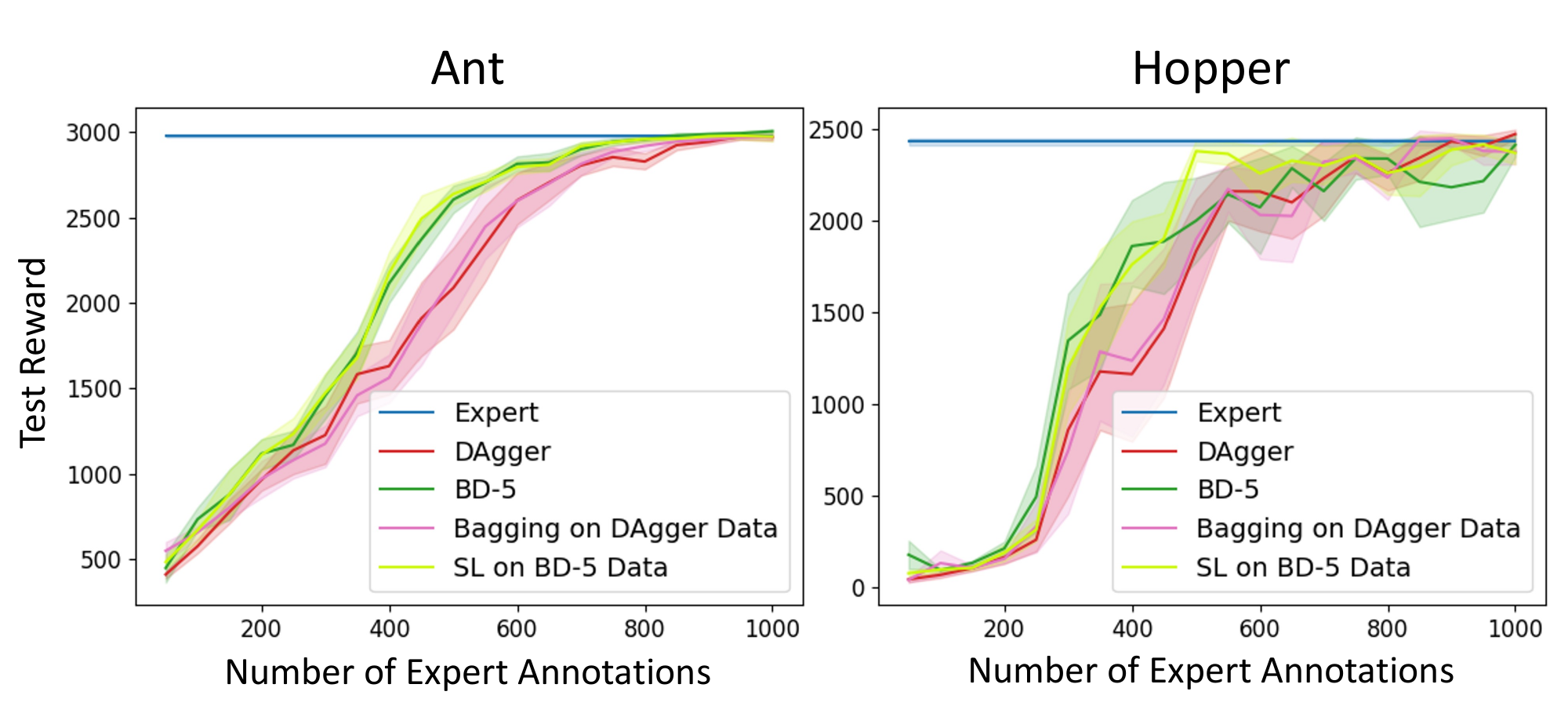}
  \vspace{.05in}
  \caption{Results on comparing \bdb and \dagge, along with the two additional approaches in Section~\ref{sec:benefit_of_ensemble},  over Ant and Hopper. Bagging on data collected by \dagge yields pink learning curves that align closely with \dagge's performance (red). Meanwhile, naive supervised learning on data collected by \bdb produce lime green learning curves that match the performance of \bdb (green).
  Overall the two methods (red and pink) that uses ensembles to perform data collection has better performance than those two that does not (green and lime green). 
  This suggests that \bdb improves over \dagge by collecting better data.} 
  \label{fig:ablation_study_main}
  \vspace*{-0.1cm}
\end{figure}

To test these, 
we evaluate two additional approaches: (a) naive supervised learning (abbreviated as \nsl) on data collected by \bdb; (b) Bagging (bootstrap and return 5-ensemble average) on data collected by DAgger. 
As shown in Figure~\ref{fig:ablation_study_main}, switching the final policy training between Bagging and naive supervised learning does not change the policy performance significantly. In contrast, using ensemble for data collection significantly increases the trained policy's performance. This verifies hypothesis (1) and invalidates hypothesis (2). We further visualize states queried by different algorithms in Appendix~\ref{sec: tsne}, which implies more efficient exploration by ensembles.

\section{Conclusion}
We propose and evaluate \algms, a computationally and statistically efficient IL algorithm for general policy classes. We also propose a practical variant, \alg that we recommend for practical applications.

Our work is built on the online imitation learning reduction framework~\cite{ross2011reduction,ross2014reinforcement}. As we discuss in Appendix~\ref{sec: sec3_proof}, in the agnostic setting, this framework has the drawback of only  providing runtime-dependent guarantees, as well as not ensuring global optimality. We leave overcoming these drawbacks as important open problems. 




\section*{Impact Statement}
This paper presents work whose goal is to advance the field of Imitation Learning. 
To the best of our knowledge, we are not aware of negative social impacts by our work.

\section*{Acknowledgements}
We thank the anonymous reviewers for their constructive feedback, and thank Yao Qin for helpful discussions on visualizing the observation distributions encountered in imitation learning data collection.
This research is partially supported by the University of Arizona FY23 Eighteenth Mile TRIF Funding.

\bibliographystyle{plainnat}

\bibliography{il}


%
%





%

%

\setcounter{section}{0} 
\renewcommand{\thesection}{\Alph{section}} 

\onecolumn



\icmltitle{Agnostic Interactive Imitation Learning: New Theory and Practical Algorithms \\
Supplementary Materials}

\section{The Online Imitation Learning Reduction Framework}
\label{sec: sec3_proof}
\begin{theorem}[Restatement of Theorem~\ref{thm:reduction}, originally from~\cite{ross2011reduction}, Theorem 3.2]
Suppose $(\MDP,\pie)$
is $\mu$-recoverable with respect to $\ell$. In addition, a sequence of policies $\cbr{\pi_n}_{n=1}^N$ satisfies the following online regret guarantee with respect to base policy class $\Bcal$:
\[
\sum_{n=1}^N \iterloss_n(\pi_n) - \min_{\pi \in \Bcal} \sum_{n=1}^N \iterloss_n(\pi)
\leq 
\Reg(N).
\]
Then $\hat{\pi}$, which is by choosing a policy uniformly at random from $\cbr{\pi_n}_{n=1}^N$ and adhering to it satisfies:
\[
J(\hat{\pi}) - J(\pie)
\leq 
\mu H \del{ 
\min_{\pi \in \Bcal} \frac{1}{N} \sum_{n=1}^N \iterloss_n(\pi) + \frac{\Reg(N)}{N} }.
\]
\label{thm:reduction_full}
\end{theorem}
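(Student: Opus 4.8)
The plan is to combine a \emph{performance difference lemma} with the $\mu$-recoverability hypothesis and the assumed online regret bound, in that order. First I would establish the identity
\[
J(\pi) - J(\pie) = H \cdot \EE_{s \sim d_\pi,\, a \sim \pi(\cdot \mid s)}\sbr{ Q_{\pie}(s,a) - V_{\pie}(s) }
\]
for an arbitrary stationary policy $\pi$. This follows from a telescoping argument along a rollout of $\pi$: setting $V_{\pie}^{H+1} \equiv 0$ and summing $\EE_\pi\sbr{ V_{\pie}^{t+1}(s_{t+1}) - V_{\pie}^{t}(s_t) }$ over $t = 1, \dots, H$ collapses to $\EE_{s_1\sim\rho}\sbr{0 - V_{\pie}^1(s_1)} = -J(\pie)$. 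On the other hand, conditioning on $(s_t, a_t)$ and using $\EE_{s_{t+1} \sim P(\cdot \mid s_t, a_t)}\sbr{V_{\pie}^{t+1}(s_{t+1})} = Q_{\pie}^t(s_t, a_t) - \EE\sbr{c_t \mid s_t, a_t}$, each summand rewrites as $\EE_{s_t \sim d_\pi^t,\, a_t \sim \pi}\sbr{Q_{\pie}^t(s_t,a_t) - V_{\pie}^t(s_t)} - \EE_\pi\sbr{c_t}$, and $\sum_t \EE_\pi\sbr{c_t} = J(\pi)$. Rearranging, and using that the layered structure gives $d_\pi = \frac1H \sum_{t=1}^H \PP_\pi(s_t = \cdot)$ so that $\sum_{t=1}^H \EE_{s_t \sim d_\pi^t}\sbr{\cdot} = H\,\EE_{s \sim d_\pi}\sbr{\cdot}$, yields the displayed identity.

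Next I would apply $\mu$-recoverability pointwise: $Q_{\pie}(s,a) - V_{\pie}(s) \le \mu \cdot \ell(a, \pie(s))$ for all $s \in \Scal, a \in \Acal$, hence
\[
J(\pi) - J(\pie) \le \mu H \cdot \EE_{s \sim d_\pi,\, a \sim \pi(\cdot \mid s)}\sbr{ \ell(a, \pie(s)) }.
\]
The key observation is that the right-hand side, evaluated at $\pi = \pi_n$, is \emph{exactly} $\mu H \cdot \iterloss_n(\pi_n)$, by the definition $\iterloss_n(\pi) = \EE_{s \sim d_{\pi_n},\, a \sim \pi(\cdot \mid s)} \ell(a, \pie(s))$ evaluated at its own argument $\pi_n$. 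Therefore $J(\pi_n) - J(\pie) \le \mu H\, \iterloss_n(\pi_n)$ for every $n \in [N]$.

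Finally I would average over $n \in [N]$. Since $\hat\pi$ is obtained by drawing $\hat n \sim \Unif([N])$ and then following $\pi_{\hat n}$ for the entire episode, $J(\hat\pi) = \frac1N \sum_{n=1}^N J(\pi_n)$, so averaging the previous bound gives $J(\hat\pi) - J(\pie) \le \mu H \cdot \frac1N \sum_{n=1}^N \iterloss_n(\pi_n)$. Invoking the assumed regret guarantee $\sum_{n=1}^N \iterloss_n(\pi_n) \le \min_{\pi \in \Bcal} \sum_{n=1}^N \iterloss_n(\pi) + \Reg(N)$ and dividing by $N$ produces the claimed inequality. The main obstacle is the performance difference lemma in the first step — specifically, carefully handling the interleaved expectations over costs, actions, and transitions, and tracking the factor of $H$ arising from the $\frac1H$ normalization in $d_\pi$; once that identity is in hand, the remaining two steps are each essentially one line.
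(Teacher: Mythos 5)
Your proposal is correct and follows essentially the same route as the paper: performance difference lemma, then $\mu$-recoverability pointwise, then the identification of the resulting expectation with $\iterloss_n(\pi_n)$, the identity $J(\hat\pi) = \frac1N\sum_{n=1}^N J(\pi_n)$, and finally the regret bound. The only cosmetic difference is that you prove the performance difference lemma via the standard telescoping argument, whereas the paper cites it as Lemma 4.3 of Ross and Bagnell (2014).
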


\begin{proof}
Our proof is similar to Proposition 2 of~\cite{li2022efficient}. Since $(\MDP,\pie)$ and $\ell$ satisfies for all $s \in \Scal$ and $a \in \Acal$, 
$
Q_{\pie}(s,a) - V_{\pie}(s) \leq \mu \cdot \ell( a, \pie(s) )
$,
We apply the performance difference lemma (Lemma~\ref{lem: performance difference lemma} below) to the sequence of $\{\pi_n\}_{n=1}^N$ and $\pie$, obtaining
\[
\begin{aligned}
\frac1N \sum_{n=1}^N J(\pi_n) - J(\pie) 
=&
\frac H N  \sum_{n=1}^N \EE_{s \sim d_{\pi_n}}\EE_{a \sim \pi_n(\cdot|s)} \sbr{Q_{\pie}(s, a) - V_{\pie}(s)}\\
\leq & 
\frac{\mu H}{N} \sum_{n=1}^N  \EE_{s \sim d_{\pi_n}}\EE_{a \sim \pi_n(\cdot|s)} \sbr{\ell( a, \pie(s) )} \\
= &
\frac{\mu H}{N} \sum_{n=1}^N  \iterloss_n(\pi_n)
\leq
\mu H \del{ 
\min_{\pi \in \Bcal} \frac{1}{N} \sum_{n=1}^N \iterloss_n(\pi) + \frac{\Reg(N)}{N} },
\end{aligned}
\]
where the last line comes from the definition of $\iterloss_n (\pi) := \EE_{s \sim d_{\pi_n}, a \sim \pi(\cdot \mid s)} \ell(a, \pie(s))$ and $\Reg(N)$.

Now, it suffices to show $\frac1N \sum_{n=1}^N J(\pi_n) = J(\hat{\pi})$. Since $\hat{\pi}$ is executed by choosing a policy uniformly at random from $\cbr{\pi_n}_{n=1}^N$ and adhering to it, we conclude the proof by
$$
J(\hat{\pi}) = \EE_{s_1 \sim \rho} \sbr{ V_{\hat{\pi}}(s_1) } 
=
\EE_{s_1 \sim \rho} \sbr{\frac1N \sum_{n=1}^N  V_{\pi_n}(s_1) } 
=
\frac1N \sum_{n=1}^N \EE_{s_1 \sim \rho} \sbr{ V_{\pi_n}(s_1) } 
=
\frac1N \sum_{n=1}^N J(\pi_n).
$$


\end{proof}

\begin{lemma} [Performance Difference Lemma, Lemma 4.3 of~\cite{ross2014reinforcement}]
\label{lem: performance difference lemma}
For two stationary policies $\pi$ and $\pie$ $: \Scal \to \Delta(\Acal)$, we have
\[
J(\pi) - J(\pie) = H \cdot\EE_{s \sim d_{\pi}}\EE_{a \sim \pi(\cdot|s)} \sbr{Q_{\pie}(s, a) - V_{\pie}(s)}.
\]
\end{lemma}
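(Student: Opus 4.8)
The plan is to prove the Performance Difference Lemma by a one-step Bellman expansion of $Q_{\pie}$ followed by a telescoping sum along a trajectory generated by $\pi$. First I would fix a trajectory $s_1, a_1, \ldots, s_H, a_H$ obtained by rolling out $\pi$ in $\MDP$ (so $s_1 \sim \rho$, $a_t \sim \pi(\cdot \mid s_t)$, $s_{t+1} \sim P(\cdot \mid s_t, a_t)$, and $c_t \sim C(\cdot \mid s_t, a_t)$), and consider the quantity $\sum_{t=1}^H \big( Q_{\pie}(s_t, a_t) - V_{\pie}(s_t) \big)$. Here I use that $\MDP$ is layered, so the step index is encoded in the state and the abbreviated notations $V_{\pie}(s)$, $Q_{\pie}(s,a)$ are unambiguous shorthand for $V^t_{\pie}(s)$, $Q^t_{\pie}(s,a)$ when $s \in \Scal_t$.

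Next I would invoke the Bellman consistency identity for the expert's value functions, $Q_{\pie}(s_t, a_t) = \EE[c_t \mid s_t, a_t] + \EE_{s' \sim P(\cdot \mid s_t, a_t)}\big[ V_{\pie}(s') \big]$, and take expectation over the randomness of $\pi$'s rollout. Because under this rollout $s_{t+1}$ is itself drawn from $P(\cdot \mid s_t, a_t)$, the tower rule gives $\EE_\pi\big[ Q_{\pie}(s_t, a_t) - V_{\pie}(s_t) \big] = \EE_\pi[c_t] + \EE_\pi\big[ V_{\pie}(s_{t+1}) \big] - \EE_\pi\big[ V_{\pie}(s_t) \big]$.

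Summing this identity over $t = 1, \ldots, H$, the differences $\EE_\pi[V_{\pie}(s_{t+1})] - \EE_\pi[V_{\pie}(s_t)]$ telescope to $\EE_\pi[V_{\pie}(s_{H+1})] - \EE_\pi[V_{\pie}(s_1)] = 0 - \EE_{s_1 \sim \rho}[V_{\pie}(s_1)] = -J(\pie)$, using that $V_{\pie}$ vanishes past the horizon and that $s_1 \sim \rho$ regardless of which policy is executed; meanwhile $\sum_{t=1}^H \EE_\pi[c_t] = J(\pi)$ by definition of $J$. Hence $\sum_{t=1}^H \EE_\pi\big[ Q_{\pie}(s_t, a_t) - V_{\pie}(s_t) \big] = J(\pi) - J(\pie)$. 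Finally I would rewrite the left-hand side using the state-visitation distribution: since $d_\pi = \frac1H \sum_{t=1}^H \PP(s_t = \cdot)$, we have $\sum_{t=1}^H \EE_{s_t}\EE_{a \sim \pi(\cdot \mid s_t)}[\,\cdot\,] = H \cdot \EE_{s \sim d_\pi}\EE_{a \sim \pi(\cdot \mid s)}[\,\cdot\,]$, which gives exactly the claimed equality.

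I do not expect a substantive obstacle; the proof is essentially bookkeeping. The only points demanding care are making sure the step-indexed value functions are manipulated consistently (which is painless given the layered structure), checking the boundary term $V_{\pie}(s_{H+1}) = 0$, and ensuring the inner expectation in the Bellman expansion is taken against precisely the conditional law of $s_{t+1}$ induced by $\pi$ so that the tower rule legitimately collapses the nested expectations.
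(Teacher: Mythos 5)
Your proof is correct: the one-step Bellman expansion of $Q_{\pie}$ plus telescoping along a $\pi$-rollout, with the boundary term $V_{\pie}(s_{H+1})=0$ and the identification $\sum_{t=1}^H \PP(s_t=\cdot) = H\, d_\pi$, is exactly the standard argument for this lemma. The paper itself does not prove it (it cites Lemma 4.3 of Ross and Bagnell, 2014), and your derivation matches that canonical proof, including the correct handling of the layered state space so that the time-indexed value functions are unambiguous.
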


\subsection{Limitations of the reduction-based framework}

Our positive result relies on the reduction framework of~\cite{ross2011reduction},
which bounds the learned policy's suboptimality by the sum of estimation gap and policy class bias $\mu H \min_{\pi \in \Bcal} \frac1N \sum_{n=1}^N F_n(\pi)$ (Theorem~\ref{thm:reduction}).
Importantly, the latter term is runtime dependent and one usually do not have a good control unless additional assumptions are imposed (e.g.,  there exists some $\pi \in \Bcal$ that disagrees with $\pi^E$ with low probability under some covering distribution $d_0$). We believe designing 
agnostic interactive imitation learning algorithms with runtime-independent guarantees is an important problem. 

Moreover, we show in Proposition~\ref{prop:reduction_limit} below that it is possible in the agnostic setting that any no-regret policy sequence 
$\cbr{\pi_n}_{n=1}^N$
with respect to the cost-sensitive classification losses $\cbr{F_n(\cdot)}_{n=1}^N$
converges to a \emph{globally suboptimal} policy with respect to the ground truth expected reward function $J(\cdot)$. 
We leave designing agnostic imitation learning algorithms with global optimality guarantees as an important question; without further assumption on the expert policy $\pie$,  
we believe this problem may be as hard as policy search-based agnostic reinforcement learning~\cite{jia2024agnostic}, where only limited positive results are currently known.

In the following, suppose we study the classification-based imitation learning setting when the loss function $\ell(s,a) = A_{\pie}(s,a):= Q_{\pie}(s, a) - V_{\pie}(s)$; this is the setting initially studied by~\cite{ross2014reinforcement}. 
As a result, 
$F_n(\pi) = \EE_{s \sim d_{\pi_n}} \EE_{a \sim \pi(\cdot|s)} \sbr{Q_{\pie}(s, a) - V_{\pie}(s)}$, while $F_n(\pi_n) = \frac{1}{H}(J(\pi_n)-J(\pie))$ (by Lemma~\ref{lem: performance difference lemma}).

\begin{proposition}
\label{prop:reduction_limit}
There exists a policy class $\Bcal$ of size 2, an MDP $\Mcal$, an expert policy $\pi^E$, such that any policy sequence $\cbr{\pi_n}_{n=1}^N \subseteq \Bcal$ guaranteeing a sublinear regret 
\[
\sum_{n=1}^N F_n(\pi_n) 
-
\min_{\pi \in \Bcal} \sum_{n=1}^N F_n(\pi)
= 
o(N)
\]
satisfies that 
\[
\sum_{n=1}^N J(\pi_n) - \min_{\pi \in \Pi} J(\pi)
= 
\Omega(N)
\]
\end{proposition}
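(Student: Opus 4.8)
The plan is to construct a small, essentially stateless example in which the cost-sensitive losses $F_n$ point the online learner toward a policy that is optimal \emph{on its own visitation distribution} but globally suboptimal in terms of $J$. The key mechanism I would exploit is the self-referential nature of $F_n$: since $F_n(\pi) = \EE_{s \sim d_{\pi_n}} \EE_{a \sim \pi(\cdot|s)}[A_{\pie}(s,a)]$ and $F_n(\pi_n) = \frac1H(J(\pi_n) - J(\pie))$, a policy that is a fixed point — in the sense that its own visitation distribution makes it look at least as good as the other policy in $\Bcal$ — will be selected by any no-regret sequence, yet nothing forces that fixed point to be the $J$-minimizer.

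\textbf{Construction.} I would take $H = 2$ (or a small constant horizon) and two states reachable at step $2$, say $s_{\mathrm{good}}$ and $s_{\mathrm{bad}}$, with the step-$1$ action determining which is reached. Let $\Bcal = \{\pi^{(1)}, \pi^{(2)}\}$, where $\pi^{(1)}$ drives the agent to $s_{\mathrm{bad}}$ and then behaves well there, while $\pi^{(2)}$ drives the agent to $s_{\mathrm{good}}$ and then behaves badly there. The expert $\pie$ goes to $s_{\mathrm{good}}$ and behaves well, so $\pi^{(2)}$ reaches the ``right'' state but mimics the expert poorly there, whereas $\pi^{(1)}$ reaches the ``wrong'' state but mimics the expert's (counterfactual) behavior there accurately. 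I would tune the advantage function $A_{\pie}$ so that: (i) under $d_{\pi^{(1)}}$ (concentrated on $s_{\mathrm{bad}}$), the advantage of $\pi^{(1)}$'s actions is essentially zero — because $\pie$ would also incur large cost if forced into $s_{\mathrm{bad}}$, i.e. $A_{\pie}(s_{\mathrm{bad}}, \cdot)$ is uniformly small — so $F_n(\pi^{(1)}) \le F_n(\pi^{(2)})$ whenever $\pi_n = \pi^{(1)}$; and (ii) crucially $J(\pi^{(1)}) > J(\pi^{(2)})$, so $\pi^{(1)}$ is globally suboptimal. The point is that the advantage $A_{\pie}$ measures regret relative to the expert \emph{re-inserted into the current trajectory}, which is blind to the fact that a better policy class member simply never visits $s_{\mathrm{bad}}$.

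\textbf{From the construction to the two conclusions.} With the example fixed, I would verify that the constant sequence $\pi_n \equiv \pi^{(1)}$ has zero (hence sublinear) regret: by (i), $\sum_n F_n(\pi^{(1)}) \le \sum_n F_n(\pi^{(2)}) = \min_{\pi \in \Bcal}\sum_n F_n(\pi)$, so the regret is $\le 0$. Then I would argue more strongly that \emph{any} sublinear-regret sequence must play $\pi^{(1)}$ on all but $o(N)$ rounds: whenever $\pi_n = \pi^{(2)}$, the losses $F_n$ strictly favor $\pi^{(1)}$ by a constant gap $\gamma > 0$ (this is the dual of (i) — under $d_{\pi^{(2)}}$, concentrated on $s_{\mathrm{good}}$, $\pi^{(2)}$'s bad actions have strictly positive advantage while $\pi^{(1)}$'s do not), so each such round contributes $\ge \gamma$ to the regret against the fixed comparator $\pi^{(1)}$; sublinear regret then forces $|\{n : \pi_n = \pi^{(2)}\}| = o(N)$. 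Since $J(\pi^{(1)}) - \min_{\pi \in \Pi} J(\pi) \ge J(\pi^{(1)}) - J(\pi^{(2)}) =: \Delta > 0$ is a positive constant, we get $\sum_n J(\pi_n) - \min_\pi J(\pi) \ge \Delta \cdot |\{n : \pi_n = \pi^{(1)}\}| = \Delta(N - o(N)) = \Omega(N)$, which is the claim. (Here $\Pi$ should be read as $\Pi_{\Bcal}$ or the set of all policies; either way $\pi^{(2)}$ certifies the gap.)

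\textbf{Main obstacle.} The delicate part is engineering the advantage function so that both directions hold simultaneously with genuine constant gaps: $A_{\pie}(s_{\mathrm{bad}}, a) \approx 0$ for all $a$ (so $\pi^{(1)}$ looks blameless on its own distribution), yet $A_{\pie}(s_{\mathrm{good}}, a)$ is bounded away from zero for $\pi^{(2)}$'s action (so visiting $s_{\mathrm{good}}$ and acting badly is penalized), \emph{and} $J(\pi^{(1)}) > J(\pi^{(2)})$ despite $s_{\mathrm{bad}}$ being the ``forgiving'' state under the expert's value function. This requires choosing the cost structure at step $1$ and step $2$ carefully: the step-$1$ cost must make reaching $s_{\mathrm{bad}}$ expensive (so $J(\pi^{(1)})$ is large), while the step-$2$ costs at $s_{\mathrm{bad}}$ must be high \emph{for every action including the expert's}, making $V_{\pie}(s_{\mathrm{bad}})$ large and hence advantages there small. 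I expect a clean two-step MDP with three or four states and a handful of cost parameters suffices; the verification of the inequalities is then a routine finite computation.
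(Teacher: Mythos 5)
Your overall strategy---build a tiny MDP in which the advantage-based losses $F_n$ uniformly prefer a globally suboptimal member of $\Bcal$, then convert a constant per-round gap into the statement that any no-regret sequence plays that policy on a $1-o(1)$ fraction of rounds and hence suffers $\Omega(N)$ regret in $J$---is exactly the paper's, and your second half (constant gap $\Rightarrow$ fraction $\Rightarrow$ $\Omega(N)$) is sound. The gap is in the construction itself: the three properties you impose are mutually inconsistent. First, $d_{\pi^{(1)}}$ is not concentrated on $s_{\mathrm{bad}}$; it places mass $1/H$ on the initial state $s_0$, which is the only state where $\pi^{(1)}$ and $\pi^{(2)}$ take different routes. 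Since all advantages at $s_{\mathrm{bad}}$ are zero by your design, $F_n(\pi^{(1)}) - F_n(\pi^{(2)}) = \frac{1}{H} A_{\pie}(s_0, a_{\mathrm{bad}})$ for \emph{every} $\pi_n$, where $a_{\mathrm{bad}}$ is the action leading to $s_{\mathrm{bad}}$ and $A_{\pie}(s_0,a_{\mathrm{bad}}) = c(s_0,a_{\mathrm{bad}}) + V_{\pie}(s_{\mathrm{bad}}) - V_{\pie}(s_0)$. Your ``main obstacle'' paragraph asks for $s_{\mathrm{bad}}$ to be expensive both to reach and to occupy, which makes this quantity large and \emph{positive}, so $F_n$ prefers $\pi^{(2)}$ on every round---the opposite of your claim (i); the blame for entering a forgiving-but-costly state is charged at $s_0$, not at $s_{\mathrm{bad}}$. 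Second, even dropping that cost design, the signs cannot be fixed: (i) forces $A_{\pie}(s_0,a_{\mathrm{bad}}) \le 0$, which together with zero advantages at $s_{\mathrm{bad}}$ gives, by the performance difference lemma, $J(\pi^{(1)}) = J(\pie) + H\,\EE_{s\sim d_{\pi^{(1)}}}[A_{\pie}(s,\pi^{(1)}(s))] \le J(\pie)$; meanwhile your (ii) requires $\pi^{(2)}$'s action at $s_{\mathrm{good}}$ to have strictly positive advantage, which by the same lemma gives $J(\pi^{(2)}) > J(\pie)$. Hence $J(\pi^{(1)}) < J(\pi^{(2)})$, contradicting the $J(\pi^{(1)}) > J(\pi^{(2)})$ your conclusion needs. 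No choice of cost parameters satisfies all of your sign constraints, so the ``routine finite computation'' you defer cannot succeed.

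The paper's example avoids both problems by reversing the roles. The no-regret dynamics are made to converge to the policy $h_2$ that \emph{imitates the expert's first action} and then crashes downstream, while the globally better $h_1$ deviates at the first step into a benign absorbing region with small cost. The essential device you are missing is that $h_1$ and $h_2$ are made to \emph{agree} at the downstream state $S_2$ (both take the catastrophic action there), so the loss difference $F_n(h_1)-F_n(h_2)$ cannot charge either policy differentially for the crash; it collapses to the single first-step term $\frac1H\,(A_{\pie}(S_0,h_1(S_0)) - A_{\pie}(S_0,h_2(S_0))) = \frac1H > 0$, making $h_2$ the uniform $F_n$-minimizer even though $J(h_2) = H-1 \gg J(h_1) = 1$. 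If you want to keep your two-branch picture, set it up that way: both class members take the same bad action at $s_{\mathrm{good}}$, the expert takes the good one there, and the detour through $s_{\mathrm{bad}}$ is cheap in $J$ but has positive first-step advantage.
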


\begin{proof}
As shown in Figure~\ref{fig:lowerbound}, we define MDP $\Mcal$ with:
\begin{itemize}
    \item State space $\Scal = \cbr{S_0,S_1,S_2,S_3,S_4}$ and action space $\Acal = \{L,R\}$.
    \item Initial state distribution  $\rho(S_0) = 1$
    \item Deterministic Transition dynamics:  $P_1(S_1|S_0,L)=1$, $P_1(S_2|S_0,R)=1, P_2(S_3|S_2,L)=1, P_2(S_4|S_0,R)=1$, while $\forall t \in [H]$, $\forall a \in \Acal$, $P_t(S_1|S_1,a)=P_t(S_3|S_3,a)=P_t(S_4|S_4,a)=1$, which are self-absorbing before termination.
    \item Cost function $c(S_0, R) = c(S_2, L) =  c(S_3,\cdot) =0$, $c(S_0, L) = c(S_1, \cdot) = \frac{1}{H}$, $ c(S_2, R) = c(S_4, \cdot)=1$.
\end{itemize}  

Meanwhile, let:
\begin{itemize}
\item Base policy class $\Bcal = \{h_1,h_2\}$, where $h_1(S_0) = L$ and $h_2(S_0)=R$, while $h_1(S_2) = h_2(S_2)=R$.
\item Deterministic expert $\pie$ such that $\pie(S_0) =R$, $\pie(S_2)= L$. 
\end{itemize}

For this MDP example, it can be seen that $J(\pie)=0$, $J(h_1) =1$, $J(h_2)= H-1$.
Also, $V_{\pi}(S_0) = J(\pie)=0$, $Q_{\pie}(S_0, L) = 1$, $Q_{\pie}(S_0, R) = 0$, we have $A_{\pie}(S_0,h_1(S_0)) = 1$, $A_{\pie}(S_0,h_2(S_0)) = 0$.

Consider any sequence of policy $\cbr{\pi_n}_{n=1}^N  \subseteq \Bcal$, inducing loss function $\cbr{F_n(\pi)}_{n=1}^N$.
First, we observe that for every $n$,
$
\argmin_{\pi \in \Bcal} F_n(\pi)
= h_2
$.
This is because 
the only difference between $h_1$ and $h_2$ is the action taken at $S_0$, and so 
for any $\pi_n$, 
$F_n(h_1) - F_n(h_2) = \frac1H ( A_{\pie}(S_0,h_1(S_0)) - A_{\pie}(S_0,h_2(S_0)) ) = \frac1H > 0$.


Therefore we conclude that given any any sequence $\cbr{\pi_n}_{n=1}^N$ that guarantees a sublinear regret, we have that at least $1-o(1)$ fraction of the $\pi_n$'s must be $h_2$.

Then, for large enough $N$,
\[
\sum_{n=1}^N \rbr{J(\pi_n) - \min_{\pi \in \Bcal} J(\pi)}
\geq
\sum_{n=1}^N J(h_2)-o(N) - \sum_{n=1}^N\min_{\pi \in \Bcal} J(\pi)
=
N(H-1)-N-o(N)
=
\Omega(N)
\]

\begin{figure*}[htbp]
  \centering
  \includegraphics[width=0.4\linewidth]{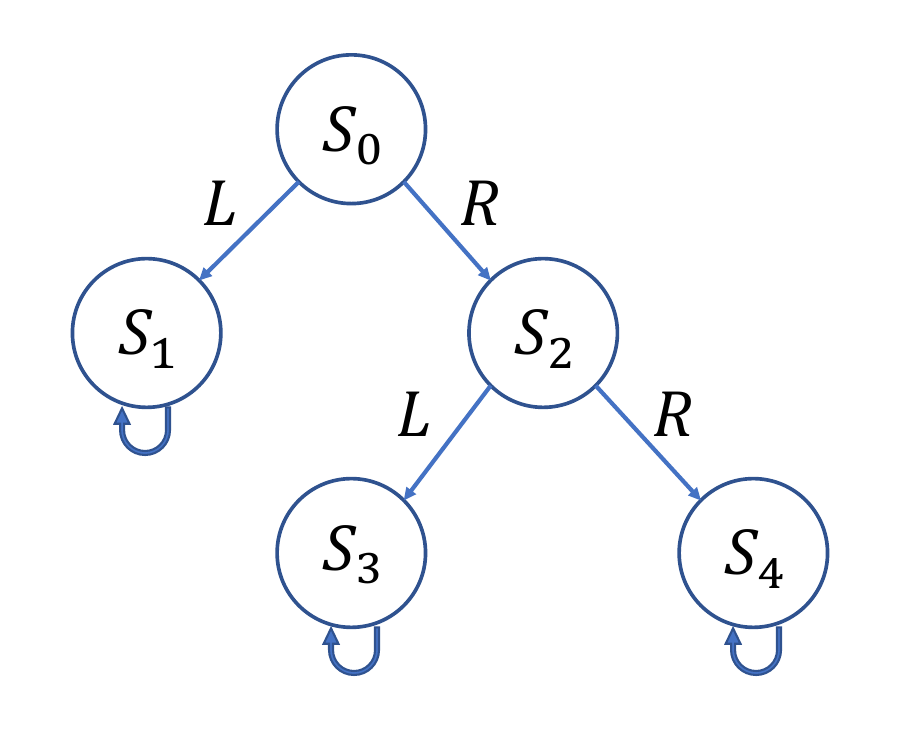}
  \vspace{.1in}
  \caption{Example MDP showing the limitation of reduction-based framework.}
  \label{fig:lowerbound}
\end{figure*}
\end{proof}

\begin{remark}
The proposition continues to hold when the loss function is the 0-1 loss: $\ell(s,a) = I( a \neq \pie(s) )$. To see this, note that $F_n(h_1) - F_n(h_2) = \frac1H \del{ I( h_1(S_0) \neq \pie(S_0) ) - I( h_2(S_0) \neq \pie(S_0) ) } = \frac{1}{H} > 0$, and the rest of the proof carries over verbatimly. 
\end{remark}

\begin{remark}
The above proposition can be generalized to allow $\cbr{\pi_n}_{n=1}^N \subseteq \Pi_\Bcal$; the proof will carry over except that we argue that at least $1-o(1)$ of the total weights in $\pi_n$'s representation will be on $h_2$.  
\end{remark}

\begin{remark}
We thank an anonymous ICML reviewer who originally provided an example~\citep[][Problem (3)]{pitfall} on this issue. To put this in a real-world example, we can view this learning to ski from an expert demonstrator. The expert chooses a fast route $S_0 \to S_2 \to S_3 \to .. \to S_3$. Policy $h_1$ takes an ``easy route'' that deviates from the expert at step 1, and incurs a small but nonzero cost. Policy $h_2$ tries to mimic the expert by first choosing to take the fast route; however it fails to mimic the expert from step 2 on and incurs a catastrophically high cost. Although $h_2$ has a smaller imitation loss than $h_1$, both policies' inability to keep up with $\pi^E$ subsequently makes $h_1$ actually a better choice.
\end{remark}

\section{Proofs for Section~\ref{sec:mftplp}}
\label{sec:mftplp-pf}

In the following, we 
provide detailed proofs for Theorem~\ref{thm:soil-main} and Corollary~\ref{cor:soil-main} in Section~\ref{sec:proofs}.
We first briefly review the interactive imitation learning for discrete action space setting in Section~\ref{sec:recap}.

\subsection{Notations and algorithm}
\label{sec:recap}
In this section, we first review some basic notations for interactive imitation learning introduced in Sections~\ref{sec:prelims} and~\ref{sec:mftplp} and then introduce additional notations for our analysis.

\textbf{Review of notations.}
The framework proposed by Ross et al.\cite{ross2011reduction} reduces finding a policy $\hat{\pi}$ with a small performance gap compared to the expert policy $J(\hat{\pi}) - J(\pie)$ into minimization of online regret. As shown in Theorem~\ref{thm:reduction}, to find a policy competitive with $\pie$, it suffices to find a sequence of policies $\cbr{\pi_n}_{n=1}^N$ that optimize the regret defined as $\Reg(N) = \sum_{n=1}^N \iterloss_n(\pi_n) - \min_{\pi \in \Bcal }\sum_{n=1}^N \iterloss_n(\pi)$, 
where $\iterloss_n(\pi) := \EE_{s \sim d_{\pi_n}}  \EE_{a\sim \pi(\cdot | s)}\sbr{I(a \neq \pie(s))}$.

We propose the \algms algorithm (Algorithm~\ref{alg:mftpl}) to achieve sublinear regret, assuming sample access to some covering distribution $\based$ (Assumption~\ref{assumption:smooth-base}) that satisfies that for any $\pi \in \Pi_\Bcal$ and $ s \in \Scal$, $\frac{d_{\pi}(s)}{\based(s)}\leq \frac{1}{\smooth}$. 
Meanwhile, we assume access to an offline classification oracle $\Ocal$ (Assumption~\ref{assum:oracle}), which, given a (multi)set of classification examples, returns the policy in the base policy class that has the smallest empirical classification error. 

Let $\Bcal$ be the base policy class that contains $B$ deterministic policies.
For $u \in \Delta(\Bcal)$, define $u[h]$ as the coordinate of $u$ corresponding to the $h \in \Bcal$.
Recall the definition of mixed policy class $\Pi_\Bcal := \cbr{\pi_u(a| s) = \sum_{h\in \Bcal} u[h] \cdot I(a = h(s)): u \in \Delta(\Bcal)}$. 

For completeness, we present Algorithm~\ref{alg:mftplp_full}, which integrates
the two functions in \algms for a more straightforward representation.



\begin{algorithm}[h]
\caption{\algms (Mixed Following The Perturbed Leader with Poisson Perturbations)}
\begin{algorithmic}[1]
\STATE \textbf{Input:} MDP $\MDP$, expert $\pie$, policy class $\Bcal$, offline classification oracle $\Ocal$, covering distribution $\based$,
sample size per iteration $K$, ensemble size $\Ensem$,  perturbation budget $\poiparam$.
\STATE Initialize $\incrdata = \emptyset$.
\FOR{$n=1,2,\ldots,N$}
\FOR{$\ensem=1,2,\ldots,\Ensem$}
\STATE Sample $\poi_{n,\ensem} \sim \poisson(\poiparam)$.
\STATE Sample $\pertdata_{n,\ensem} \gets $ draw i.i.d. perturbation samples $\{(\tilde{s}_{n,\ensem,x},\tilde{a}_{n,\ensem,x})\}_{x=1}^{\poi_{n,\ensem}}$ from $\incrdistbase = \based \otimes \Unif(\Acal)$.
\STATE Compute $h_{n,\ensem} \gets \Ocal( D \cup Q_{n,\ensem})$.
\label{line:compute_single_policy}
\ENDFOR
\STATE Set $\pi_{n}(a \mid s) := \frac{1}{\Ensem}\sum_{\ensem = 1}^\Ensem I( h_{n, e}(s) = a )$.
\label{line:ensemble_policy}
\STATE $\incrdata_n = \cbr{ (s_{n,k}, \pie(s_{n,k})) }_{k = 1}^K \gets$ sample $K$ states i.i.d. from $d_{\pi_{n}}$ by rolling out $\pi_{n}$ in $\MDP$, and query expert $\pie$ on these states. 
\STATE Aggregate datasets  $\incrdata \gets \incrdata \cup \incrdata_n$.
\ENDFOR
\STATE \textbf{Return} $\pi_{\hat{n}}(a \mid s) := \frac{1}{\Ensem}\sum_{\ensem = 1}^\Ensem I( \pi_{\hat{n}, e}(s) = a )$, where $\hat{n} \sim \Unif[N]$.
\end{algorithmic}
\label{alg:mftplp_full}
\end{algorithm}


\textbf{Additional notations.} \cite{li2022efficient} provides a framework for designing and analyzing regret-efficient interactive imitation learning algorithm for discrete action spaces. In a nutshell, the framework views the original classification-based regret minimization problem over $\Pi_\Bcal$ as an online linear optimization problem over $\Delta(\Bcal)$. Our design and analysis of \algms also adopt this framework, and thus we introduce the necessary notations in the context of \algms that facilitate this view.

In the following, we denote $\Onehot(h) \in \Delta(\Bcal)$ as the delta mass on a single policy $h$ within the base policy class $\Bcal$. 
We use $\incrdata_{1:n}$ as a shorthand for 
$\cup_{i=1}^{n} \incrdata_i$. 

Using the notations $\pi_u$ and $\Onehot$, in line 7
, we can write the policy returned from the oracle in the form of mixed policy, i.e. $h_{n,\ensem}=\pi_{u_{n,\ensem}}$, where $u_{n,\ensem} = \Onehot(\Ocal( \incrdata_{1:n-1} \cup \pertdata_{n,\ensem}) )$.


We define $\incrdist_{\pi}$ as the distribution of $(s,\pie(s))$, obtained by rolling out $\pi$ in $\MDP$ and querying the expert $\pie$. Denote $g_n^* := \rbr{\EE_{s \sim d_{\pi_n}} \sbr{I(h(s) \neq \pie(s))}}_{h \in \Bcal}$, which is a $B$ dimensional cost vector. 
We can rewrite $\iterloss_n (\pi_u)$ in the form of inner product as:
$$
\iterloss_n(\pi_u) 
:= \EE_{s \sim d_{\pi_n}}  \EE_{a\sim \pi_u(\cdot | s)}\sbr{I(a \neq \pie(s))}
= \EE_{s \sim d_{\pi_n}} \sum_{h \in \Bcal} u[h] \sbr{I(h(s) \neq \pie(s))} = \inner{g_n^*}{u}.
$$

Thus, the regret can be rewritten in an inner product form:
\begin{equation}
\Reg(N) 
=
\sum_{n=1}^N \iterloss_n(\pi_n) - \min_{\pi \in \Bcal }\sum_{n=1}^N \iterloss_n(\pi)
=
\sum_{n=1}^N \inner{g_n^*}{u_n} - \min_{u \in \Delta(\Bcal)}\sum_{n=1}^N \inner{g_n^*}{u},
\label{eqn:reg-olo}
\end{equation}

An equivalent representation of $\pi_{n+1}$ (line 9
) in the form of mixed policy is $\pi_{n+1} = \pi_{u_{n+1}}$, where $u_{n+1} = \frac1\Ensem \sum_{\ensem=1}^{\Ensem} u_{n+1,\ensem}$. 
By abusing $\incrdata_n$ to denote the uniform distribution over it, we define %
\begin{equation}
   g_n := \rbr{\EE_{(s,\pie(s)) \sim \incrdata_n} \sbr{I(h(s) \neq \pie(s))}}_{h \in \Bcal}, \;
\tilde{g}_{n,\ensem} =  \rbr{\frac{1}{K}\sum_{(\tilde{s},\tilde{a}) \in \pertdata_{n,\ensem}} \rbr{ I(h(\tilde{s}) \neq \tilde{a}) - \frac{A-1}{A} }  }_{h \in \Bcal}, 
\label{eqn: g_n}
\end{equation}
which stand for the cost vectors on $\incrdata_n$ and $Q_{n,\ensem}$ respectively.
With these notations,
we can rewrite $u_n$ as a sample-average version of the ``Follow-the-Perturbed-Leader'' algorithm~\cite{kalai2005efficient} over $E$ independent trials:
\begin{equation}
u_n = \frac 1 \Ensem \sum_{\ensem=1}^{\Ensem} \argmin_{u \in \Delta(\Bcal)} \inner{\sum_{i=1}^{n-1}g_i + \tilde{g}_{n,\ensem}}{u}.
\label{eqn:u-n-argmin}
\end{equation}
We give a formal proof of Eq.~\eqref{eqn:u-n-argmin} in Lemma~\ref{lem:rewrite_un}.

We define two $\sigma-$algebras for data and policies accumulated through the learning procedure of \algms:
\begin{equation}
    \Fcal_n := \sigma\rbr{ u_1 ,\incrdata_1,  u_2 ,\incrdata_2, \cdots u_{n}, \incrdata_{n}}, \:
    \Fcal^+_n := \sigma\rbr{ u_1 ,\incrdata_1,  u_2 ,\incrdata_2,  \cdots , u_{n}, \incrdata_{n},  u_{n+1}},
    \label{eqn: sigma_algebra}
\end{equation}
where it can be verified that filtration $(\Fcal_ n)_{n=1}^N$ and $(\Fcal^+_n)_{n=1}^N$ satisfies
$\Fcal_1 \subset \Fcal^+_1 \subset \Fcal_2 \subset \Fcal^+_2 \subset \cdots$.

Following the definition of perturbation sets $\pertdata_{n,\ensem}$ in Algorithm~\ref{alg:mftplp_full}, 
given $\poiparam > 0$, for any $n,n' \in [N]$ and any $\ensem,\ensem' \in [\Ensem]$, $\pertdata_{n,\ensem} $ and $ \pertdata_{n',\ensem'}$ are equal in distribution. With this observation, we introduce a random variable $\pertdata_n$ that has the same distribution as $\pertdata_{n,\ensem}$ and 
\[
\tilde{g}_n =  \rbr{\frac{1}{K}\sum_{(\tilde{s},\tilde{a}) \in \pertdata_{n}} \rbr{ I(h(\tilde{s}) \neq \tilde{a}) - \frac{A-1}{A} }  }_{h \in \Bcal}
\]
which has the same distribution as 
$\tilde{g}_{n,\ensem}$. Without loss of generality, $\forall n\in [N], \ensem \in [\Ensem]$,  for any function $f$ of $(\pertdata_{n,\ensem}, \incrdata_{1:n-1})$,we abbreviate 
$\EE\sbr{f(\pertdata_{n,\ensem},\incrdata_{1:n-1})| \Fcal_{n-1}}$ 
as $\EE_{\pertdata_n}\sbr{f(\pertdata_n,\incrdata_{1:n-1})}$ throughout and define
\begin{equation}
u_{n}^* :=
\EE\sbr{ u_{n,e} | \Fcal_{n-1} }
\label{eqn:u_star}   
\end{equation}
Similar to Eq.~\eqref{eqn:u-n-argmin} for $u_n$, we rewrite 
\[
u_n^* = \EE_{\pertdata_n}\sbr{\Onehot(\Ocal( \incrdata_{1:n-1} \cup \pertdata_n)} = \EE_{\pertdata_n} \sbr{ \argmin_{u \in \Delta(\Bcal)} \inner{\sum_{i=1}^{n-1}g_i + \tilde{g}_n}{u}   },
\]


Meanwhile, given any function $f'$ of $(\incrdata_n, \incrdata_{1:n-1})$, we abbreviate $\EE \sbr{ f'( \incrdata_n, \incrdata_{1:n-1}) | \Fcal^+_{n-1}}$ as $\EE_{\incrdata_n}\sbr{ f'(\incrdata_n, \incrdata_{1:n-1}) }$. We further define  
\begin{equation}
    u_{n+1}^{**} :=
    \EE\sbr{ u_{n+1,e} \mid \Fcal_{n-1}^+ }
    .
    \label{eqn:u_star_star}
\end{equation}
By the law of iterated expectation, this can be also written as 

\begin{equation}
u_{n+1}^{**} = 
  \EE \sbr{ \EE \sbr{ u_{n+1,e} | \Fcal_{n} } | \Fcal^+_{n-1}}
  =
  \EE\sbr{ u_{n+1}^* \mid \Fcal_{n-1}^+ }
  = 
  \EE_{\incrdata_n} \sbr{ u_{n+1}^* }
  = 
  \EE_{\incrdata_n} \EE_{Q_{n+1,e}} \sbr{ u_{n+1,e} }
\label{eqn:u-star-star-2}
\end{equation}
where the second equality follows from the definition of $u_{n+1}^*$, and the
third equality uses the observation that $u_{n+1}^*$ is a function of $(\incrdata_n, \incrdata_{1:n-1})$, and the last equality is from that $u_{n+1}^* = \EE_{Q_{n+1,e}} \sbr{ u_{n+1,e} }$. 

By this observation, $u_{n+1}^{**}$ can be rewritten as 
$$ u_{n+1}^{**}  = \EE_{ \incrdata_{n}} \EE_{\pertdata_{n+1}} \sbr{ \argmin_{u \in \Delta(\Bcal)} \inner{\sum_{i=1}^{n-1}g_i + g_{n}+ \tilde{g}_{n+1}}{u}   }.$$


 We remark that the notations $u_n, u_n^*, u_n^{**}$, as well as $g_n, g_n^*, \tilde{g}_n$, are introduced solely for analytical purposes. 
 
 As a quick recap, we provide a dependency graph of important variables that appear in the analysis in Figure~\ref{fig:dependency_graph}, 
 while summarizing frequently-used notations in Table~\ref{notation-table} below.
 

\begin{table}[!htb]
    \caption{A review of  notations in this paper.} 
  \label{notation-table}
      \centering
      \makebox[0pt]{
        \begin{tabular}{llll}
    \textbf{Name}     & \textbf{Description} & \textbf{Name} & \textbf{Description}    \\
    \hline \\
    $\MDP$ & Markov decision process & $ \Ocal$  & Classification oracle\\
    $H$ & Episode length &$\Pi_\Bcal$ &  Mixed policy class\\
    $t$ & Time step in $\MDP$  & $u$ & Ensemble policy probability weight\\
    $\Scal$ & State space  &$\pi_u$ & Ensemble policy induced by $u$\\
    $S$ & State space size & $u[h]$ & Ensemble weight for $h$ in $u$  \\
    $s$ & State & $u_n$ & Ensemble policy weight at round $n$\\
    $\Acal$ & Action space & $K$  & Sample budget per round \\
    $A$ & Action space size & $k$ & Sample iteration index\\
    $a$ & Action & $D$ & Aggregated dataset \\
    $\rho$ &  Initial distribution &$\incrdata_n$ & Set of Classification examples at round $n$\\
    $P$ & Transition probability distribution & $\est_n$ & Loss vector induced by $\incrdata_n$\\
    $C$ & Cost distribution  & $\incrdist_{\pi}$ & $(s,\pie(s))$ distribution induced by $\pi$, $\MDP$ and $\pie$ \\
    $c$ & Cost& $g^*_n$ & Expected loss vector induced by $\pi_n$, $\MDP$ and $\pie$\\
    $\pi$ & Stationary policy  & $\based$ & Covering base distribution\\
    $\pi(\cdot | s)$ & Action distribution of $\pi$ given state $s$ & $\incrdistbase$ & $(s,a)$ distribution induced by $\based \otimes \Unif(\Acal)$ \\
    $d_{\pi}$  & State occupancy distribution  & $\sigma$ &  Smooth factor \\
    $\tau$  & Trajectory   &  $\Ensem$ &  Ensemble size\\
    $J(\pi)$ & Expected cumulative cost  &  $\ensem$ &  Ensemble index\\
    $Q_\pi$ &  Action value function & $\poisson(\poiparam)$ & Poisson distribution\\
    $V_\pi$ &  State value function &$\lambda$ &  Perturbation budget \\
    $\pie$ & Expert policy  &$X_{n,e}$ &  Perturbation set size \\
    $\ell$ & Loss function & $x$ & Sample index within a perturbation set \\
    $\mu$ & Recoverability of $(\pie,M)$ for $\ell$  &$Q_{n,e}$ &  Perturbation set  \\
    $N$ & Number of learning rounds & $\tilde{g}_{n,e}$ & Perturbation loss vector in $\RR^B$ induced by $Q_{n,e}$\\
    $n$  & Learning round index &  $\Fcal_n,\Fcal^+_n$ & $\sigma$-algebras induced by $\{u_i\}_{i = 1}^{n}$ and $\{\incrdata_i\}_{i = 1}^{n-1}$\\
    $\iterloss_n(\pi)$ & Online loss function &  $\EE_{\incrdata_n}$ & Expectation w.r.t. $\incrdata_n \sim (\incrdata_{\pi_n}^{\pie})^K$ \\
    $\Bcal$ &  Deterministic base policy class  &  $\EE_{Q_n}$ & Expectation w.r.t. $Q_n \sim (\incrdistbase)^X,$ where $X \sim \poisson(\poiparam)$ \\
    $B$ &  Base policy class size  & $u^*_n$ & Expectation of $u_n$ w.r.t $Q_n$\\
    $h$ &  Deterministic stationary policy in $\Bcal$ &  $u^{**}_n$ & Expectation of $u_n$ w.r.t $Q_n$ and $\incrdata_{n-1}$ \\
    $\Reg(N)$  &  Online regret & $\sbr{N}$ &Set $\{1,2,\cdots,N\}$\\
    $\Unif(\Ecal)$& Uniform distribution over $\Ecal$  & $\Delta(\Ecal)$  & All probability distributions over  $\Ecal$\\
    $\prob(U)$ & Probability of event  $U$  & $\Onehot(\Bcal)$ & Delta mass (one-hot vector) on $h\in \Bcal$\\
    $\delta$ & Failure probability  &$I(\cdot)$ & Indicator function\\
    \end{tabular}
    }
\end{table}

\begin{figure*}[htbp]
  \centering
  \includegraphics[width=1\linewidth]{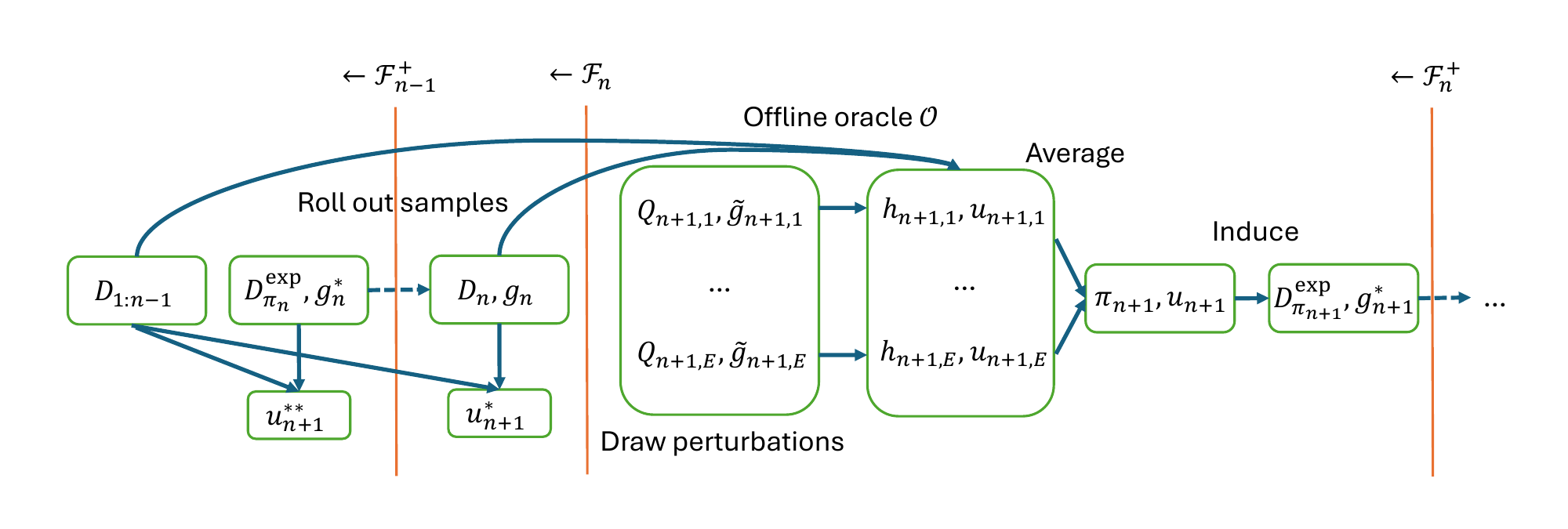}
  \vspace{.1in}
  \caption{Dependency graph of notations that appear in the analysis. Solid and dashed arrows indicate deterministic and stochastic dependence, respectively. Note that all $(Q_{n+1,e})_{e \in [E]}$'s are drawn independently from fixed sample perturbation distributions and can be treated as fresh iid random examples.
  }
  \label{fig:dependency_graph}
\end{figure*}

\newpage

\subsubsection{Auxiliary Lemmas}

\begin{lemma}
$\pi_n = \pi_{u_n}$, where 
\[
u_n = \frac 1 \Ensem \sum_{\ensem=1}^{\Ensem} \argmin_{u \in \Delta(\Bcal)} \inner{\sum_{i=1}^{n-1}g_i + \tilde{g}_{n,\ensem}}{u}.
\]
\label{lem:rewrite_un}
\end{lemma}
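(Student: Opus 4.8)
The plan is to unpack the definitions of $\pi_n$, $h_{n,e}$, and $u_{n,e}$ given in Algorithm~\ref{alg:mftplp_full} and Section~\ref{sec:recap}, and then argue that $\Ocal(\incrdata_{1:n-1} \cup \pertdata_{n,e})$ is exactly the minimizer of a linear objective over $\Delta(\Bcal)$. First I would recall that, by line~\ref{line:ensemble_policy} of Algorithm~\ref{alg:mftplp_full}, $\pi_n(a \mid s) = \frac{1}{\Ensem}\sum_{\ensem=1}^{\Ensem} I(h_{n,\ensem}(s) = a)$, where $h_{n,\ensem} = \Ocal(\incrdata_{1:n-1} \cup \pertdata_{n,\ensem})$. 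Writing $u_{n,\ensem} := \Onehot(h_{n,\ensem})$ and $u_n := \frac{1}{\Ensem}\sum_{\ensem=1}^{\Ensem} u_{n,\ensem}$, a direct computation shows $\pi_{u_n}(a \mid s) = \sum_{h \in \Bcal} u_n[h]\, I(a = h(s)) = \frac{1}{\Ensem}\sum_{\ensem=1}^{\Ensem} I(a = h_{n,\ensem}(s)) = \pi_n(a \mid s)$, giving $\pi_n = \pi_{u_n}$.

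The substantive step is to rewrite the oracle call as an $\argmin$ over $\Delta(\Bcal)$. By Assumption~\ref{assum:oracle}, $h_{n,\ensem} = \argmin_{h \in \Bcal} \sum_{(s,a) \in \incrdata_{1:n-1} \cup \pertdata_{n,\ensem}} I(h(s) \neq a)$. I would split the sum over the dataset $\incrdata_{1:n-1} = \cup_{i=1}^{n-1}\incrdata_i$ and the perturbation set $\pertdata_{n,\ensem}$. Recalling the definitions in Eq.~\eqref{eqn: g_n}, we have $\sum_{(s,a) \in \incrdata_i} I(h(s) \neq a) = K \cdot g_i[h]$ and $\sum_{(\tilde s,\tilde a) \in \pertdata_{n,\ensem}} I(h(\tilde s) \neq \tilde a) = K \cdot \tilde g_{n,\ensem}[h] + \poi_{n,\ensem}\cdot\frac{A-1}{A}$. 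Hence the empirical classification error of $h$ equals $K\,\inner{\sum_{i=1}^{n-1} g_i + \tilde g_{n,\ensem}}{\Onehot(h)} + \poi_{n,\ensem}\frac{A-1}{A}$; since the additive term $\poi_{n,\ensem}\frac{A-1}{A}$ and the factor $K > 0$ do not depend on $h$, minimizing over $h \in \Bcal$ is equivalent to minimizing $\inner{\sum_{i=1}^{n-1} g_i + \tilde g_{n,\ensem}}{\Onehot(h)}$. Finally, since the objective $\inner{c}{u}$ is linear in $u$, its minimum over the simplex $\Delta(\Bcal)$ is attained at a vertex, i.e.\ at some $\Onehot(h)$; thus $\Onehot(h_{n,\ensem}) = \argmin_{u \in \Delta(\Bcal)} \inner{\sum_{i=1}^{n-1} g_i + \tilde g_{n,\ensem}}{u}$ (ties broken consistently with the oracle's tie-breaking). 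Averaging over $\ensem \in [\Ensem]$ yields the claimed formula for $u_n$.

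The main obstacle — really the only subtlety — is the handling of ties, both in the oracle's $\argmin$ over $\Bcal$ and in the linear program over $\Delta(\Bcal)$: one should fix a consistent tie-breaking rule (e.g.\ a fixed total order on $\Bcal$) so that ``$\argmin$'' denotes a well-defined map, and verify that the vertex selected by this rule for the simplex LP coincides with the policy returned by $\Ocal$. Beyond that, the argument is a bookkeeping exercise in translating between sums of indicators over datasets and inner products of cost vectors with one-hot vectors, as already set up by Eq.~\eqref{eqn: g_n}.
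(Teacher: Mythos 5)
Your proposal is correct and follows essentially the same route as the paper's proof: unpack the oracle call, split the empirical error over $\incrdata_{1:n-1}$ and $\pertdata_{n,\ensem}$, use invariance of $\argmin$ under positive scaling and constant shifts to identify it with $\inner{\sum_{i=1}^{n-1} g_i + \tilde g_{n,\ensem}}{\Onehot(h)}$, and pass from $\Bcal$ to $\Delta(\Bcal)$ via linearity of the objective over the simplex. Your explicit remarks on tie-breaking and the verification that $\pi_{u_n} = \pi_n$ are details the paper leaves implicit, but they do not change the argument.
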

\begin{proof}


\begin{equation}
\begin{aligned}
    u_n =& \frac{1}{\Ensem} \sum_{\ensem=1}^{\Ensem} u_{n,\ensem}
    =\frac1\Ensem \sum_{\ensem=1}^{\Ensem}\Onehot(\Ocal( \incrdata_{1:n-1} \cup \pertdata_{n,\ensem}) ) \\
 =&
 \frac1\Ensem \sum_{\ensem=1}^{\Ensem}\Onehot(
 \argmin_{h \in \Bcal} \EE_{(s,a) \sim \incrdata_{1:n-1} \cup \pertdata_{n,\ensem}} \sbr{  I(h(s) \neq a) }  )\\
 =&
 \frac1\Ensem \sum_{\ensem=1}^{\Ensem}\Onehot \rbr{
 \argmin_{h \in \Bcal} \rbr{ \frac{1}{(n-1)K+\poi_{n,\ensem}} \rbr{ \sum_{i=1}^{n-1}\sum_{(s,a) \in {\incrdata_i}}  \rbr{I(h(s) \neq a)}
 +
 \sum_{{(\tilde{s},\tilde{a}) \in \pertdata_{n,\ensem} } } \rbr{ I(h(\tilde{s}) \neq \tilde{a})}   }}  }\\
 =&
 \frac1\Ensem \sum_{\ensem=1}^{\Ensem}\Onehot \rbr{
 \argmin_{h \in \Bcal} \rbr{ \sum_{i=1}^{n-1} \EE_{(s,a) \sim {\incrdata_i}} \sbr{  I(h(s) \neq a) }
 +
 \frac{1}{K} \sum_{\sum_{(\tilde{s},\tilde{a}) \in \pertdata_{n,\ensem} } } \rbr{  I(h(\tilde{s}) \neq \tilde{a}) }   }  }\\
 =&
 \frac1\Ensem \sum_{\ensem=1}^{\Ensem} \argmin_{u \in \Delta(\Bcal)} \inner {
 \rbr{ \sum_{i=1}^{n-1} \EE_{(s,a) \sim {\incrdata_i}} \sbr{  I(h(s) \neq a) }
 +
 \frac{1}{K} \sum_{\sum_{(\tilde{s},\tilde{a}) \in \pertdata_{n,\ensem} } } \rbr{  I(h(\tilde{s}) \neq \tilde{a}) }   }_{h \in \Bcal} }{u}  \\
 =&
 \frac1\Ensem \sum_{\ensem=1}^{\Ensem} \argmin_{u \in \Delta(\Bcal)} \inner {
 \rbr{ \sum_{i=1}^{n-1} \EE_{(s,a) \sim {\incrdata_i}} \sbr{  I(h(s) \neq a) }
 +
 \frac{1}{K} \sum_{\sum_{(\tilde{s},\tilde{a}) \in \pertdata_{n,\ensem} } } \rbr{  I(h(\tilde{s}) \neq \tilde{a}) - \frac{A-1}{A} }  }_{h \in \Bcal} }{u}  \\
 =& \frac 1 \Ensem \sum_{\ensem=1}^{\Ensem} \argmin_{u \in \Delta(\Bcal)} \inner{\sum_{i=1}^{n-1}g_i + \tilde{g}_{n,\ensem}}{u},
\label{eqn:u_n}
\end{aligned}
\end{equation}
where we apply the invariant property of $\argmax$ operator on positive scaling and shifting. Note that $\pertdata_{n,\ensem}$ contains $\poi_{n,\ensem}$ perturbation examples and each $\incrdata_n$ contains $K$ examples.
\end{proof}

\subsection{Proof of Theorem~\ref{thm:soil-main}}
\label{sec:proofs}
The proofs in this section follows the flowchart in Figure~\ref{fig:flowchart}, which is divided to three stages:

\begin{itemize}

\item \textbf{At stage 1}, we apply the existing results~\cite{li2022efficient} to reduce bounding the distribution-dependent online regret
$\Reg(N) = \sum_{n=1}^N \inner{g_n^*}{u_n} - \min_{u \in \Delta(\Bcal)}\sum_{n=1}^N \inner{g_n^*}{u}$ to bounding the data-dependent online regret $\sum_{n = 1}^N \inner{g_n}{u_n^*} - \min_{u \in \Delta(\Bcal)} \sum_{n = 1}^N \inner{g_n}{u}$ using standard martingale concentration inequalities.
By the end of stage 1, it remains to bound the regret of the idealized sequence of predictors  $\{u_n^*\}_{n=1}^N$ on the observed linear losses $\{g_n\}_{n=1}^N$.

\item \textbf{At stage 2}, a bound on the ``ideal regret'' is established by a standard analysis of an in-expectation version of the ``Follow the perturbed Leader'' algorithm. By Lemma~\ref{lem: online_mftpl_poisson} and~\ref{lem: ftpl}, we prove that
$$ \sum_{n = 1}^N \inner{g_n}{u_n^*} - \min_{u \in \Delta(\Bcal)} \sum_{n = 1}^N \inner{g_n}{u} 
\leq
\EE_{\pertdata_1} \sbr{ \max_{u \in \Delta(\Bcal)} \inner{-\tilde{g}_1}{u}}
+
\sum_{n=1}^N \inner{g_n}{u_n^* - u_{n+1}^*}.
$$
The first term on the right hand side can be straightforwardly bounded by Lemma~\ref{lem: smooth_bias}. It remains to bound $\sum_{n=1}^N \inner{g_n}{u_n^* - u_{n+1}^*}$.

\item \textbf{At stage 3}, we aim to control $\sum_{n=1}^N \inner{g_n}{u_n^* - u_{n+1}^*}$.
Existing smoothed online learning analysis~\citep{haghtalab2022oracle} (implicitly) provide bounds on $\EE\sbr{ \sum_{n=1}^N \inner{g_n}{u_n^* - u_{n+1}^*} }$, which is insufficient for our goal of establishing high-probability bounds. Furthermore,~\citet{haghtalab2022oracle} only considers the online learning setting where one example is given at each round and the action space is binary, which is insufficient for 
batch mode multiclass online classification setting for our imitation learning application. 
To bridge the gap between existing techniques and our problem, we further decompose 
$\sum_{n=1}^N \inner{g_n}{u_n^* - u_{n+1}^*}$ to three terms: stability term, generalization error, and an approximation term. Our analysis of stability term and generalization error generalizes the analysis of~\citet{haghtalab2022oracle} to multiclass batch setting (Lemmas~\ref{lem: stability} and~\ref{lem: generalization}). For the new approximation term, we observe that it has martingale structure and thus concentrates well (see proof of Lemma~\ref{lem: divergence_bound}). With these, we have all terms bounded and conclude Theorem~\ref{thm:soil-main}.
\end{itemize}

We provide a roadmap of our analysis of the the three stages in Figure~\ref{fig:flowchart}, highlighting the key quantities and the key lemmas, as well as their relationships. 


\begin{figure*}[htbp]
  \centering
  \includegraphics[width=1\linewidth]{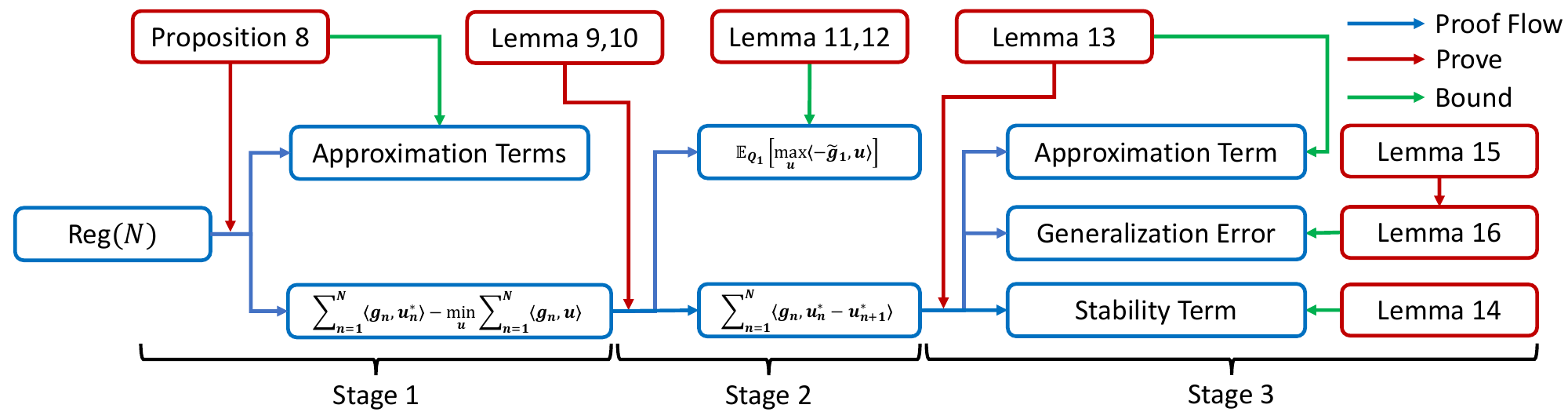}
  \vspace{.1in}
  \caption{Flowchart of the proofs to bound the regret.}
  \label{fig:flowchart}
\end{figure*}


\subsubsection{Proof for Stage 1}
Following the previous results \algm, we guarantee that our algorithm \algms satisfies:

\begin{proposition}
    For any $\delta \in (0,1]$, \algms outputs policies $\cbr{\pi_n}_{n=1}^N$ such that with probability at least 
    $1- \delta/2$, 
    \[
    \Reg(N) 
    \leq
    \sum_{n = 1}^N \inner{g_n}{u_n^*} - \min_{u \in \Delta(\Bcal)} \sum_{n = 1}^N \inner{g_n}{u}
    +
     O\rbr{  \sqrt{\frac{N\ln(B/\delta)}{K}} }
     +
     N\sqrt{\frac{ 2A\rbr{\ln(NS)+\ln(\frac{12}{\delta})}}{\Ensem}} .
    \]
    \label{prop: mftpl}
\end{proposition}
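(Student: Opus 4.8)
The plan is to treat this as Stage~1 of the three-stage argument in Figure~\ref{fig:flowchart}: starting from $\Reg(N)$ in its online-linear-optimization form~\eqref{eqn:reg-olo}, I would peel off everything that is \emph{not} the ideal data-dependent regret $\sum_{n}\inner{g_n}{u_n^*}-\min_{u}\sum_n \inner{g_n}{u}$ (the quantity Stages~2--3 handle), and control the residue by two concentration arguments: one over the $E$-fold ensemble randomness (giving the $\sqrt{A/E}$ term) and one over the per-round $K$-sample rollout randomness (giving the $\sqrt{\ln(B/\delta)/K}$ term); this follows the template of the \algm analysis~\citep{li2022efficient}, instantiated for \algms. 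Concretely, fix $\bar u \in \argmin_{u\in\Delta(\Bcal)}\sum_n\inner{g_n^*}{u}$ and $\tilde u \in \argmin_{u\in\Delta(\Bcal)}\sum_n\inner{g_n}{u}$ (both attained at simplex vertices), insert $u_n^*$ and $g_n$, and telescope:
\begin{align*}
\Reg(N)
&= \underbrace{\sum_{n=1}^N \inner{g_n^*}{u_n-u_n^*}}_{(\mathrm A)}
 + \underbrace{\sum_{n=1}^N \inner{g_n^*-g_n}{u_n^*}}_{(\mathrm B)} \\
&\quad + \underbrace{\Big(\sum_{n=1}^N\inner{g_n}{u_n^*}-\min_{u\in\Delta(\Bcal)}\sum_{n=1}^N\inner{g_n}{u}\Big)}_{(\mathrm C)}
 + \underbrace{\sum_{n=1}^N\inner{g_n-g_n^*}{\bar u}}_{(\mathrm D')},
\end{align*}
where in the last term I used that $\tilde u$ minimizes $\sum_n\inner{g_n}{\cdot}$ to replace $\tilde u$ by $\bar u$, and since $\bar u$ lies in the simplex, $(\mathrm D')\le \big\|\sum_n(g_n-g_n^*)\big\|_\infty=:(\mathrm D)$. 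Term $(\mathrm C)$ is exactly the ideal regret, so it remains to bound $(\mathrm A)$, $(\mathrm B)$, and $(\mathrm D)$.

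For the ensemble term $(\mathrm A)$, the subtlety is that $g_n^*$ depends on the ensemble through $d_{\pi_n}$, so $(\mathrm A)$ is \emph{not} an average of $E$ independent summands; I would first decouple from $g_n^*$. Using $\inner{g_n^*}{u}=\EE_{s\sim d_{\pi_n}}[1-\pi_u(\pie(s)\mid s)]$ and $\pi_u(a\mid s)=\sum_{h\in\Bcal}u[h]\,I(h(s)=a)$,
\[
|(\mathrm A)|\ \le\ \sum_{n=1}^N \EE_{s\sim d_{\pi_n}}\big|\pi_{u_n^*}(\pie(s)\mid s)-\pi_{u_n}(\pie(s)\mid s)\big|\ \le\ \sum_{n=1}^N\ \sup_{s\in\Scal,\,a\in\Acal}\big|\pi_{u_n}(a\mid s)-\pi_{u_n^*}(a\mid s)\big|.
\]
Conditioned on $\Fcal_{n-1}$, the base predictions $\{\Onehot(h_{n,e}(s))\}_{e=1}^E$ are i.i.d.\ with mean $\pi_{u_n^*}(\cdot\mid s)$, so for each fixed $(n,s,a)$ Hoeffding's inequality controls $|\pi_{u_n}(a\mid s)-\pi_{u_n^*}(a\mid s)|$; a union bound over $\Scal\times\Acal\times[N]$ --- whose multiclass structure is what costs a factor $A$ rather than $\ln A$, as in the \algm analysis --- gives $\sup_{s,a}|\pi_{u_n}(a\mid s)-\pi_{u_n^*}(a\mid s)|\le\sqrt{2A(\ln(NS)+\ln(12/\delta))/E}$ for all $n$ simultaneously, hence $(\mathrm A)\le N\sqrt{2A(\ln(NS)+\ln(12/\delta))/E}$.

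For the sampling terms $(\mathrm B)$ and $(\mathrm D)$, both are sums of bounded martingale differences once the filtration~\eqref{eqn: sigma_algebra} is used correctly: $\pi_n=\pi_{u_n}$ is $\Fcal_{n-1}^+$-measurable and $\incrdata_n$ consists of $K$ fresh i.i.d.\ rollouts of $\pi_n$ labelled by $\pie$, so $\EE[g_n\mid\Fcal_{n-1}^+]=g_n^*$, while $u_n^*$ is $\Fcal_{n-1}$- and hence $\Fcal_{n-1}^+$-measurable. Thus $\{\inner{g_n^*-g_n}{u_n^*}\}_n$ and, for each fixed $h\in\Bcal$, $\{g_n^*[h]-g_n[h]\}_n$ are martingale difference sequences w.r.t.\ $(\Fcal_n^+)_n$, each term in $[-1,1]$ with conditional variance $O(1/K)$ (an average of $K$ conditionally i.i.d.\ bounded terms). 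Freedman's inequality bounds $(\mathrm B)$ by $O(\sqrt{N\ln(1/\delta)/K}+\ln(1/\delta))$, and applied per $h$ with a union bound over $\Bcal$ it bounds $(\mathrm D)=\max_h|\sum_n(g_n[h]-g_n^*[h])|$ by $O(\sqrt{N\ln(B/\delta)/K}+\ln(B/\delta))$; for $N$ large enough the additive terms are dominated, giving $(\mathrm B)+(\mathrm D)=O(\sqrt{N\ln(B/\delta)/K})$. A final union bound over the $O(1)$ failure events, each of probability at most a constant times $\delta$, yields the claim with probability at least $1-\delta/2$.

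The step I expect to be the main obstacle is the measurability bookkeeping forced by the ``backward'' dependence $d_{\pi_n}\!\leftarrow\!\pi_n\!\leftarrow\! u_n\!\leftarrow\!\{Q_{n,e}\}$: because $g_n^*$ is \emph{not} $\Fcal_{n-1}$-measurable, in $(\mathrm A)$ one must pass to the deterministic $\sup_{s,a}$ bound \emph{before} invoking independence across the ensemble, and in $(\mathrm B)$ and $(\mathrm D)$ the martingale property holds only against the enlarged filtration $\Fcal_{n-1}^+$, not $\Fcal_{n-1}$. A secondary but important point is that the ensemble-error bound must avoid any dependence on $B$, which works only because the $0$--$1$ imitation loss $\iterloss_n$ sees a mixed policy through its $\Scal\times\Acal$ action probabilities rather than through its representation in $\Delta(\Bcal)$.
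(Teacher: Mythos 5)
Your proposal is correct and follows essentially the same route as the paper's proof: the paper likewise splits $\Reg(N)$ into the ideal data-dependent regret plus an ensemble-approximation term (controlled by concentration of the $E$ i.i.d.\ base policies with a union bound over rounds and states, Proposition~\ref{prop:sparsification}) and two rollout-sampling terms (controlled by Azuma--Hoeffding over the $NK$ samples, with a union bound over $\Bcal$ for the min-swap term, Proposition~\ref{prop:logger}), and your variations --- pairing $g_n^*$ rather than $g_n$ with $u_n-u_n^*$, using a fixed comparator $\bar u$ and an $\ell_\infty$ bound in place of the paper's three-way split around $h^*$, coordinate-wise Hoeffding in place of the multinomial $\ell_1$ concentration, and Freedman in place of per-sample Azuma --- are cosmetic. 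One small inaccuracy: the factor $A$ inside the square root of the ensemble term does not come from the union bound over $\Acal$ (which costs only $\ln A$) but, in the paper, from the Weissman-type $\ell_1$ concentration of the multinomial $\pi_n(\cdot\mid s)$; your coordinate-wise bound actually yields the sharper $\sqrt{\ln(NSA/\delta)/E}$, so the stated $A$-dependent bound remains a valid (if loose) upper estimate.
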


\begin{proof}
By the inner product form of the regret (Eq.~\eqref{eqn:reg-olo}), we have the following decomposition

\[
\begin{aligned}
\Reg(N) 
=&
\sum_{n=1}^N \inner{g_n^*}{u_n} - \min_{u \in \Delta(\Bcal)}\sum_{n=1}^N \inner{g_n^*}{u}\\
=&
\sum_{n=1}^N \inner{g_n}{u_n} - \min_{u \in \Delta(\Bcal)}\sum_{n=1}^N \inner{g_n}{u}
+
\sum_{n=1}^N \inner{g_n^* - g_n}{u_n} 
+
\min_{u \in \Delta(\Bcal)}\sum_{n=1}^N \inner{g_n}{u}
- \min_{u \in \Delta(\Bcal)}\sum_{n=1}^N \inner{g_n^*}{u} \\
\leq &
\sum_{n=1}^N \inner{g_n}{u_n} - \min_{u \in \Delta(\Bcal)}\sum_{n=1}^N \inner{g_n}{u}
+
\sqrt{\frac{2N \ln(\frac{12}{\delta})}{K}}
+
\sqrt{2N\frac{\ln (B)+\ln(\frac{12}{\delta})}{K}}\\
=&
\sum_{n=1}^N \inner{g_n}{u_n^*} 
-
\min_{u \in \Delta(\Bcal)}\sum_{n=1}^N \inner{g_n}{u}
+
\sum_{n=1}^N \langle {\est_n}, {u_n - u_n^*} \rangle 
 + 
 O\rbr{  \sqrt{\frac{N\ln(B/\delta)}{K}} } \\
\leq&
\sum_{n=1}^N \inner{g_n}{u_n^*} 
-
\min_{u \in \Delta(\Bcal)}\sum_{n=1}^N \inner{g_n}{u}
+
 O\rbr{  \sqrt{\frac{N\ln(B/\delta)}{K}} }
+
 N\sqrt{\frac{ 2A\rbr{\ln(NS)+\ln(\frac{12}{\delta})}}{\Ensem}},
\end{aligned}
\]
where the first and second inequalities are from propositions~\ref{prop:logger} and~\ref{prop:sparsification} in Appendix~\ref{sec:deferred_proofs}
respectively, which in turn are from the proposition 6 and the proof of lemma 8 in~\cite{li2022efficient}.
\end{proof}

\subsubsection{Proof for Stage 2}
In this section, we prove a bound on the regret of the ``idealized policy sequence'' $\cbr{\pi_{u_n^*}}_{n=1}^N$, i.e.,  
$\sum_{n=1}^N \inner{g_n}{u_n^*} - \min_{u \in \Delta(\Bcal)}\sum_{n=1}^N \inner{g_n}{u}$.
Such result should be well-known in the context of analysis of the ``Follow the Perturbed Leader'' algorithm in online linear optimization~\cite{kalai2005efficient}; we provide full details here since we cannot find in the literature this exact lemma statement we need. 
An in-expectation version of a similar bound has been implicitly shown in \citet{haghtalab2022oracle}.
in the language of admissible relaxations~\cite{rakhlin2012relax}. 

\begin{lemma}
For $g_n$ induced by \algms, MDP $\MDP$ and expert $\pie$, the sequence of $u_n^*$ defined in equation~\eqref{eqn:u_star} satisfies
\begin{equation}
\sum_{n=1}^N \inner{g_n}{u_n^*} 
-
\min_{u \in \Delta(\Bcal)}\sum_{n=1}^N \inner{g_n}{u}
\leq
\EE_{\pertdata_1} \sbr{ \max_{u \in \Delta(\Bcal)} \inner{-\tilde{g}_1}{u}}
+
\sum_{n=1}^N \inner{g_n}{u_n^* - u_{n+1}^*}, 
\label{eqn:online_eftpl}
\end{equation}
where $\tilde{g}_1 :=  \rbr{\frac{1}{K}\sum_{(\tilde{s},\tilde{a}) \in \pertdata_1} \rbr{ I(h(\tilde{s}) \neq \tilde{a}) - \frac{A-1}{A} }  }_{h \in \Bcal}$, $\pertdata_1 := \cbr{\{( \tilde{s}_{1,x},\tilde{a}_{1,x})\}_{x=1}^ \poi :  \poi_1 \sim Poi(\poiparam), (\tilde{s}_{x},\tilde{a}_{x}) \sim \incrdistbase }$.

\begin{remark}
Intuitively, the right hand side of Eq.~\eqref{eqn:online_eftpl}
exhibits a bias-variance tradeoff: $\EE_{\pertdata_1} \sbr{ \max_{u \in \Delta(\Bcal)} \inner{-\tilde{g}_1}{u}}$ and $\sum_{n=1}^N \inner{g_n}{u_n^* - u_{n+1}^*}$ are the ``bias'' and ``variance'' terms that increases and decrease with respect to the amount of perturbation noise $\lambda$, respectively. We bound the first term in Lemma~\ref{lem: smooth_bias}
and the second term in Section~\ref{sec:pf-stage3}, respectively.

\end{remark}


\label{lem: online_mftpl_poisson}
\end{lemma}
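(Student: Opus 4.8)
The plan is to recognize the idealized sequence $\{u_n^*\}$ as the iterates of an \emph{in-expectation Follow-the-Perturbed-Leader} algorithm on the observed linear losses $\{g_n\}$, and to bound its regret by the classical ``be-the-leader / follow-the-leader'' telescoping argument, carefully separating a one-step stability term from a perturbation-bias term. Recall that by Eq.~\eqref{eqn:u-n-argmin} (more precisely the $u_n^*$ version derived just above the statement), $u_n^* = \EE_{\pertdata_n}\sbr{\argmin_{u \in \Delta(\Bcal)} \inner{\sum_{i=1}^{n-1}g_i + \tilde g_n}{u}}$, i.e. it is the expected FTPL play against the cumulative loss $G_{n-1} := \sum_{i=1}^{n-1} g_i$ with a fresh perturbation $\tilde g_n$ whose law does not depend on $n$.

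First I would introduce, for a fixed realization of the perturbation random variable $\tilde g \sim \distp_{\tilde g}$ (the common distribution of all $\tilde g_n$), the ``perturbed leader'' $v_n(\tilde g) := \argmin_{u \in \Delta(\Bcal)} \inner{G_{n-1} + \tilde g}{u}$, so that $u_n^* = \EE_{\tilde g}\sbr{v_n(\tilde g)}$. The standard ``Follow-the-Leader / Be-the-Leader'' lemma, applied to the augmented loss sequence $\tilde g, g_1, g_2, \ldots, g_N$ (with $\tilde g$ played at a phantom round $0$), gives for every realization of $\tilde g$:
\[
\sum_{n=1}^N \inner{g_n}{v_{n+1}(\tilde g)} + \inner{\tilde g}{v_1(\tilde g)} \leq \min_{u \in \Delta(\Bcal)}\del{ \inner{\tilde g}{u} + \sum_{n=1}^N \inner{g_n}{u} } \leq \min_{u \in \Delta(\Bcal)} \sum_{n=1}^N \inner{g_n}{u} + \max_{u \in \Delta(\Bcal)} \inner{-\tilde g}{u}.
\]
Rearranging, $\sum_{n=1}^N \inner{g_n}{v_{n+1}(\tilde g)} - \min_{u}\sum_{n=1}^N \inner{g_n}{u} \leq \max_{u}\inner{-\tilde g}{u} - \inner{\tilde g}{v_1(\tilde g)} \leq \max_u \inner{-\tilde g}{u} + \max_u\inner{-\tilde g}{u}$; but it is cleaner to keep $\max_u\inner{-\tilde g}{u}$ only once and absorb $-\inner{\tilde g}{v_1(\tilde g)}$ into it, since $\inner{\tilde g}{v_1(\tilde g)} = \min_u \inner{\tilde g}{u} = -\max_u\inner{-\tilde g}{u} \le 0$ is not guaranteed in sign — so instead I would bound $-\inner{\tilde g}{v_1(\tilde g)} \le \max_u\inner{-\tilde g}{u}$, which is fine but loses a factor $2$; to avoid that, note $v_1(\tilde g) = \argmin_u \inner{\tilde g}{u}$ so $\inner{\tilde g}{v_1(\tilde g)} = \min_u\inner{\tilde g}{u} = -\max_u\inner{-\tilde g}{u}$ \emph{exactly}, hence $\max_u\inner{-\tilde g}{u} - \inner{\tilde g}{v_1(\tilde g)} = 2\max_u\inner{-\tilde g}{u}$, and the constant can be tightened later in Lemma~\ref{lem: smooth_bias}; alternatively one shifts $\tilde g$ to be mean-zero (it already is, by the $-\frac{A-1}{A}$ centering) so that the expectation of the linear term vanishes. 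I would then take $\EE_{\tilde g}$ of both sides: by Jensen/linearity the left-hand side becomes $\sum_{n=1}^N \inner{g_n}{\EE_{\tilde g} v_{n+1}(\tilde g)} - \min_u\sum_n\inner{g_n}{u} = \sum_n \inner{g_n}{u_{n+1}^*} - \min_u\sum_n\inner{g_n}{u}$ (here crucially $g_n$ and the constraint set do not depend on $\tilde g$, and the $\min$ commutes out of the expectation by convexity in the direction that gives the right inequality), yielding
\[
\sum_{n=1}^N \inner{g_n}{u_{n+1}^*} - \min_{u \in \Delta(\Bcal)}\sum_{n=1}^N \inner{g_n}{u} \leq \EE_{\pertdata_1}\sbr{\max_{u \in \Delta(\Bcal)}\inner{-\tilde g_1}{u}}
\]
(with $\tilde g_1 \overset{d}{=} \tilde g$). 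Finally, adding and subtracting, $\sum_n \inner{g_n}{u_n^*} = \sum_n \inner{g_n}{u_{n+1}^*} + \sum_n\inner{g_n}{u_n^* - u_{n+1}^*}$, which gives exactly Eq.~\eqref{eqn:online_eftpl}.

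The main obstacle is making the exchange of $\EE_{\tilde g}$ with the $\min_{u}$ and the $\argmin$ fully rigorous: $v_n(\tilde g)$ is only defined up to tie-breaking (the oracle $\Ocal$ must return a consistent minimizer), and one must verify that $\EE_{\tilde g}\sbr{v_{n+1}(\tilde g)}$ indeed equals the $u_{n+1}^*$ defined via the conditional expectation $\EE\sbr{u_{n+1,e}\mid\Fcal_{n-1}}$ — this is where the martingale/filtration bookkeeping of Section~\ref{sec:recap} (the fact that conditioned on $\Fcal_{n-1}$, $G_{n-1} = \sum_{i<n} g_i$ is determined and each $\pertdata_{n,e}$ is a fresh i.i.d. draw independent of $\Fcal_{n-1}$) does the work, and I would cite Lemma~\ref{lem:rewrite_un} and the displayed identity for $u_n^*$ for this. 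A secondary subtlety is the constant in front of $\EE_{\pertdata_1}\sbr{\max_u\inner{-\tilde g_1}{u}}$; since the lemma as stated has constant $1$, I would use the centering of $\tilde g$ (its coordinates have conditional mean $0$ given $\Fcal_{n-1}$, because $\EE[I(h(\tilde s)\neq \tilde a)] = \frac{A-1}{A}$ under $\tilde a \sim \Unif(\Acal)$) so that $\EE_{\tilde g}\inner{\tilde g}{u} = 0$ for every fixed $u$; then in the be-the-leader step one compares against the phantom play $v_1$ but bounds only the $\max_u\inner{-\tilde g}{u}$ piece after taking expectations, the linear piece $\EE\inner{\tilde g}{v_1}$ being handled by a further conditioning argument (or simply noting $-\inner{\tilde g}{v_1(\tilde g)} = \max_u \inner{-\tilde g}{u} \ge 0$ already, so the ``phantom round'' contributes $\le \max_u\inner{-\tilde g}{u}$ and in expectation we keep just one copy). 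I expect the cleanest writeup to run the be-the-leader induction directly on the shifted sequence and keep constant $1$; everything else is routine telescoping.
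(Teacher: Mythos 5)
Your proposal is correct and takes essentially the same route as the paper: the identical decomposition into the bias term $\EE_{\pertdata_1}\sbr{\max_{u \in \Delta(\Bcal)}\inner{-\tilde{g}_1}{u}}$ plus the stability sum $\sum_{n=1}^N \inner{g_n}{u_n^* - u_{n+1}^*}$, obtained from the be-the-leader telescoping together with the zero-mean centering of the perturbation vector. The only (cosmetic) difference is bookkeeping: the paper telescopes a per-round potential inequality on the expected perturbed maximum (Lemma~\ref{lem: ftpl}) and applies Jensen's inequality at the final potential, whereas you run the pathwise be-the-leader induction with a single coupled perturbation and take expectations at the end — both hinge on $\tilde{g}$ having conditional mean zero so that exactly one copy of the bias term survives.
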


\begin{proof}
    
Notice that $\sum_{n=1}^N \inner{g_n}{u_n^*}$ appears on both sides of equation \eqref{eqn:online_eftpl}, by arranging the terms, it suffices to show
\[
\sum_{n=1}^N \inner{g_n}{ u_{n+1}^*}
\leq
\min_{u \in \Delta(\Bcal)}\sum_{n=1}^N \inner{g_n}{u}
+
\EE_{\pertdata_1} \sbr{ \max_{u \in \Delta(\Bcal)} \inner{-\tilde{g}_1}{u}}
\]
By Lemma~\ref{lem: ftpl}, we have $\forall n \in [N]$,
\[
\inner{g_n}{u_{n+1}^*}
\leq 
\EE_{\pertdata_n} \sbr{ \max_{u \in \Delta(\Bcal)} \inner{-\sum_{i=1}^{n-1} g_i -\tilde{g}_n}{u}}
- 
\EE_{\pertdata_{n+1}} \sbr{ \max_{u \in \Delta(\Bcal)} \inner{-\sum_{i=1}^{n} g_i -\tilde{g}_{n+1}}{u}}.
\]
Summing the above inequality over $n \in [N]$ (and noting that the right hand side is a telescoping sum) gives that 
\begin{equation}
\sum_{n=1}^N \inner{g_n}{ u_{n+1}^*}
\leq
\EE_{\pertdata_1} \sbr{ \max_{u \in \Delta(\Bcal)} \inner{-\tilde{g}_1}{u}}
-
\EE_{\pertdata_{N+1}} \sbr{ \max_{u \in \Delta(\Bcal)} \inner{-\sum_{n=1}^{N} g_n -\tilde{g}_{N+1}}{u}},
\label{eqn: telescoping}
\end{equation}

Meanwhile, we further notice that 
\begin{equation}
\EE_{\pertdata_{N+1}} \sbr{ \max_{u \in \Delta(\Bcal)} \inner{-\sum_{n=1}^{N} g_n -\tilde{g}_{N+1
}}{u}}
\geq
\max_{u \in \Delta(\Bcal)} \EE_{\pertdata_{N+1}} \sbr{ \inner{-\sum_{n=1}^{N} g_n -\tilde{g}_{N+1}}{u}}
=
\max_{u \in \Delta(\Bcal)} \inner{-\sum_{n=1}^{N} g_n }{u},
\label{eqn: jensen_ineq}
\end{equation}
where we apply Jensen's inequality and observe that $\forall s \in \Scal, h \in \Bcal$, $\EE_{y \sim \Unif(\Acal)} \sbr{ I(h(s) \neq y)} = (A-1)/A$, meaning that 
$\forall u \in \Delta(\Bcal)$, 
\[
\EE_{\pertdata_{N+1}} \sbr{ \inner{-\tilde{g}_{N+1}}{u}} 
=
\EE_{\pertdata_{N+1}} \sbr{  \inner{-\rbr{\frac{1}{K}\sum_{(\tilde{s},\tilde{a}) \in \pertdata_{N+1}}  \rbr{ I(h(\tilde{s}) \neq \tilde{a}) -(A-1)/A  } }_{h \in \Bcal}}{u}}
=0.
\]

Therefore, we conclude the proof by plugging equation~\eqref{eqn: jensen_ineq} in \eqref{eqn: telescoping}:
\[
\sum_{n=1}^N \inner{g_n}{ u_{n+1}^*}
\leq
\EE_{\pertdata_1} \sbr{ \max_{u \in \Delta(\Bcal)} \inner{-\tilde{g}_1}{u}}
-
\max_{u \in \Delta(\Bcal)} \inner{-\sum_{n=1}^{N} g_n }{u}
=
\EE_{\pertdata_1} \sbr{ \max_{u \in \Delta(\Bcal)} \inner{-\tilde{g}_1}{u}}
+
\min_{u \in \Delta(\Bcal)} \inner{\sum_{n=1}^{N} g_n }{u}.
\]

\end{proof}

\begin{lemma}
For $\cbr{g_n}_{n=1}^N$ induced by \algms and $\cbr{u_n^*}_{n=1}^N$ defined in Eq.~\eqref{eqn:u_star}, 
\[
\inner{g_n}{u_{n+1}^*}
\leq 
\EE_{\pertdata_{n}} \sbr{ \max_{u \in \Delta(\Bcal)} \inner{-\sum_{i=1}^{n-1} g_i -\tilde{g}_{n}}{u}}
- 
\EE_{\pertdata_{n+1}} \sbr{ \max_{u \in \Delta(\Bcal)} \inner{-\sum_{i=1}^{n} g_i -\tilde{g}_{n+1}}{u}}.
\]
    \label{lem: ftpl}
\end{lemma}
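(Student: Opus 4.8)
The plan is to prove Lemma~\ref{lem: ftpl} by a ``be-the-leader'' telescoping argument, exactly of the kind used in the standard analysis of Follow-the-Perturbed-Leader for online linear optimization, but carried out for the \emph{in-expectation} iterate $u_{n+1}^* = \EE_{\pertdata_{n+1}}\sbr{\Onehot(\Ocal(\incrdata_{1:n} \cup \pertdata_{n+1}))}$. First I would fix the history, i.e.\ condition on $\Fcal_n$, so that $\incrdata_{1:n}$ and hence the loss vectors $g_1,\dots,g_n$ are determined, and abbreviate $\hat u_{n+1} := \Onehot(\Ocal(\incrdata_{1:n} \cup \pertdata_{n+1}))$. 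As in Lemma~\ref{lem:rewrite_un}, since the empirical $0$--$1$ loss is linear in $\Onehot(h)$ and a linear functional on the simplex $\Delta(\Bcal)$ is optimized at a vertex, $\hat u_{n+1}$ equals $\argmin_{u \in \Delta(\Bcal)} \inner{\textstyle\sum_{i=1}^{n} g_i + \tilde g_{n+1}}{u}$ and therefore \emph{realizes} the maximum $\max_{u \in \Delta(\Bcal)} \inner{-\textstyle\sum_{i=1}^{n} g_i - \tilde g_{n+1}}{u}$; moreover $u_{n+1}^* = \EE_{\pertdata_{n+1}}\sbr{\hat u_{n+1}}$.

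The core step is then to plug the (suboptimal) choice $u = \hat u_{n+1}$ into the $(n-1)$-round maximum $\max_u \inner{-\sum_{i=1}^{n-1} g_i - \tilde g_{n+1}}{u}$ while using the exact identity above for the $n$-round maximum. Subtracting the two, the $-\sum_{i=1}^{n-1} g_i$ and $-\tilde g_{n+1}$ contributions cancel, leaving $\inner{g_n}{\hat u_{n+1}}$ inside the expectation; taking $\EE_{\pertdata_{n+1}}$ and pulling $g_n$ (which is $\Fcal_n$-measurable, hence constant under this expectation) out by linearity yields exactly $\inner{g_n}{u_{n+1}^*}$. This gives
\[
\inner{g_n}{u_{n+1}^*} \le \EE_{\pertdata_{n+1}}\sbr{\max_{u}\inner{-\textstyle\sum_{i=1}^{n-1} g_i - \tilde g_{n+1}}{u}} - \EE_{\pertdata_{n+1}}\sbr{\max_{u}\inner{-\textstyle\sum_{i=1}^{n} g_i - \tilde g_{n+1}}{u}}.
\]

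The last thing to reconcile is that the lemma writes the first right-hand term with $\pertdata_n$ and $\tilde g_n$ rather than $\pertdata_{n+1}$ and $\tilde g_{n+1}$. I would argue these coincide: $\sum_{i=1}^{n-1} g_i$ is $\Fcal_{n-1}$-measurable, and $\tilde g_n, \tilde g_{n+1}$ are identically distributed and each drawn independently of the history $\Fcal_n \supseteq \Fcal_{n-1}$ (precisely the observation recorded below Eq.~\eqref{eqn: sigma_algebra} about the perturbation sets being equal in distribution and independent of past data). Hence $\EE_{\pertdata_{n+1}}[\,\cdot\,] = \EE[\,\cdot \mid \Fcal_{n-1}] = \EE_{\pertdata_{n}}[\,\cdot\,]$ for that term, and Lemma~\ref{lem: ftpl} follows; this is the form needed for the telescoping in Lemma~\ref{lem: online_mftpl_poisson}. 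I don't expect a genuine obstacle here — the argument is short — the only care needed is the measure-theoretic bookkeeping (which conditional expectation, $\Fcal_{n-1}$ versus $\Fcal_n$, each $\EE_{\pertdata_\bullet}$ refers to, so the inequality holds almost surely) and the justification that the oracle's vertex output actually attains the maximum of the linear objective over $\Delta(\Bcal)$.
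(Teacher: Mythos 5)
Your proposal is correct and is essentially the paper's own argument: both lower-bound the $(n-1)$-round perturbed maximum by evaluating it at the $n$-round maximizer (for which the $n$-round maximum is attained exactly, since the oracle's vertex output optimizes the linear objective over $\Delta(\Bcal)$), cancel the shared terms to isolate $\inner{g_n}{\cdot}$, take expectation over the perturbation, and then swap $\pertdata_{n+1}$ for $\pertdata_n$ in the first term using their identical distributions. The only cosmetic difference is that the paper phrases the key inequality as "optimality of $u_{n,\pertdata}$" after rearranging, whereas you plug the suboptimal point into the max directly.
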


\begin{proof}

Note that $Q_n$ and $Q_{n+1}$ have identical probability distributions. Therefore, the lemma statement is equivalent to:
\[
\inner{g_n}{u_{n+1}^*}
\leq 
\EE_{\pertdata} \sbr{ \max_{u \in \Delta(\Bcal)} \inner{-\sum_{i=1}^{n-1} g_i -\tilde{g}}{u}}
- 
\EE_{\pertdata} \sbr{ \max_{u \in \Delta(\Bcal)} \inner{-\sum_{i=1}^{n} g_i -\tilde{g}}{u}},
\]
where 
$\tilde{g} :=  \rbr{\frac{1}{K}\sum_{(\tilde{s},\tilde{a}) \in \pertdata} \rbr{ I(h(\tilde{s}) \neq \tilde{a}) - \frac{A-1}{A} }  }_{h \in \Bcal}$.

By the definition of $u_n^*$ in equation~\eqref{eqn:u_star}, we have: 
\[
u_n^* 
=
\EE_{\pertdata} \sbr{ \argmin_{u \in \Delta(\Bcal)} \inner{\sum_{i=1}^{n-1}g_i + \tilde{g}}{u}  }
=
\EE_{\pertdata} \sbr{ \argmax_{u \in \Delta(\Bcal)} \inner{\sum_{i=1}^{n-1} - g_i - \tilde{g}}{u}  }.
\]   

By denoting $u_{n,\pertdata} := \argmax_{u \in \Delta(\Bcal)} \inner{-\sum_{i=1}^{n-1}g_i - \tilde{g}}{u}$, we notice that $\EE_{\pertdata} \sbr{u_{n,\pertdata}} = u_{n}^*$and write:
\[
\begin{aligned}
\EE_{\pertdata} \sbr{ \max_{u \in \Delta(\Bcal)} \inner{-\sum_{i=1}^{n-1} g_i -\tilde{g}}{u}}
+
\inner{-g_n}{u_{n+1}^*}
= &
\EE_{\pertdata} \sbr{ \inner{-\sum_{i=1}^{n-1} g_i -\tilde{g}}{u_{n,\pertdata}}}
+
\inner{-g_n}{u_{n+1}^*}\\
\geq &
\EE_{\pertdata} \sbr{ \inner{-\sum_{i=1}^{n-1} g_i -\tilde{g}}{u_{n+1,\pertdata}}}
+
\inner{-g_n}{u_{n+1}^*}\\
= &
\EE_{\pertdata} \sbr{ \inner{-\sum_{i=1}^{n-1} g_i -\tilde{g}}{u_{n+1,\pertdata}}}
+
\EE_{\pertdata} \sbr{\inner{-g_n}{u_{n+1,\pertdata} }}\\
= &
\EE_{\pertdata} \sbr{ \inner{-\sum_{i=1}^{n} g_i -\tilde{g}}{u_{n+1,\pertdata}}} \\
= &
\EE_{\pertdata} \sbr{ \max_{u \in \Delta(\Bcal)} \inner{-\sum_{i=1}^{n} g_i -\tilde{g}}{u}}
\end{aligned}
\]
where the inequality is by the optimality of $u_{n,\pertdata}$. We conclude our proof by rearranging the terms.
\end{proof}

\begin{lemma}
    \[
    \EE_{\pertdata_1} \sbr{ \max_{u \in \Delta(\Bcal)} \inner{-\tilde{g}_1}{u}}
    \leq
    \sqrt{\frac{\poiparam \ln(B) }{ 2K^2} }
    \]
    \label{lem: smooth_bias}
\end{lemma}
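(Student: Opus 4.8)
The goal is to bound $\EE_{\pertdata_1}\sbr{\max_{u \in \Delta(\Bcal)} \inner{-\tilde g_1}{u}}$. The plan is to first observe that, since $\Delta(\Bcal)$ is the simplex over the finite set $\Bcal$, the linear functional $u \mapsto \inner{-\tilde g_1}{u}$ is maximized at a vertex, so $\max_{u \in \Delta(\Bcal)} \inner{-\tilde g_1}{u} = \max_{h \in \Bcal} (-\tilde g_1)[h]$. Recalling the definition $\tilde g_1 = \bigl(\frac{1}{K}\sum_{(\tilde s,\tilde a)\in \pertdata_1}(I(h(\tilde s)\neq \tilde a) - \frac{A-1}{A})\bigr)_{h\in\Bcal}$, each coordinate is a centered average; I would then write $\EE_{\pertdata_1}\sbr{\max_{h\in\Bcal}(-\tilde g_1)[h]}$ as the expected maximum of $B$ mean-zero random variables and apply a maximal/sub-Gaussian inequality of the form $\EE[\max_{i\le B} Z_i] \le \sqrt{2 v \ln B}$, where $v$ is a uniform sub-Gaussian variance-proxy for each $Z_i = (-\tilde g_1)[h]$.

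The main work is therefore to produce the variance proxy $v = \frac{\lambda}{2K^2}$ (so that $\sqrt{2 v \ln B} = \sqrt{\lambda \ln B / K^2}$; combined with a factor $\tfrac12$ somewhere this yields the stated $\sqrt{\lambda \ln B/(2K^2)}$ — I'll track constants carefully at the end). Here is where the Poisson structure enters. Conditioning on $\poi_1 = m$, the quantity $-\tilde g_1[h]$ is $\frac1K\sum_{x=1}^m Y_x$ where $Y_x = \frac{A-1}{A} - I(h(\tilde s_x)\neq \tilde a_x) \in [-\frac{1}{A}, \frac{A-1}{A}] \subseteq [-1,1]$ are i.i.d. mean-zero and bounded in an interval of length $1$, hence $\frac18$-sub-Gaussian by Hoeffding's lemma; so $\sum_{x=1}^m Y_x$ is $\frac{m}{8}$-sub-Gaussian and $\frac1K\sum Y_x$ is $\frac{m}{8K^2}$-sub-Gaussian. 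The clean way to handle the random $m$ is to use the moment generating function directly: for any $t$,
\[
\EE\sbr{e^{t(-\tilde g_1[h])}} = \EE_{\poi_1}\sbr{\EE\sbr{e^{t(-\tilde g_1[h])}\mid \poi_1}} \le \EE_{\poi_1}\sbr{e^{t^2 \poi_1/(16K^2)}} = \exp\!\rbr{\lambda\,(e^{t^2/(16K^2)}-1)},
\]
using the Poisson MGF $\EE[e^{s\poi_1}] = e^{\lambda(e^s-1)}$ with $s = t^2/(16K^2)$. This is not literally sub-Gaussian, but it has a clean sub-exponential-type tail; alternatively, and more simply, one can use the elementary bound $\EE[e^{s\poi_1}] = e^{\lambda(e^s-1)} \le \exp\!\rbr{\lambda s \cdot e^{s}}$ or restrict to the regime where $e^s - 1 \le s(1+\epsilon)$, and then optimize over $t$. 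The cleanest route that lands exactly on $\sqrt{\lambda \ln B/(2K^2)}$ is likely: bound $\EE[(-\tilde g_1[h])^2] = \EE[\poi_1]\cdot \mathrm{Var}(Y_1)/K^2 \le \lambda/(4K^2)$ (since $\mathrm{Var}(Y_1)\le 1/4$ and the $Y_x$ across different $x$ are independent given $\poi_1$, plus Wald-type second-moment identity), then feed a sub-Gaussian-style MGF bound valid because the increments are bounded; I expect the paper does the MGF computation and chooses $t$ to get the factor of $2$ in the denominator.

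The step I expect to be the main obstacle is handling the random (Poisson) number of summands cleanly while keeping the constant sharp: a naive "condition on $\poi_1$, bound, take expectation" gives $\EE_{\poi_1}[\sqrt{2\poi_1 \ln B}/(\sqrt{8}K)]$, and then Jensen gives $\le \sqrt{2\lambda \ln B}/(\sqrt 8 K) = \sqrt{\lambda \ln B/(4K^2)}$ — which is actually \emph{better} than the stated bound, so a crude argument already suffices and the constant in the lemma is not tight. Hence I would actually present the simplest valid argument: (i) reduce to $\max$ over vertices; (ii) condition on $\poi_1 = m$, apply the finite-class maximal inequality for bounded-difference (Hoeffding) averages to get $\EE[\max_h (-\tilde g_1[h]) \mid \poi_1 = m] \le \sqrt{m \ln B/(2K^2)}$ (using the $\frac{1}{8K^2}\cdot m$ variance proxy so that $\sqrt{2\cdot \tfrac{m}{8K^2}\cdot \ln B} = \sqrt{m\ln B/(4K^2)}$; adjust to $\tfrac{m}{4K^2}$ proxy if using the interval-length-$1$/$\tfrac12$-subGaussian bound to match); (iii) take $\EE_{\poi_1}$ and apply Jensen's inequality $\EE[\sqrt{\poi_1}] \le \sqrt{\EE[\poi_1]} = \sqrt{\lambda}$, yielding $\le \sqrt{\lambda \ln B/(2K^2)}$. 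The only care needed is to make the per-coordinate sub-Gaussian constant consistent with the target $\tfrac{1}{2K^2}$ coefficient; tracking that bookkeeping is the one genuinely fiddly part.
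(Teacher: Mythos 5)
Your proposal is correct and follows essentially the same route as the paper: reduce the max over $\Delta(\Bcal)$ to a max over the $B$ vertices, condition on $\poi_1$, note each coordinate is a sum of $\poi_1$ bounded zero-mean i.i.d.\ terms (hence $\tfrac{\poi_1}{4}$-subgaussian, scaled by $1/K$), apply Massart's finite-class maximal inequality, and finish with Jensen via $\EE\sqrt{\poi_1}\le\sqrt{\lambda}$. The only blemish is a transient constant-tracking slip in your aside (the claim that the crude argument yields $\sqrt{\lambda\ln B/(4K^2)}$ comes from mixing two subgaussian conventions); your final bookkeeping lands on the correct $\sqrt{\lambda\ln B/(2K^2)}$, matching the paper.
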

\begin{proof}
We first recall the definition of $\tilde{g}_1 = \rbr{\frac{1}{K}\sum_{(\tilde{s},\tilde{a}) \in \pertdata_1} \rbr{ I(h(\tilde{s}) \neq \tilde{a}) - \frac{A-1}{A} }  }_{h \in \Bcal}$ in equation~\eqref{eqn: g_n} and rewrite
\begin{align}
\EE_{\pertdata_1} \sbr{ \max_{u \in \Delta(\Bcal)} \inner{-\tilde{g}_1}{u}}
= &
  \EE_{\pertdata_1} \sbr{ \max_{h \in \Bcal} \frac{1}{K}\sum_{(\tilde{s},\tilde{a}) \in \pertdata_1} \rbr{
  \frac{A-1}{A} 
  - 
  I(h(\tilde{s}) \neq \tilde{a}) }      } \nonumber \\
  = &
\frac{1}{K}\EE_{\pertdata_1} \sbr{
\max_{h \in \Bcal}  \rbr{ 
\poi_1 \frac{A-1}{A} 
-\sum_{(\tilde{s},\tilde{a}) \in \pertdata_1} I(h(\tilde{s}) \neq \tilde{a})}     }  
\label{eqn:bias-rhs}
\end{align}

where the size of $\pertdata_1$ is denoted by $\poi_1$. When $(\tilde{s}, \tilde{a}) \sim \Dcal_0$, $\tilde{a} \sim \Unif(\Acal)$, therefore, it is not hard to see $\forall h \in \Bcal$,
\[
\EE_{(\tilde{s}, \tilde{a}) \sim \Dcal_0} \sbr{I(h(\tilde{s}') \neq \tilde{a}')} 
=
\frac{A-1}{A}.
\]

Thus, conditioned on $X_1$, for every $h \in \Bcal$,  $\poi_1 \frac{A-1}{A} 
-\sum_{(\tilde{s},\tilde{a}) \in \pertdata_1} I(h(\tilde{s}) \neq \tilde{a})$ is a zero-mean, $\frac{X_1}{4}$-subgaussian random variable. 
Therefore, Massart's Lemma (see Lemma~\ref{lem:massart} below) implies that 
\[
\EE_{\pertdata_1} \sbr{
\max_{h \in \Bcal}  \rbr{ 
\poi_1 \frac{A-1}{A} 
-\sum_{(\tilde{s},\tilde{a}) \in \pertdata_1} I(h(\tilde{s}) \neq \tilde{a})}  \mid X_1  }
\leq 
\sqrt{ \frac{X_1 \ln B}{2} }.
\]
By the law of iterated expectation and $\EE\sbr{X_1} = \lambda$, we have that Eq.~\eqref{eqn:bias-rhs} can be bounded by 
\[
\frac{1}{K} \EE\sbr{ \sqrt{ \frac{X_1 \ln B}{2} } }
\leq 
\frac{1}{K} \sbr{ \sqrt{ \frac{\EE\sbr{X_1} \ln B}{2} } }
= 
\sqrt{ \frac{\lambda \ln B}{2 K^2} }.
\qedhere
\]
\end{proof}

\begin{lemma}[Massart's Lemma (Lemma 26.8 of~\citet{shalev2014understanding})]
\label{lem:massart}
Suppose $X_1, \ldots, X_B$ is collection of zero-mean, $\sigma^2$-subgaussian random variables. Then, 
\[
\EE\sbr{ \max_{i=1}^B X_i }
\leq 
\sigma \sqrt{2 \ln(B)}. 
\]
\end{lemma}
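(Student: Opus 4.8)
The plan is to prove this finite-class maximal inequality with the standard exponential-moment (Chernoff) method: since each $X_i$ is controlled on the exponential scale, the maximum over $B$ of them can exceed its ``typical size'' only by a factor logarithmic in $B$.

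First I would introduce a free parameter $\beta > 0$ (to be optimized at the end) and push the maximum inside an exponential. By Jensen's inequality applied to the convex map $x \mapsto e^{\beta x}$,
\[
\exp\del{ \beta\, \EE\sbr{ \max_{i=1}^B X_i } }
\;\leq\;
\EE\sbr{ \exp\del{ \beta \max_{i=1}^B X_i } }
\;=\;
\EE\sbr{ \max_{i=1}^B e^{\beta X_i} }
\;\leq\;
\sum_{i=1}^B \EE\sbr{ e^{\beta X_i} }.
\]
Next I would invoke the $\sigma^2$-subgaussianity of each $X_i$, which yields the moment-generating-function bound $\EE\sbr{ e^{\beta X_i} } \leq e^{\beta^2 \sigma^2 / 2}$ (only this one-sided bound, for $\beta > 0$, is needed). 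Combining the last two displays gives $\exp\rbr{ \beta\, \EE\sbr{\max_{i=1}^B X_i} } \leq B\, e^{\beta^2\sigma^2/2}$, and taking logarithms and dividing by $\beta$ produces
\[
\EE\sbr{ \max_{i=1}^B X_i }
\;\leq\;
\frac{\ln B}{\beta} + \frac{\beta \sigma^2}{2}.
\]

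Finally I would optimize the bound over $\beta > 0$. Assuming $B \geq 2$ (the case $B = 1$ is immediate from $\EE\sbr{X_1} = 0$), the right-hand side is minimized at $\beta = \sqrt{2\ln B}/\sigma$, at which point it equals $\sigma\sqrt{2\ln B}$, which is the claimed bound. There is essentially no obstacle here: the argument is entirely routine, and the only points warranting a word of care are (i) that we use only the upper-tail MGF inequality, which is implied by --- and in many references taken as the definition of --- $\sigma^2$-subgaussianity, and (ii) that the optimization over $\beta$ is legitimate precisely because the intermediate bound holds simultaneously for every $\beta > 0$.
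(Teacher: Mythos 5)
Your proof is correct and complete: the Jensen--MGF--union-bound argument with optimization over $\beta$ is the standard derivation of this finite-class maximal inequality, and you have handled the only edge case ($B=1$, where the optimizing $\beta$ is undefined) appropriately. The paper itself gives no proof — it cites the result from \citet{shalev2014understanding} — and your argument is precisely the one found there, so there is nothing to reconcile.
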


\subsubsection{Proof for Stage 3}
\label{sec:pf-stage3}

\begin{lemma}
For any $\delta \in [0,1]$, $\poiparam \geq \max\cbr{ \frac{2AK^2}{\sigma}, \frac{8 AK \ln(KN)}{\sigma} }$, the sequence of $\cbr{g_n}$ defined in equation~\eqref{eqn: g_n} and $\cbr{u_n^*}$ defined in equation~\eqref{eqn:u_star} satisfies that
with probability at least $1-\delta/2$,
\begin{equation}
\sum_{n=1}^N \inner{g_n}{u_n^* - u_{n+1}^*}
\leq
N\sqrt{\frac{AK^2}{\poiparam \sigma}} 
+
N\sqrt{\frac{2A \ln(N)\ln(BN^2)}{\poiparam \sigma}} 
    + 
    \frac{\poiparam \sigma}{4A N \ln(N)}
+
4\sqrt{2N\ln(4/\delta)}.
\end{equation}
\label{lem: divergence_bound}
\end{lemma}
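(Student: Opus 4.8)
\medskip

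The plan is to decompose the sum $\sum_{n=1}^N \inner{g_n}{u_n^* - u_{n+1}^*}$ into three pieces, following the structure announced in the Stage 3 outline: a \emph{stability term}, a \emph{generalization error term}, and an \emph{approximation (martingale) term}. Recall that $u_n^* = \EE_{\pertdata_n}[\Onehot(\Ocal(\incrdata_{1:n-1}\cup \pertdata_n))]$ and $u_{n+1}^{**} = \EE_{\incrdata_n}\EE_{\pertdata_{n+1}}[\Onehot(\Ocal(\incrdata_{1:n-1}\cup \incrdata_n\cup\pertdata_{n+1}))]$, where $u_{n+1}^{**}$ is the conditional expectation of $u_{n+1}^*$ given $\Fcal_{n-1}^+$. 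So I would write
\[
\inner{g_n}{u_n^* - u_{n+1}^*}
= \underbrace{\inner{g_n}{u_n^* - \bar u_{n+1}^*}}_{\text{stability}}
+ \underbrace{\inner{g_n}{\bar u_{n+1}^* - u_{n+1}^{**}}}_{\text{generalization}}
+ \underbrace{\inner{g_n}{u_{n+1}^{**} - u_{n+1}^*}}_{\text{approximation}},
\]
where $\bar u_{n+1}^* := \EE_{\pertdata_{n+1}}[\Onehot(\Ocal(\incrdata_{1:n-1}\cup \incrdata_n \cup \pertdata_{n+1}))]$ is the one-step-updated idealized FTPL iterate computed on the \emph{same} realized data $\incrdata_n$ rather than a fresh independent copy.

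\medskip

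\textbf{Stability term.} This is the heart of the matter and where I'd invoke the generalization of Haghtalab et al.'s smoothed-FTPL stability analysis to the batch, multiclass setting (Lemma~\ref{lem: stability}). The point is that adding $K$ fresh examples $\incrdata_n$ to the perturbed dataset $\incrdata_{1:n-1}\cup\pertdata_{n+1}$ of expected size $\approx\lambda$ changes the distribution of the oracle's output $\Onehot(\Ocal(\cdot))$ only slightly in total variation, precisely because $\pertdata_{n+1}$ is drawn from $\based\otimes\Unif(\Acal)$ and $d_{\pi_n}$ is $\sigma$-smooth w.r.t.\ $\based$ (Assumption~\ref{assumption:smooth-base}); the smoothness lets one couple the added $K$ examples with a subsample of the $\poisson(\lambda)$ perturbation examples, so the effective perturbation budget needed to dominate the batch is $\sim K/\sigma$. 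I expect this to yield a per-round bound of order $\sqrt{A K^2/(\lambda\sigma)} + \sqrt{A\ln(N)\ln(BN^2)/(\lambda\sigma)}$ together with a lower-order additive term $\lambda\sigma/(4AN^2\ln N)$ coming from the Poisson tail / the event that $X$ is too small for the coupling to go through. Summing over $n\in[N]$ gives the first three terms in the claimed bound. The condition $\lambda \geq \max\{2AK^2/\sigma,\ 8AK\ln(KN)/\sigma\}$ is exactly what is needed to make the coupling argument valid (the second, log-dependent part of the lower bound handles a union bound over rounds / ensures the Poisson variable concentrates).

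\medskip

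\textbf{Approximation term and martingale concentration.} The key observation is that $u_{n+1}^{**} = \EE[u_{n+1}^* \mid \Fcal_n^+]$ (Eq.~\eqref{eqn:u-star-star-2}) and that $g_n$ is $\Fcal_n^+$-measurable (it is a deterministic function of $\incrdata_n$), hence $\inner{g_n}{u_{n+1}^{**} - u_{n+1}^*}$ is a martingale difference sequence with respect to the filtration $(\Fcal_n^+)_n$, with increments bounded in $[-1,1]$ since $\|g_n\|_\infty \leq 1$ and $u_{n+1}^*, u_{n+1}^{**} \in \Delta(\Bcal)$. Azuma--Hoeffding then gives $\sum_{n=1}^N \inner{g_n}{u_{n+1}^{**}-u_{n+1}^*} \leq O(\sqrt{N\ln(1/\delta)})$ with probability $\geq 1-\delta/4$. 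The generalization term is handled the same way (or folds into the stability bound in expectation and a separate concentration step): $\bar u_{n+1}^*$ and $u_{n+1}^{**}$ differ only in whether the $K$ added examples are the realized $\incrdata_n$ or an independent copy, so its conditional expectation given $\Fcal_{n-1}^+$ is zero, making it another bounded martingale difference — contributing the remaining $O(\sqrt{N\ln(1/\delta)})$, and together these yield the $4\sqrt{2N\ln(4/\delta)}$ term after choosing constants. A union bound over the two $\delta/4$ failure events gives overall probability $\geq 1-\delta/2$.

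\medskip

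\textbf{Main obstacle.} The routine parts are the two martingale applications; the real work is the batch-multiclass stability lemma (Lemma~\ref{lem: stability}). Extending Haghtalab et al.\ from "one binary-labeled example per round" to "$K$ multiclass examples per round" requires redoing the coupling between the batch $\incrdata_n$ and the Poisson perturbation set carefully: one must argue that with high probability the perturbation set contains enough "room" to absorb $K$ extra points drawn from a $\sigma$-smooth distribution, and that the dependence on the action-set size $A$ enters only through the variance of the label-noise offset $\tilde a \sim \Unif(\Acal)$ (giving the $\sqrt{A}$ factors) rather than multiplicatively in the failure probability. Getting the constants and the precise form of the lower bound on $\lambda$ right — so that the leftover Poisson-tail term is as small as $\lambda\sigma/(4AN\ln N)$ — is the delicate bookkeeping I'd expect to spend the most effort on.
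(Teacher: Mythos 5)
Your decomposition does not go through as written, for two related reasons. First, the intermediate iterate you introduce, $\bar u_{n+1}^* := \EE_{\pertdata_{n+1}}\sbr{\Onehot(\Ocal(\incrdata_{1:n-1}\cup\incrdata_n\cup\pertdata_{n+1}))}$, is by definition identical to $u_{n+1}^*$ itself (which is the conditional expectation of $u_{n+1,e}$ given $\Fcal_n$, i.e.\ given the \emph{realized} $\incrdata_n$). Hence your ``stability'' term $\inner{g_n}{u_n^*-\bar u_{n+1}^*}$ is the entire quantity to be bounded, and your ``generalization'' and ``approximation'' terms cancel each other exactly. Lemma~\ref{lem: stability} does not bound $\inner{g_n}{u_n^*-u_{n+1}^*}$; it bounds $\inner{g_n}{u_n^*-u_{n+1}^{**}}$, and the averaging over $\incrdata_n$ inside $u_{n+1}^{**}$ is essential for the total-variation / $\chi^2$ comparison between product Poisson laws to be carried out.

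Second, and more fundamentally, your claim that $\inner{g_n}{u_{n+1}^{**}-u_{n+1}^*}$ is a martingale difference sequence is false. Both $g_n$ and $u_{n+1}^*$ are functions of the same batch $\incrdata_n$ and are therefore correlated; a direct computation gives
$\EE\sbr{\inner{g_n}{u_{n+1}^{**}-u_{n+1}^*}\mid\Fcal_{n-1}^+} = \EE_{\incrdata_n}\sbr{\inner{g_n^*-g_n}{u_{n+1}^*}}$,
which is exactly the generalization error (the amount by which the empirical loss of the data-dependent FTPL iterate underestimates its population loss) and is not zero. This is precisely why the paper's decomposition subtracts $\EE_{\incrdata_n}\sbr{\inner{g_n^*-g_n}{u_{n+1}^*}}$ from the approximation term to center it before applying Azuma--Hoeffding, and then bounds the generalization error separately through the coupling argument of Lemma~\ref{lem: generalization} --- which is where the term $N\sqrt{2A\ln(N)\ln(BN^2)/(\poiparam\sigma)}$ and the Poisson-tail remainder $\poiparam\sigma/(4AN\ln N)$ actually originate. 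Your outline makes this contribution vanish by assumption, so the $\ln B$-dependent part of the claimed bound is unaccounted for. The correct structure is: a deterministic stability bound on $\inner{g_n}{u_n^*-u_{n+1}^{**}}$ (Lemma~\ref{lem: stability}), a deterministic conditional bound on the generalization error $\EE_{\incrdata_n}\sbr{\inner{g_n^*-g_n}{u_{n+1}^*}}$ (Lemma~\ref{lem: generalization}), and a single Azuma--Hoeffding application to the centered remainder, which is the only place a high-probability argument is needed.
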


\begin{proof}
To begin with, we recall that $u_{n+1}^{**} = \EE_{\incrdata_{n}} \sbr{u_{n+1}^*}  $ as shown in Eq.~\eqref{eqn:u-star-star-2} and decompose $\sum_{n=1}^N \inner{g_n}{u_n^* - u_{n+1}^*}$ into three parts as follows:


\begin{equation}
\begin{aligned}
\sum_{n=1}^N \inner{g_n}{u_n^* - u_{n+1}^*}    
=&
\sum_{n=1}^N \inner{g_n}{u_n^* - u_{n+1}^{**}} + \sum_{n=1}^N \inner{g_n}{u_{n+1}^{**} - u_{n+1}^*} \\
=&
\underbrace{\sum_{n=1}^N \inner{g_n}{u_n^* - u_{n+1}^{**}}}_{\text {Stability Term}}
+
\underbrace{\sum_{n=1}^N \EE_{\incrdata_n}\sbr{ \inner{g_n^* - g_n}{ u_{n+1}^*}   } }_{\text {Generalization Error}}\\
& +
\underbrace{ \sum_{n=1}^N \inner{g_n}{u_{n+1}^{**} - u_{n+1}^*} 
-
\sum_{n=1}^N \EE_{\incrdata_n}\sbr{ \inner{g_n^* - g_n}{ u_{n+1}^*}  }  }_{\text {Approximation Term}}.
\label{eqn: stability_eneralization}
\end{aligned}
\end{equation}

As shown in equation~\eqref{eqn: stability_eneralization}, we apply a decomposition similar to Lemma 4.4 of \citet{haghtalab2022oracle}, which also involves a  stability term and a generalization error term. 
Our decomposition uniquely 
introduces a new approximation term due to the need in establishing high probability regret bounds. We generalize \citet{haghtalab2022oracle} to multi-class classification. 
By Lemma~\ref{lem: stability} (deferred after this proof), the stability term satisfies
\[
\sum_{n=1}^N \inner{g_n}{u_n^* - u_{n+1}^{**}}
\leq
N\sqrt{\frac{AK^2}{\poiparam \sigma}}.
\]
Similarly, we follow the proof idea of Lemma 4.6 of~\citet{haghtalab2022oracle} and bound the generalization error by Lemma~\ref{lem: generalization} (deferred after this proof):
\[
\sum_{n=1}^N \EE_{\incrdata_n}\sbr{ \inner{g_n^* - g_n}{ u_{n+1}^*}   }
\leq
N\sqrt{\frac{2A \ln(N)\ln(BN^2)}{\poiparam \sigma}} 
    + 
    \frac{\poiparam \sigma}{4A N \ln(N)}
    +
    NK e^{-\frac{\poiparam}{8K}}
\]
by our assumption that $\lambda \geq \frac{8 AK \ln(KN)}{\sigma}$, the last term $NK e^{-\frac{\poiparam}{8K}} \leq 1$.

In the following, we bound the approximation term. Before going into details, we first show that $g_n^*, u_{n+1}^{**}$ are functions of $(u_n, \incrdata_{1:n-1})$ and is thus
is $\Fcal^+_{n-1}$-measurable. Indeed, 
\[g_n^* = \rbr{\EE_{s \sim d_{\pi_n}} \sbr{I(h(s) \neq \pie(s))}}_{h \in \Bcal} = \EE_{\incrdata_n} \sbr{ \rbr{\EE_{s \sim \incrdata_n} \sbr{I(h(s) \neq \pie(s))}}_{h \in \Bcal} } =  \EE_{\incrdata_n} \sbr{g_n },\]
which is a function of $u_n$ .
Similarly, 
$$u_{n+1}^{**} =
\EE_{\incrdata_{n}} \sbr{ u_{n+1}^* } = 
\EE_{\incrdata_{n}}  \EE_{\pertdata_{n+1}}\sbr{\Onehot(\Ocal( \incrdata_{1:n-1} \cup \incrdata_{n} \cup \pertdata_{n+1})} 
$$
which is a function of $u_n$ and $\incrdata_{1:n-1}$ .


\textbf{Approximation Term:} Define $Y_n :=  
\inner{g_n}{u_{n+1}^{**} - u_{n+1}^*} -
\EE_{\incrdata_n}\sbr{ \inner{g_n^* - g_n}{ u_{n+1}^*}  }$.
%
In the following, we show that 
$\cbr{Y_n}_{n=1}^N$ is a martingle difference sequence with respect to filtration $\cbr{\Fcal_{n-1}^+}_{n=1}^{N}$, i.e.,
$\forall n \in \mathbb{N}$, $\EE\sbr{Y_n | \Fcal^+_{n-1}} = 0$.
First, 
\[
\EE\sbr{ \inner{g_n}{u_{n+1}^{**}} \mid \Fcal_{n-1}^+ }
= 
\inner{ \EE\sbr{g_n \mid \Fcal_{n-1}^+} }{ u_{n+1}^{**} }
= 
\inner{g_n^*}{ u_{n+1}^{**} }
= 
\inner{g_n^*}{ \EE_{\incrdata_n}[u_{n+1}^*] }
= 
\EE_{\incrdata_n} \sbr{ \inner{g_n^*}{u_{n+1}^*} }
\]
where the first equality is from that $u_{n+1}^{**}$ is $\Fcal_{n-1}^+$-measurable and linearity of expectation.
Second,
\[
\EE\sbr{ \inner{g_n}{u_{n+1}^{*}} \mid \Fcal_{n-1}^+ }
= 
\EE_{\incrdata_n} \sbr{ \inner{g_n}{u_{n+1}^*} },
\]
since conditioned on $\Fcal_{n-1}^+ = \sigma(u_1, \incrdata_1, \ldots, u_{n-1}, \incrdata_{n-1}, u_n)$, the only randomness in the expression $\inner{g_n}{u_{n+1}^*}$ comes from their dependence on $\incrdata_n$. 

Together we have,
\[
\begin{aligned}
 \EE\sbr{Y_n | \Fcal^+_{n-1}}
 =&
 \EE_{\incrdata_n} \sbr{ \inner{g_n^*}{u_{n+1}^*} } 
 -
 \EE_{\incrdata_n} \sbr{ \inner{g_n}{u_{n+1}^*} }
 -
  \EE\sbr{ \EE_{\incrdata_n}\sbr{ \inner{g_n^* - g_n}{ u_{n+1}^*}  }| \Fcal^+_{n-1}}\\
 =&
 \EE_{\incrdata_n}\sbr{\inner{g_n}{u_{n+1}^{**}} }
 -
 \EE_{\incrdata_n}\sbr{\inner{g_n}{u_{n+1}^*} } 
 -
 \EE_{\incrdata_n}\sbr{ \inner{g_n^*}{ u_{n+1}^*}  }
 +
 \EE_{\incrdata_n}\sbr{ \inner{g_n}{ u_{n+1}^*}  } \\
 =&
 0.
\end{aligned}
\]

Meanwhile, since each entry of $g_n$ and $g_n^*$ are upper-bounded by 1 and lower-bounded by 0, we have 
\[
|Y_n| \leq 
\|g_n\|_\infty \cdot \|u_{n+1}^{**}-u_{n+1}^*\|_1  
+
\EE_{\incrdata_n}\sbr{\|g_n-g_n^*\|_\infty \cdot \|u_{n+1}^{*}\|_1 } 
\leq 2 + 1
= 3.
\]
With the martingale difference sequence conditions satisfied, by Azuma-Hoeffding's inequality, for any $\delta \in (0,1]$, with probability $1-\delta/2$,
\[
\abs{ \sum_{n=1}^N \rbr{ \inner{g_n}{u_{n+1}^{**} - u_{n+1}^*} -
\EE_{\incrdata_n}\sbr{ \inner{g_n^* - g_n}{ u_{n+1}^*} } } }
=
\abs{ \sum_{n=1}^N Y_n }
\leq
3\sqrt{2N\ln(4/\delta)}.
\]
Combining bounds for three terms in equation~\eqref{eqn: stability_eneralization}, we conclude the proof.
\end{proof}

\begin{lemma}
 Under the notation of \algms, when $\poiparam \geq \frac{2AK^2}{\sigma}$, $\forall n \in \mathbb{N}$,  $g_n$,$u_n^*$,$u_{n+1}^{**}$ defined in equation~\eqref{eqn: g_n},~\eqref{eqn:u_star}, and~\eqref{eqn:u_star_star} satisfies
 \[
 \inner{g_n}{u_n^* - u_{n+1}^{**}} \leq 2\sqrt{\frac{AK^2}{\poiparam \sigma}} .
 \]
 \label{lem: stability}
\end{lemma}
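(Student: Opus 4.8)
The plan is to reduce the claim to a total‑variation stability estimate for the in‑expectation Poisson‑perturbed leader, and then prove that estimate by an anti‑concentration (``dispersion'') argument that extends the analysis of~\citet{haghtalab2022oracle} from a single binary‑labeled example to a batch of $K$ multiclass examples.

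\emph{Step 1 (reduction to an $\ell_1$ stability bound).} Conditioned on $\Fcal^+_{n-1}$ the accumulated loss $L := \sum_{i=1}^{n-1} g_i$ is fixed. By Eq.~\eqref{eqn:u_star}, $u_n^* = W(L)$, where $W(L') := \EE_{\pertdata}\sbr{\Onehot(\argmin_{h\in\Bcal}(L' + \tilde g_{\pertdata})[h])}$ is the expected perturbed leader at loss $L'$ and $\pertdata$ is a fresh Poisson sample perturbation; by Eq.~\eqref{eqn:u-star-star-2}, $u_{n+1}^{**} = \EE_{\incrdata_n}\sbr{W(L + g_n)}$, the outer expectation averaging over a fresh batch $\incrdata_n \sim (\incrdist_{\pi_n})^K$ with induced loss vector $g'$. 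Using $\norm{g_n}_\infty \le 1$ and then the triangle inequality over the outer batch, $\inner{g_n}{u_n^* - u_{n+1}^{**}} \le \norm{u_n^* - u_{n+1}^{**}}_1 = \norm{\EE_{\incrdata_n}\sbr{W(L) - W(L + g')}}_1 \le \sup_{g'}\norm{W(L) - W(L + g')}_1$, the supremum over realizable batch‑loss vectors $g' = \frac1K\sum_{k=1}^K \rbr{I(h(s_k)\neq\pie(s_k))}_{h\in\Bcal}$ with $s_1,\dots,s_K$ i.i.d.\ from $d_{\pi_n}$. So it suffices to show $\norm{W(L) - W(L + g')}_1 \le 2\sqrt{AK^2/(\poiparam\smooth)}$ for every such $g'$; throughout we assume $\Ocal$ breaks ties in a fixed way so $W$ is well defined.

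\emph{Step 2 (anti‑concentration of the perturbed leader).} Since $\norm{\Onehot(h)-\Onehot(h')}_1 = 2\,I(h\neq h')$, we have $\norm{W(L)-W(L+g')}_1 \le 2\,\prob_{\pertdata}\sbr{\argmin_h (KL+S_{\pertdata})[h] \neq \argmin_h (KL+Kg'+S_{\pertdata})[h]}$ after scaling by $K$, where $S_{\pertdata}[h] := \sum_{(\tilde s,\tilde a)\in\pertdata} I(h(\tilde s)\neq\tilde a)$ and $|\pertdata|\sim\poisson(\poiparam)$, $(\tilde s,\tilde a)\sim\incrdistbase$. A move of the argmin from the current leader $m$ to some $h$ forces the perturbation gap $S_{\pertdata}[h]-S_{\pertdata}[m]$ to lie in an interval of width at most $\norm{Kg'}_\infty\le K$; only perturbation examples at states where $h$ and $m$ disagree contribute to that gap, and these states lie in the region where $g'$ is supported, namely the support of $d_{\pi_n}$. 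Two ingredients then finish the bound: (i) Assumption~\ref{assumption:smooth-base} gives the decomposition $\based = \smooth\,d_{\pi_n} + (1-\smooth)\,\nu$ for a distribution $\nu$ (valid since $d_{\pi_n}(s)\le\based(s)/\smooth$), so by Poisson thinning the number of ``$d_{\pi_n}$‑direction'' perturbation examples is $\poisson(\smooth\poiparam)$; (ii) each such example, having a uniform label, contributes variance $\tfrac{A-1}{A^2}=\Theta(1/A)$ to any fixed coordinate gap, so $S_{\pertdata}[h]-S_{\pertdata}[m]$ has standard deviation of order $\sqrt{\smooth\poiparam/A}$, hence lands in a width‑$K$ interval with probability $O\!\big(K\sqrt{A/(\smooth\poiparam)}\big)=O\!\big(\sqrt{AK^2/(\poiparam\smooth)}\big)$. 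The potential $\mathrm{poly}(B)$ loss from tracking which classifiers can overtake the leader is avoided by the same leader/rank‑based covering argument underlying the estimates imported in Section~\ref{sec:recap} from~\citet{li2022efficient,haghtalab2022oracle}, rather than a union bound over pairs in $\Bcal$.

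\emph{Step 3 (conclusion) and main obstacle.} Combining Steps 1--2 gives $\inner{g_n}{u_n^* - u_{n+1}^{**}} \le 2\sqrt{AK^2/(\poiparam\smooth)}$; under $\poiparam \ge 2AK^2/\smooth$ the right‑hand side is at most $\sqrt2$, i.e.\ strictly below the trivial bound $\norm{u_n^* - u_{n+1}^{**}}_1 \le 2$, so this hypothesis on $\poiparam$ is exactly what makes the estimate non‑vacuous; a sharper bookkeeping that tracks $g'[h]\le g'[m]$ whenever the argmin moves from $m$ to $h$ (a ``be‑the‑leader'' observation) removes the factor $2$ and matches the form used inside Lemma~\ref{lem: divergence_bound}. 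The crux is Step 2: upgrading the single‑example, binary dispersion bound of~\citet{haghtalab2022oracle} to a \emph{batch} of $K$ \emph{multiclass} examples with \emph{uniform‑random} perturbation labels, while (a) keeping the $B$‑dependence logarithmic through a leader/rank argument rather than a pairwise union bound, (b) routing the smoothness parameter $\smooth$ in via $\based=\smooth\,d_{\pi_n}+(1-\smooth)\nu$ and Poisson thinning, and (c) correctly accounting for the expert‑versus‑uniform label mismatch between batch and perturbation examples, which is the source of the $A$ factor.
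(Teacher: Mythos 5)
Your proposal takes a genuinely different route from the paper (an anti-concentration/dispersion argument for the perturbed leader, versus the paper's data-processing inequality followed by a $\chi^2$/Ingster computation on product Poisson distributions partitioned into $K$ per-batch-index groups), but it contains a gap that I do not think can be repaired along the lines you sketch. The problem is Step 1: after writing $u_n^* - u_{n+1}^{**} = \EE_{\incrdata_n}\sbr{W(L) - W(L+g')}$ you pass to $\sup_{g'} \norm{W(L) - W(L+g')}_1$ over realizable batches. This worst-case quantity is not $O\rbr{\sqrt{AK^2/(\poiparam\smooth)}}$. Concretely, take $\Bcal = \{h,m\}$ with $L[h]=L[m]$, where $h$ and $m$ disagree only on a state $s_0$ with $\based(s_0)=\epsilon$ arbitrarily small (smoothness permits this, since it only forces $d_{\pi_n}(s_0) \le \epsilon/\smooth$, not a lower bound on $\based(s_0)$). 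With probability about $1-\poiparam\epsilon$ the Poisson perturbation set contains no example at $s_0$, so the perturbation gap $S_{\pertdata}[h]-S_{\pertdata}[m]$ is exactly $0$; a batch that places an example at $s_0$ then flips the argmin with probability close to $1$, giving $\norm{W(L)-W(L+g')}_1$ close to $2$, which exceeds the target bound once $\poiparam \ge 2AK^2/\smooth$. The point is that Assumption~\ref{assumption:smooth-base} only controls the \emph{probability} of drawing such a batch (at most $\epsilon/\smooth$), so the expectation over $\incrdata_n \sim (\incrdist_{\pi_n})^K$ must stay \emph{inside} whatever distance you bound. This is exactly what the paper's proof does: it bounds $2\tv\rbr{\distq, \EE_{\incrdata_n}\sbr{\distr_{n+1}(\cdot\mid\incrdata_n,\Fcal^+_{n-1})}}$ via $\chi^2$ and Ingster's method, and the smoothness enters only through the averaged cross term $\EE_{\incrdata_n,\incrdata_n'}\sbr{I(s_{n,k}=s'_{n,k})/\poipp(s_{n,k})} = \frac{AK}{\poiparam}\sum_s d_{\pi_n}(s)^2/\based(s) \le \frac{AK}{\poiparam\smooth}$, which has no pointwise analogue.

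Two further issues in Step 2 would also need real work even if Step 1 were fixed. First, the claim that the disagreement states of $h$ and $m$ "lie in the support of $d_{\pi_n}$" is unjustified (and unneeded for the variance computation, whose relevant quantity is $\prob_{\tilde s \sim \based}[h(\tilde s)\neq m(\tilde s)]$, which can itself be arbitrarily small for the very pair that the batch separates), so the claimed standard deviation $\sqrt{\smooth\poiparam/A}$ of the perturbation gap does not follow; and a variance bound alone does not yield anti-concentration without a L\'evy-concentration-function argument for the lattice-valued sum. Second, the step where you avoid the $\mathrm{poly}(B)$ union bound "by the same leader/rank-based covering argument" is precisely the hard combinatorial core of such dispersion proofs and is not actually carried out; the paper sidesteps it entirely because the $\chi^2$ computation is oblivious to $\Bcal$ (the data-processing inequality discards the classifier structure before any probability estimate is made). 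I would recommend restructuring the argument around the conditional distributions of the perturbed datasets, as in the paper's Eq.~\eqref{eqn: tvbound}--\eqref{eqn: ingster}.
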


\begin{proof}
 Since $\|g_n\|_\infty \leq 1$, it is straightforward to see that
 \[
  \inner{g_n}{u_n^* - u_{n+1}^{**}} 
  \leq 
  \|g_n\|_\infty \cdot \|u_n^* - u_{n+1}^{**}\|_1
  \leq
  \|u_n^* - u_{n+1}^{**}\|_1.
 \]
 
Our proof structure is similar to Lemma 4.4 and Lemma 4.5 of~\citet{haghtalab2022oracle}, where we bound $\|u_n^* - u_{n+1}^{**}\|_1$ by the discrepancy of distributions of two datasets. We generalize the results of~\citet{haghtalab2022oracle} to multiclass classification and online learning with batches of $K$ samples at each round to keep track of the number of copies of each $(s,a) \in \Scal \times \Acal$. 



The main technical challenge here lies in using batches of examples. While in the batch size 1 case, \citet{haghtalab2022oracle} 
reduced bounding $\| u_n^* - u_{n+1}^{**} \|$ to bounding the discrepancy between an $SA$-dimensional product Poisson distribution and its one-sample shifted version, the same approach becomes difficult to compute when dealing with batches of more than one examples. Specifically, a straightforward calculation leads to the total variation (TV) distance between an $SA$-dimensional product Poisson distribution and a mixture of product shifted Poisson distributions, where the shifts are drawn from a multinomial distribution. This mixture significantly complicates the computation, making it a much harder to solve and present.
\footnote{
Concretely, we can define $p_{n,e}$ to be a $SA$ dimensional random variable
that represents the ``histogram'' of all examples in $\cup_{i=1}^{n-1}D_i \cup Q_{n,e}$; specifically, 
$p_{n,e}(s,a) = \sum_{i=1}^{n-1} \sum_{(s', a') \in D_i} I(s = s', a = a') + \sum_{(s',a') \in Q_{n,e}} I(s = s', a = a')$. 
By the data-processing inequality, $\| u_n^* - u_{n+1}^{**} \|_1$ is upper bounded by the TV distance between $(p_{n,e}(s,a))_{s \in \Scal, a \in \Acal} \mid \Fcal_{n-1}$ and $(p_{n+1,e}(s,a))_{s \in \Scal, a \in \Acal} \mid \Fcal_{n-1}^+$,
which is equal to the TV distance between a product Poisson and a mixture of product shifted Poisson distribution.
}


We work around this challenge by further dividing dataset $D$ into $K$ groups by the arrival index $k \in [K]$ within batch. 
We denote the (singleton) dataset that contains the $k^{\text{th}}$ draw in $\incrdata_n$ by $\incrdata_{n,k}$ and the union of $\incrdata_{i,k}$ for $i=\{1,2,\cdots,n\}$ as $\cup_{i=1}^n \incrdata_{i,k}$. The perturbation samples are treated similarly: we partition $\pertdata_{n,\ensem}$ to $K$ groups, associating a group index (an auxiliary random variable) $\tilde{k}_{n,\ensem,x} \sim \Unif([K])$ to each perturbation example $(\tilde{s}_{n,\ensem,x},\tilde{a}_{n,\ensem,x})$. 

Specifically, for $n \in [N]$, we define a $S \cdot A \cdot K$-dimensional random variable $p_{n,\ensem}$. The role of $p_{n,\ensem}(s,a,k)$ is to count the occurrences $(s,a)$ within 
 $\cup_{i=1}^n \incrdata_{i,k}$ 
as well as the $k$-th subgroup of $\pertdata_{n,\ensem}$.
Its formal definition is as follows:
\begin{equation}
    p_{n,\ensem}(s,a,k) := \sum_{i=1}^{n-1} I(s = s_{i,k},a = \pie(s_{i,k})) 
    + \sum_{x=1}^{\poi_{n,\ensem}} I(s = \tilde{s}_{n,\ensem,x}, a = \tilde{a}_{n,\ensem,x}, k = \tilde{k}_{n,\ensem,x}) .
    \label{eqn: data_distribution}
\end{equation}

By recalling the definition $u_{n,\ensem}$ in \algms, we rewrite $u_{n,\ensem}$ as a function of $p_{n,\ensem}$:
\[
\begin{aligned}
u_{n,\ensem} =& \Onehot(\Ocal( \incrdata_{1:n-1} \cup \pertdata_{n,\ensem}) )\\
=&
\Onehot(
 \argmin_{h \in \Bcal} \EE_{(s,a) \sim \incrdata_{1:n-1} \cup \pertdata_{n,\ensem}} \sbr{  I(h(s) \neq a) }  )\\
=&
\Onehot(
 \argmin_{h \in \Bcal} \sum_{(s,a) \in \incrdata_{1:n-1} \cup \pertdata_{n,\ensem}} \sbr{  I(h(s) \neq a) } )\\
=&
\Onehot(
 \argmin_{h \in \Bcal} \sum_{(s,a,k) \in \Scal \times \Acal \times [K]} p_{n,\ensem}(s,a,k) \sbr{  I(h(s) \neq a) } ).
\end{aligned}
\]
Observe that $u_{n}^* = \EE_{\pertdata_{n,\ensem}}\sbr{u_{n,\ensem}}$ and
$u_{n+1}^{**} = \EE_{\incrdata_{n}}\EE_{\pertdata_{n+1,\ensem}}\sbr{u_{n+1,\ensem}}$ can also be viewed as the conditional distributions of $h_{n,\ensem} \mid \Fcal_{n-1}$ and $h_{n+1,\ensem} \mid \Fcal^+_{n-1}$, respectively. 
We define $\distp_n(\cdot|\Fcal_{n-1})$ as the conditional distribution of $p_{n,\ensem} \mid \Fcal_{n-1}$ and define $\distp_{n+1}(\cdot|\Fcal^+_{n-1})$ represent the conditional distribution of $p_{n+1,\ensem} \mid \Fcal^+_{n-1}$.
By applying data-processing inequality~\cite{beaudry2011intuitive}, we obtain
\[
\|u_n^* - u_{n+1}^{**}\|_1
= 
2 \tv( u_n^*, u_{n+1}^{**} )
\leq
2 \tv(\distp_{n}(\cdot|\Fcal_{n-1}),\distp_{n+1}(\cdot|\Fcal^+_{n-1})).
\]
Note that $\distp_n(\cdot|\Fcal_{n-1})$ and $\distp_{n+1}(\cdot|\Fcal^+_{n-1})$ depend on the same historical dataset $\incrdata_{1:n-1}$, we further define 
\begin{align}
    &q_{n,\ensem}(s,a,k) := \sum_{x=1}^{\poi} I(s = \tilde{s}_{n,\ensem,x}, a = \tilde{a}_{n,\ensem,x}, k = \tilde{k}_{n,\ensem,x}), \\
    &r_{n,\ensem}(s,a,k) := I(s = s_{n-1,k},a = \pie(s_{n-1,k})) +
    \sum_{x=1}^{\poi} I(s = \tilde{s}_{n,\ensem,x}, a = \tilde{a}_{n,\ensem,x}, k = \tilde{k}_{n,\ensem,x}),
    \label{eqn: data_difference_distribution}
\end{align}
It is not hard to see that following the definition in equation~\eqref{eqn: data_distribution},
\[
\begin{aligned}
 p_{n,\ensem} 
 =&
 q_{n,\ensem} + \rbr{ \sum_{i=1}^{n-1} I\rbr{ s = s_{i,k},a = \pie(s_{i,k})} }_{(s,a,k) \in \Scal \times \Acal \times [K]},  \\
 p_{n+1,\ensem} 
 =&
 r_{n+1,\ensem} + \rbr{ \sum_{i=1}^{n-1} I \rbr{ s = s_{i,k},a = \pie(s_{i,k})} }_{(s,a,k)\in \Scal \times \Acal \times [K]}
 \end{aligned}
\]
Notice that the distribution of $q_{n,\ensem} \mid \Fcal_{n-1}$ is 
independent of both $n$ and $\ensem$. 
Indeed, $q_{n,\ensem}$ is a function of $\pertdata_{n,\ensem}$ and random variables $(\tilde{k}_{n,\ensem,x})_{x =1}^{\poi_{n,\ensem}}$. By the subsampling property of Poisson distribution, we can view each entry of $q_{n,\ensem}$ as a independent Poisson random variable, following $q_{n,\ensem}(s,a,k) \sim \poisson(\poipp(s))$, 
where $\poipp(s) := \frac{ \poiparam \based(s)}{AK}$. 
Therefore, $q_{n,e} \mid \Fcal_{n-1}$ is drawn from a product of Poisson distributions:
\[
\prod_{s \in \Scal} \prod_{a \in \Acal} \prod_{k \in [K]} 
\poisson(q(s,a,k);\poipp(s))
=: \distq(q), 
\]
where $\poisson(q, \lambda) = e^{-\lambda} \frac{\lambda^q}{q!} I(q \in \NN)$ denotes the probability mass function for Poisson distribution.

Therefore, in subsequent proofs,  
we denote the distribution of $q_{n,\ensem} \mid \Fcal_{n-1}$ by  $\distq$ for simplicity. Meanwhile, since the conditional distribution of $r_{n+1,\ensem} \mid \Fcal^+_{n-1}$ is constant over all $e$
, we denote the conditional distribution of $r_{n+1,\ensem} \mid \Fcal^+_{n-1}$ by $\distr_{n+1,e}(\cdot|\Fcal^+_{n-1})$, and use the notation $\distr_{n+1}$ for simplicity when it is clear from the context. 
Here, we apply the translation invariance property of $\tv$ distance and obtain
\[
\tv(\distp_{n}(\cdot|\Fcal_{n-1}),\distp_{n+1}(\cdot|\Fcal^+_{n-1}))
= 
\tv(\distq(\cdot), \distr_{n+1,e}(\cdot|\Fcal^+_{n-1}))
=
\tv(\distq, \distr_{n+1}).
\] 



Next, we rewrite $\distr_{n+1}$ by the tower property: 
\[
\begin{aligned}
\distr_{n+1,e}(r|\Fcal^+_{n-1})
=&
\PP( r_{n+1,e} = r \mid \Fcal^+_{n-1} )\\
=&
\EE\sbr{ \PP( r_{n+1,e} = r \mid \incrdata_n, \Fcal^+_{n-1} ) \mid \Fcal^+_{n-1} } \\
=&
\EE\sbr{ \distr_{n+1}(r|\incrdata_{n},\Fcal^+_{n-1}) \mid \Fcal^+_{n-1} }
= 
\EE_{\incrdata_n} \sbr{ \distr_{n+1}(r \mid \incrdata_n,\Fcal^+_{n-1}) }.
\end{aligned}
\]

Now, it suffices to bound 
$\tv(\distq, \EE_{\incrdata_n}\sbr{\distr_{n+1}(\cdot | \incrdata_n,\Fcal^+_{n-1})})$
, 
which we bound in a way similar to bounding $\tv(P, Q)$ in Section 4.2.1 of \cite{haghtalab2022oracle}.
By this observation, we have 

\begin{equation}
\begin{aligned}
\inner{g_n}{u_n^* - u_{n+1}^{**}}
\leq &
2 \tv(\distq, \EE_{\incrdata_n}\sbr{\distr_{n+1}(\cdot | \incrdata_n,\Fcal^+_{n-1})})\\
\leq &
\sqrt{2 \chi^2\rbr{\EE_{\incrdata_n}\sbr{\distr_{n+1}(\cdot | \incrdata_n,\Fcal^+_{n-1})}, \distq}}\\
= &
\sqrt{2 \rbr{
 \EE_{\incrdata_n, \incrdata'_n} \sbr{
 \EE_{q \sim \distq} \sbr{\frac{\distr_{n+1}(q|\incrdata_n,\Fcal^+_{n-1}) \cdot \distr_{n+1}(q|\incrdata_n',\Fcal^+_{n-1})}{\distq(q)^2}}}
-1} },
\end{aligned}
\label{eqn: tvbound}
\end{equation}

where we apply similar technique in Section 4.2.1 of~\citet{haghtalab2022oracle} by using $\chi^2$ distance (Lemma E.1 of~\citet{haghtalab2022oracle}) and Ingster's method (Lemma E.2 of~\citet{haghtalab2022oracle}). Note that all examples in $\incrdata_n=\cbr{ (s_{n,k}, \pie(s_{n,k})) }_{k = 1}^K$, $\incrdata_n' = \cbr{ (s'_{n,k}, \pie(s'_{n,k})) }_{k = 1}^K$ are i.i.d. drawn from $\incrdist_{\pi_n}$.

Recall that $\pertdata_{n,\ensem}{\; \buildrel d \over = \;} \pertdata_{n+1,\ensem}$, meaning the difference between the distributions of $q_{n,\ensem}$ and $r_{n+1,\ensem}$ is induced by the $K$ examples from $\incrdata_n$. Conditioned on $\incrdata_n = \cbr{ (s_{n,k}, \pie(s_{n,k})) }_{k = 1}^K$, we have
\[
q_{n,\ensem}
 {\; \buildrel d \over = \;}
  r_{n+1,\ensem} 
  -
  \rbr{ I(s = s_{n,k},a =\pie(s_{n,k}))}_{(s,a,k)\in \Scal \times \Acal \times [K]}.
\]

This allows us to 
write the probability mass function of $\distr_{n+1}(\cdot|\incrdata_{n},\Fcal^+_{n-1})$ as: 
\[
\distr_{n+1}(r|\incrdata_{n},\Fcal^+_{n-1}) 
=
\prod_{s \in \Scal} \prod_{a \in \Acal} \prod_{k\in [K]} 
\poisson\rbr{ r(s,a,k)-I \rbr{ s=s_{n,k}, a=\pie(s_{n,k})};\poipp(s) } .
\]
 Let $\incrdata_n, \incrdata_n'$ 
 be the pair of datasets of size $K$ in equation~\eqref{eqn: tvbound}, some algebraic calculations yield that
\[
\frac{\distr_{n+1}(q|\incrdata_n,\Fcal^+_{n-1}) \cdot \distr_{n+1}(q|\incrdata_n',\Fcal^+_{n-1})}{\distq(q)^2}
=
 \prod_{k \in [K]} \frac{q\rbr{s_{n,k},\pie(s_{n,k}),k}}{\poipp(s_{n,k})} \cdot \frac{q\rbr{s'_{n,k},\pie(s'_{n,k}),k'} }{\poipp(s'_{n,k})}.
\]
By taking expectation over $q \sim \distq$ we have
\[
\EE_{q \sim \distq} \sbr{\frac{\distr_{n+1}(q|\incrdata_n,\Fcal^+_{n-1}) \cdot \distr_{n+1}(q|\incrdata_n',\Fcal^+_{n-1})}{\distq(q)^2}}
=
\prod_{k \in [K]} \rbr{ 1 + \frac{I(s_{n,k} = s'_{n,k})}{\poipp(s_{n,k})} }.
\]
Furthermore, we take conditional expectation with respect to $\incrdata_n, \incrdata_n'$ and obtain

\begin{equation}
\begin{aligned}
\EE_{\incrdata_n, \incrdata'_n} \sbr{ \EE_{q \sim \distq} \sbr{\frac{\distr_{n+1}(q|\incrdata_n,\Fcal^+_{n-1}) \cdot \distr_{n+1}(q|\incrdata_n',\Fcal^+_{n-1})}{\distq(q)^2}} }
= &
\prod_{k \in [K]} \rbr{ 1 + \frac{AK}{\poiparam} \sum_{s \in \Scal} \frac{ d_{\pi_n}(s)^2 }{\based(s)}   } \\
\leq &
\rbr{ 1 + \frac{AK}{\poiparam} \sum_{s \in \Scal} \frac{ d_{\pi_n}(s) }{\smooth} }^K
=
\rbr{ 1 + \frac{AK}{\poiparam \smooth} }^K
,
\end{aligned}
\label{eqn: ingster}
\end{equation}
where in the above derivation, we recall that $\poipp(s) := \frac{ \poiparam \based(s)}{AK}$, 
$\EE_{\incrdata_n, \incrdata'_n}\sbr{\frac{I(s_{n,k} = s'_{n,k})}{\based(s_{n,k})}} =  \sum_{s \in \Scal} \frac{d_{\pi_n}(s)^2}{\based(s)}$, 
and  $\frac{d_{\pi}(s)}{\based(s)}\leq \frac{1}{\smooth}$.

Finally, we conclude the proof by plugging equation~\eqref{eqn: ingster} into equation~\eqref{eqn: tvbound} and setting $\poiparam \geq \frac{2AK^2}{\sigma}$:
\[
\inner{g_n}{u_n^* - u_{n+1}^{**}}
\leq 
\sqrt{2  \rbr{ \rbr{ 1 + \frac{AK}{\poiparam \sigma } }^K -1 }}
\leq 
2K\sqrt{ \frac{A}{\poiparam \sigma }  },
\]
which is due to $\forall x \in [0,\frac{1}{2}]$, $ 1+x \leq e^x \leq 1+2x$, meaning when $\frac{AK}{\poiparam \sigma } \leq \frac{AK^2}{\poiparam \sigma } \leq \frac 1 2$,
\[
\rbr{ 1 + \frac{AK}{\poiparam \sigma } }^K  \leq \rbr{ \exp\rbr{\frac{AK}{\poiparam \sigma }}}^K = \exp\rbr{\frac{AK^2}{\poiparam \sigma }} \leq 1 + \frac{2AK^2}{\poiparam \sigma }.
\]

\end{proof}
To simplify the expression in the following proofs, we introduce shorthand $z_{n,k} := (s_{n,k},\pie(s_{n,k}))$ and $\tilde{z}_{n,\ensem,x} := (\tilde{s}_{n,\ensem,x},\tilde{a}_{n,\ensem,x}))$. By definitions in \algms, $z_{n,k} \sim \incrdist_{\pi_n}$, $\tilde{z}_{n,\ensem,x} \sim \incrdistbase$.
We provide a generalized coupling lemma similar to~\cite{haghtalab2022oracle, haghtalab2022smoothed}, showing that multiple draws from the covering distribution can be seen as containing a batch of examples from a covering distribution with high probability.

\begin{lemma}[Generalized coupling] 
Let $G \in \mathbb{N}$ and $\tilde{z}_1, \cdots, \tilde{z}_G \sim \incrdistbase$. For all $\pi$ that satisfies $\forall s \in \Scal$, $\frac{d_{\pi}(s)}{\based(s)}\leq \frac{1}{\smooth}$.
By some external randomness $R$, there exists an index $\Ical= \Ical\left(\tilde{z}_1, \cdots, \tilde{z}_G, R\right) \in[G]$ and a success event $U=U\left(\tilde{z}_1, \cdots, \tilde{z}_G, R\right)$ such that $\prob\left[U^c\right] \leq(1-\sigma/A )^G$, and
$$
\left(
\tilde{z}_{\Ical} \mid U, \tilde{z}_{\backslash \Ical}
\right) 
\sim \incrdist_{\pi},
$$
where $\tilde{z}_{\backslash \Ical}$ denotes $\cbr{\tilde{z}_1, \cdots, \tilde{z}_G}\backslash \cbr{\tilde{z}_{\Ical}} $.
\label{coro: external_randomness}
\end{lemma}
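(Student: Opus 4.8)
The plan is to prove this by an \textbf{acceptance--rejection coupling}, in the spirit of the single-example coupling arguments of~\citet{haghtalab2022oracle,haghtalab2022smoothed}, adapted to our $(s,a)$-valued examples. Let the external randomness $R$ be $G$ i.i.d.\ variables $U_1, \dots, U_G \sim \Unif([0,1])$, drawn independently of $\tilde z_1, \dots, \tilde z_G$. For $\tilde z = (\tilde s, \tilde a)$ define the acceptance probability
\[
\phi(\tilde s, \tilde a) := \smooth \cdot \frac{d_\pi(\tilde s)}{\based(\tilde s)} \cdot I\bigl(\tilde a = \pie(\tilde s)\bigr).
\]
Since the hypothesis gives $\frac{d_\pi(s)}{\based(s)} \le \frac{1}{\smooth}$ for all $s$ (which, by Assumption~\ref{assumption:smooth-base}, includes every $\pi \in \Pi_\Bcal$), we have $\phi \in [0,1]$, so $\phi$ is a valid acceptance probability. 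Call draw $i$ \emph{accepted} if $U_i \le \phi(\tilde z_i)$; let $\Ical$ be the smallest accepted index (and set $\Ical = 1$ if there is none), and let $U := \{\,\exists\, i \in [G]: U_i \le \phi(\tilde z_i)\,\}$ be the event that some draw is accepted.

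The first step is the failure-probability bound. For a single draw $\tilde z \sim \incrdistbase = \based \otimes \Unif(\Acal)$, using $\EE_{\tilde a \sim \Unif(\Acal)}[I(\tilde a = \pie(\tilde s))] = \frac{1}{A}$ and $\sum_{\tilde s} d_\pi(\tilde s) = 1$,
\[
\prob\bigl[\, U_i \le \phi(\tilde z_i) \,\bigr] = \EE_{\tilde z \sim \incrdistbase}\bigl[\phi(\tilde z)\bigr] = \frac{\smooth}{A}\sum_{\tilde s} d_\pi(\tilde s) = \frac{\smooth}{A}.
\]
Because the $G$ (sample, coin) pairs are i.i.d., $\prob[U^c] = (1 - \smooth/A)^G$, as claimed.

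The main step identifies the conditional law of $\tilde z_\Ical$. Fix $i \in [G]$ and condition on the event $\{\Ical = i\}$ together with the realized values $\tilde z_j = \elem_j$ for all $j \ne i$; this makes the leave-one-out index set deterministic, so it suffices to show that $\tilde z_i$, under this conditioning, is distributed as $\incrdist_\pi$, for every $i$ and every $\elem_{\backslash i}$. Now $\{\Ical = i\}$ is the intersection of $\{U_i \le \phi(\tilde z_i)\}$ with the events $\{U_j > \phi(\tilde z_j)\}$, $j < i$; the latter depend only on coordinates $j \ne i$ and therefore, after conditioning on $\tilde z_{\backslash i}$, are independent of $(\tilde z_i, U_i)$. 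Consequently, conditioned on $\{\Ical = i, \tilde z_{\backslash i}\}$, the pair $(\tilde z_i, U_i)$ follows its prior $\incrdistbase \otimes \Unif([0,1])$ restricted to $\{U_i \le \phi(\tilde z_i)\}$. Integrating out $U_i$, the conditional mass of $\tilde z_i = (\tilde s, \tilde a)$ is proportional to $\based(\tilde s)\cdot \frac{1}{A}\cdot \phi(\tilde s, \tilde a) = \frac{\smooth}{A}\, d_\pi(\tilde s)\, I(\tilde a = \pie(\tilde s))$, which normalizes to $d_\pi(\tilde s)\, I(\tilde a = \pie(\tilde s))$ --- exactly the law $\incrdist_\pi$ of $(s, \pie(s))$ with $s \sim d_\pi$. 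Since this holds for every $i$ and every $\elem_{\backslash i}$, mixing over $i$ (equivalently, conditioning only on $U$ and $\tilde z_{\backslash \Ical}$) yields $\bigl(\tilde z_\Ical \mid U, \tilde z_{\backslash \Ical}\bigr) \sim \incrdist_\pi$, which completes the proof.

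The delicate point --- and where I expect to need the most care in the write-up --- is the independence bookkeeping in the third step: making rigorous that conditioning on $\{\Ical = i\}$ and on the other samples imposes \emph{no} constraint on $\tilde z_i$ beyond the single-draw acceptance event $\{U_i \le \phi(\tilde z_i)\}$, so that the earlier rejections do not bias $\tilde z_\Ical$. A clean way to present this is to write out the joint density of $(\tilde z_1, \dots, \tilde z_G, U_1, \dots, U_G)$ restricted to $\{\Ical = i\}$ and observe that it factors as a function of $(\tilde z_j, U_j)_{j \ne i}$ times $\based(\tilde s_i)\,\frac{1}{A}\, I\bigl(U_i \le \phi(\tilde z_i)\bigr)$; the claimed conditional law is then immediate. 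The remaining pieces are routine: $\phi \le 1$ is immediate from smoothness, and extending from finite $\Scal$ to general state spaces just amounts to reading $d_\pi/\based$ as a Radon--Nikodym derivative.
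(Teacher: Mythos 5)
Your proposal is correct. The paper's own proof is much shorter: it only verifies the density-ratio bound $\frac{\incrdist_\pi(z)}{\incrdistbase(z)} = \frac{d_\pi(s)}{\based(s)}\cdot A \le \frac{A}{\sigma}$ (i.e., that $\incrdist_\pi$ is $(\sigma/A)$-smooth with respect to $\incrdistbase$) and then invokes Lemma~4.7 of \citet{haghtalab2022oracle} as a black box to obtain the index $\Ical$, the event $U$, and the conditional law. You carry out the same key computation --- your acceptance probability $\phi = \sigma\,\frac{d_\pi(\tilde s)}{\based(\tilde s)}\,I(\tilde a = \pie(\tilde s))$ is exactly $\frac{\sigma}{A}$ times that ratio, and your single-draw acceptance probability $\EE_{\incrdistbase}[\phi] = \sigma/A$ matches the exponent in $(1-\sigma/A)^G$ --- but you then reconstruct the cited coupling lemma from scratch via explicit rejection sampling, including the factorization argument showing that conditioning on $\{\Ical = i\}$ and $\tilde z_{\backslash i}$ constrains $(\tilde z_i, U_i)$ only through the single acceptance event. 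That factorization and the mixing over $i$ are handled correctly, and your treatment of the expert label via $I(\tilde a = \pie(\tilde s))$ matches the paper's own (deterministic-expert) convention for $\incrdist_\pi$. The trade-off is simply self-containedness versus brevity: your version makes the coupling mechanism and the role of the external randomness $R$ explicit, at the cost of reproving a result the authors were content to cite.
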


\begin{proof}
    For all $\pi$ that satisfies $\forall s \in \Scal$, $\frac{d_{\pi}(s)}{\based(s)}\leq \frac{1}{\smooth}$ and its corresponding $\incrdist_{\pi}$, we have $z = \rbr{s, \pie(s)} \sim \incrdist_{\pi}$, following $s \sim d_{\pi}$, $a = \pie(s)$. Since $\tilde{z} = (\tilde{s}, \tilde{a}) \sim \incrdistbase$ follows $\tilde{s} \sim \based$, $\tilde{a} \sim \Unif(\Acal)$. By their definition It is straight forward to see 
    \[
      \frac{\incrdist_{\pi}(z)}{\incrdistbase(z)} 
      =
      \frac{d_{\pi}(s)}{\based(s)} \cdot
      \frac{1}{\Unif(a ; \Acal)}
      \leq 
      \frac{A}{\sigma}.
    \]
    We conclude the proof by 
    letting $X = \incrdist_{\pi}$ and $Y = \incrdistbase$ in
    Lemma 4.7 of~\citet{haghtalab2022oracle}.
\end{proof}

\begin{lemma}
    \label{lem: generalization}
$\forall n \in \{1, \cdots, N\}$,  \algms with $\poiparam \geq \frac{4AK \ln(N)}{\sigma}$ achieves 
    \[
    \EE_{\incrdata_{n}}\sbr{ \inner{g_{n}^* - g_{n}}{ u_{n+1}^*}  } 
    \leq
    \sqrt{\frac{2A \ln(N)\ln(BN^2)}{\poiparam \sigma}} 
    + 
    \frac{\poiparam \sigma}{4A N^2 \ln(N)}
    +
    K e^{-\frac{\poiparam}{8K}}.
    \]
\end{lemma}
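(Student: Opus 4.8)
We follow the generalization-error analysis of Lemma~4.6 of~\citet{haghtalab2022oracle}, extending it to multiclass labels and to batches of size $K$, and combining it with the generalized coupling of Lemma~\ref{coro: external_randomness}. First condition on $\Fcal_{n-1}^+$, so that $\incrdata_{1:n-1}$ is fixed, $\incrdata_n = \cbr{z_{n,k}}_{k=1}^K$ with $z_{n,k}\sim\incrdist_{\pi_n}$ i.i.d., and $\pertdata_{n+1}\sim(\incrdistbase)^{\poisson(\poiparam)}$ is fresh. Pulling $\EE_{\pertdata_{n+1}}$ out of $u_{n+1}^*$, writing $h^\star := \Ocal(\incrdata_{1:n-1}\cup\incrdata_n\cup\pertdata_{n+1})$, and expanding $g_n^* - g_n$ into its $K$ per-sample contributions, it suffices to bound, for each $k$,
\[
\EE_{\incrdata_n,\pertdata_{n+1}}\sbr{ \EE_{z\sim\incrdist_{\pi_n}}\sbr{I(h^\star(s)\neq\pie(s))} - I(h^\star(s_{n,k})\neq\pie(s_{n,k})) },
\]
which is nonzero only because $h^\star$ depends on $z_{n,k}$ through $\incrdata_n$.

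The key step is a per-sample decoupling. As in the proof of Lemma~\ref{lem: stability}, attach an auxiliary index $\tilde k\sim\Unif([K])$ to each example of $\pertdata_{n+1}$; by Poisson thinning the examples with index $k$ form a fresh sample $\pertdata_{n+1}^{(k)}$ of size $\poisson(\poiparam/K)$. On the event $\cbr{|\pertdata_{n+1}^{(k)}|\geq\poiparam/(2K)\ \forall k}$ --- whose complement has probability at most $Ke^{-\poiparam/(8K)}$ by a Poisson lower-tail bound and a union bound over $k$, contributing the last term of the claimed bound --- we invoke Lemma~\ref{coro: external_randomness} with $\pi=\pi_n$ and $G=|\pertdata_{n+1}^{(k)}|\geq\poiparam/(2K)$: there is external randomness and a success event $U_k$ with $\prob\sbr{U_k^c}\leq(1-\sigma/A)^{\poiparam/(2K)}\leq N^{-2}$ (using $\poiparam\geq 4AK\ln(N)/\sigma$), on which one example of $\pertdata_{n+1}^{(k)}$ is distributed exactly as $\incrdist_{\pi_n}$ and can be coupled with $z_{n,k}$. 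On $U_k$ we delete \emph{both} the $\incrdata_n$-copy of $z_{n,k}$ and its coupled perturbation twin, obtaining an oracle output $\hat h_k$ that is independent of $z_{n,k}$ given everything else; hence $\EE_{z_{n,k}}[\EE_z I(\hat h_k(s)\neq\pie(s)) - I(\hat h_k(s_{n,k})\neq\pie(s_{n,k}))]=0$ by freshness, and the summand above equals, on $U_k$, $\EE[\inner{g_n^*}{\Onehot(h^\star)-\Onehot(\hat h_k)}] + \EE[I(\hat h_k(s_{n,k})\neq\pie(s_{n,k})) - I(h^\star(s_{n,k})\neq\pie(s_{n,k}))]$, both of which are dominated by the total variation distance between the conditional laws of $h^\star$ and $\hat h_k$.

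It remains to bound that total variation. Since $h^\star$ and $\hat h_k$ are $\Ocal$ applied to datasets differing only by $O(1)$ copies of the single point $z_{n,k}$, the data-processing inequality reduces this to the total variation between a product-Poisson histogram and a shifted version, and the $\chi^2$/Ingster computation of Lemma~\ref{lem: stability} applies with only minor changes; the essential point is that averaging over $z_{n,k},z_{n,k}'\sim\incrdist_{\pi_n}$ produces $\sum_{s}\frac{d_{\pi_n}(s)^2}{\based(s)}\leq\frac1\sigma$ by smoothness, so the per-sample bound is $O\big(\sqrt{A/(\poiparam\sigma)}\big)$ with no dependence on $S$. Finally, a union bound over $h\in\Bcal$ and over the $N$ rounds (yielding the $\ln(BN^2)$ factor), a dyadic peeling over the fluctuation scale of $\inner{g_n^*}{\Onehot(h^\star)-\Onehot(\hat h_k)}$ (yielding the extra $\ln N$ factor), and an AM--GM split to separate the main term from the additive $\frac{\poiparam\sigma}{4AN^2\ln N}$, produce the claimed bound after averaging $\frac1K\sum_{k=1}^K$.

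The main obstacle is the batch structure: for $K>1$ a direct coupling of the whole batch $\incrdata_n$ against $\pertdata_{n+1}$ leads to a mixture of shifted product-Poisson distributions whose total variation is intractable (see the footnote after Lemma~\ref{lem: stability}), so the argument must be carried out one sample at a time, using the auxiliary index $\tilde k$ to route $z_{n,k}$ to its own Poisson subgroup. A secondary subtlety is that the ``evaluate-at-$z_{n,k}$'' term is not mean-zero on its own and must be charged to the same total-variation bound, which forces the ghost $\hat h_k$ to be constructed by deleting \emph{two} copies of $z_{n,k}$ so that it becomes genuinely independent of $z_{n,k}$.
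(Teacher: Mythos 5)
Your scaffolding matches the paper's: the Poisson thinning of $\pertdata_{n+1,\ensem}$ into $K$ subgroups, the lower-tail event contributing $K e^{-\poiparam/(8K)}$, and the coupling failure event contributing $\frac{\poiparam\sigma}{4AN^2\ln N}$ are all exactly the paper's first moves. But the core mechanism you propose is different from the paper's, and it has a genuine gap.

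The gap is in charging the ``evaluate at $z_{n,k}$'' term to a total variation distance computed via the Ingster/$\chi^2$ mixture argument. For the term $\EE\sbr{I(\hat h_k(s_{n,k})\neq\pie(s_{n,k})) - I(h^\star(s_{n,k})\neq\pie(s_{n,k}))}$ the test function depends on $z_{n,k}$ itself, so you must compare the \emph{joint} laws of $(z_{n,k},h^\star)$ and $(z_{n,k},\hat h_k)$; equivalently you must condition on $z_{n,k}$ before taking TV. That yields $\EE_{z_{n,k}}\sbr{\tv\rbr{\distq,\distr(\cdot\mid z_{n,k})}}$, an \emph{expectation of per-point TVs between a product Poisson and its one-sample shift}, not the TV of the mixture. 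The single-shift Poisson TV at state $s$ is of order $1/\sqrt{\poipp(s)}=\sqrt{AK/(\poiparam\based(s))}$, so the expectation is $\sqrt{AK/\poiparam}\sum_s d_{\pi_n}(s)/\sqrt{\based(s)} \le \sqrt{AK/(\poiparam\sigma)}\sum_s\sqrt{d_{\pi_n}(s)}$, which generically carries a $\sqrt{S}$ factor. The cancellation $\sum_s d_{\pi_n}(s)^2/\based(s)\le 1/\sigma$ that you cite is available only when the quantity being bounded is the TV of the \emph{mixture} over $\incrdata_n$ (as in Lemma~\ref{lem: stability}, where the test function $\inner{g_n}{\cdot}$ is $\Fcal^+_{n-1}$-measurable); it is not available once you have conditioned on $z_{n,k}$. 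A secondary symptom that the route is off: the lemma is a per-round, in-expectation statement, so no union bound over rounds or dyadic peeling is needed, and those steps do not produce the $\ln N$ and $\ln B$ factors in the stated bound.

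The paper avoids TV entirely in this lemma. Conditioned on the coupling success, the batch points $z_{n,1},\dots,z_{n,K}$ and the $K\group$ coupled twins $\elem_{k,m}$ are i.i.d.\ from $\incrdist_{\pi_n}$ given the remaining perturbations, and the oracle output $h_{n+1}$ is a symmetric function of the multiset union of all data. Hence swapping $z_{n,k}$ with any $\elem_{k,m}$ leaves the joint law of (oracle output, evaluation point) unchanged, so $\EE\sbr{\ell(h_{n+1},z_{n,k})}=\EE\sbr{\ell(h_{n+1},\elem_{k,m})}$ for every $m$. This converts the empirical batch loss $\textbf{II}$ into the empirical average of $\ell(h_{n+1},\cdot)$ over $K(\group+1)$ i.i.d.\ draws with $\group=\Theta\rbr{\frac{\poiparam\sigma}{AK\ln N}}$, and $\textbf{I}-\textbf{II}$ is then bounded by the uniform deviation over the finite class $\Bcal$ via Massart's lemma, giving $\sqrt{\ln B/(2K(\group+1))}\le\sqrt{2A\ln(N)\ln(B)/(\poiparam\sigma)}$. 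The $\ln N$ comes from the coupling group size $G=\lceil 2A\ln(N)/\sigma\rceil$ and the $\ln B$ from Massart; no ghost classifier or second deletion is needed. If you want to keep a leave-one-out flavor, you would have to bound $\PP(h^\star\neq\hat h_k)$ under the natural coupling rather than a TV between marginals, which is a strictly harder stability statement than anything established in the paper.
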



\begin{remark}
When applying this lemma downstream, we will treat the last two terms as lower order terms:  
First, our final setting of $\lambda$ will be such that $\lambda = O(N)$, in which case
$\frac{\poiparam \sigma}{4A N^2 \ln(N)}
\leq 
O(\frac{1}{N})
$
; 
Second, we will focus on the regime that $\poiparam \geq \frac{8 AK \ln(N)}{\sigma} \geq 8K \ln N$, in which case
$K e^{-\frac{\poiparam}{8K}} =O(\frac{K}{N})$.  

\end{remark}

\begin{proof}
To begin with, we first 
rewrite 
\[
 \EE_{\incrdata_{n}}\sbr{ \inner{g_{n}^* - g_{n}}{ u_{n+1}^*}  } 
 =
 \EE_{\incrdata_{n}} \EE_{\pertdata_{n+1, \ensem}} \sbr{ \inner{g_{n}^* - g_{n}}{ u_{n+1,\ensem}}  }
 .
\]

    Following the same method in Lemma~\ref{lem: stability},
    for the $\ensem^{\text{th}}$ perturbation set $\pertdata_{n+1,\ensem}$ at round $n$, we divide it to $K$ subsets $(\pertdata_{n+1,\ensem,k})_{k=1}^K$ by randomly assigning $\tilde{k}_{n+1,\ensem,x} \sim \Unif([K])$ to each example $(\tilde{s}_{n+1,\ensem,x},\tilde{a}_{n+1,\ensem,x})$ in $\pertdata_{n+1,\ensem}$ for $x \in \poi_{n+1,\ensem}$. Note that the divisions in this proof is only for analytical use.
    
    By the subsampling property of Poisson distribution, we have the size of each $\pertdata_{n+1,\ensem,k}$ denoted by $\poi_{n+1,\ensem,k}$ follows $\poisson(\poiparam/K)$. For notational simplicity, we use $\poi_{k}$ to denote $\poi_{n+1,\ensem,k}$, $\pertdata_k = (\tilde{s}_{x},\tilde{a}_{x})_{x = 1}^{\poi_{k}} = (\tilde{z}_x)_{x = 1}^{\poi_{k}}$ to denote $\pertdata_{n+1,\ensem,k}$, and 
    $z_k = (s_k,\pie(s_k))$ to denote $z_{n+1,k}
    $
    in the following proof. 
    
    By our assumption that $\poiparam \geq \frac{4AK \ln(N)}{\sigma}$, without loss of generality, for now we assume $ \frac{\poiparam}{2K}$ to be integral multiple of $G := \lceil \frac{2 A \ln(N)}{\sigma} \rceil $, meaning $\frac{\poiparam}{2K} = \group G$ for some $\group \in \mathbb{N}$. By defining event $\tilde{U}_k := \cbr{ \poi_k \geq \frac{\poiparam}{2K} }$ and $\tilde{U} := \cap_{k \in [K]} \tilde{U}_k$, we have 
    \[
    \prob(\tilde{U}_k^c) 
    =
    \prob(\poi_k < \frac{\poiparam}{2K})
    \leq 
    \exp(-\frac{\poiparam}{8K}), \;
     \prob(\tilde{U}^c)
     \leq
    \sum_{k = 1}^K \prob(\tilde{U}_k^c)
     \leq 
     K e^{-\frac{\poiparam}{8K}},
    \]
    where we use the fact that for $X \sim \poisson(\poiparam' = \poiparam/ K)$, $\prob(X < \poiparam'/2) \leq \exp(- \poiparam'/8)$, and apply union bound for $\tilde{U}^c$.
    
    
    At round $n$, conditioned on the favorable event $\tilde{U}$ happening, 
    we 
    further divide $\pertdata_k$ into $\group
    $ groups denoted by $\pertdata_{k,m}$ for $m \in [\group]$, where each group has size greater or equal to $G$. 
    
    Conditioned on $\tilde{U}$, we apply Lemma~\ref{coro: external_randomness} to each $\pertdata_{k,m}$ with distribution $\incrdist_{\pi_n}$ (induced by $\pi_n$), obtaining $\group$ independent events 
    $U_{k,m}$ for $m \in [\group]$, where 
    \[
    \prob(U_{k,m}^c | \tilde{U})
    \leq
    (1-\sigma/A)^G
    =
    (1-\frac{\sigma}{A})^{\frac{A}{\sigma}\cdot 2\ln(N)}
    \leq
    e^{-2 \ln(N)}
    =
    N^{-2}.
    \]
    Conditioned on $U_{k,m}$, there exist an element $\elem_{k,m} \in \pertdata_{k,m}$
    such that
    \[
    (\elem_{k,m} | U_{k,m},  \pertdata_{k,m} \backslash \cbr{ \elem_{k,m} } ) 
    \sim
    \incrdist_{\pi_n}.
    \]
    Define event
    $U := \cap_{k \in [K], m \in [\group]} U_{k,m}$ to be the intersection of those independent events (at round $n$), where by union bound and the definition of $\group$ we have 
    \[
    \prob(U^c | \tilde{U}) \leq \frac{K\group}{N^2} 
    \leq 
    \frac{K}{N^2} \cdot \frac{\poiparam}{2K \lceil \frac{2 A \ln(N)}{\sigma} \rceil}
    \leq
    \frac{\poiparam \sigma}{4A N^2 \ln(N)}.
    \]
    Now we introduce shorthand
    $\elemc_{k,m} :=  \pertdata_{k,m} \backslash \cbr{ \elem_{k,m} }$, $\elemc := \cup_{k \in [K], m \in [\group]} \elemc_{k,m}$
    and write
    \[
    (z_{n,1},  \cdots, z_{n,K}, \elem_{1,1}, \cdots, \elem_{1,\group}, \elem_{2,1}, \cdots, \elem_{K,\group}
    |  \elemc, U, \tilde{U}, \Fcal^+_{n-1} )
    {\; \buildrel \text{i.i.d.} \over \sim \;}
    \incrdist_{\pi_n},
    \]







    
    which is by the independence between each group as well as the samples from $\incrdist_{\pi_n}$.
    
    Now, we can split the generalization error into three terms by the law of total expectation: 
    \begin{equation}
    \begin{aligned}
    \EE_{\incrdata_{n}, \pertdata_{n+1,\ensem}} \sbr{ \inner{g_{n}^* - g_{n}}{ u_{n+1,\ensem} }  } 
    = &
    \prob(U \cap \tilde{U}) \cdot \EE_{\incrdata_n, \pertdata_{n+1,\ensem}}\sbr{ \inner{g_n^* - g_n}{ u_{n+1,\ensem}} | U, \tilde{U} }
    + 
    \prob(U^c \cup \tilde{U}^c)
    \\
    \leq &
    \EE_{\incrdata_n, \pertdata_{n+1,\ensem}}\sbr{ \inner{g_n^* - g_n}{ u_{n+1,\ensem}} | U, \tilde{U} }  
    +
    \prob( U^c \cap \tilde{U} ) 
    + 
    \prob(\tilde{U}^c )\\
    \leq  &
    \EE_{\incrdata_n, \pertdata_{n+1,\ensem}}\sbr{ \inner{g_n^* - g_n}{ u_{n+1,\ensem}} | U, \tilde{U} }  
    +
    \frac{\poiparam \sigma}{4A N^2 \ln(N)}
    +
    K e^{-\frac{\poiparam}{8K}},
    \end{aligned}
    \label{eqn: gen_split}
    \end{equation}
    where we apply the fact that $\inner{g_n^* - g_n}{ u_{n+1}^*} \leq \|g_n^* - g_n\|_{\infty} \cdot \|u_{n+1}^*\|_1 \leq 1$, 
    and bring in the bounds for $\prob(\tilde{U}^c)$ and $\prob(U^c | \tilde{U})$ shown above.
    
    For the remaining term $\EE_{\incrdata_n, \pertdata_{n+1,\ensem}}\sbr{ \inner{g_n^* - g_n}{ u_{n+1,\ensem}} | U , \tilde{U}} $, we abbreviate it as 
    $\EE_{\incrdata_n, \pertdata_{n+1,\ensem} | U, \tilde{U} } \sbr{ \inner{g_n^* - g_n}{ u_{n+1,\ensem}}} $ 
    and split it by the linearity of expectation
    \[
    \EE_{\incrdata_n, \pertdata_{n+1,\ensem} | U, \tilde{U}} \sbr{ \inner{g_n^* - g_n}{ u_{n+1,\ensem}}} 
    =
    \underbrace{\EE_{\incrdata_n, \pertdata_{n+1,\ensem} | U, \tilde{U}} \sbr{ \inner{g_n^* }{ u_{n+1,\ensem}}} }_{\textbf{I}}
    -
    \underbrace{\EE_{\incrdata_n, \pertdata_{n+1,\ensem} | U, \tilde{U}} \sbr{ \inner{g_n}{ u_{n+1,\ensem}}} }_{\textbf{II}}.
    \]
    We first focus on term II. For now, we abbreviate $\pertdata_{n+1,\ensem}$ as $\pertdata_{n+1}$ when it is clear from the context.
    By introducing shorthand of $h_{n+1} = \Ocal(\cup_{i = 1}^{n} \incrdata_{i} \cup \pertdata_{n+1})$ corresponding to the only policy in the support of $u_{n+1,\ensem}$, and denote $\ell(h_{n+1},(s,a)) := I(h_{n+1}(s) \neq a)$, we rewrite \textbf{II} as
\[
\begin{aligned}
\EE_{\incrdata_n, \pertdata_{n+1} | U, \tilde{U}} \sbr{ \inner{g_n}{ u_{n+1,\ensem}}} 
 =&
 \EE_{\incrdata_n, \pertdata_{n+1} | U, \tilde{U}} \sbr{   \frac{1}{K} \sum_{k = 1}^K I(h_{n+1}(z_{n,k}) \neq \pie(z_{n,k}))   } \\
 =&
 \frac{1}{K} \sum_{k = 1}^K \EE_{\incrdata_n, \pertdata_{n+1} | U, \tilde{U}} \sbr{ \ell(h_{n+1},z_{n,k}) }. 
 \end{aligned}
\]
Here we further denote
$\elem =  \{\elem_{k,m}\}_{k \in [K], m \in [\group]}$. With this notation, 
$\pertdata_{n+1} = \elem \cup \elemc$.  The following holds for all 
$m \in [\group]$:
\begin{equation}
    \begin{aligned}
        \textbf{II} 
        =&
        \frac{1}{K} \sum_{k = 1}^K \EE_{\elemc | U, \tilde{U}, \Fcal_{n-1}^+}\EE
        \sbr{ \ell(h_{n+1},z_{n,k}) 
        | \elemc, U, \tilde{U}, \Fcal^+_{n-1} }\\
        =&
        \frac{1}{K} \sum_{k = 1}^K \EE_{\elemc | U, \tilde{U}, \Fcal_{n-1}^+}\EE
        \sbr{ \ell( \Ocal(\cup_{i = 1}^{n-1} \incrdata_{i} 
         \cup
         \incrdata_n 
         \cup
         \pertdata_{n+1} ))
        ,z_{n,k}) 
        | \elemc, U, \tilde{U}, \Fcal^+_{n-1} }\\
        =&
        \frac{1}{K} \sum_{k = 1}^K \EE_{\elemc | U, \tilde{U}, \Fcal_{n-1}^+}\EE
        \sbr{ \ell(
         \Ocal(\cup_{i = 1}^{n-1} \incrdata_{i} 
         \cup
         (\incrdata_n \backslash \cbr{z_{n,k}}) 
         \cup
         (\pertdata_{n+1} \backslash \cbr{\elem_{k,m}})
         \cup
         \{z_{n,k}\} \cup \{\elem_{k,m}\}))
        ,z_{n,k}) 
        | \elemc, U, \tilde{U}, \Fcal^+_{n-1}
        }\\
        =&
        \frac{1}{K} \sum_{k = 1}^K \EE_{\elemc | U, \tilde{U}, \Fcal_{n-1}^+}\EE
        \sbr{ \ell(
         \Ocal(\cup_{i = 1}^{n-1} \incrdata_{i} 
         \cup
         (\incrdata_n \backslash \cbr{z_{n,k}}) 
         \cup
         (\pertdata_{n+1} \setminus \cbr{\elem_{k,m}})
         \cup
         \{z_{n,k}\} \cup \{\elem_{k,m}\}))
        ,\elem_{k,m}) 
        | \elemc, U, \tilde{U}, \Fcal^+_{n-1}
        }\\
        =&
        \frac{1}{K} \sum_{k = 1}^K \EE_{\incrdata_n, \pertdata_{n+1} | U, \tilde{U}}
        \sbr{ \ell(h_{n+1},\elem_{k,m}) 
        }
        ,
    \end{aligned}
\end{equation}
where in the fourth equality we apply the independence between samples and exchangeability between $z_{n,k}$ and $\elem_{k,m}$ conditioned on $U$, $\tilde{U}$, $\Fcal_{n-1}^+$ and $\elemc$. By slightly abusing notations and denoting $z_{n,k}$ as $\elem_{k,0}$, this implies:
\[\textbf{II} 
=
\frac{1}{\group+1}\sum_{m = 0}^\group \rbr{\frac{1}{K} \sum_{k = 1}^K \EE_{\incrdata_n, \pertdata_{n+1} | U, \tilde{U}}
        \sbr{ \ell(h_{n+1},\elem_{k,m}) 
        } }
=
 \EE_{\incrdata_n, \pertdata_{n+1} | U, \tilde{U}}
    \sbr{  \frac{1}{K(\group+1)}   \sum_{k = 1}^K \sum_{m = 0}^\group  \ell(h_{n+1},\elem_{k,m}) }.
\]
Meanwhile, we denote $z \sim \incrdist_{\pi_n}$
and rewrite \textbf{I} with the same $h_{n+1}$ notation:
\[\textbf{I} 
=
 \EE_{\incrdata_n, \pertdata_{n+1} | U, \tilde{U}}
    \sbr{    
    \EE_{z \sim \incrdist_{\pi_n}} \sbr{\ell(h_{n+1},z) }}.
\]

Combining what we have, we finally conclude that $\textbf{I}-\textbf{II}$ is bounded by
\begin{equation}
\begin{aligned}
\textbf{I}-\textbf{II}
= &
\EE_{\incrdata_n, \pertdata_{n+1} | U, \tilde{U}}
    \sbr{  
    \EE_{z \sim \incrdist_{\pi_n}} \sbr{\ell(h_{n+1},z) }
    -
     \frac{1}{K(\group+1)} \sum_{k = 1}^K \sum_{m = 0}^\group  \ell(h_{n+1},\elem_{k,m})
    }\\
= &
    \EE_{\xi \mid U, \tilde{U}, \Fcal_{n-1}^+}
    \EE \sbr{  
    \EE_{z \sim \incrdist_{\pi_n}} \sbr{\ell(h_{n+1},z) }
    -
     \frac{1}{K(\group+1)} \sum_{k = 1}^K \sum_{m = 0}^\group  \ell(h_{n+1},\elem_{k,m})
     \mid 
     \xi, U, \tilde{U}, \Fcal_{n-1}^+
    }
\\
\leq &
 \EE_{\xi \mid U, \tilde{U}, \Fcal_{n-1}^+}
    \EE \sbr{  
    \sup_{h \in \Bcal} \rbr{
    \EE_{z \sim \incrdist_{\pi_n}} \sbr{\ell(h ,z) }
    -
     \frac{1}{K(\group+1)} \sum_{k = 1}^K \sum_{m = 0}^\group  \ell(h,\elem_{k,m}) }
     \mid 
     \xi, U, \tilde{U}, \Fcal_{n-1}^+
     }\\
\leq &
\sqrt{\frac{\ln(B)}{2K(\group+1)}}
\leq
\sqrt{\frac{2A \ln(N)\ln(B)}{\poiparam \sigma}},
\end{aligned}
\label{eqn: generalization bound}
\end{equation}
where 
in the second equality, we use the law of iterated expectations; in the first inequality, we upper bound the random variable of interest
by the supremum over the policy class, since $h_{n+1} \in \Bcal$.
The second inequality is from Massart's Lemma (Lemma~\ref{lem:massart}).

We finish the proof by bringing equation~\eqref{eqn: generalization bound} into equation~\eqref{eqn: gen_split}.
\end{proof}




\begin{theorem}
\label{thm: mftpl_poisson_main}
For any $\delta \in (0,1]$, \algms with any 
$\poiparam \geq \max\cbr{ \frac{2AK^2}{\sigma}, \frac{8 AK \ln(KN)}{\sigma} }$
and $\Ensem = N A \ln (NS)$ 
outputs $\{\pi_{n}\}_{n=1}^N$
that satisfies that with probability at least $1-\delta$,
\begin{equation}
\label{eqn:regret-before-tuning-lambda}
\Reg(N)
\leq 
\tilde{O}
\rbr{
\sqrt{\frac{\lambda \ln B}{K^2}}
+
N \sqrt{ \frac{A K^2}{\lambda \sigma} }
+
N \sqrt{ \frac{A \ln B}{\lambda \sigma} }
+
\frac{\lambda \sigma}{A N}
+ \sqrt{N \ln \frac{1}{\delta}}
}.
\end{equation}
Specifically, if 
$N \geq \tilde{O}\rbr{ \sqrt{\frac{A}{\sigma}} \sqrt{ \min(\ln B, K^2)} \vee \frac{K^2}{A} \vee \frac{K^4}{A \ln B} }$, 
setting 
$\poiparam = \Theta\rbr{ N K \sqrt{\frac{A}{\sigma}} + N K^2 \sqrt{ \frac{A}{\sigma \ln B} }}$ gives 



\begin{equation}
\label{eqn:regret-after-tuning-lambda}
\Reg(N)
\leq
\tilde{O} \rbr{ 
\sqrt{N} \rbr{\frac{A (\ln B)^2}{\sigma K^2}}^{\frac14}
+
\sqrt{N} \rbr{\frac{A \ln B}{\sigma}}^{\frac14}
+
\sqrt{N \ln(1/\delta) } }.
\end{equation}
\end{theorem}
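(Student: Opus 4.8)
The plan is to obtain Theorem~\ref{thm: mftpl_poisson_main} by chaining the three-stage decomposition already laid out — Proposition~\ref{prop: mftpl} (Stage~1), Lemma~\ref{lem: online_mftpl_poisson} together with Lemma~\ref{lem: smooth_bias} (Stage~2), and Lemma~\ref{lem: divergence_bound} (Stage~3) — and then optimizing the free perturbation parameter $\lambda$. No new inequality is needed; the work is bookkeeping.

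First I would invoke Proposition~\ref{prop: mftpl}: on an event of probability at least $1-\delta/2$ it reduces $\Reg(N)$ to the idealized regret $\sum_{n=1}^N \langle g_n, u_n^* \rangle - \min_{u \in \Delta(\Bcal)} \sum_{n=1}^N \langle g_n, u \rangle$ plus the error terms $O(\sqrt{N\ln(B/\delta)/K})$ and $N\sqrt{2A(\ln(NS)+\ln(12/\delta))/\Ensem}$. Substituting the prescribed ensemble size $\Ensem = NA\ln(NS)$ and using $\ln(NS)\ge 1$ collapses the second error term to $\tilde{O}(\sqrt{N\ln(1/\delta)})$, and the first is $\tilde{O}(\sqrt{N\ln B}+\sqrt{N\ln(1/\delta)})$, which will be dominated after the $\lambda$-tuning step (since $\sigma\le 1\le A$). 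Next, Lemma~\ref{lem: online_mftpl_poisson} bounds the idealized regret by $\EE_{Q_1}[\max_{u}\langle -\tilde{g}_1, u\rangle] + \sum_{n=1}^N \langle g_n, u_n^* - u_{n+1}^*\rangle$; the ``bias'' term is controlled by Lemma~\ref{lem: smooth_bias} at $\sqrt{\lambda\ln B/(2K^2)}$, and the ``variance'' term is controlled by Lemma~\ref{lem: divergence_bound}, which holds on a second event of probability at least $1-\delta/2$ and contributes $N\sqrt{AK^2/(\lambda\sigma)} + N\sqrt{2A\ln(N)\ln(BN^2)/(\lambda\sigma)} + \lambda\sigma/(4AN\ln N) + 4\sqrt{2N\ln(4/\delta)}$. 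A union bound over the two events gives probability $1-\delta$; collecting the surviving summands and absorbing $\ln N,\ \ln S,\ \ln(BN^2)$ into $\tilde{O}(\cdot)$ yields exactly Eq.~\eqref{eqn:regret-before-tuning-lambda}.

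Second, I would optimize $\lambda$ in Eq.~\eqref{eqn:regret-before-tuning-lambda}. Its $\lambda$-dependent terms are $\sqrt{\lambda\ln B/K^2}$ and $\lambda\sigma/(AN)$ (increasing in $\lambda$) against $N\sqrt{AK^2/(\lambda\sigma)}$ and $N\sqrt{A\ln B/(\lambda\sigma)}$ (decreasing). Balancing the bias term against $N\sqrt{AK^2/(\lambda\sigma)}$ points to $\lambda\asymp NK^2\sqrt{A/(\sigma\ln B)}$, and against $N\sqrt{A\ln B/(\lambda\sigma)}$ points to $\lambda\asymp NK\sqrt{A/\sigma}$; taking $\lambda = \Theta\!\big(NK\sqrt{A/\sigma} + NK^2\sqrt{A/(\sigma\ln B)}\big)$ (the larger of the two, up to constants) makes these three terms all of order $\sqrt{N}\big(A(\ln B)^2/(\sigma K^2)\big)^{1/4} + \sqrt{N}\big(A\ln B/\sigma\big)^{1/4}$. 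The stated lower bounds on $N$ are precisely what makes this $\lambda$ (i) satisfy the standing constraint $\lambda \geq \max\{2AK^2/\sigma,\ 8AK\ln(KN)/\sigma\}$ inherited from Lemmas~\ref{lem: divergence_bound}, \ref{lem: stability}, and \ref{lem: generalization}, and (ii) keep the leftover $\lambda\sigma/(AN)$ and the $\sqrt{N\ln B}$ piece from Stage~1 dominated by the two main terms. Plugging this $\lambda$ back in and simplifying gives Eq.~\eqref{eqn:regret-after-tuning-lambda}.

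The main obstacle is this last optimization: one has to verify simultaneously that the chosen $\lambda$ is large enough to meet every constraint imported from the Stage-3 lemmas and small enough (given the growth of the bias term $\sqrt{\lambda\ln B/K^2}$ and the additive $\lambda\sigma/(AN)$) that each surviving summand is controlled by $\tilde{O}\!\big(\sqrt{N}(A(\ln B)^2/(\sigma K^2))^{1/4} + \sqrt{N}(A\ln B/\sigma)^{1/4} + \sqrt{N\ln(1/\delta)}\big)$. Carrying out this two-sided check, split into the regimes $K^2 \le \ln B$ and $K^2 \ge \ln B$, is exactly what forces the precise ``large enough $N$'' hypothesis $N \ge \tilde{O}\!\big(\sqrt{A/\sigma}\,\sqrt{\min(\ln B, K^2)} \vee K^2/A \vee K^4/(A\ln B)\big)$.
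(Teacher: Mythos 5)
Your plan is correct and follows essentially the same route as the paper's proof: combine Proposition~\ref{prop: mftpl} with Lemmas~\ref{lem: online_mftpl_poisson}, \ref{lem: smooth_bias}, and \ref{lem: divergence_bound} via a union bound to get Eq.~\eqref{eqn:regret-before-tuning-lambda}, then tune $\lambda$ by balancing the bias term against the two stability/generalization terms, using the lower bounds on $N$ to verify the standing constraint on $\lambda$ and to absorb $\lambda\sigma/(AN)$. One small caution: keep the $1/\sqrt{K}$ in the Stage-1 error term $O(\sqrt{N\ln(B/\delta)/K})$ when arguing it is dominated, since $\sqrt{N\ln B/K} \le \sqrt{N}\,(A(\ln B)^2/(\sigma K^2))^{1/4}$ follows from $\sigma \le 1 \le A$, whereas the weakened bound $\sqrt{N\ln B}$ would not be dominated in general.
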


\begin{proof}
Fix $\delta \in (0,1)$. By combining Lemmas~\ref{lem: online_mftpl_poisson},~\ref{lem: smooth_bias}, and~\ref{lem: divergence_bound}, when $\poiparam \geq \max\cbr{ \frac{2AK^2}{\sigma}, \frac{8AK \ln(K N)}{\sigma} }$, 
we have that
, with probability at least $1-\delta/2$,

\begin{equation}
    \begin{aligned}
     & \sum_{n = 1}^N \inner{g_n}{u_n^*} - \min_{u \in \Delta(\Bcal)} \sum_{n = 1}^N \inner{g_n}{u} \\
    \leq &
    \EE_{\pertdata_1} \sbr{ \max_{u \in \Delta(\Bcal)} \inner{-\tilde{g}_1}{u}}
     +
    \sum_{n=1}^N \inner{g_n}{u_n^* - u_{n+1}^*}\\
    \leq &
    \sqrt{\frac{\poiparam \ln(B)}{2K^2}} 
    +
    N\sqrt{\frac{AK^2}{\poiparam \sigma}} 
+
N\sqrt{\frac{2A \ln(N)\ln(BN^2)}{\poiparam \sigma}} 
    + 
    \frac{\poiparam \sigma}{4A N \ln(N)}
    +
    4\sqrt{2N\ln(4/\delta}).
    \end{aligned}
    \label{eqn: bound_without_loggersparse}
\end{equation}


We now apply Proposition~\ref{prop: mftpl}, which, by the choice of $E$, 
gives that with probability $1-\delta/2$,
\[
\sum_{n=1}^N \inner{g_n}{u_n - u_n^*}
\leq 
N\sqrt{\frac{ 2A\rbr{\ln(NS)+\ln(\frac{12}{\delta})}}{\Ensem}} 
\leq 
O\rbr{ \sqrt{N \ln \frac1\delta} }.
\]

Therefore, combining the two inequalities and using the union bound, with probability at least $1-\delta$, Eq.~\eqref{eqn:regret-before-tuning-lambda} holds.


Now, for proving Eq.~\eqref{eqn:regret-after-tuning-lambda}, we first note that by the assumption that $N \geq \tilde{O}\rbr{ \sqrt{\frac{A}{\sigma}} \sqrt{ \min(\ln B, K^2)}}$, the choice of $\lambda = \Theta\rbr{ N K \sqrt{\frac{A}{\sigma}} + N K^2 \sqrt{ \frac{A}{\sigma \ln B} }}$ satisfies that $\poiparam \geq \max\cbr{ \frac{2AK^2}{\sigma}, \frac{8 AK \ln(KN)}{\sigma} }$; so Eq.~\eqref{eqn:regret-before-tuning-lambda} applies.
It can be checked by algebra that the first three terms in Eq.~\eqref{eqn:regret-before-tuning-lambda} evaluates to 
$\tilde{O} \rbr{ 
\sqrt{N} \rbr{\frac{A (\ln B)^2}{\sigma K^2}}^{\frac14}
+
\sqrt{N} \rbr{\frac{A \ln B}{\sigma}}^{\frac14} }$.

Furthermore, our choice of $\lambda$ and the assumption that $N \geq \tilde{O}\rbr{\frac{K^2}{A} \vee \frac{K^4}{A \ln B} }$ implies that 
\[
\frac{\lambda \sigma}{AN}
\leq 
O\rbr{
\frac{K}{\sqrt{A}}
+
\frac{K^2}{\sqrt{A \ln B}}
}
\leq 
O\rbr{\sqrt{N}};
\]
therefore, the last two terms in Eq.~\eqref{eqn:regret-before-tuning-lambda} is at most $O(\sqrt{N \ln \frac1\delta})$. 
This concludes the proof of Eq.~\eqref{eqn:regret-after-tuning-lambda}. 

\end{proof}

Theorem~\ref{thm: mftpl_poisson_main} immediately implies the following corollary: 

\begin{corollary}
\label{cor:soil-main-full}
For $\alpha > 0$ small enough, \algms with 
\begin{equation} 
N \geq \tilde{O} \rbr{ \frac{1}{\alpha^2}\sqrt{\frac{A \ln B}{\sigma}} },
\text{ and }
N K \geq \tilde{O} \rbr{ \frac{1}{\alpha^2}\sqrt{\frac{A ( \ln B)^2}{\sigma}} },
\label{eqn: req-n-k}
\end{equation}
is such that, with the choices of parameters 
$\poiparam = \Theta\rbr{ N K \sqrt{\frac{A}{\sigma}} + N K^2 \sqrt{ \frac{A}{\sigma \ln B} }}$
and 
$\Ensem = N A \ln (NS)$, 
achieves $\frac{\Reg(N)}{N}\leq \alpha$  with probability $1-\delta$; 
its number of calls to the offline oracle is $NE = \tilde{O}(N^2 A)$. 
\end{corollary}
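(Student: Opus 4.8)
The plan is to obtain Corollary~\ref{cor:soil-main-full} as a direct packaging of Theorem~\ref{thm: mftpl_poisson_main}: substitute the prescribed parameter settings into the regret bound~\eqref{eqn:regret-after-tuning-lambda}, divide by $N$, and match terms against the stated lower bounds on $N$ and $NK$.

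First I would check that the hypotheses of Theorem~\ref{thm: mftpl_poisson_main} are satisfied under the corollary's assumptions. The theorem requires $N \ge \tilde{O}\big(\sqrt{A/\sigma}\,\sqrt{\min(\ln B,K^2)} \vee K^2/A \vee K^4/(A\ln B)\big)$, together with $\poiparam = \Theta\big(NK\sqrt{A/\sigma} + NK^2\sqrt{A/(\sigma\ln B)}\big)$ and $\Ensem = NA\ln(NS)$ — all of which are exactly the settings named in the corollary. The corollary's lower bound $N \ge \tilde{O}\big(\tfrac{1}{\alpha^2}\sqrt{A\ln B/\sigma}\big)$ dominates $\sqrt{A/\sigma}\,\sqrt{\min(\ln B,K^2)}$ whenever $\alpha \le 1$; treating $K,A,B,\sigma$ as fixed problem parameters (recall $K$ is chosen ahead of time), the phrase ``for $\alpha$ small enough'' forces the right-hand side of the $N$-bound to also exceed the two remaining $K$-dependent thresholds $K^2/A$ and $K^4/(A\ln B)$, so Eq.~\eqref{eqn:regret-after-tuning-lambda} applies verbatim with probability at least $1-\delta$.

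Next I would divide~\eqref{eqn:regret-after-tuning-lambda} through by $N$, giving
\[
\frac{\Reg(N)}{N} \le \tilde{O}\!\left(\frac{1}{\sqrt N}\Big(\tfrac{A(\ln B)^2}{\sigma K^2}\Big)^{\!1/4} + \frac{1}{\sqrt N}\Big(\tfrac{A\ln B}{\sigma}\Big)^{\!1/4} + \sqrt{\tfrac{\ln(1/\delta)}{N}}\right),
\]
and bound each summand by $O(\alpha)$. The second summand is $\le \alpha$ precisely when $N \ge \tfrac{1}{\alpha^2}\sqrt{A\ln B/\sigma}$, the first requirement in~\eqref{eqn: req-n-k}. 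The first summand equals $\tfrac{1}{\sqrt{NK}}\,(A(\ln B)^2/\sigma)^{1/4}$, which is $\le\alpha$ exactly when $NK \ge \tfrac{1}{\alpha^2}\sqrt{A(\ln B)^2/\sigma}$, the second requirement. The third summand requires $N \ge \ln(1/\delta)/\alpha^2$, which is a polylog-in-$\delta$ quantity absorbed into the $\tilde{O}$ of~\eqref{eqn: req-n-k}. Collecting the implied universal constants and rescaling $\alpha$ by that factor yields $\Reg(N)/N \le \alpha$ with probability $1-\delta$. The oracle-call count is then immediate: \algms calls $\Ocal$ once for each pair $(n,e)$ in \trainbase (line~\ref{line:compute_single_policy} of Algorithm~\ref{alg:mftplp_full}), a total of $\Theta(N\Ensem)$ calls, and substituting $\Ensem = NA\ln(NS)$ gives $N\Ensem = \tilde{O}(N^2 A)$.

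There is no deep obstacle here, since the statement merely repackages Theorem~\ref{thm: mftpl_poisson_main}; the single point demanding care is confirming that the corollary's hypotheses on $N$ and $NK$ imply the theorem's $N$-lower bound, in particular the $K$-dependent thresholds $K^2/A$ and $K^4/(A\ln B)$ that do not appear explicitly in~\eqref{eqn: req-n-k}. This is precisely where one must invoke ``$K$ fixed, $\alpha$ small enough'' rather than claim a bound uniform in all parameters simultaneously.
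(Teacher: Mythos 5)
Your proposal is correct and matches the paper's intended derivation: the paper states that Corollary~\ref{cor:soil-main-full} follows immediately from Theorem~\ref{thm: mftpl_poisson_main}, and your expansion (divide Eq.~\eqref{eqn:regret-after-tuning-lambda} by $N$, match the first two summands to the $NK$ and $N$ requirements in Eq.~\eqref{eqn: req-n-k}, absorb the $\ln(1/\delta)$ term, and count $N\Ensem$ oracle calls) is exactly that argument. Your observation that the theorem's residual thresholds $K^2/A$ and $K^4/(A\ln B)$ must be covered by the ``$\alpha$ small enough'' clause is a fair and accurate reading of the only point the paper leaves implicit.
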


Corollary~\ref{cor:soil-main-full} implies Corollary~\ref{cor:soil-main} in the main text, as we show below: 
\begin{proof}[Proof of Corollary~\ref{cor:soil-main}]
By the choices of $N$ and $K$, Eq.~\eqref{eqn: req-n-k} is satisfied with $\alpha = \frac{\epsilon}{\mu H}$. Therefore, with the choices of parameters given by Corollary~\ref{cor:soil-main-full}, $\frac{\Reg(N)}{N}\leq \alpha \implies \frac{\mu H \Reg(N)}{N}\leq \epsilon$. 

The total number of demonstrations requested is  $NK = \frac{\mu^2 H^2}{\epsilon^2} \sqrt{ \frac{A (\ln B)^2}{\sigma} }$, 
and the total number of calls to the offline oracle is 
$O(N^2 A) = \tilde{O}\del{ \frac{ \mu^4 H^4 A^2 (\ln B)^2}{ \epsilon^2 \sigma} }$
.
\end{proof}

\subsection{Deferred proofs from Section~\ref{sec:proofs}}
\label{sec:deferred_proofs}

The proposition below is used in the analysis of stage 1; 
its proof is 
straightforward and largely follows the proof of Proposition 6 in~\cite{li2022efficient}.

\begin{proposition}
For any $\delta \in (0,1]$, the sequence of $\{u_n\}$, $\{g_n\}$, $\{g_n^*\}$ induced by \algms, MDP $\MDP$ and expert $\pie$, satisfies that with probability at least 
    $1- \delta/3$, it holds simultaneously that: 
\[
\sum_{n=1}^N \inner{g_n^* - g_n}{u_n} \leq 
\sqrt{\frac{2N \ln(\frac{12}{\delta})}{K}},
\]
\[
\min_{u \in \Delta(\Bcal)}\sum_{n=1}^N \inner{g_n}{u}
- \min_{u \in \Delta(\Bcal)}\sum_{n=1}^N \inner{g_n^*}{u} 
\leq 
\sqrt{2N\frac{\ln (B)+\ln(\frac{12}{\delta})}{K}}.
\]
\label{prop:logger}
\end{proposition}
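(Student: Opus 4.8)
The plan is to prove both inequalities by standard martingale concentration comparing the empirical per-round cost vectors $g_n$ with their conditional means $g_n^*$ (this is the template of Proposition~6 of~\citet{li2022efficient}). The key structural fact is that $u_n$ — hence $\pi_n = \pi_{u_n}$, and hence $g_n^* = \rbr{\EE_{s\sim d_{\pi_n}}\sbr{I(h(s)\neq\pie(s))}}_{h\in\Bcal}$ — is $\Fcal_{n-1}^+$-measurable, whereas, conditioned on $\Fcal_{n-1}^+$, the $K$ labeled states in $\incrdata_n$ are i.i.d.\ draws from $\incrdist_{\pi_n}$; therefore $\EE\sbr{g_n\mid\Fcal_{n-1}^+}=g_n^*$.

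For the first inequality, I would write $g_n=\frac1K\sum_{k=1}^K g_{n,k}$ with $g_{n,k}:=\rbr{I(h(s_{n,k})\neq\pie(s_{n,k}))}_{h\in\Bcal}$, so that $\sum_{n=1}^N\inner{g_n^*-g_n}{u_n}=\sum_{n=1}^N\sum_{k=1}^K W_{n,k}$ where $W_{n,k}:=\frac1K\inner{g_n^*-g_{n,k}}{u_n}$. Refining the filtration within each round so that $s_{n,1},s_{n,2},\dots$ are revealed one at a time, the $NK$ increments $W_{n,k}$ (ordered lexicographically in $(n,k)$) form a martingale difference sequence, because $g_n^*$ and $u_n$ are frozen before $\incrdata_n$ is drawn and $\EE\sbr{g_{n,k}\mid\text{past}}=g_n^*$. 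Moreover $\abs{W_{n,k}}\le\frac1K$ since the entries of $g_n^*$ and $g_{n,k}$ lie in $[0,1]$ and $\norm{u_n}_1=1$. A one-sided Azuma–Hoeffding bound over these $NK$ increments, whose squared ranges sum to $NK\cdot\frac1{K^2}=\frac NK$, then gives $\sum_{n=1}^N\inner{g_n^*-g_n}{u_n}\le\sqrt{2N\ln(12/\delta)/K}$ except with probability $\delta/12$.

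For the second inequality, a linear functional on $\Delta(\Bcal)$ attains its minimum at a vertex, so $\min_{u\in\Delta(\Bcal)}\sum_{n=1}^N\inner{g_n}{u}=\min_{h\in\Bcal}\sum_{n=1}^N g_n[h]$ and likewise for $g_n^*$. Choosing $h^\star\in\argmin_{h\in\Bcal}\sum_{n=1}^N g_n^*[h]$,
\[
\min_{u\in\Delta(\Bcal)}\sum_{n=1}^N\inner{g_n}{u}-\min_{u\in\Delta(\Bcal)}\sum_{n=1}^N\inner{g_n^*}{u}\;\le\;\sum_{n=1}^N\rbr{g_n[h^\star]-g_n^*[h^\star]}\;\le\;\max_{h\in\Bcal}\sum_{n=1}^N\rbr{g_n[h]-g_n^*[h]}.
\]
Passing to the maximum over $\Bcal$ is the crucial move: $h^\star$ depends on the realized (random) policy sequence, so it cannot be concentrated directly, but for each \emph{fixed} $h\in\Bcal$ the sum $\sum_{n=1}^N\rbr{g_n[h]-g_n^*[h]}$ again decomposes into $NK$ bounded, conditional-mean-zero martingale increments of size $\le\frac1K$. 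Azuma–Hoeffding per policy, together with a union bound over the $B$ policies (per-policy failure budget $\delta/(12B)$), yields $\max_{h\in\Bcal}\sum_{n=1}^N\rbr{g_n[h]-g_n^*[h]}\le\sqrt{2N(\ln B+\ln(12/\delta))/K}$ except with probability $\delta/12$. A final union bound over the two failure events gives the stated conclusion with probability at least $1-\delta/3$ (in fact $1-\delta/6$).

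The only genuinely delicate point is the measurability bookkeeping — verifying that $u_n$ and $g_n^*$ are determined \emph{before} $\incrdata_n$ is sampled, so that the $W_{n,k}$ and their per-coordinate analogues are honest martingale increments — together with the observation that the data-dependent minimizer $h^\star$ must be absorbed into a uniform bound over $\Bcal$ rather than concentrated directly. Everything else is the routine application of Azuma–Hoeffding and a union bound.
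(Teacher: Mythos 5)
Your proposal is correct and follows essentially the same route as the paper's proof: both parts are handled by decomposing $\inner{g_n^*-g_n}{\cdot}$ into $NK$ martingale increments of magnitude $O(1/K)$ with respect to the within-round-refined filtration (using that $u_n$, hence $g_n^*$, is determined before $\incrdata_n$ is drawn), applying Azuma--Hoeffding, and for the second inequality reducing the simplex minimum to a vertex and union-bounding over the $B$ base policies to absorb the data-dependent minimizer $h^\star$. The only cosmetic differences are your use of a one-sided rather than two-sided tail bound (yielding the slightly sharper $1-\delta/6$ you note) and a marginally more compact telescoping around $h^\star$.
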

\begin{proof}
It suffices to show for any $\delta \in (0,1]$, (1). $\sum_{n=1}^N \inner{g_n^* - g_n}{u_n} \leq \sqrt{\frac{2N \ln(\frac{12}{\delta})}{K}}$ with probability at least $1- \delta/6$, (2). $\min_{u \in \Delta(\Bcal)}\sum_{n=1}^N \inner{g_n}{u}
- \min_{u \in \Delta(\Bcal)}\sum_{n=1}^N \inner{g_n^*}{u}  \leq \sqrt{2N\frac{\ln (B)+\ln(\frac{12}{\delta})}{K}}$ with probability at least $1- \delta/6$.

For (1), we define $Y_{n,k} = F_n(\pi_n) - \EE_{a \sim \pi_n(\cdot|s_{n,k})}\sbr{I(a \neq \pie(s_{n,k}))}$, which satisfies
\[
\begin{aligned}
\inner{g_n^* - g_n}{u_n}
=&
F_n(\pi_n) -  \EE_{(s,\pie(s)) \sim \incrdata_n}\EE_{a \sim \pi_n(\cdot|s)} \sbr{I(a \neq \pie(s))}\\
=&
\frac{1}{K}\sum_{k=1}^K \rbr{F_n(\pi_n) - \EE_{a \sim \pi_n(\cdot|s_{n,k})}\sbr{I(a \neq \pie(s_{n,k}))}}
=
\frac{1}{K}\sum_{k=1}^K Y_{n,k},
\end{aligned}
\]
where we apply $\inner{g_n^*}{u_n} = F_n(\pi_n)$ and $\incrdata_n = (s_{n,k},\pie(s_{n,k}) )_{k=1}^K$.

Now, it suffices to bound $\sum_{n=1}^N\sum_{k=1}^K Y_{n,k}$, which can be verified to be a martingale difference sequence. By the definition of $\iterloss_n(\pi) := \EE_{s \sim d_{\pi_n}}  \EE_{a\sim \pi(\cdot | s)}\sbr{I(a \neq \pie(s))}$, it can be shown that $\EE\sbr{Y_{n,k}|Y_{1,1},Y_{1,2}, \cdots,Y_{1,K},Y_{2,1},\cdots,Y_{n,k-1}} = 0$, while $| Y_{n,k} | \leq 1$. By applying Azuma-Hoeffding’s inequality,  with probability at least $1- \delta/6$,
\[
\left| \sum_{n=1}^N \inner{g_n^* - g_n}{u_n} \right| = \frac{1}{K} \left|\sum_{n=1}^N\sum_{k=1}^K Y_{n,k}\right| \leq  \sqrt{\frac{2N \ln(\frac{12}{\delta})}{K}}.
\]

For (2), we define $\hat{Y}_{n,k}(h) = F_n(h) - I\rbr{h(s_{n,k}) \neq \pie(s_{n,k})}$, where $h \in \Bcal$, $s_{n,k} \sim d_{\pi_n}$. Following similar proof as (1), it can be verified that $\EE\sbr{\hat{Y}_{n,k}(h)|\hat{Y}_{1,1}(h),\cdots, \hat{Y}_{n,k-1}(h)} = 0$ and $| \hat{Y}_{n,k}(h) | \leq 1$. Again we apply Azuma-Hoeffding’s inequality and show that given any $\delta \in (0,1]$ and $h \in \Bcal$, with probability at least $1-\frac{\delta}{6B}$, 
$$
\left |\sum_{n=1}^N\rbr{F_n(h) - \frac{1}{K}\sum_{k=1}^K I\rbr{h(s_{n,k}) \neq \pie(s_{n,k})}} \right |  
=
\left |\frac{1}{K}\sum_{n=1}^N\sum_{k=1}^K \hat{Y}_{n,k}(h) \right| 
\leq 
\sqrt{2N\frac{\ln(B)+\ln(\frac{12}{\delta})}{K}}.
$$
By applying union bound over all policies in $\Bcal$, we have for all $h \in \Bcal$, given any $\delta \in (0,1]$, with probability at least $1-\frac{\delta}{6}$, it satisfies that
$$
\frac{1}{K}\sum_{k=1}^K I\rbr{h(s_{n,k}) \neq \pie(s_{n,k})}
-
\sum_{n=1}^NF_n(h) 
\leq 
\sqrt{2N\frac{\ln(B)+\ln(\frac{12}{\delta})}{K}}.
$$

Since $\min_{u \in \Delta(\Bcal)}\sum_{n=1}^N \inner{g_n^*}{u} = \min_{h \in \Bcal}\sum_{n=1}^N F_n(h)$, while $\min_{u \in \Delta(\Bcal)}\sum_{n=1}^N \inner{g_n}{u} = \min_{h \in \Bcal}\frac{1}{K}\sum_{n=1}^N\sum_{k=1}^K I\rbr{h(s_{n,k}) \neq \pie(s_{n,k})}$, we denote $h^* = \argmin_{h \in \Bcal}\sum_{n=1}^N F_n(h)$ and conclude the proof for (2) by
\[
\begin{aligned}
\min_{u \in \Delta(\Bcal)}\sum_{n=1}^N \inner{g_n}{u}
- \min_{u \in \Delta(\Bcal)}\sum_{n=1}^N \inner{g_n^*}{u} 
=&
\min_{h \in \Bcal}\frac{1}{K}\sum_{n=1}^N\sum_{k=1}^K I\rbr{h(s_{n,k}) \neq \pie(s_{n,k})}
-\sum_{n=1}^N F_n(h^*) \\
=&
\min_{h \in \Bcal}\frac{1}{K}\sum_{n=1}^N\sum_{k=1}^K I\rbr{h(s_{n,k}) \neq \pie(s_{n,k})}\\
&-
\frac{1}{K}\sum_{n=1}^N\sum_{k=1}^K I\rbr{h^*(s_{n,k}) \neq \pie(s_{n,k})}\\
&+
\frac{1}{K}\sum_{n=1}^N\sum_{k=1}^K I\rbr{h^*(s_{n,k}) \neq \pie(s_{n,k})}
-
\sum_{n=1}^N F_n(h^*) \\
\leq&
\sqrt{2N\frac{\ln(B)+\ln(\frac{12}{\delta})}{K}}.
\end{aligned}
\]
Finally, by applying union bound on (1) and (2) we conclude the proof.
\end{proof}


The proposition below is used in the analysis of stage 1; 
its proof is 
straightforward and largely follows from Lemma 7 and 8 of~\cite{li2022efficient}.
    
\begin{proposition}
 For any $\delta \in (0,1]$, the sequence of $\{u_n\}$, $\{g_n\}$, $\{u_n^*\}$ induced by \algms, MDP $\MDP$ and expert $\pie$,  satisfies that with probability at least 
    $1- \delta/6$,     
$$\sum_{n=1}^N \langle {\est_n}, {u_n - u_n^*} \rangle  \leq  N\sqrt{\frac{ 2A\rbr{\ln(NS)+\ln(\frac{12}{\delta})}}{\Ensem}}.$$
\label{prop:sparsification}
\end{proposition}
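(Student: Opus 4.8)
The plan is to exhibit $u_n = \frac1E\sum_{e=1}^E u_{n,e}$ as, with high probability, a good ``sparsification'' of the idealized \algm iterate $u_n^* = \EE[u_{n,e}\mid\Fcal_{n-1}]$, measured against the (random) loss vectors $g_n$. The first step is the key reduction: $\langle g_n,u\rangle$ depends on a mixed policy $u$ only through its agreement probability with the expert on the query states. Indeed, since $g_n[h] = \EE_{(s,\pie(s))\sim \incrdata_n}[I(h(s)\neq\pie(s))]$, we have $\langle g_n,u\rangle = 1 - \EE_{s\sim\incrdata_n}[\pi_u(\pie(s)\mid s)] = 1 - \frac1K\sum_{k=1}^K \pi_u(\pie(s_{n,k})\mid s_{n,k})$, and therefore
\[
\langle g_n, u_n - u_n^*\rangle = \frac1K\sum_{k=1}^K\bigl(\pi_{u_n^*}(\pie(s_{n,k})\mid s_{n,k}) - \pi_{u_n}(\pie(s_{n,k})\mid s_{n,k})\bigr),
\]
so that $|\langle g_n, u_n - u_n^*\rangle| \le \max_{k\in[K]}|\pi_{u_n}(\pie(s_{n,k})\mid s_{n,k}) - \pi_{u_n^*}(\pie(s_{n,k})\mid s_{n,k})|$. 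Hence it suffices to control, for each query state, the deviation of the ensemble's agreement frequency from its conditional mean.

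The second step is a concentration argument for a \emph{fixed} state. Conditionally on $\Fcal_{n-1}$, the base policies $h_{n,1},\dots,h_{n,E}$ are i.i.d.\ (the perturbation sets $\pertdata_{n,e}$ and Poisson sizes $X_{n,e}$ are drawn independently across $e$), and by the definition of $u_n^*$ we have $\PP[h_{n,e}(s)=a\mid\Fcal_{n-1}] = \pi_{u_n^*}(a\mid s)$ for all $a$. Consequently, for any fixed $s$, $\pi_{u_n}(\pie(s)\mid s) = \frac1E\sum_{e=1}^E I(h_{n,e}(s)=\pie(s))$ is an average of $E$ i.i.d.\ Bernoulli variables with mean $\pi_{u_n^*}(\pie(s)\mid s)$, so Hoeffding's inequality gives, with conditional probability at least $1-\delta'$, a deviation of at most $\sqrt{\ln(2/\delta')/(2E)}$ (one may instead apply Hoeffding to all $A$ actions and union bound, or use a standard $\ell_1$ concentration bound for empirical multinomials, to recover the precise $\sqrt{2A(\cdot)/E}$ form in the statement; absorbing the extra $\ln 2$, $\ln A$ terms is routine using $A\ge 1$).

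The main obstacle is that the query states $s_{n,k}$ are sampled from $d_{\pi_{u_n}}$, hence depend on the \emph{realization} of the ensemble $u_n$, not merely on the $\Fcal_{n-1}$-measurable quantity $u_n^*$; this is exactly why $\langle g_n,u_n-u_n^*\rangle$ is not a clean martingale difference and why one cannot simply fix $s$ and invoke the conditional bound above. I resolve this by decoupling via a union bound over all $NS$ pairs $(n,s)\in[N]\times\Scal$ (this uses finiteness of $\Scal$ and is where the $\ln S$ enters), taking $\delta' = \delta/(6NS)$. On the resulting good event, which holds with probability at least $1-\delta/6$, the deviation bound holds simultaneously for every round and every state, hence in particular for every realized query state $s_{n,k}$, giving $|\langle g_n, u_n - u_n^*\rangle| \le \sqrt{2A(\ln(NS)+\ln(12/\delta))/E}$ for all $n$. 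Summing over $n\in[N]$ yields $\sum_{n=1}^N \langle \est_n, u_n - u_n^*\rangle \le N\sqrt{2A(\ln(NS)+\ln(12/\delta))/E}$, as claimed; the remaining steps are only constant bookkeeping, paralleling Lemmas~7--8 of~\citet{li2022efficient}.
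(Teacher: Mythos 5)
Your proposal is correct and follows essentially the same route as the paper's proof: both reduce $\langle g_n, u_n - u_n^*\rangle$ to a per-state comparison of the ensemble's empirical action distribution against its conditional mean $\pi_{u_n^*}(\cdot\mid s)$, exploit that the $E$ ensemble members are i.i.d.\ given $\Fcal_{n-1}$ to apply concentration at a fixed state, and handle the dependence of the query states on the realized ensemble by a union bound over all $(n,s)\in[N]\times\Scal$, which is where the $\ln(NS)$ term enters. Your observation that only the single coordinate $\pi_u(\pie(s)\mid s)$ matters would in fact yield a slightly sharper bound without the $\sqrt{A}$ factor, but, as you note, the stated form is recovered via the $\ell_1$ (Weissman-type) multinomial concentration, which is exactly the inequality the paper invokes.
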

\begin{proof}
To begin with, we first denote $\pi_{n}^* := \pi_{u_n^*}$ and $\pi_{n}^*(\cdot|s)$ the action distribution of $\pi_{n}^*$ on state $s$. Given the expert annotation $\pie(s)$ on state $s$, we denote the $A$ dimensional classification loss vector by $\overset{\rightarrow}{c}(\pie(s))$, whose entries are all $1$ except that it takes $0$ in the $\pie(s)$-th coordinate.
With the newly introduced notions, we rewrite and bound $ \langle {\est_n}, {u_n - u_n^*} \rangle$ as follows:
\begin{equation}
 \begin{aligned}
\langle {\est_n}, {u_n - u_n^*} \rangle 
=&
\sum_{h \in \Bcal} u_n[h] \rbr{\EE_{(s,\pie(s)) \sim \incrdata_n} \sbr{I(h(s) \neq \pie(s))}}_{h \in \Bcal}\\
&-
\sum_{h \in \Bcal} u_n^*[h] \rbr{\EE_{(s,\pie(s)) \sim \incrdata_n} \sbr{I(h(s) \neq \pie(s))}}_{h \in \Bcal} \\
=&
\EE_{(s,\pie(s)) \sim \incrdata_n} \EE_{a \sim \pi_{n}(\cdot|s)} \sbr{I(a \neq \pie(s))}  
-
\EE_{(s,\pie(s)) \sim \incrdata_n} \EE_{a \sim \pi^*_{n}(\cdot|s)} \sbr{I(a \neq \pie(s))}   \\
=&
\EE_{(s,\pie(s)) \sim \incrdata_n} \sbr{\langle \pi_{n}(\cdot|s),\overset{\rightarrow}{c}(\pie(s))\rangle }  
-
\EE_{(s,\pie(s)) \sim \incrdata_n} \sbr{\langle \pi^*_{n}(\cdot|s),\overset{\rightarrow}{c}(\pie(s))\rangle } \\
= &
\EE_{(s,\pie(s)) \sim \incrdata_n} \sbr{\langle \pi_{n}(\cdot|s)-\pi^*_{n}(\cdot|s),\overset{\rightarrow}{c}(\pie(s))\rangle }  \\
\leq &
\EE_{(s,\pie(s)) \sim \incrdata_n} \sbr{\| \pi_{n}(\cdot|s)-\pi^*_{n}(\cdot|s)\|_1 \| \overset{\rightarrow}{c}(\pie(s))\|_\infty }  \\
=&
\EE_{(s,\pie(s)) \sim \incrdata_n} \sbr{\| \pi_{n}(\cdot|s)-\pi^*_{n}(\cdot|s)\|_1 }  .
\end{aligned}
\end{equation}
Now, it suffices to bound $\| \pi_{n}(\cdot|s)-\pi^*_{n}(\cdot|s)\|_1$, which follows Lemma 7 of~\cite{li2022efficient}.
First notice that $\EE_{\pertdata_n} \sbr{u_{n,\ensem}} = u_n^*$, which is by the definition of $u_n^*$ in equation~\eqref{eqn:u_star}. Since $h_{n,\ensem}$ corresponds to $u_{n,\ensem}$ in \algms, this implies $\EE_{\pertdata_n} \sbr{h_{n,\ensem}(\cdot|s)} = \pi^*_{n}(\cdot|s)$. Now that each $h_{n,\ensem}(\cdot|s)$ can be viewed as Multinoulli random variable on $\Delta(\Acal)$ with expectation $\pi^*_{n}(\cdot|s)$, while $\pi_{n}(\cdot \mid s) := \frac{1}{\Ensem}\sum_{\ensem = 1}^\Ensem h_{n, e}(\cdot | s)$, we apply the concentration inequality for Multinoulli random variables~\cite{qian2020concentration, weissman2003inequalities} and conclude given $n\in[N]$ and $s \in \Scal$, for any $\delta_0 \in (0,1]$, $u_n$, $u^*_n$, $g_n$, satisfies that with probability at least $1-\frac{\delta_0}{NS}$,
$$\| \pi_{n}(\cdot|s)-\pi^*_{n}(\cdot|s)\|_1
\leq
\sqrt{\frac{ 2A\rbr{\ln(NS)+\ln(\frac{2}{\delta_0})}}{\Ensem}}.
$$
By applying union bound over all $n \in [N]$ and all $s \in \Scal$, we conclude that for any $\delta \in (0,1]$, the sequence of  $\{u_n\}$, $\{u^*_n\}$, $\{g_n\}$, satisfies that with probability at least $1-\delta/6$,
\begin{equation}
    \begin{aligned}
        \sum_{n=1}^N\langle {\est_n}, {u_n - u_n^*} \rangle 
\leq &
\sum_{n=1}^N \EE_{(s,\pie(s)) \sim \incrdata_n} \sbr{\| \pi_{n}(\cdot|s)-\pi^*_{n}(\cdot|s)\|_1 } \\
\leq &
\sum_{n=1}^N \EE_{(s,\pie(s)) \sim \incrdata_n} \sqrt{\frac{ 2A\rbr{\ln(NS)+\ln(\frac{12}{\delta})}}{\Ensem}}  \\
= &
N\sqrt{\frac{ 2A\rbr{\ln(NS)+\ln(\frac{12}{\delta})}}{\Ensem}}.
    \end{aligned}
\end{equation}

\end{proof}
\clearpage

\section{Experimental details}
\label{sec: experiment_appendix}

\subsection{Full version of \alg}
\label{sec: full_bd_alg}
We present the full version of \alg in Algorithm~\ref{alg:bd_full}.

\begin{algorithm}[t]
\caption{\alg}
\begin{algorithmic}
\STATE \textbf{Input:} MDP $\MDP$, expert $\pie$, policy class $\Bcal$, oracle $\Ocal$,
sample size per round $K$,  ensemble size $\Ensem$.
\STATE Initialize $\incrdata = \emptyset$. 
\FOR{$n=1,2,\ldots,N$}
\FOR{$\ensem=1,2,\ldots,\Ensem$}
\STATE $\pi_{n, e} \gets \trainbase(D)$
\ENDFOR
\STATE Set $\pi_{n}(a \mid s) := \frac{1}{\Ensem}\sum_{\ensem = 1}^\Ensem I( \pi_{n, e}(s) = a )$.
\STATE $\incrdata_n = \cbr{ (s_{n,k}, \pie(s_{n,k})) }_{k = 1}^K \gets$ sample $K$ states i.i.d. from $d_{\pi_{n}}$ by rolling out $\pi_{n}$ in $\MDP$, and query expert $\pie$ on these states. 
\STATE Aggregate datasets  $\incrdata \gets \incrdata \cup \incrdata_n$.
\ENDFOR
\STATE \textbf{Return} $\hat{\pi} \gets \aggpolicy( \cbr{ \pi_{n,e}}_{n=1, e=1}^{N+1, E} )$

\STATE \textbf{function} \trainbase($D$):
\STATE \hspace{1em} $\tilde{\incrdata}$ $\gets$ Sample $|\incrdata|$ i.i.d. samples $\sim \Unif(\incrdata)$ with replacement.
\STATE \hspace{1em} \textbf{Return} $h \gets \Ocal(\tilde{\incrdata})$.

\STATE \textbf{Return} $h \gets \Ocal( D \cup Q)$.


\STATE \textbf{function} \aggpolicy$(\cbr{ \pi_{n,e}}_{n=1, e=1}^{N+1, E})$:

\STATE \hspace{1em} Sample $\hat{n} \sim \Unif([N])$


\STATE \hspace{1em} \textbf{Return} $\pi_{\hat{n}}(a \mid s) := \frac{1}{\Ensem}\sum_{\ensem = 1}^\Ensem I( \pi_{\hat{n}, e}(s) = a )$.

\end{algorithmic}
\label{alg:bd_full}
\end{algorithm}

\subsection{Additional Implementation Details}
\label{sec: experiment_details}

All experiments were conducted on an Ubuntu machine equipped with a 3.3 GHz Intel Core i9 CPU and 4 NVIDIA GeForce RTX 2080 Ti GPUs. Our project is built upon the source code of Disagreement-Regularized Imitation Learning (https://github.com/xkianteb/dril) and shares the same environment dependencies. We have inherited some basic functions and implemented a new online learning pipeline that supports parallelized ensemble policies, in which we instantiate  \dagge, \algms, and \alg. For each algorithm and experimental setting, we executed ten runs using random seeds ranging from 1 to 10. The detailed control task names are ``HalfCheetahBulletEnv-v0'', ``AntBulletEnv-v0'', ``Walker2DBulletEnv-v0'', and ``HopperBulletEnv-v0''. For code and more information see \text{https://github.com/liyichen1998/BootstrapDagger-MFTPLP} 

\begin{table}[h]
\caption{Hyperparameters for Continuous Control Experiment} 
\label{bootstrap-dagger-hyperparams}
\begin{center}
\begin{tabular}{lll}
\toprule
\textbf{Hyperparameter}  & \textbf{Values Considered} & \textbf{Choosen Value} \\
\midrule
Ensemble Size     & [1,5,25] & [1,5,25]\\
Perturbation Size $X$ for \algms     & [7,15,31,62,125,250,500] & [0,7,15,31,62,125]\\
Hidden Layer Size (non-realizable)    & [2,4,8,12,16,24,32,64] & 4 (Ant), 8 (Hopper), 12 (Half-Chettah), 24 (Walker)\\
Learning Rounds (realizable)   & [10,20,50,100] & 20 (Ant \& Hopper), 50 (Half-Cheetah \& Walker)\\
Learning Rounds (non-realizable)   & [10,20,40,50,80,100,200] & 40 (Ant \& Hopper), 50 (Half-Cheetah \& Walker) \\
Data Per Round    & [10,20,50,100,200,1000] & 50 \\
Learning Rate     & $2.5 \times 10^{-4}$ & $2.5 \times 10^{-4}$ \\
Batch Size        & [50,100,200,500,1000] & 200 \\
Train Epoch       & [500,1000,2000,10000] & 2000 \\
Parallel Environments & [5,16,25] & 25 \\
\bottomrule
\end{tabular}
\end{center}
\end{table}

As shown in Table~\ref{bootstrap-dagger-hyperparams}, we present the considered hyperparameters along with their chosen values.
Overall, hyperparameters related to environment interactions, like learning rounds and data per round, are selected to generate a 'favorable' learning curve for \dagge, enabling it to learn relatively fast but not converge within just a few rounds. 
The perturbation sizes for \algms are chosen by the ratio of the perturbation dataset size to 1000 (the maximum size of the cumulative dataset for realizable Ant, Hopper), following the sequence of $\cbr{\lfloor 1000/2^i \rfloor | 1 \leq i \leq 7}$.
For hyperparameters related to neural network training, such as batch size, training epoch, etc., those are selected to ensure a faithful implementation of a offline learning oracle without imposing heavy computational overhead. 

For the justification of using 2000 SGD iterations without validation set, we provide the following reasoning:

1. On the performance of the oracle, 2000 SGD iterations suffices to support a comprehensive comparison over different algorithms, as shown in Figure~\ref{fig:500_10k_comparison}. 

2. On the faithfulness of implementing a offline learning oracle, the returned policy should be the best policy on the input data, where the generalization error is not considered. In this case, splitting out a portion of input data for validation may deviate from the definition of the computational oracle.

3. On the reproducibility of the experiment, the implementation of validation set may vary and provide additional noises, i.e. whether the validation set is resampled for each input dataset or gathered incrementally through $N$ rounds and kept unseen from the learner. Though it would be interesting to compare the difference between these and our oracle.

\begin{figure*}[h]
  \centering
  \includegraphics[width=1\linewidth]{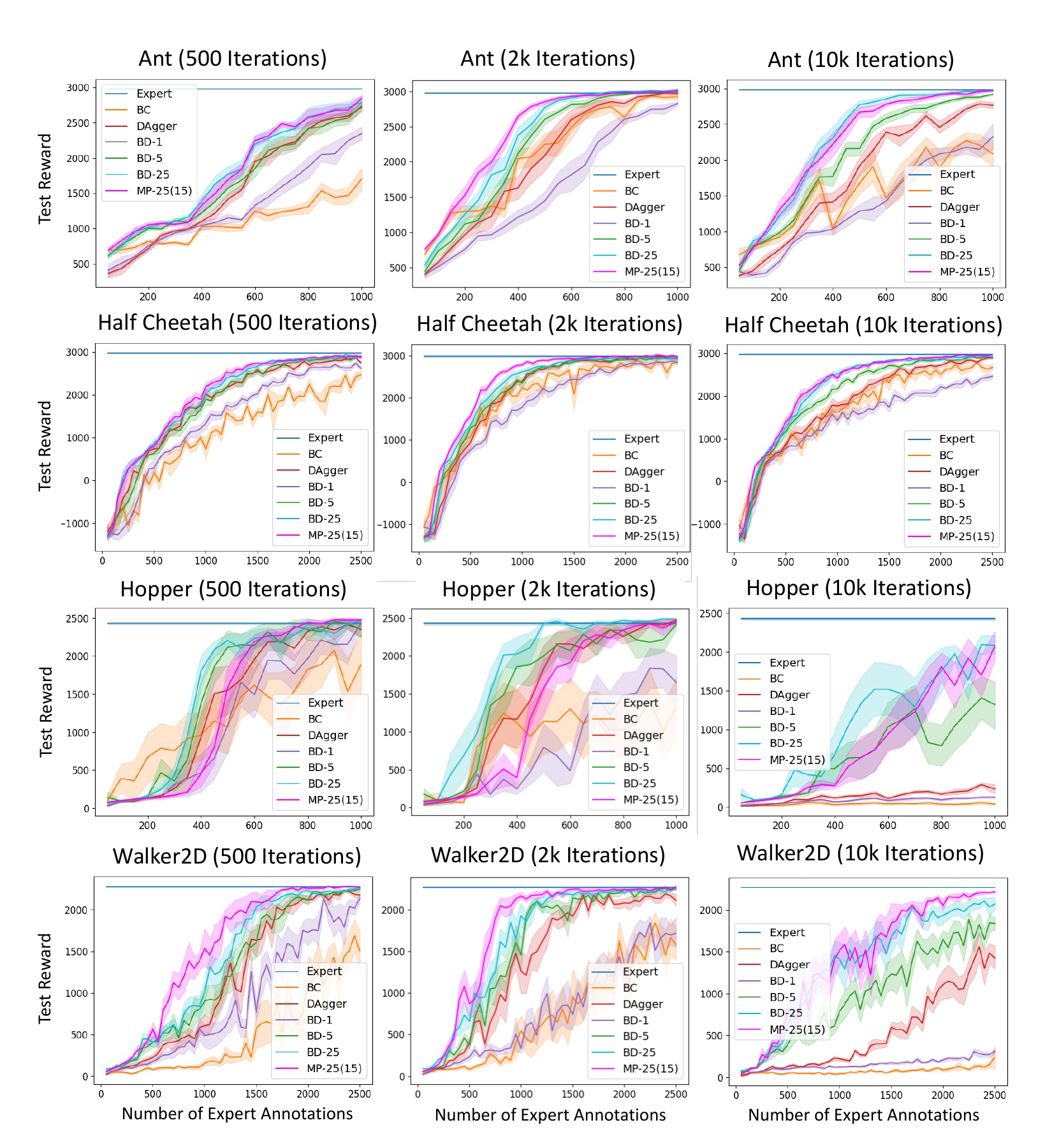}
  \caption{Comparison of 500, 2000, and 10000 iteration steps on continuous control tasks with realizable expert.
  } 
  \label{fig:500_10k_comparison}
\end{figure*}

\textbf{Running Time and Memory Comparison.}

We present the running time and memory comparisons in the realizable setting from Section~\ref{sec:main_experiment}. As shown in Figure~\ref{fig:time_mem}, \bc, \dagge, and \bda have similar running times across different tasks. Benefiting from the parallel implementation of ensemble models, \bdb only takes twice as long as the baselines, while those with an ensemble size of 25 require approximately 5 times longer. Considering overall performance and running time, \bdb is more favorable for practical applications.

\label{sec: time_and_memory}
\begin{figure}[H]
  \centering
  \includegraphics[width=1\linewidth]{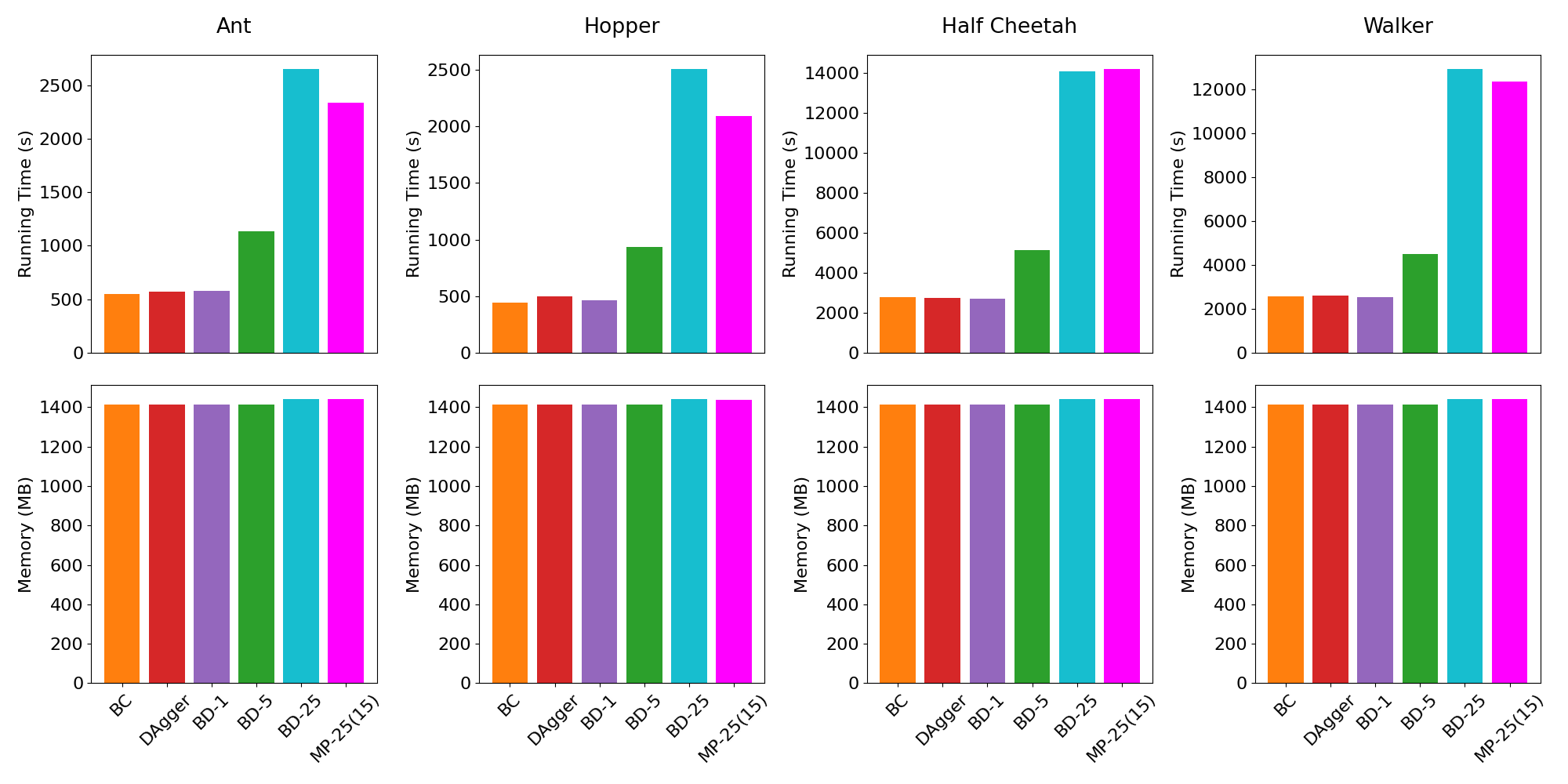}
  \caption{
  Comparison of running time and memory for different algorithms under the realizable setting.
  }
  \label{fig:time_mem}
\end{figure}

\textbf{Comparison of Alternative \aggpolicy Functions for \alg}

Additionally, we compare the performance of our \alg algorithm with the \aggpolicy part changed to randomization over the ensemble trained. Our results are shown in Figure~\ref{fig:ensemble-mean-random}, where BD-5 and BD-25 represent our \alg algorithm with ensemble sizes 5 and 25 respectively, and ``BD-5 random'' and ``BD-25 random'' are modifications of \alg with the \aggpolicy part changed to returning $\bar{\pi}_{N+1}(a \mid s) = \frac{1}{E} \sum_{e=1}^E \pi_{N+1,e}(a \mid s)$. 

\begin{figure}[H]
  \centering
  \includegraphics[width=1\linewidth]{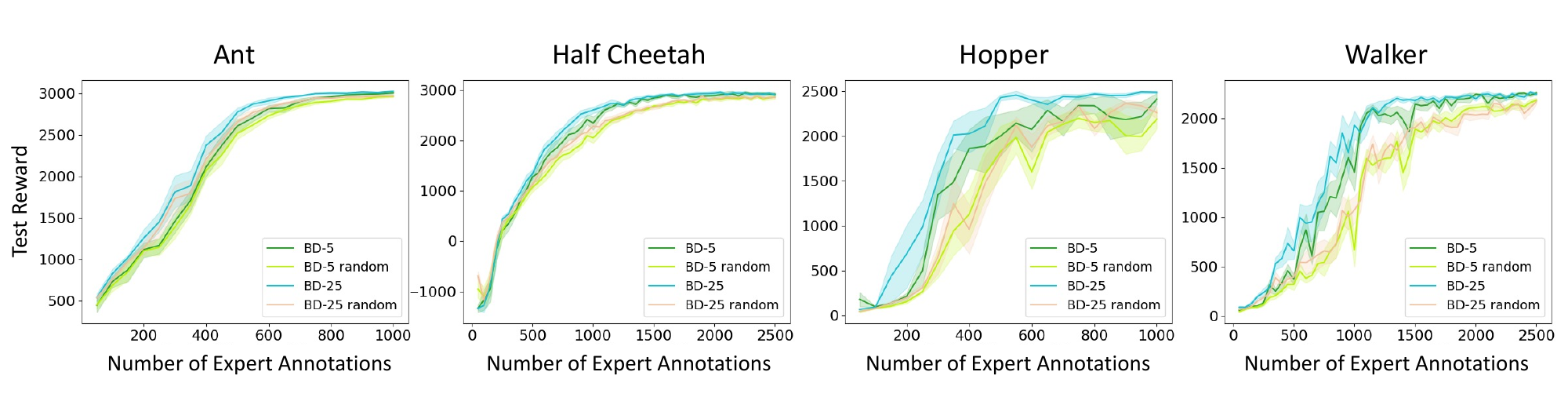}
  \caption{
  Comparison of the original \alg algorithm with its variant with \aggpolicy changed to returning a policy that is a uniform randomization over the ensemble.
  }
  \label{fig:ensemble-mean-random}
\end{figure}

Overall, using bootstrap mean (Bagging) proves marginally better than using bootstrap with randomization for final policy evaluation. This justifies the choice of using the ensemble mean instead of the original \aggpolicy for continuous control.





\subsection{Full Results from Section~\ref{sec: perturbation_utility}}
\label{sec:full_perturbation_utility}
In this section, we present all result plots from Section~\ref{sec: perturbation_utility}, including those omitted due to space constraints. As shown in Figure~\ref{fig:full_linear}, we include the performance of \alg and \bc for the linear nonrealizable experiment in Ant and Hopper. Evidently, the performance of \bd improves with increasing ensemble size, with \bdc achieving comparable performance to \algms. Notice that the performance of \bc varies significantly across tasks in the linear model setting, which 
we leave for further investigation.

\begin{figure*}[h]
  \vspace{.3in}
  \centering

  \adjustbox{trim={0.00\width} {0.00\height} {0.08\width} {0.05\height},clip,width=0.31\linewidth, valign=t}{
    \includegraphics[width=1.0\linewidth]{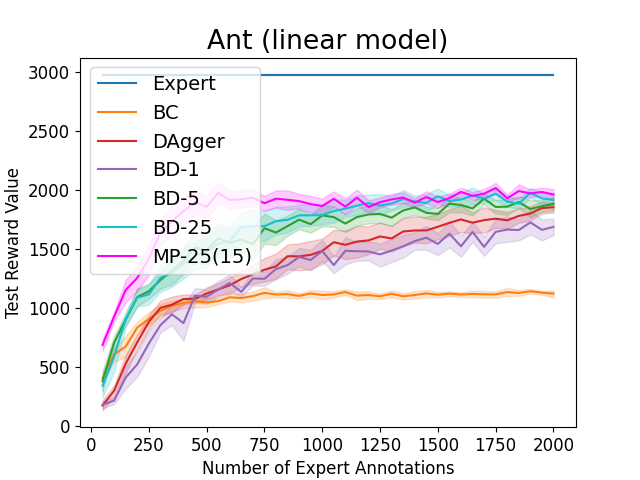}
  }
  \hspace{0.02\linewidth}
  \adjustbox{trim={0.00\width} {0.00\height} {0.08\width} {0.05\height},clip,width=0.31\linewidth, valign=t}{
    \includegraphics[width=1.0\linewidth]{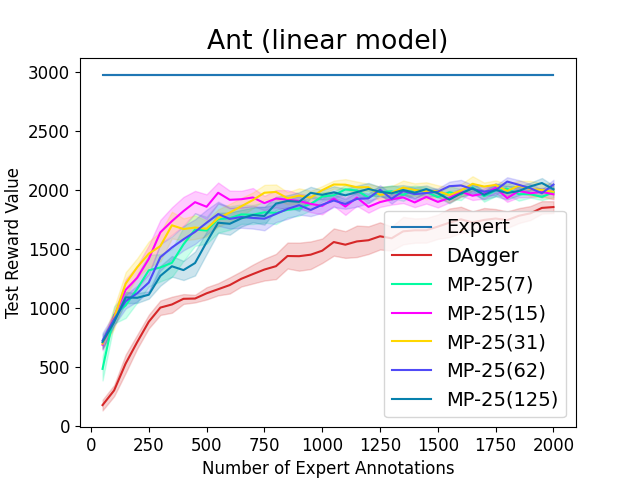}
  }
  \hspace{0.02\linewidth}
  \adjustbox{trim={0.00\width} {0.00\height} {0.08\width} {0.05\height},clip,width=0.31\linewidth, valign=t}{
    \includegraphics[width=1.0\linewidth]{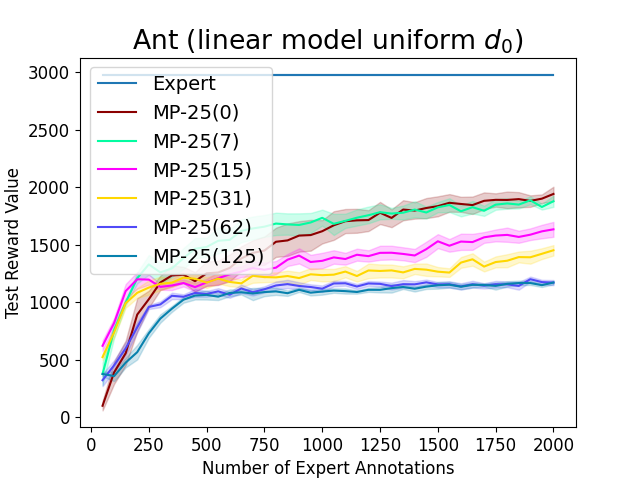}
  }
  
  \vspace{0.2in}

  \adjustbox{trim={0.00\width} {0.00\height} {0.08\width} {0.05\height},clip,width=0.31\linewidth, valign=t}{
    \includegraphics[width=1.0\linewidth]{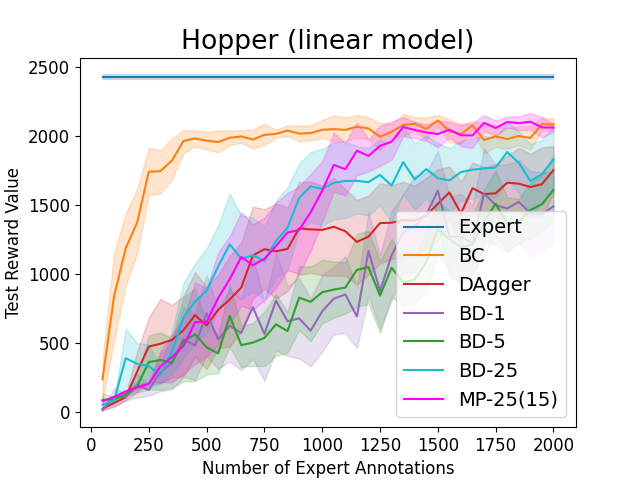}
  }
  \hspace{0.02\linewidth}
  \adjustbox{trim={0.00\width} {0.00\height} {0.08\width} {0.05\height},clip,width=0.31\linewidth, valign=t}{
    \includegraphics[width=1.0\linewidth]{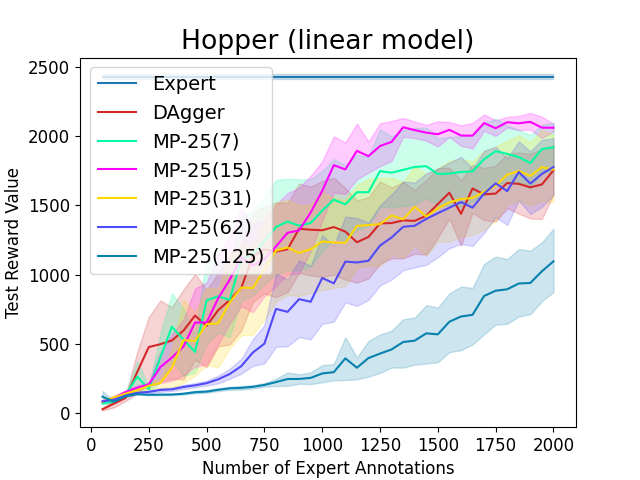}
  }
  \hspace{0.02\linewidth}
  \adjustbox{trim={0.00\width} {0.00\height} {0.08\width} {0.05\height},clip,width=0.31\linewidth, valign=t}{
    \includegraphics[width=1.0\linewidth]{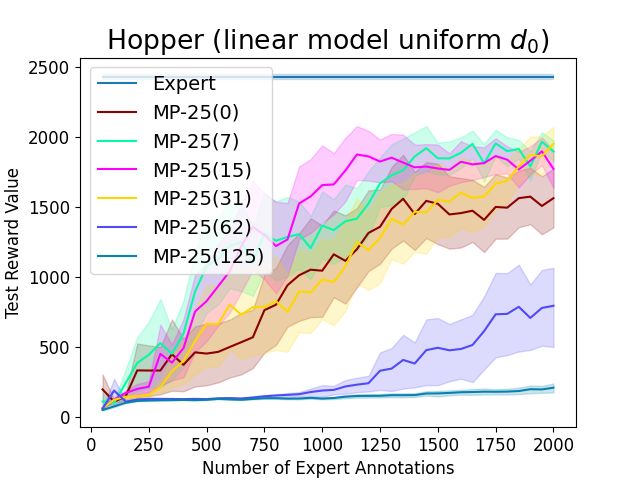}
  }

  \vspace{.3in}
  \caption{Continuous control experiments with linear model and nonrealizable noisy expert.}
  \label{fig:full_linear}
\end{figure*}

\subsection{Full results from Section~\ref{sec:main_experiment}}
\label{sec:full_main_experiment}


In this section, we present all result plots from Section~\ref{sec:main_experiment}, including those omitted due to space constraints. 
Additionally, since imitation learning agents do not usually have access to the ground truth reward,
we also evaluate $\pi$ 
using a more ``objective'' performance measure, i.e, its average imitation loss: 
\[\text{Imitation Loss}(\pi) = \frac{1}{\nrepeat} \sum_{i =1}^{\nrepeat}\frac{1}{|\tau^\pi_i|}\sum_{s \in \tau^\pi_i}\tilde{\ell}(\pi(s),\bar{\pi}^\text{exp}(s)),\]
this measures the policy's  deviation from the mean action of the expert policy $\bar{\pi}^\text{exp}(s)$.
Its expectation, 
$L(\pi) := \EE_{s \sim d_{\pi}, a \sim \pi(\cdot \mid s)} \ell(a, \pie(s))$, has been a central optimization objective in imitation learning works such as 
\dagge~\citep[][Eq. (1)]{ross2011reduction} and subsequent works~\cite{ross2014reinforcement,sun2017deeply,cheng2019predictor} including ours.


In the following, Figures~\ref{fig:full_realizable} and~\ref{fig:nonrealizable_full} present the results of experiments with realizable and non-realizable experts using MLP as the base class. These include the performance of \MPX with varying perturbation sizes and the imitation loss of different algorithms as a function of expert annotation size. Although the advantages of sample-based perturbation are less apparent in the MLP-based experiments, \MPF still noticeably outperforms \MP{}(0) in realizable Ant. 

By examining the correlations between test rewards and imitation losses, a discrepancy in imitation loss usually correlates with
a gap in test reward, which shows the practical relevance of minimizing a policy's imitation loss -- it is a reasonable proxy of policy's expected performance. 




\begin{figure*}[h]
  \vspace{.3in}
  \centering

  \adjustbox{trim={0.00\width} {0.00\height} {0.08\width} {0.05\height},clip,width=0.31\linewidth, valign=t}{
    \includegraphics[width=1.0\linewidth]{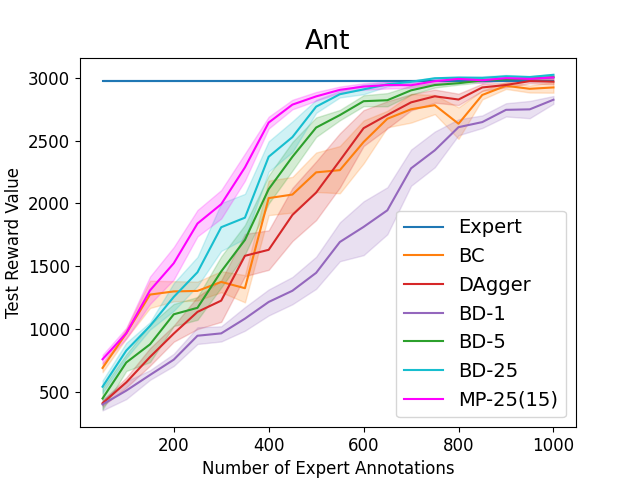}
  }
  \hspace{0.02\linewidth}
  \adjustbox{trim={0.00\width} {0.00\height} {0.08\width} {0.05\height},clip,width=0.31\linewidth, valign=t}{
    \includegraphics[width=1.0\linewidth]{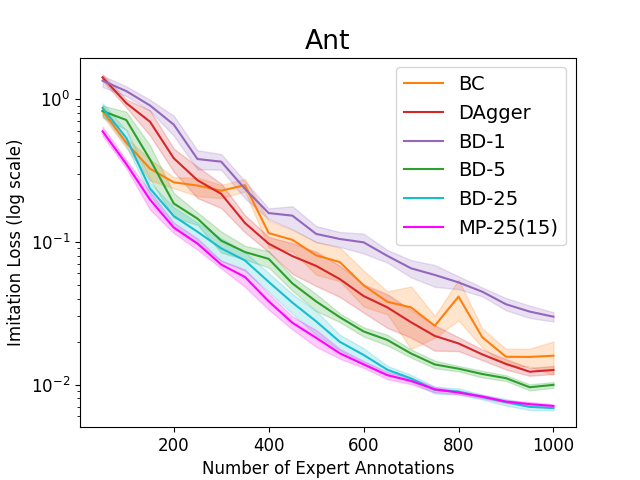}
  }
  \hspace{0.02\linewidth}
  \adjustbox{trim={0.00\width} {0.00\height} {0.08\width} {0.05\height},clip,width=0.31\linewidth, valign=t}{
    \includegraphics[width=1.0\linewidth]{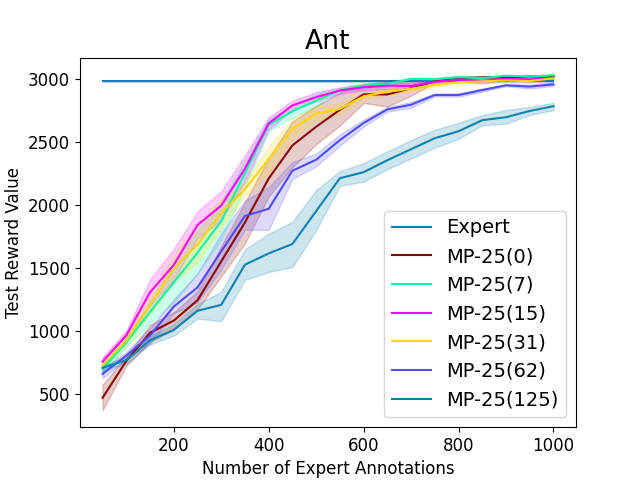}
  }
  
  \vspace{0.2in}

  \adjustbox{trim={0.00\width} {0.00\height} {0.08\width} {0.05\height},clip,width=0.31\linewidth, valign=t}{
    \includegraphics[width=1.0\linewidth]{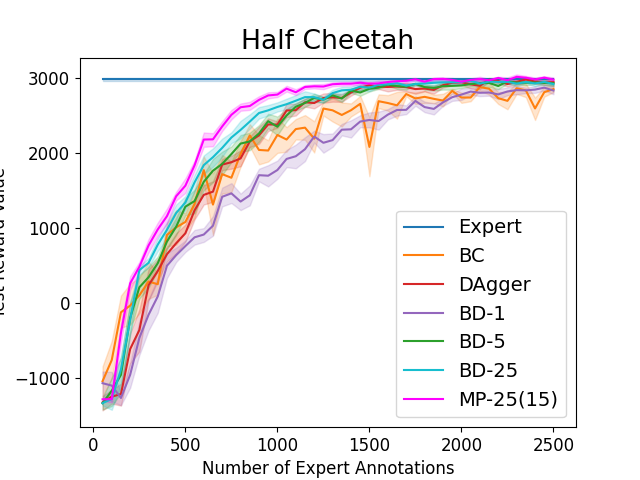}
  }
  \hspace{0.02\linewidth}
  \adjustbox{trim={0.00\width} {0.00\height} {0.08\width} {0.05\height},clip,width=0.31\linewidth, valign=t}{
    \includegraphics[width=1.0\linewidth]{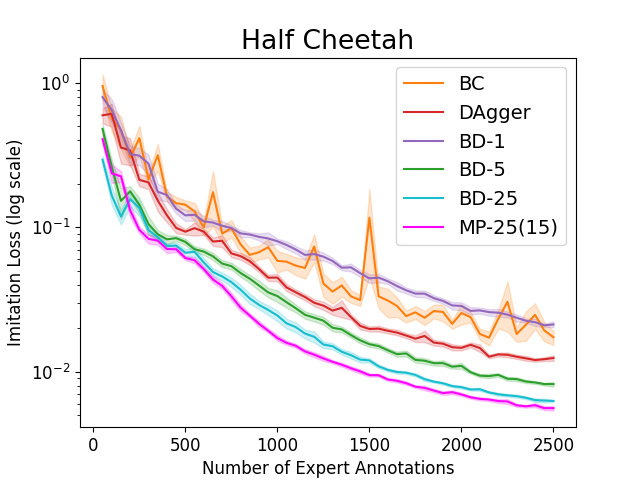}
  }
  \hspace{0.02\linewidth}
  \adjustbox{trim={0.00\width} {0.00\height} {0.08\width} {0.05\height},clip,width=0.31\linewidth, valign=t}{
    \includegraphics[width=1.0\linewidth]{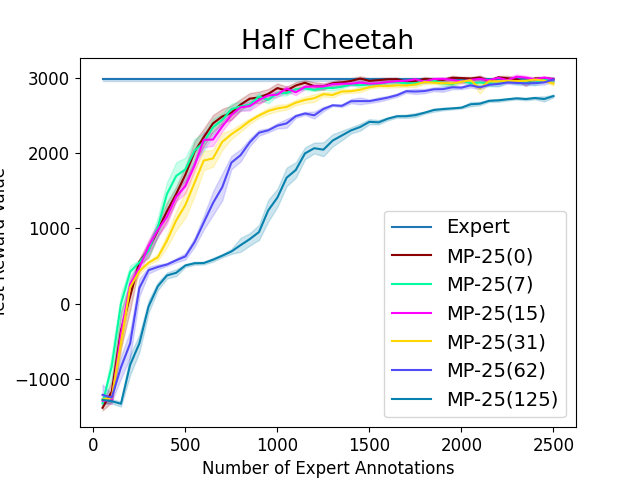}
  }

  \vspace{0.2in}

  \adjustbox{trim={0.00\width} {0.00\height} {0.08\width} {0.05\height},clip,width=0.31\linewidth, valign=t}{
    \includegraphics[width=1.0\linewidth]{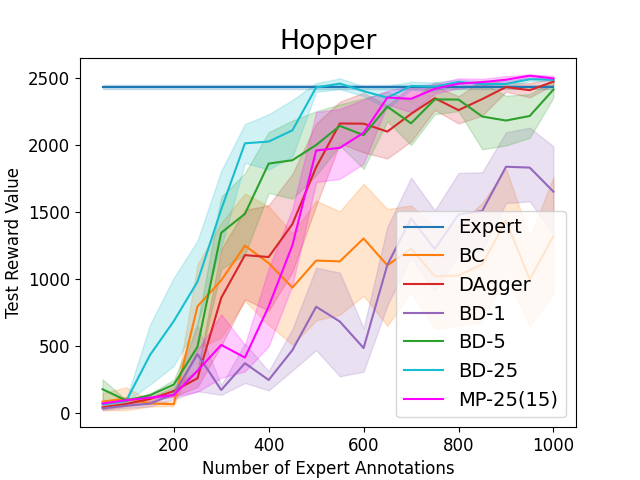}
  }
  \hspace{0.02\linewidth}
  \adjustbox{trim={0.00\width} {0.00\height} {0.08\width} {0.05\height},clip,width=0.31\linewidth, valign=t}{
    \includegraphics[width=1.0\linewidth]{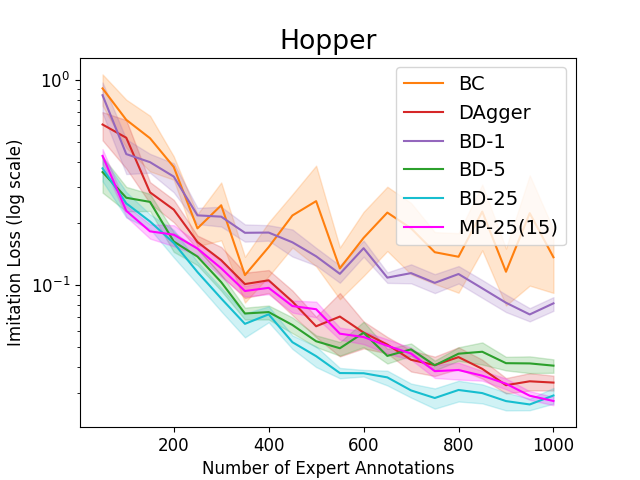}
  }
  \hspace{0.02\linewidth}
  \adjustbox{trim={0.00\width} {0.00\height} {0.08\width} {0.05\height},clip,width=0.31\linewidth, valign=t}{
    \includegraphics[width=1.0\linewidth]{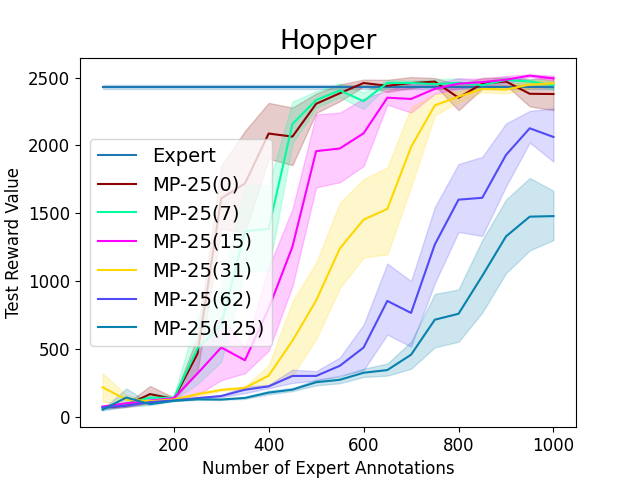}
  }

  \vspace{0.2in}

  \adjustbox{trim={0.00\width} {0.00\height} {0.08\width} {0.05\height},clip,width=0.31\linewidth, valign=t}{
    \includegraphics[width=1.0\linewidth]{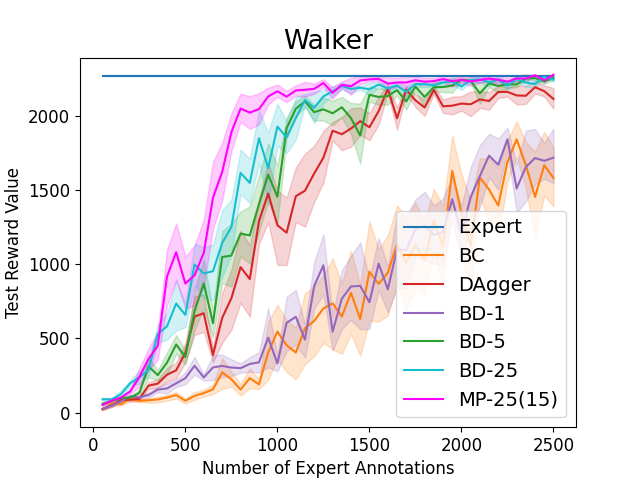}
  }
  \hspace{0.02\linewidth}
  \adjustbox{trim={0.00\width} {0.00\height} {0.08\width} {0.05\height},clip,width=0.31\linewidth, valign=t}{
    \includegraphics[width=1.0\linewidth]{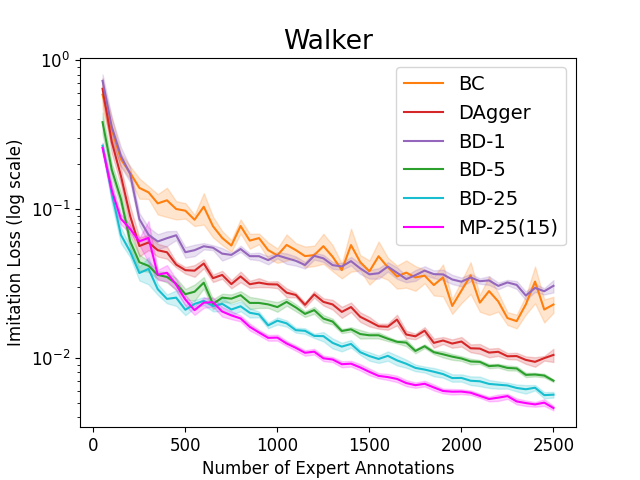}
  }
  \hspace{0.02\linewidth}
  \adjustbox{trim={0.00\width} {0.00\height} {0.08\width} {0.05\height},clip,width=0.31\linewidth, valign=t}{
    \includegraphics[width=1.0\linewidth]{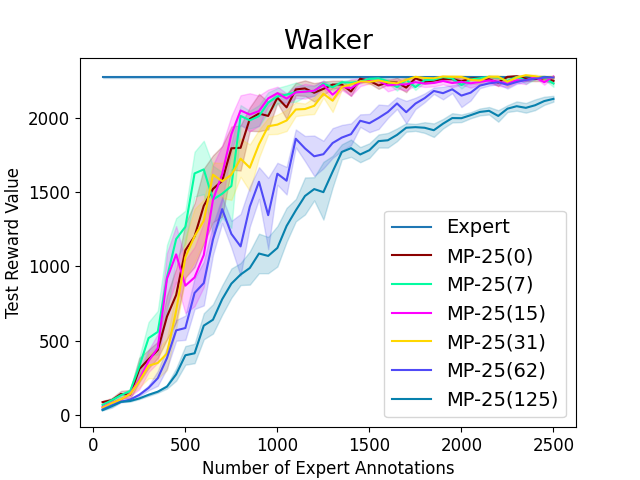}
  }

  \vspace{.3in}
  \caption{Continuous control experiments with realizable noisy expert.}
  \label{fig:full_realizable}
\end{figure*}

\begin{figure*}[h]
  \centering

  \adjustbox{trim={0.00\width} {0.00\height} {0.08\width} {0.05\height},clip,width=0.31\linewidth, valign=t}{
    \includegraphics[width=1.0\linewidth]{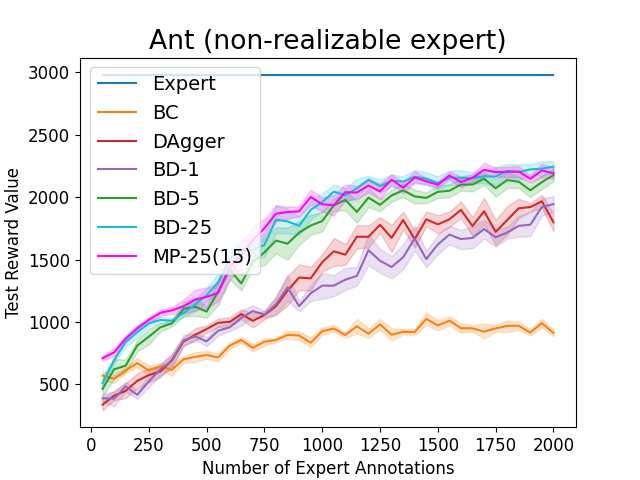}
  }
  \hspace{0.02\linewidth}
  \adjustbox{trim={0.00\width} {0.00\height} {0.08\width} {0.05\height},clip,width=0.31\linewidth, valign=t}{
    \includegraphics[width=1.0\linewidth]{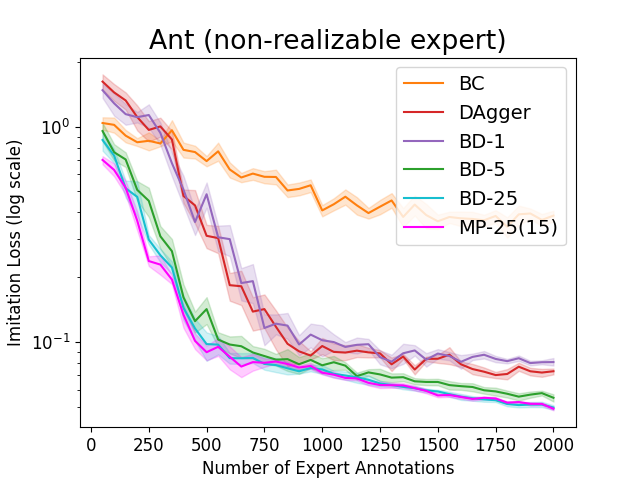}
  }
  \hspace{0.02\linewidth}
  \adjustbox{trim={0.00\width} {0.00\height} {0.08\width} {0.05\height},clip,width=0.31\linewidth, valign=t}{
    \includegraphics[width=1.0\linewidth]{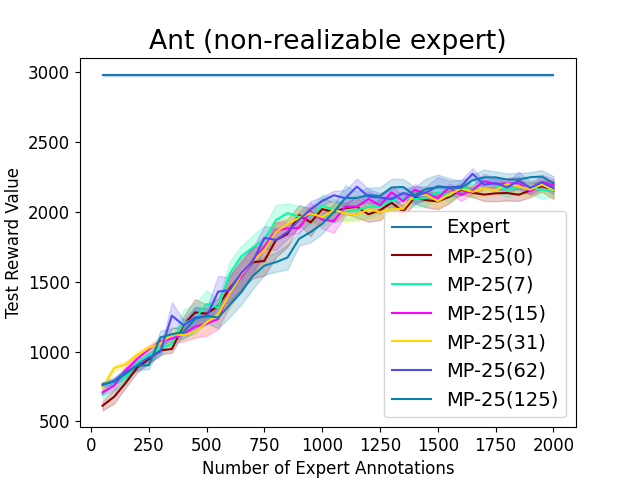}
  }

  \vspace{.1in}

  \adjustbox{trim={0.00\width} {0.00\height} {0.08\width} {0.05\height},clip,width=0.31\linewidth, valign=t}{
    \includegraphics[width=1.0\linewidth]{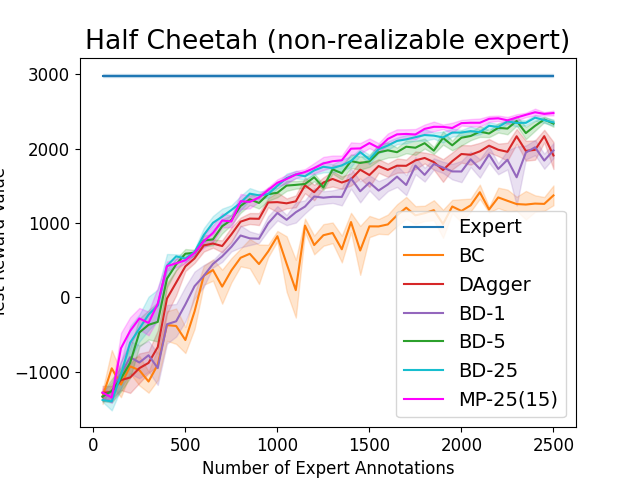}
  }
  \hspace{0.02\linewidth}
  \adjustbox{trim={0.00\width} {0.00\height} {0.08\width} {0.05\height},clip,width=0.31\linewidth, valign=t}{
    \includegraphics[width=1.0\linewidth]{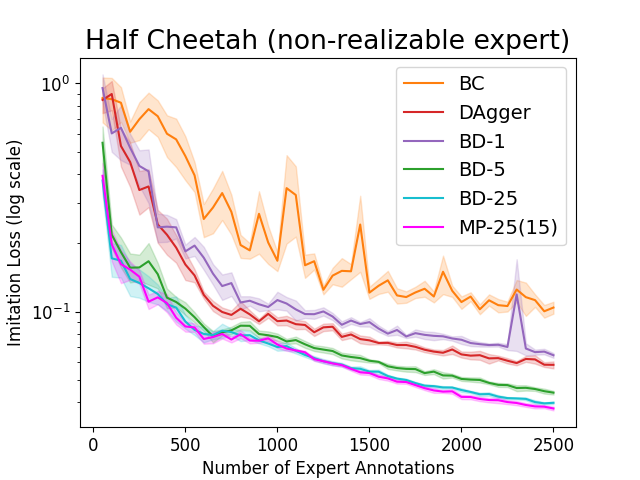}
  }
  \hspace{0.02\linewidth}
  \adjustbox{trim={0.00\width} {0.00\height} {0.08\width} {0.05\height},clip,width=0.31\linewidth, valign=t}{
    \includegraphics[width=1.0\linewidth]{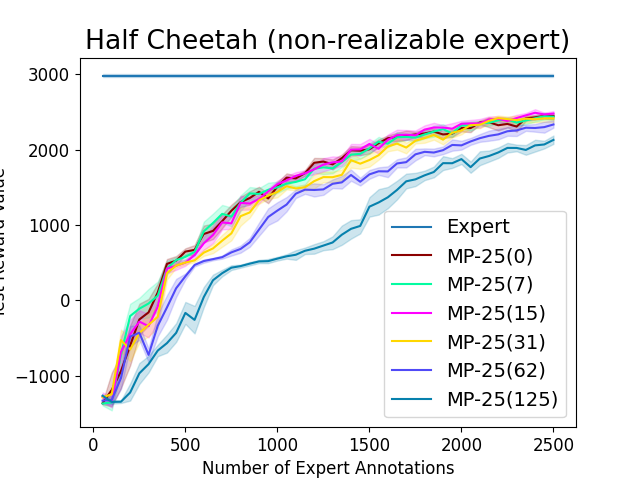}
  }

  \vspace{.1in}

  \adjustbox{trim={0.00\width} {0.00\height} {0.08\width} {0.05\height},clip,width=0.31\linewidth, valign=t}{
    \includegraphics[width=1.0\linewidth]{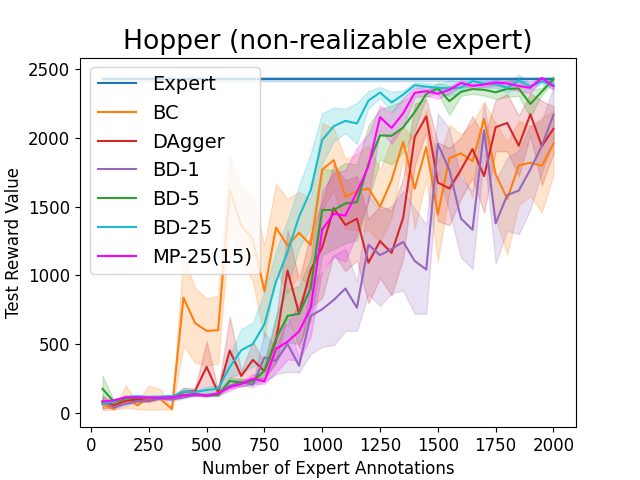}
  }
  \hspace{0.02\linewidth}
  \adjustbox{trim={0.00\width} {0.00\height} {0.08\width} {0.05\height},clip,width=0.31\linewidth, valign=t}{
    \includegraphics[width=1.0\linewidth]{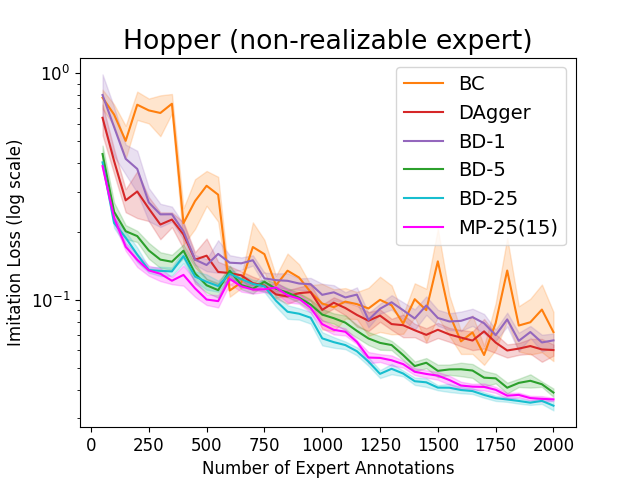}
  }
  \hspace{0.02\linewidth}
  \adjustbox{trim={0.00\width} {0.00\height} {0.08\width} {0.05\height},clip,width=0.31\linewidth, valign=t}{
    \includegraphics[width=1.0\linewidth]{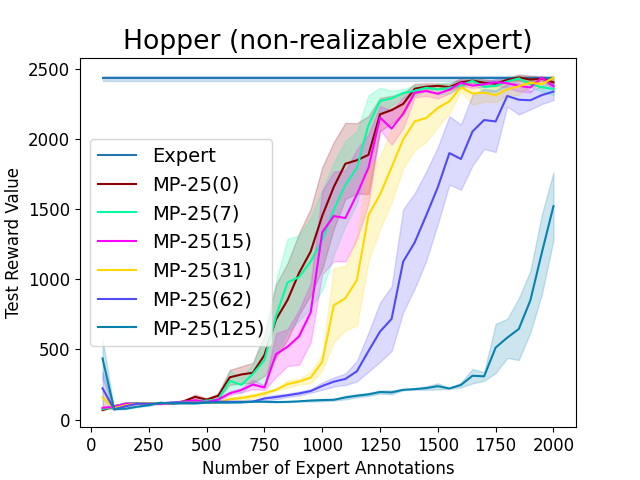}
  }

  \vspace{.1in}

  \adjustbox{trim={0.00\width} {0.00\height} {0.08\width} {0.05\height},clip,width=0.31\linewidth, valign=t}{
    \includegraphics[width=1.0\linewidth]{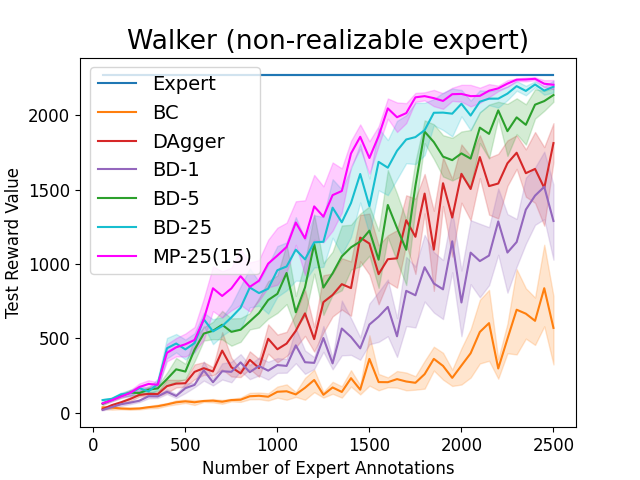}
  }
  \hspace{0.02\linewidth}
  \adjustbox{trim={0.00\width} {0.00\height} {0.08\width} {0.05\height},clip,width=0.31\linewidth, valign=t}{
    \includegraphics[width=1.0\linewidth]{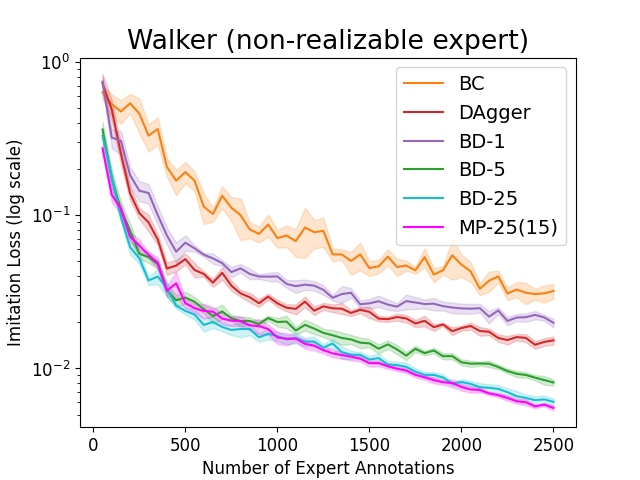}
  }
  \hspace{0.02\linewidth}
  \adjustbox{trim={0.00\width} {0.00\height} {0.08\width} {0.05\height},clip,width=0.31\linewidth, valign=t}{
    \includegraphics[width=1.0\linewidth]{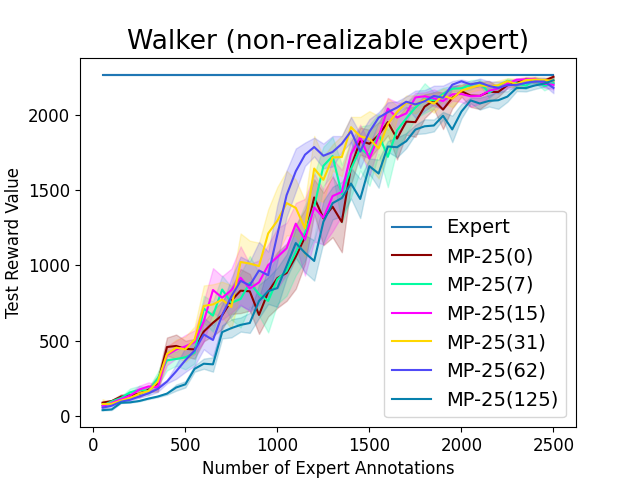}
  }

  \caption{Comparison between algorithms with non-realizable noisy expert.}
  \label{fig:nonrealizable_full}
\end{figure*}






\clearpage

\subsection{Full results from Section~\ref{sec:benefit_of_ensemble} and Data Visualization via t-SNE~\cite{van2008visualizing}.}
\label{sec: tsne}
In this section, we present all result plots from Section~\ref{sec:benefit_of_ensemble}, including those omitted due to space constraints, as shown in Figure~\ref{fig:ablation_full}.

\begin{figure}[h]
  \centering
  \includegraphics[width=1\linewidth]{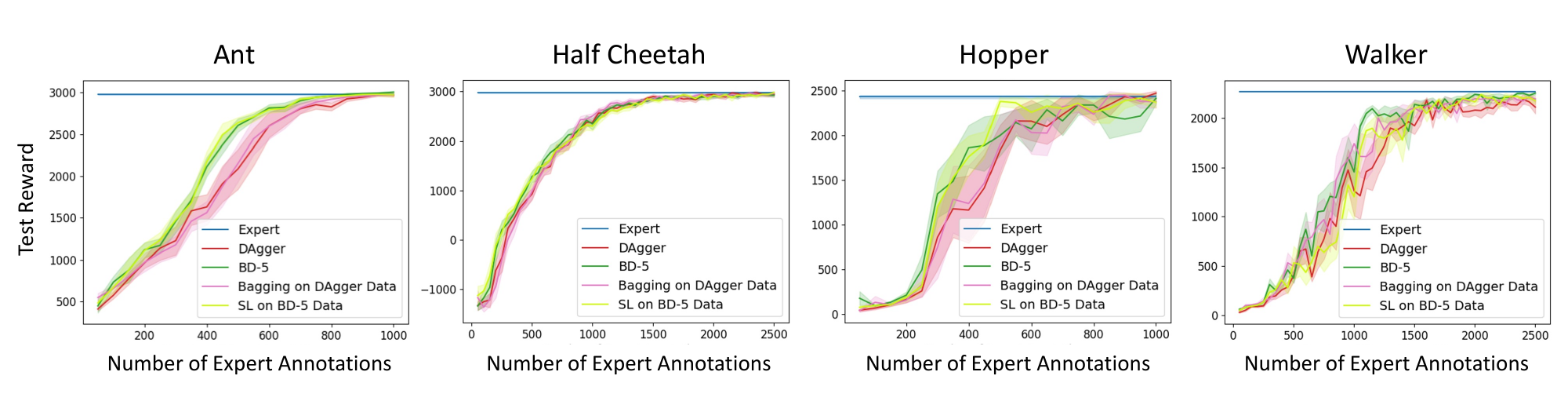}
  \caption{
  Full results on comparing \bdb and \dagge, along with the two additional approaches in Section~\ref{sec:benefit_of_ensemble}.
  }
  \label{fig:ablation_full}
\end{figure}

To further understand how the data quality collected by \bdb improves over \dagge, we visualize the states collected and queried by different algorithms in Section~\ref{sec:main_experiment} via t-SNE. As motivation, Figure~\ref{fig:expert-dagger-states} shows a comparison between states of offline expert demonstration and states queried by \dagge in Hopper. It can be seen that with the same expert annotation budget, \dagge collects a dataset that encompasses a broader support compared to the expert, while the policy trained over it achieves a higher average reward.

\begin{figure}[h]
  \vspace{.1in}
  \centering
\includegraphics[width=0.6\linewidth]{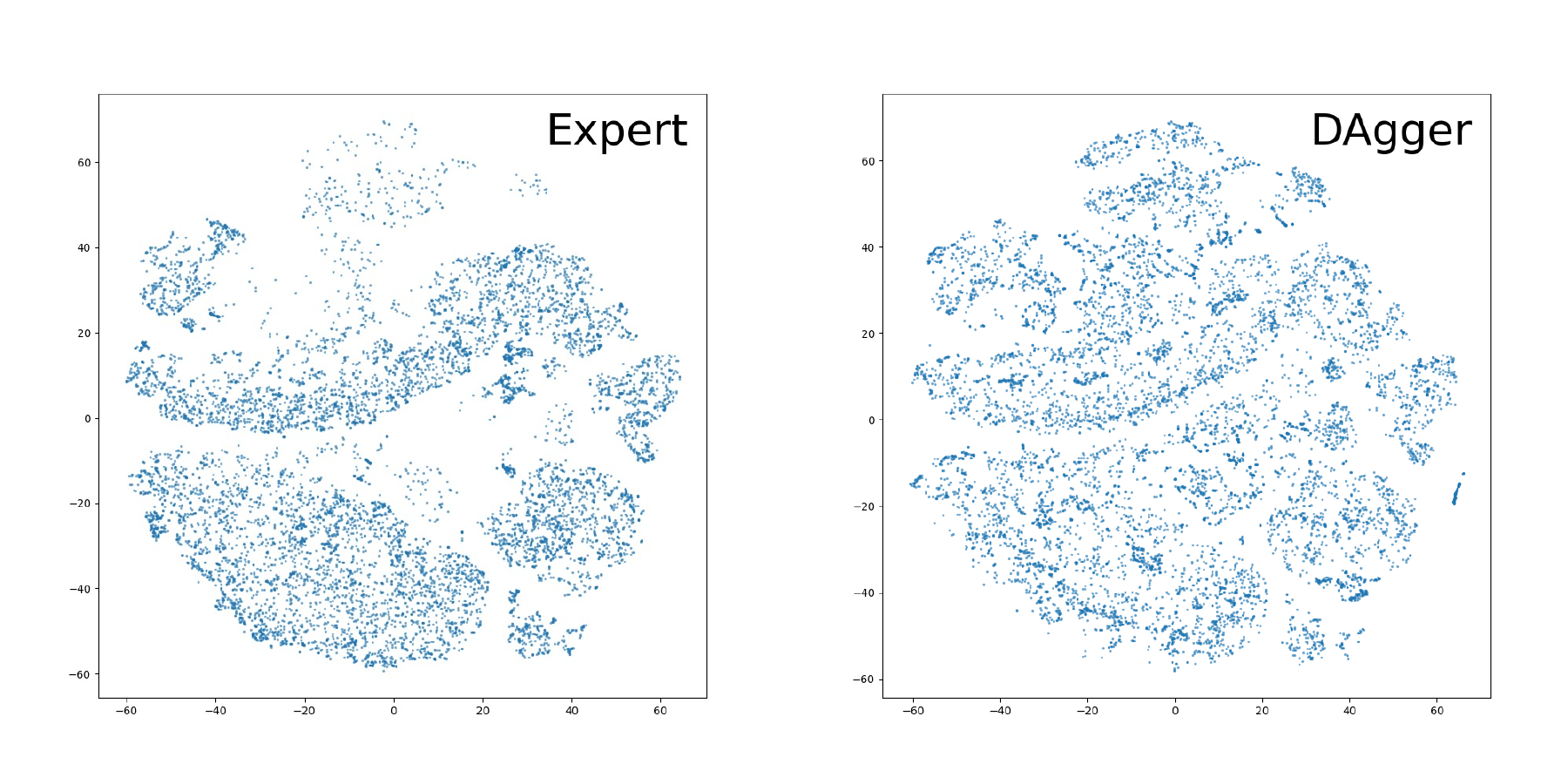}
  \caption{Two-dimensional t-SNE visualization of states collected by expert and \dagge in continuous control task Hopper, using the same mapping.
  It can be observed that the support of state distribution by \dagge contains regions (top and middle) that are not supported by the expert's state distribution. 
  Over 10 repeated trials, supervised learning over the datasets collected by expert and \dagge have average reward of 1320 and 2470, respectively.
  }
  
  \label{fig:expert-dagger-states}
\end{figure}

Figure~\ref{fig:ant_tsne},\ref{fig:half_cheetah_tsne},\ref{fig:hopper_tsne},\ref{fig:walker_tsne} showcase the t-SNE visualization of states obtained by different algorithms across four environments under the realizable expert setting, using the same mapping for the same environment.
State points are color-mapped from blue to red based on their arrival rounds. 
As presented in these figures, the observations reaffirm the findings of Section~\ref{sec:benefit_of_ensemble}.
For example, in the state visualization of Ant (Figure~\ref{fig:ant_tsne}), we notice similar state coverage among \dagge-style algorithms, which is distinct from the expert's distribution. This suggests that \bdb may not collect annotations over different state distributions than
\dagge. Meanwhile, the color of points within the zoomed-in area for \bdb, \bdc, and \MPF appears bluer than \dagge, indicating a more efficient exploration by ensembles 
in regions beyond the support of the expert's state distribution.
From these results, we can see that \bd actively explores the state space, swiftly adapting to and rectifying its errors, ensuring a more rapid and efficient learning process compared to \dagge.

\begin{figure*}[h]
  \centering

  \adjustbox{trim={0.05\width} {0.05\height} {0.05\width} {0.05\height},clip,width=0.40\linewidth, valign=t}{
    \includegraphics[width=\linewidth]{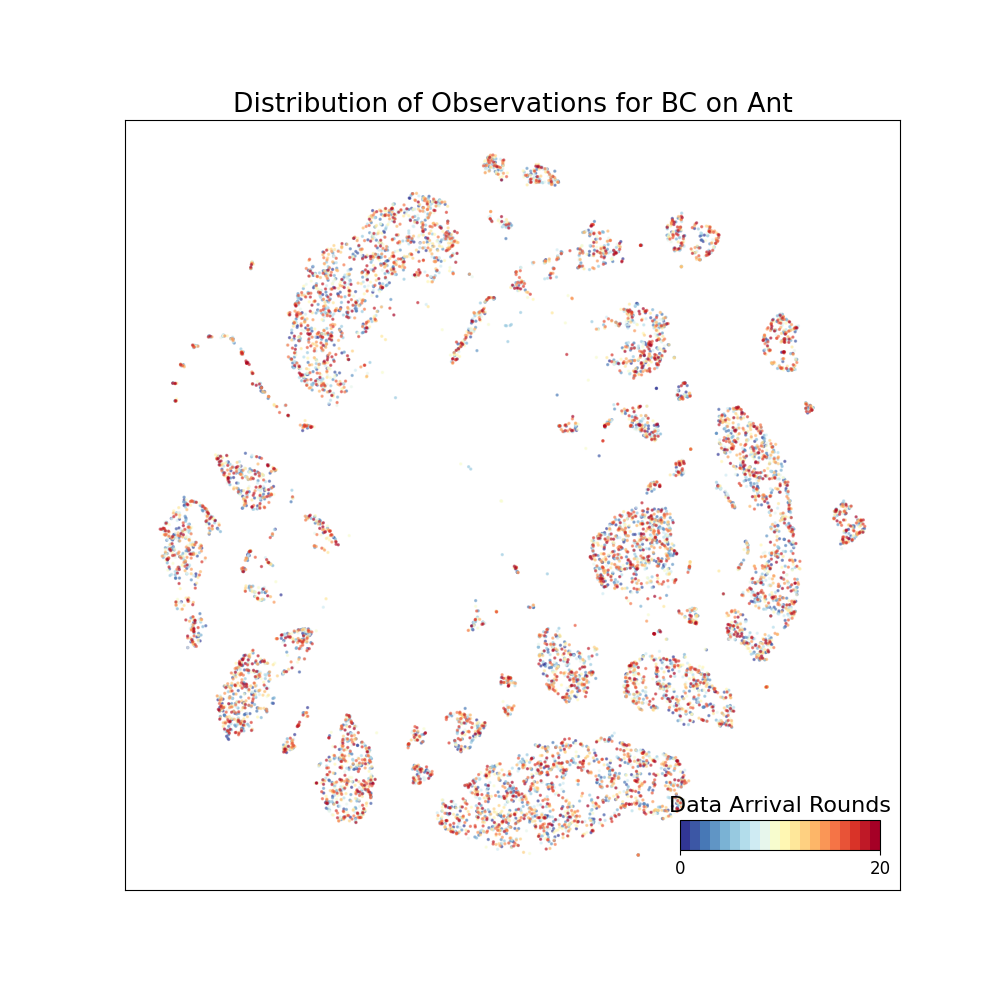}
  }
  \hspace{0.015\linewidth}
    \adjustbox{trim={0.05\width} {0.05\height} {0.05\width} {0.05\height},clip,width=0.40\linewidth, valign=t}{
    \includegraphics[width=\linewidth]{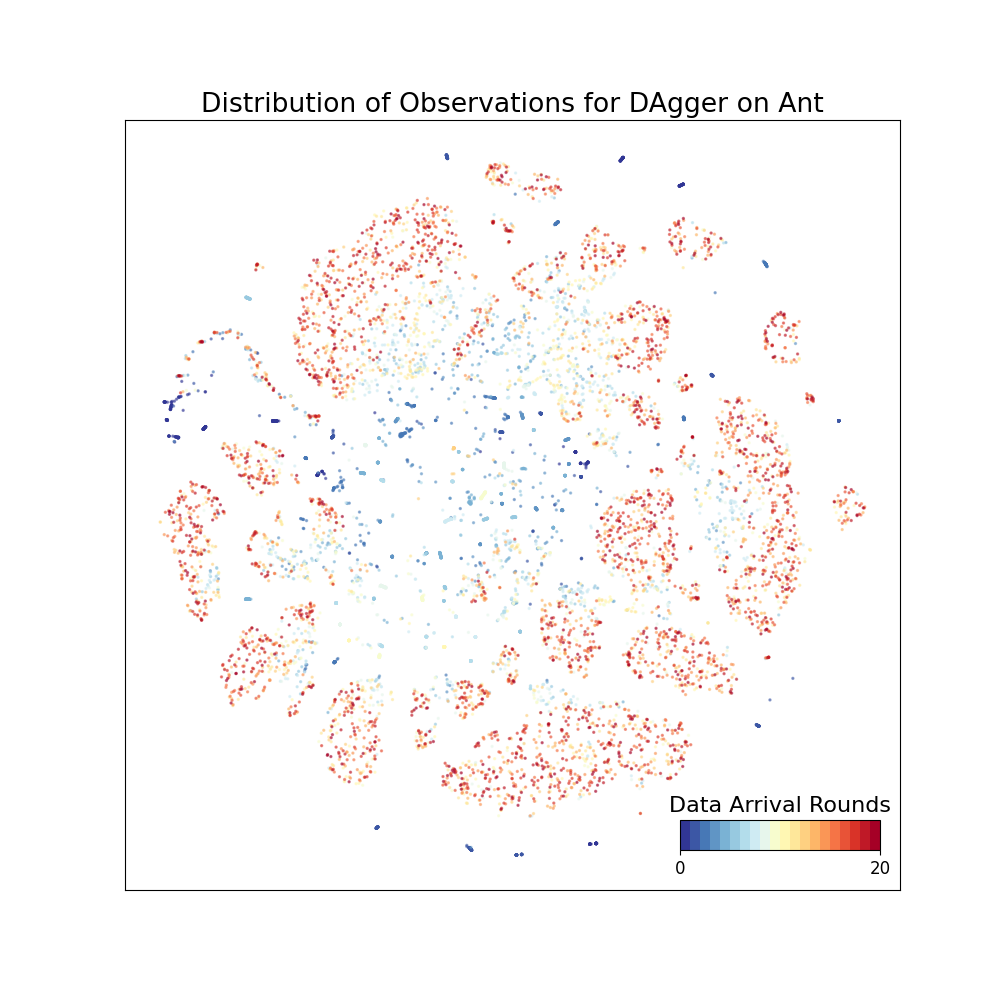}
  }
  \adjustbox{trim={0.05\width} {0.05\height} {0.05\width} {0.05\height},clip,width=0.40\linewidth, valign=t}{
    \includegraphics[width=\linewidth]{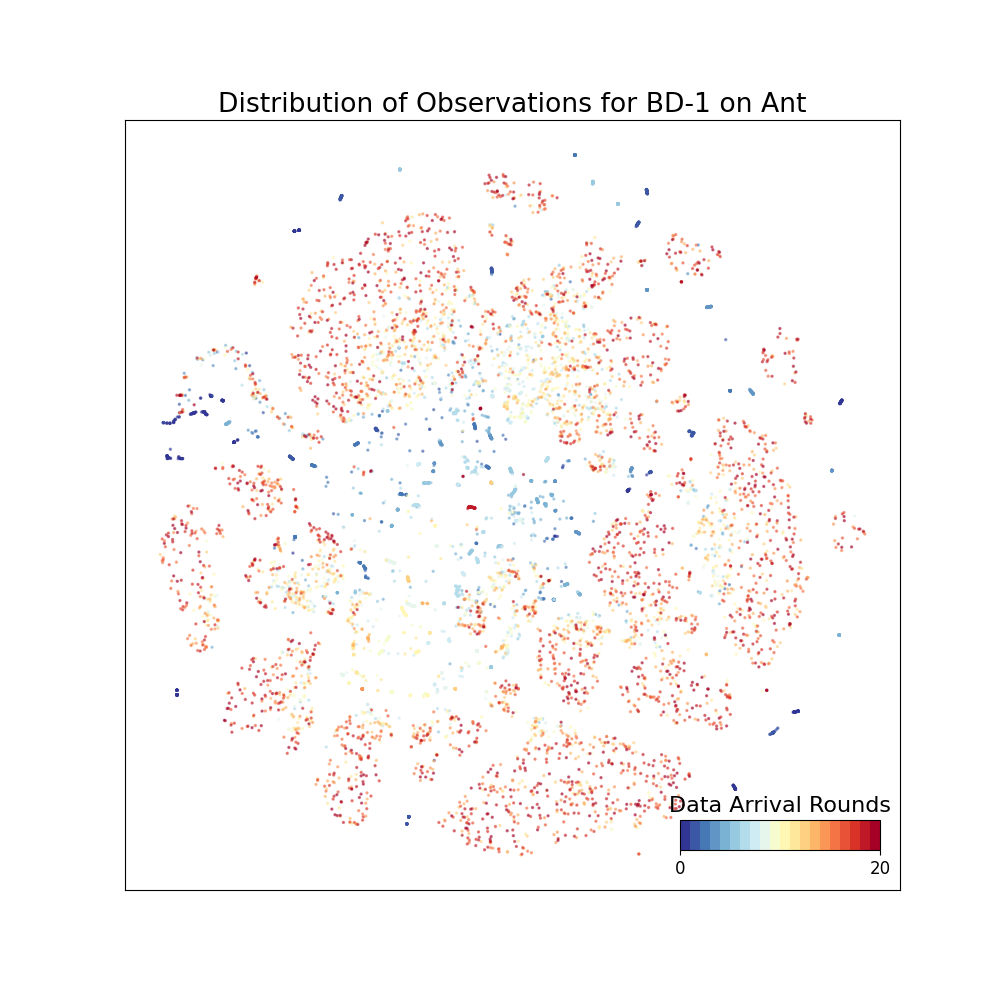}
  }
  \hspace{0.015\linewidth}
  \adjustbox{trim={0.05\width} {0.05\height} {0.05\width} {0.05\height},clip,width=0.40\linewidth, valign=t}{
    \includegraphics[width=\linewidth]{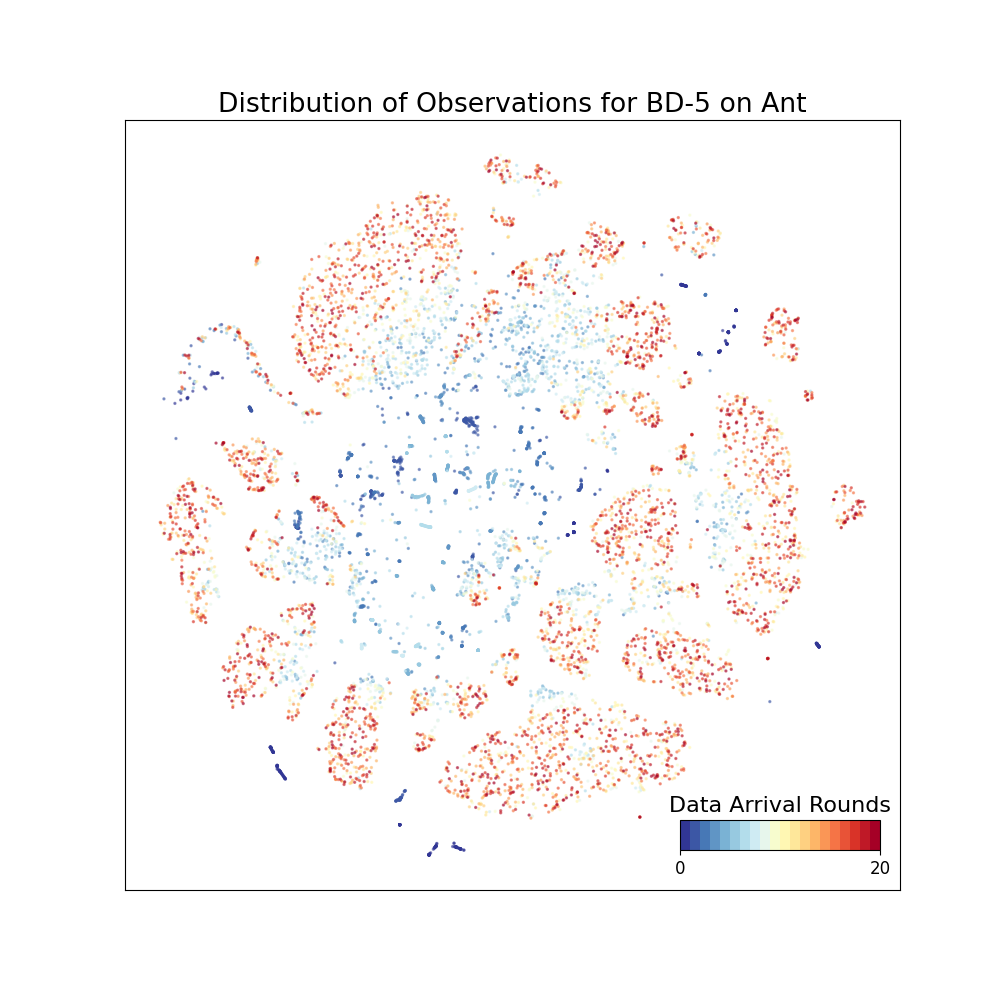}
  }


  \adjustbox{trim={0.05\width} {0.05\height} {0.05\width} {0.05\height},clip,width=0.40\linewidth, valign=t}{
    \includegraphics[width=\linewidth]{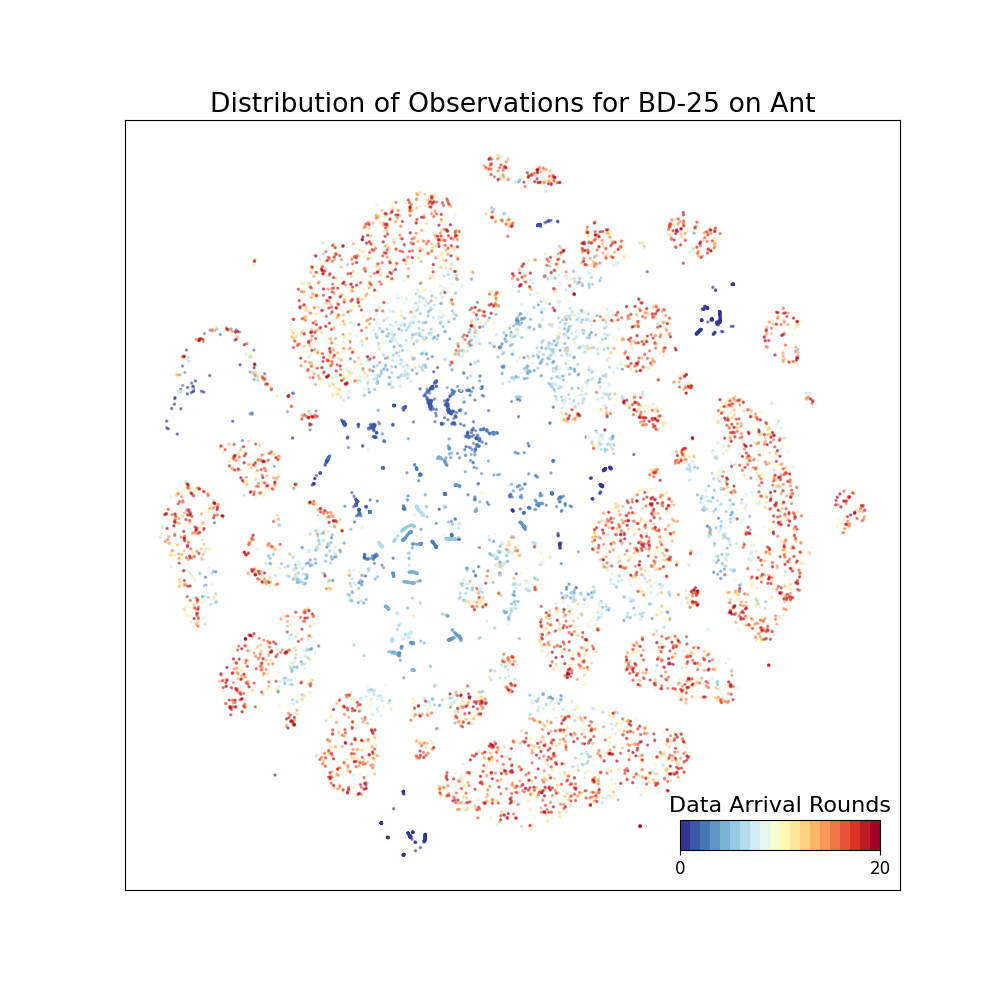}
  }
\hspace{0.015\linewidth}
\adjustbox{trim={0.05\width} {0.05\height} {0.05\width} {0.05\height},clip,width=0.40\linewidth, valign=t}{
    \includegraphics[width=\linewidth]{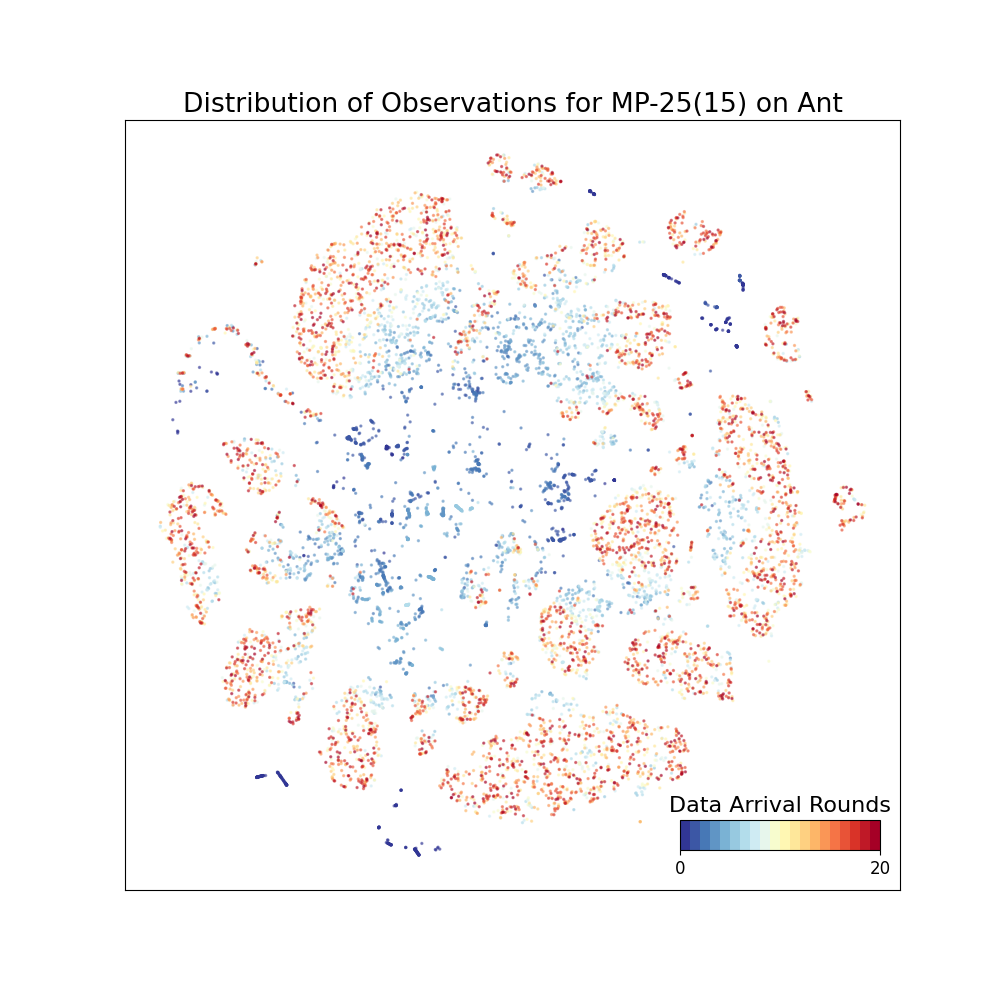}
  }
  \caption{Two-dimensional t-SNE visualizations of Ant environment ss collected by different algorithms.}
  \label{fig:ant_tsne}
\end{figure*}

\begin{figure*}[h]
  \centering

  \adjustbox{trim={0.05\width} {0.05\height} {0.05\width} {0.05\height},clip,width=0.40\linewidth, valign=t}{
    \includegraphics[width=\linewidth]{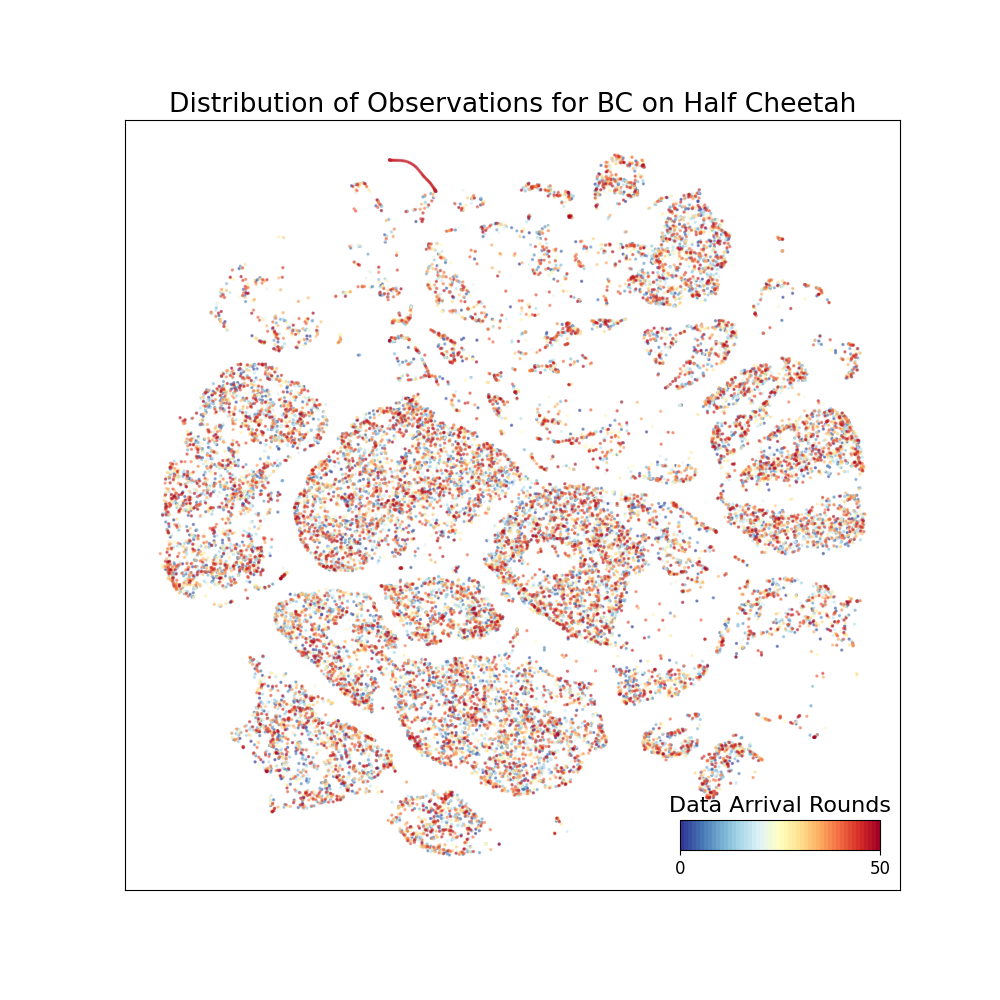}
  }
  \hspace{0.015\linewidth}
    \adjustbox{trim={0.05\width} {0.05\height} {0.05\width} {0.05\height},clip,width=0.40\linewidth, valign=t}{
    \includegraphics[width=\linewidth]{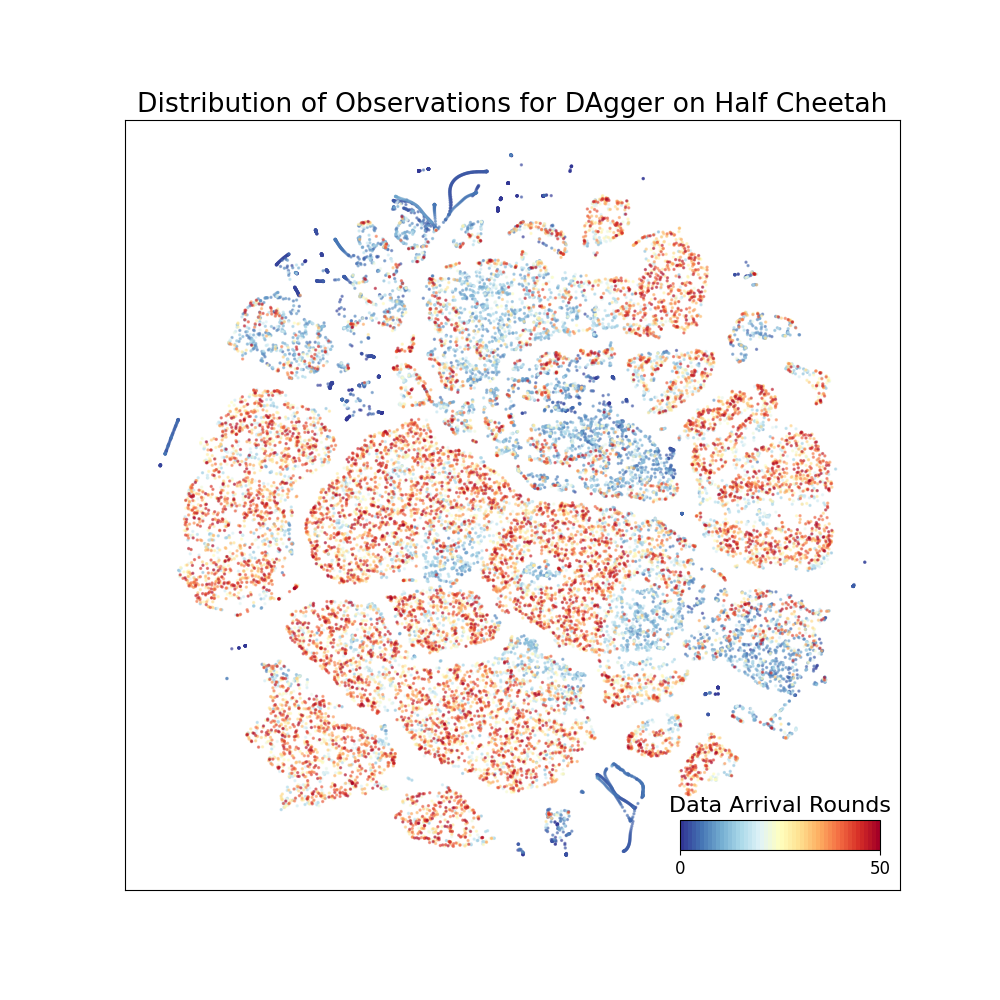}
  }

  \adjustbox{trim={0.05\width} {0.05\height} {0.05\width} {0.05\height},clip,width=0.40\linewidth, valign=t}{
    \includegraphics[width=\linewidth]{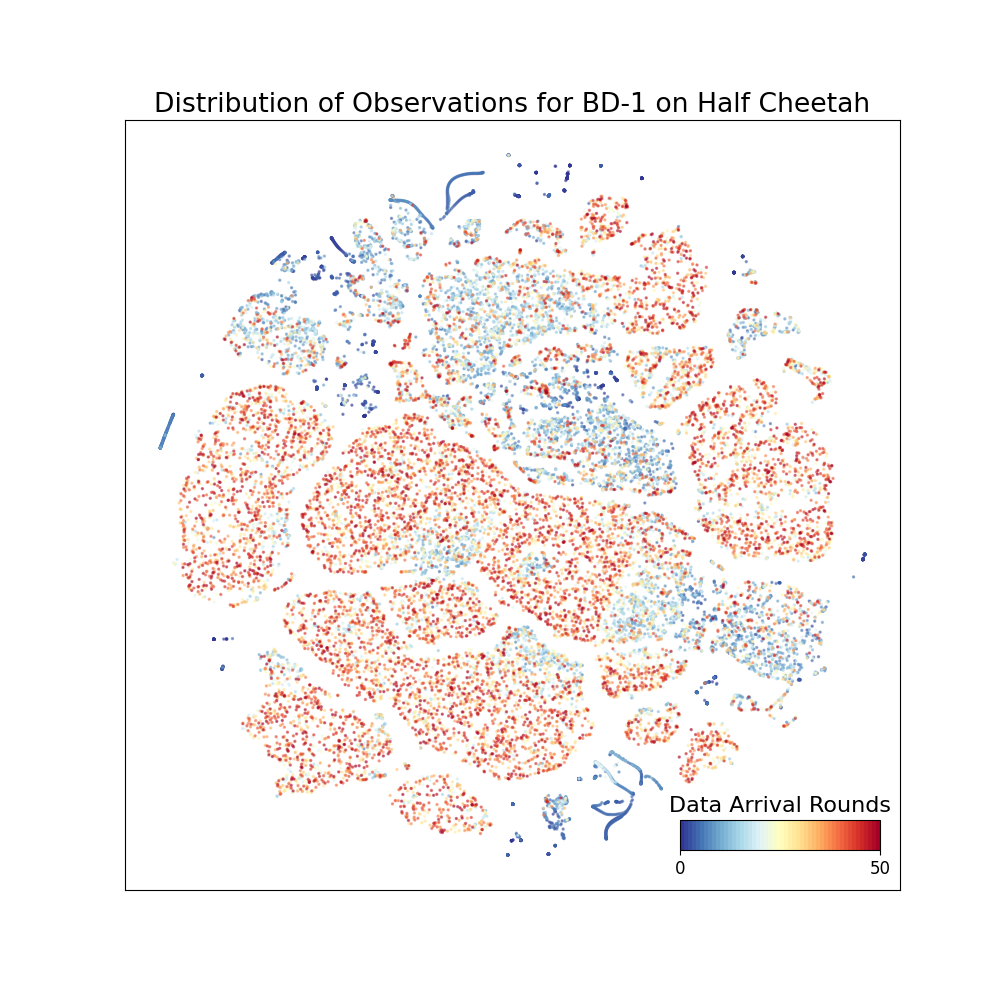}
  }
\hspace{0.015\linewidth}
\adjustbox{trim={0.05\width} {0.05\height} {0.05\width} {0.05\height},clip,width=0.40\linewidth, valign=t}{
    \includegraphics[width=\linewidth]{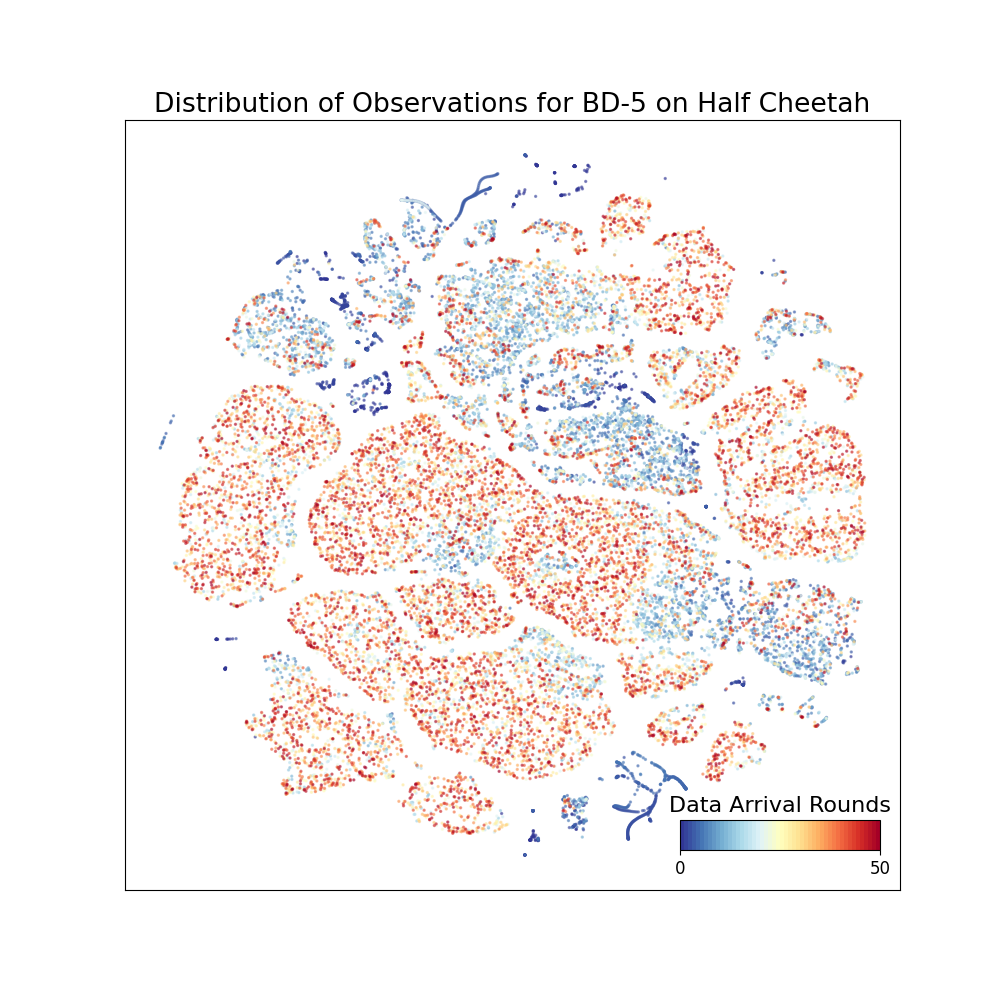}
  }

  
  \adjustbox{trim={0.05\width} {0.05\height} {0.05\width} {0.05\height},clip,width=0.40\linewidth, valign=t}{
    \includegraphics[width=\linewidth]{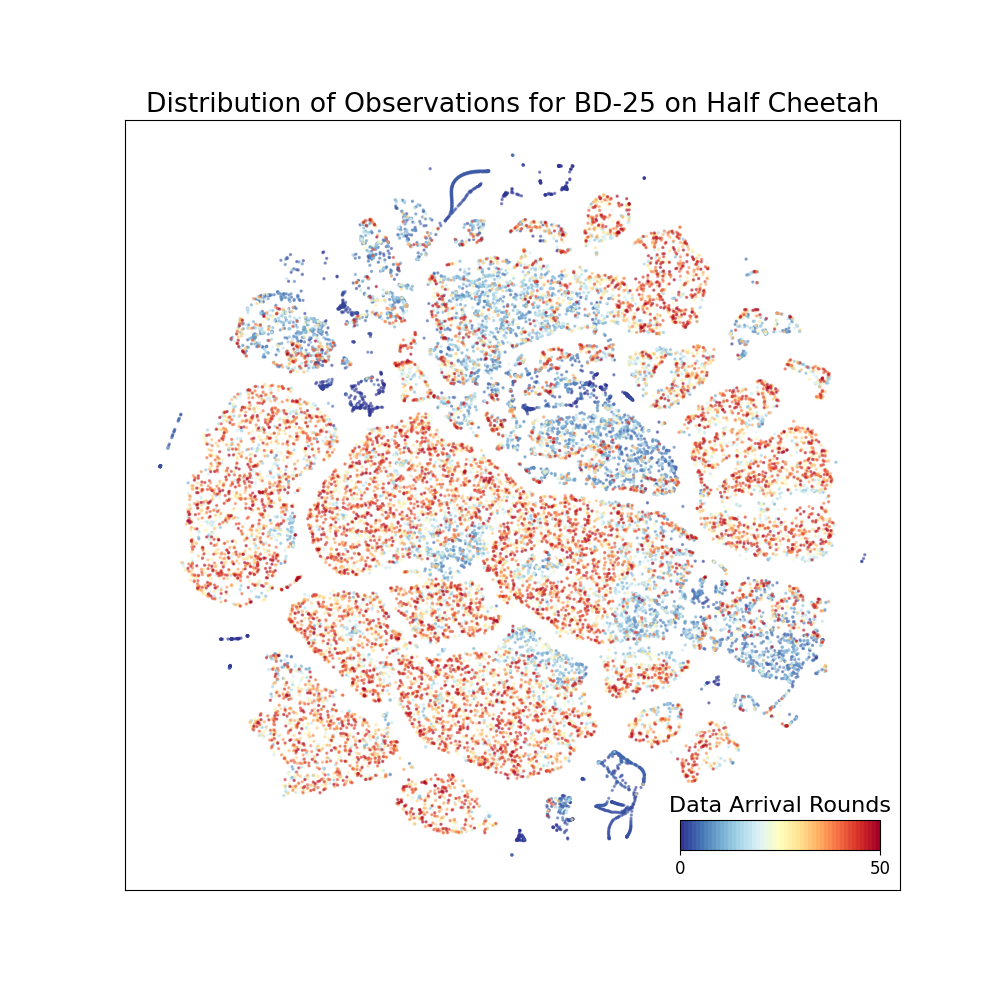}
  }
   \hspace{0.015\linewidth}
\adjustbox{trim={0.05\width} {0.05\height} {0.05\width} {0.05\height},clip,width=0.40\linewidth, valign=t}{
    \includegraphics[width=\linewidth]{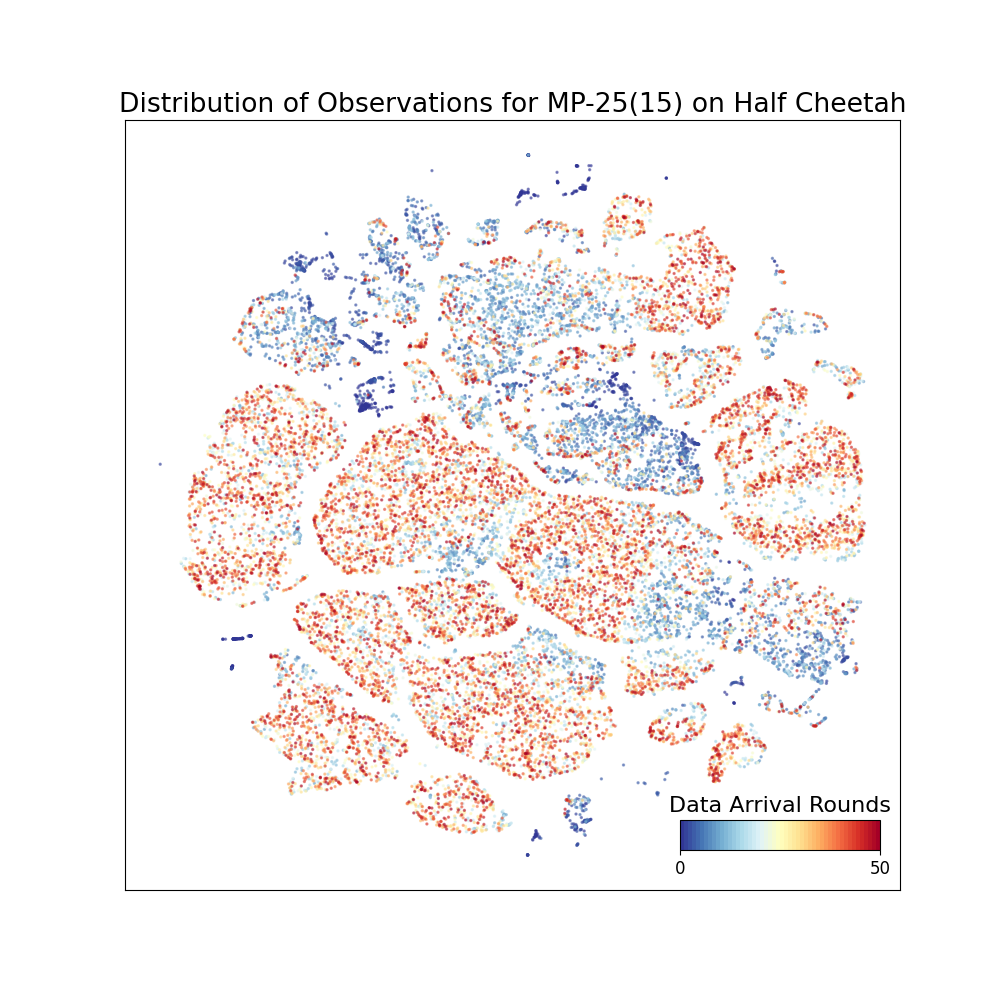}
  }
  \caption{Two-dimensional t-SNE visualizations of Half-Cheetah environment states collected by different algorithms.}
  \label{fig:half_cheetah_tsne}
\end{figure*}

\begin{figure*}[h]
  \centering

  \adjustbox{trim={0.05\width} {0.05\height} {0.05\width} {0.05\height},clip,width=0.40\linewidth, valign=t}{
    \includegraphics[width=\linewidth]{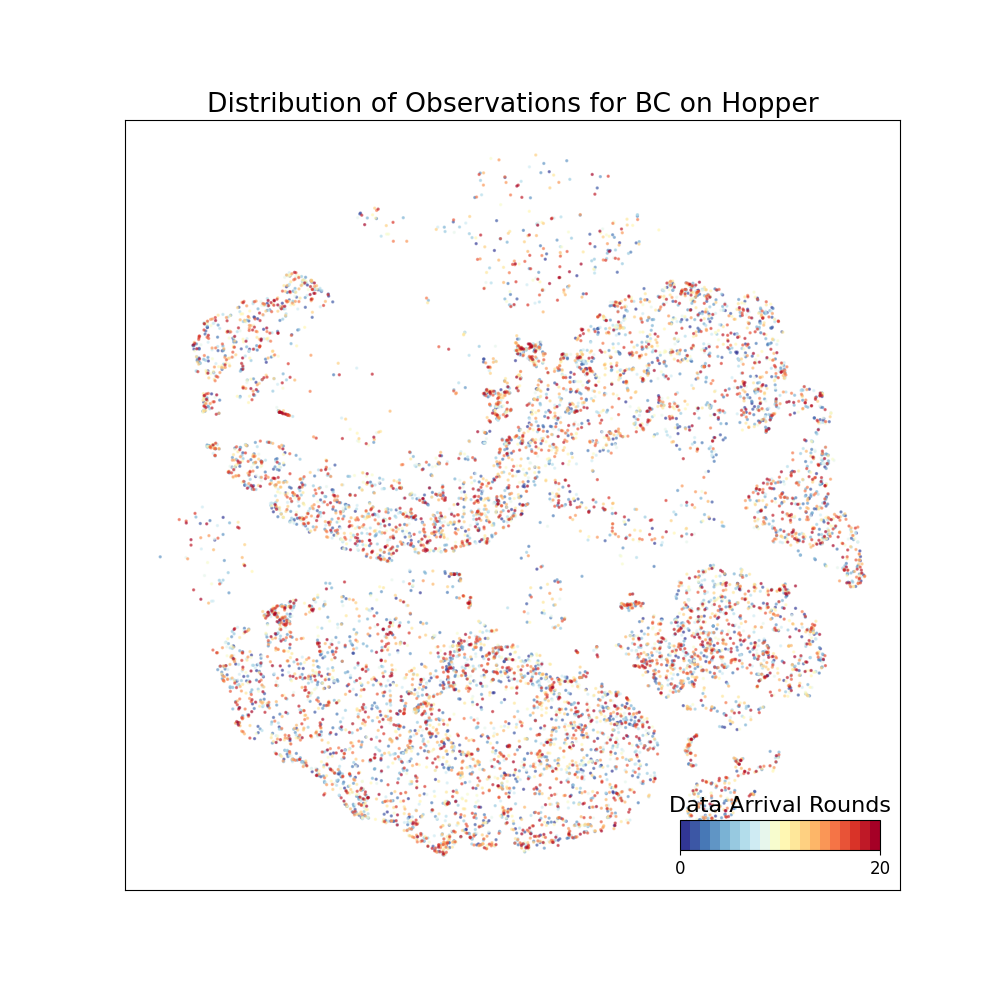}
  }
  \hspace{0.015\linewidth}
    \adjustbox{trim={0.05\width} {0.05\height} {0.05\width} {0.05\height},clip,width=0.40\linewidth, valign=t}{
    \includegraphics[width=\linewidth]{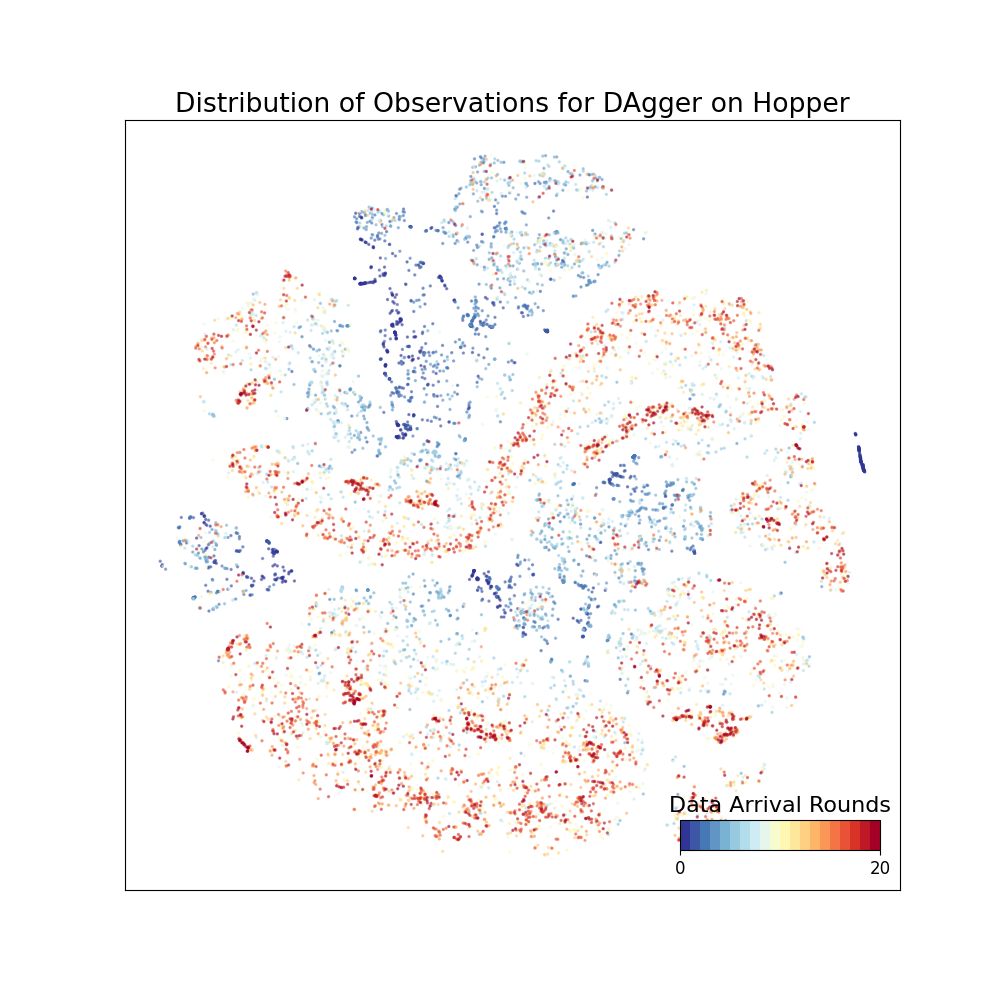}
  }
  
  \adjustbox{trim={0.05\width} {0.05\height} {0.05\width} {0.05\height},clip,width=0.40\linewidth, valign=t}{
    \includegraphics[width=\linewidth]{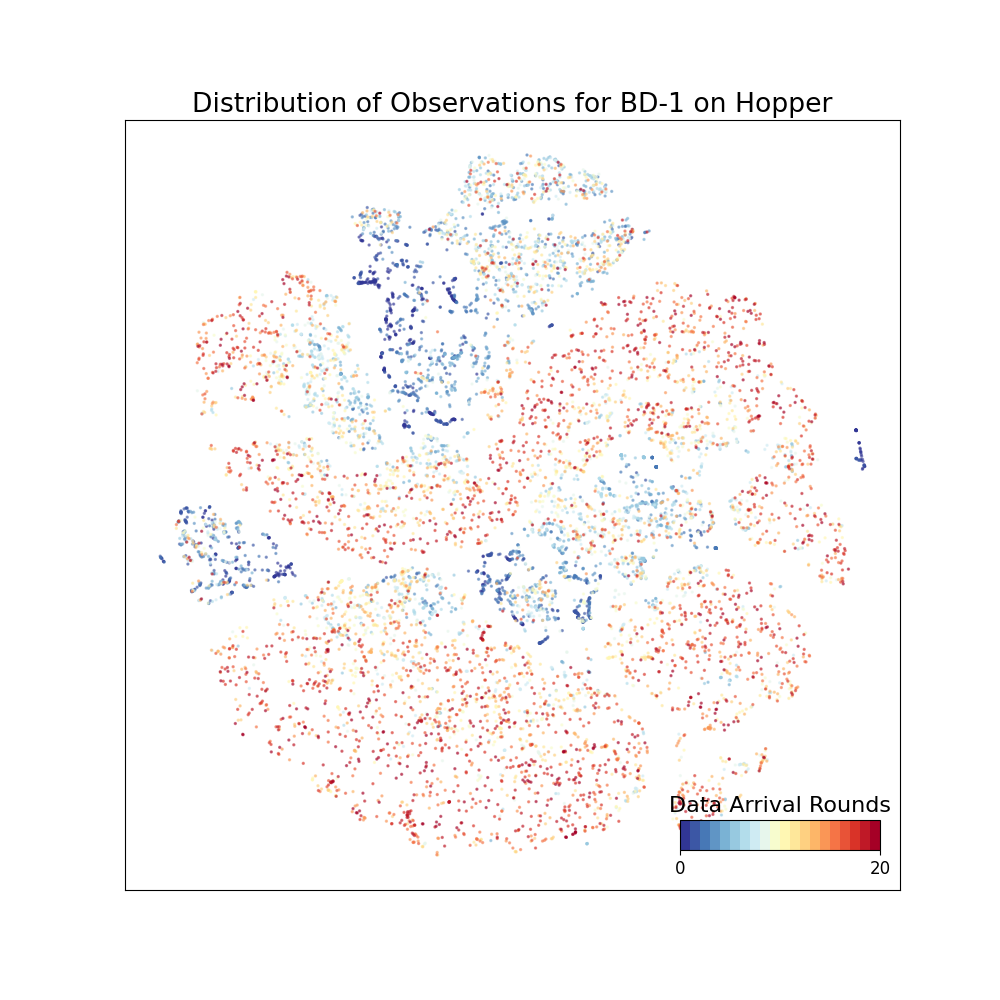}
  }
\hspace{0.015\linewidth}
\adjustbox{trim={0.05\width} {0.05\height} {0.05\width} {0.05\height},clip,width=0.40\linewidth, valign=t}{
    \includegraphics[width=\linewidth]{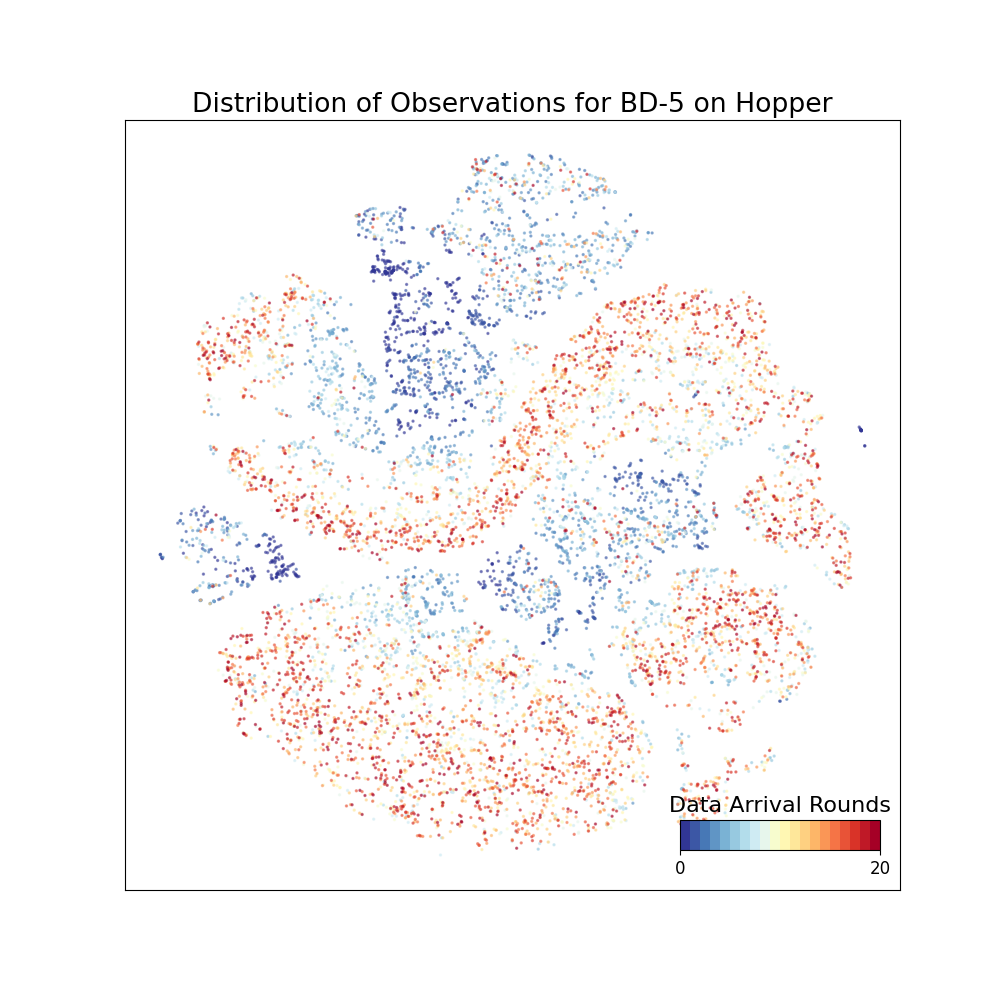}
  }

  
  \adjustbox{trim={0.05\width} {0.05\height} {0.05\width} {0.05\height},clip,width=0.40\linewidth, valign=t}{
    \includegraphics[width=\linewidth]{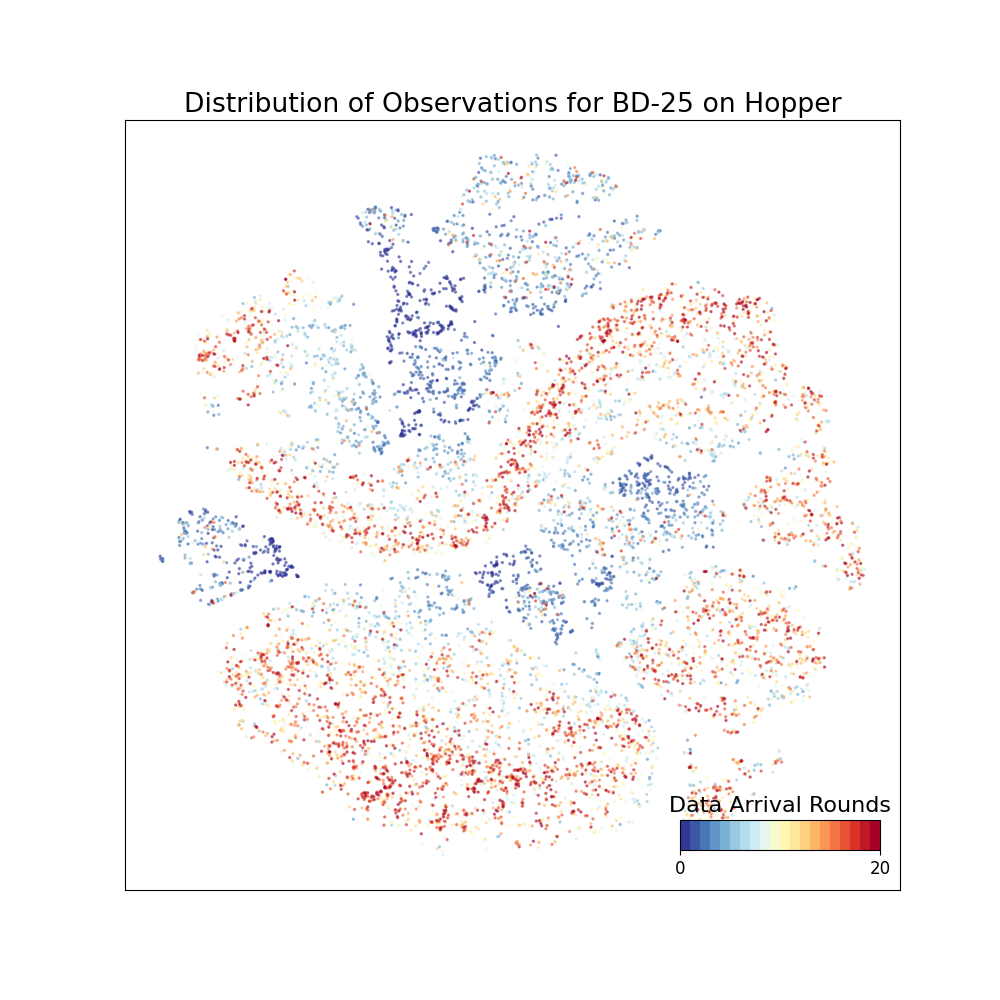}
  }
\hspace{0.015\linewidth}
\adjustbox{trim={0.05\width} {0.05\height} {0.05\width} {0.05\height},clip,width=0.40\linewidth, valign=t}{
    \includegraphics[width=\linewidth]{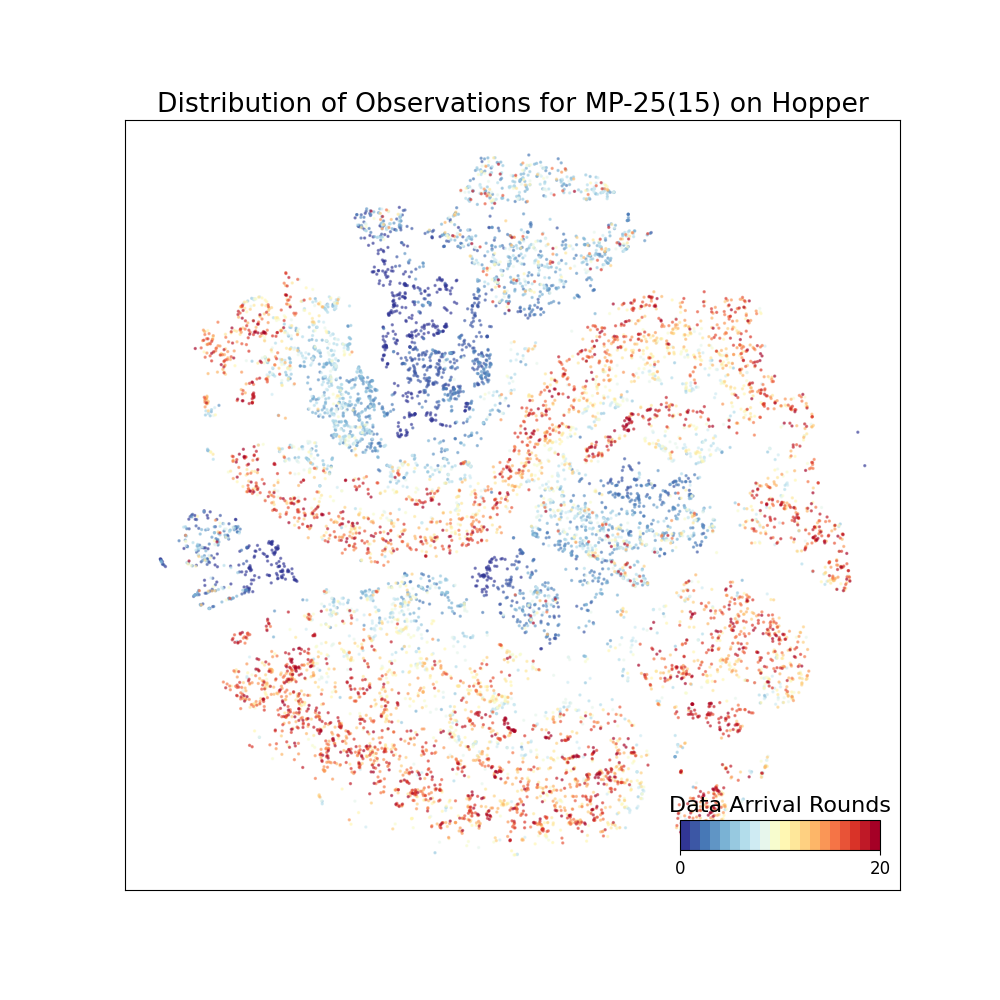}
  }

  \caption{Two-dimensional t-SNE visualizations of Hopper environment states collected by different algorithms.}
  \label{fig:hopper_tsne}
\end{figure*}

\begin{figure*}[h]
  \centering

  \adjustbox{trim={0.05\width} {0.05\height} {0.05\width} {0.05\height},clip,width=0.40\linewidth, valign=t}{
    \includegraphics[width=\linewidth]{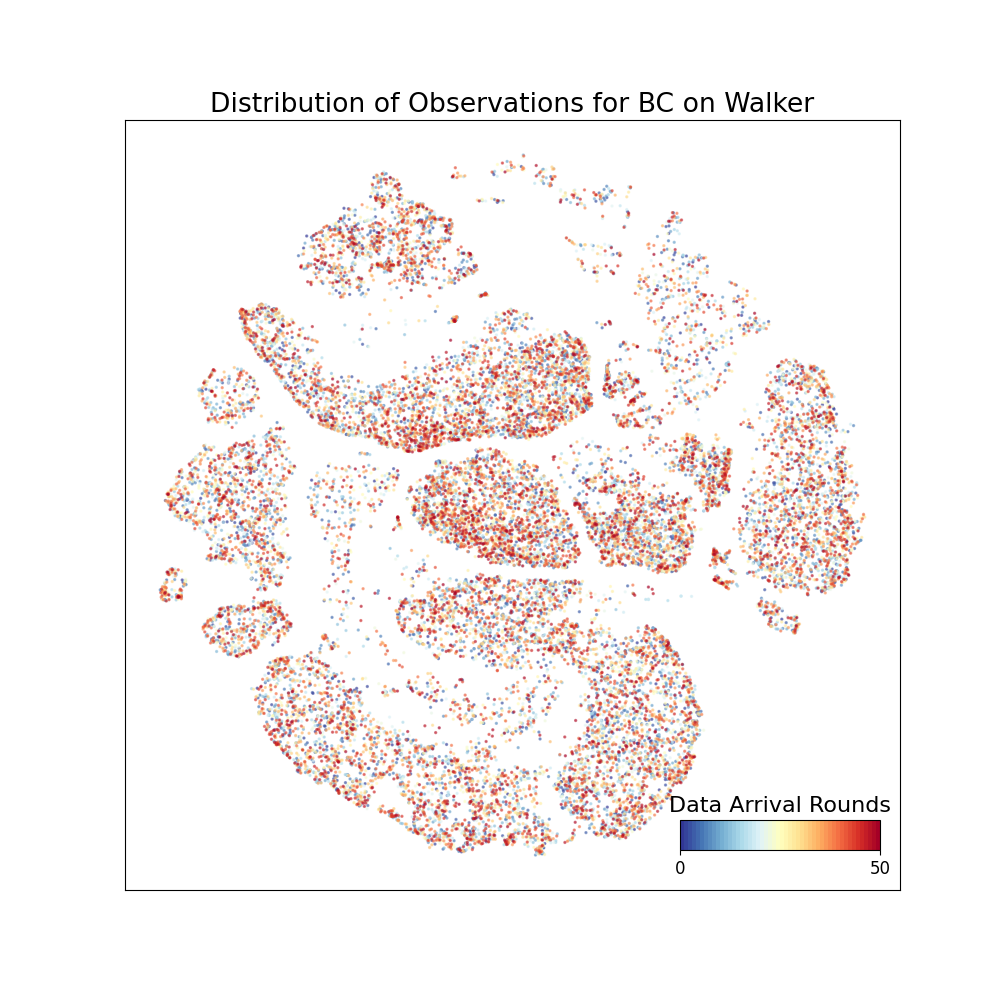}
  }
  \hspace{0.015\linewidth}
  \adjustbox{trim={0.05\width} {0.05\height} {0.05\width} {0.05\height},clip,width=0.40\linewidth, valign=t}{
    \includegraphics[width=\linewidth]{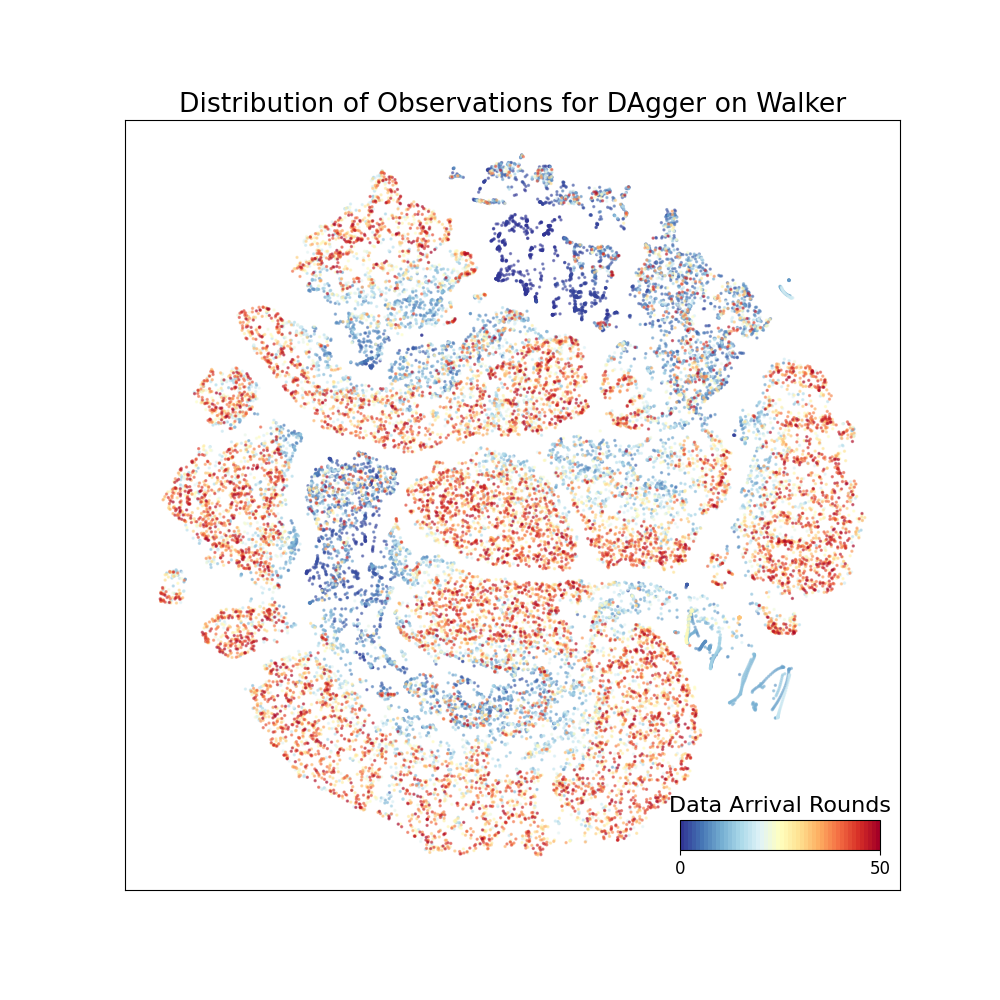}
  }  

  \adjustbox{trim={0.05\width} {0.05\height} {0.05\width} {0.05\height},clip,width=0.40\linewidth, valign=t}{
    \includegraphics[width=\linewidth]{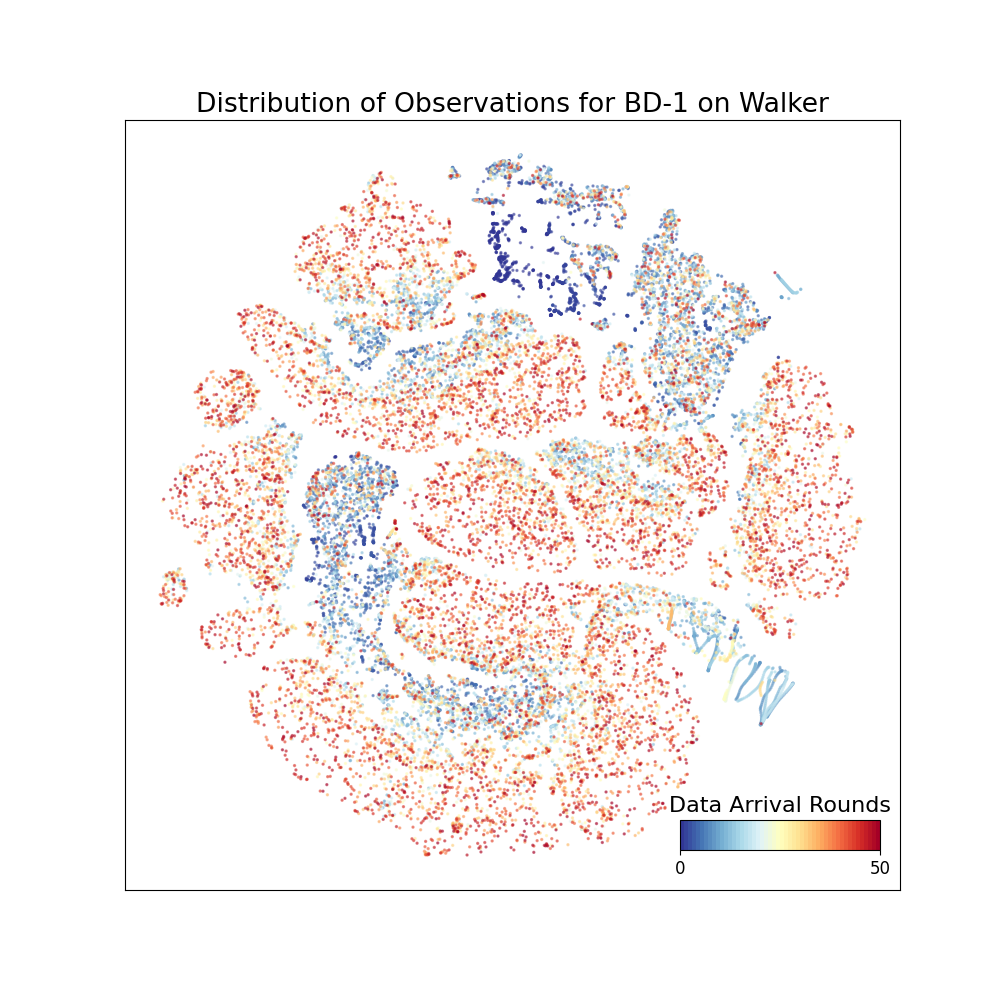}
  }
  \hspace{0.015\linewidth}
  \adjustbox{trim={0.05\width} {0.05\height} {0.05\width} {0.05\height},clip,width=0.40\linewidth, valign=t}{
    \includegraphics[width=\linewidth]{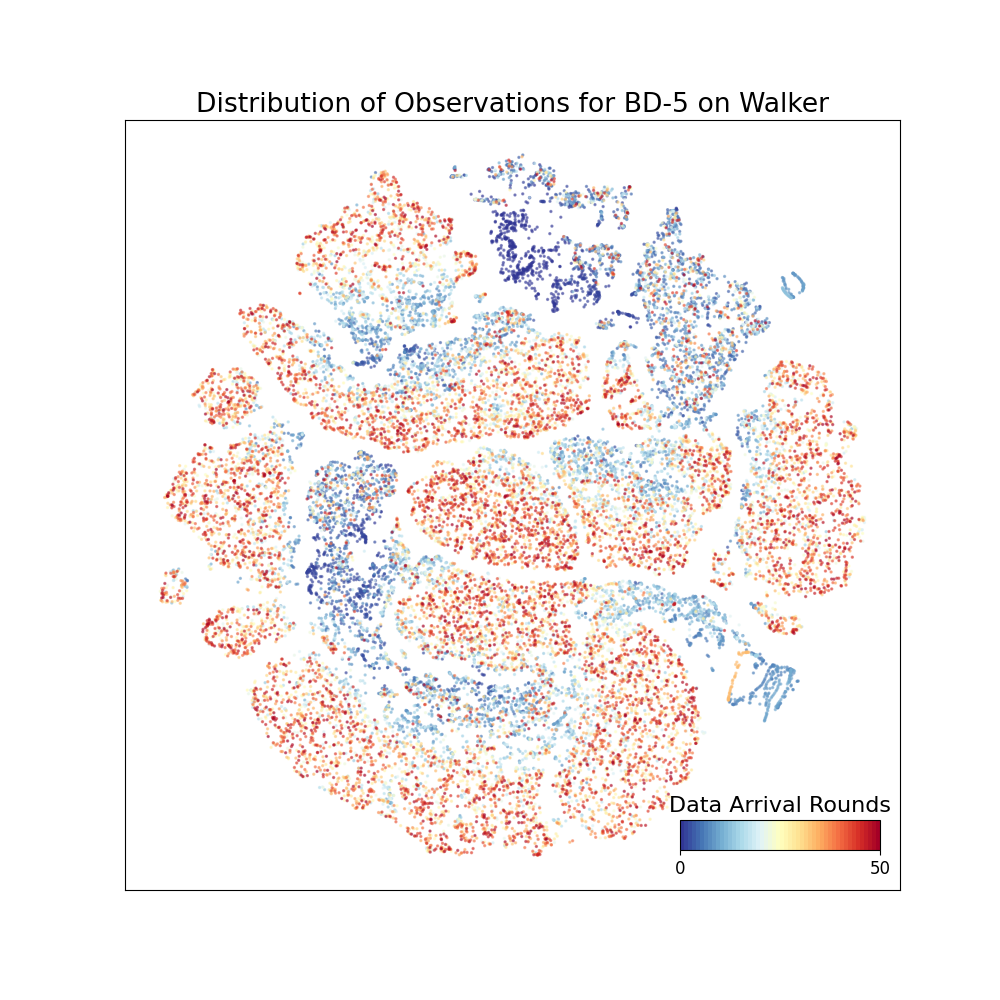}
  }
  \vspace{.01in}

  
  \hspace{0.015\linewidth}
  \adjustbox{trim={0.05\width} {0.05\height} {0.05\width} {0.05\height},clip,width=0.40\linewidth, valign=t}{
    \includegraphics[width=\linewidth]{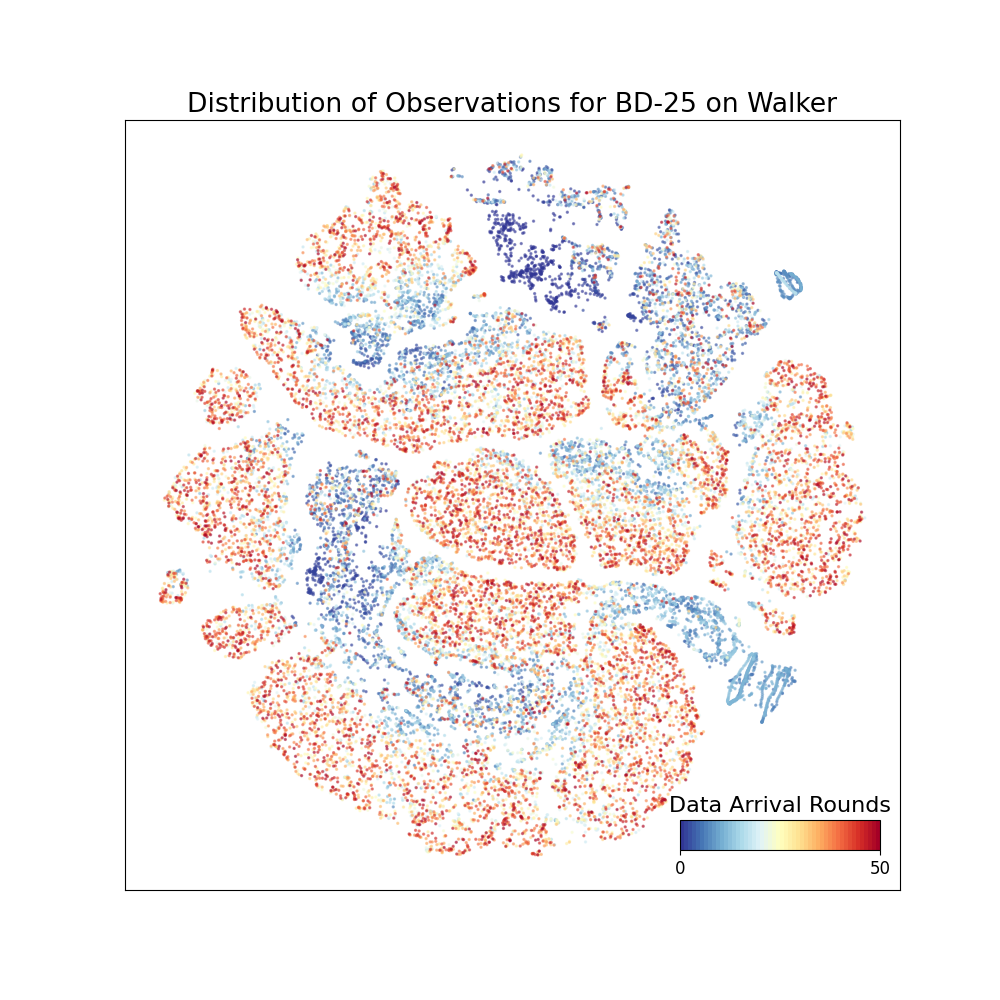}
  }
  \hspace{0.015\linewidth}
  \adjustbox{trim={0.05\width} {0.05\height} {0.05\width} {0.05\height},clip,width=0.40\linewidth, valign=t}{
    \includegraphics[width=\linewidth]{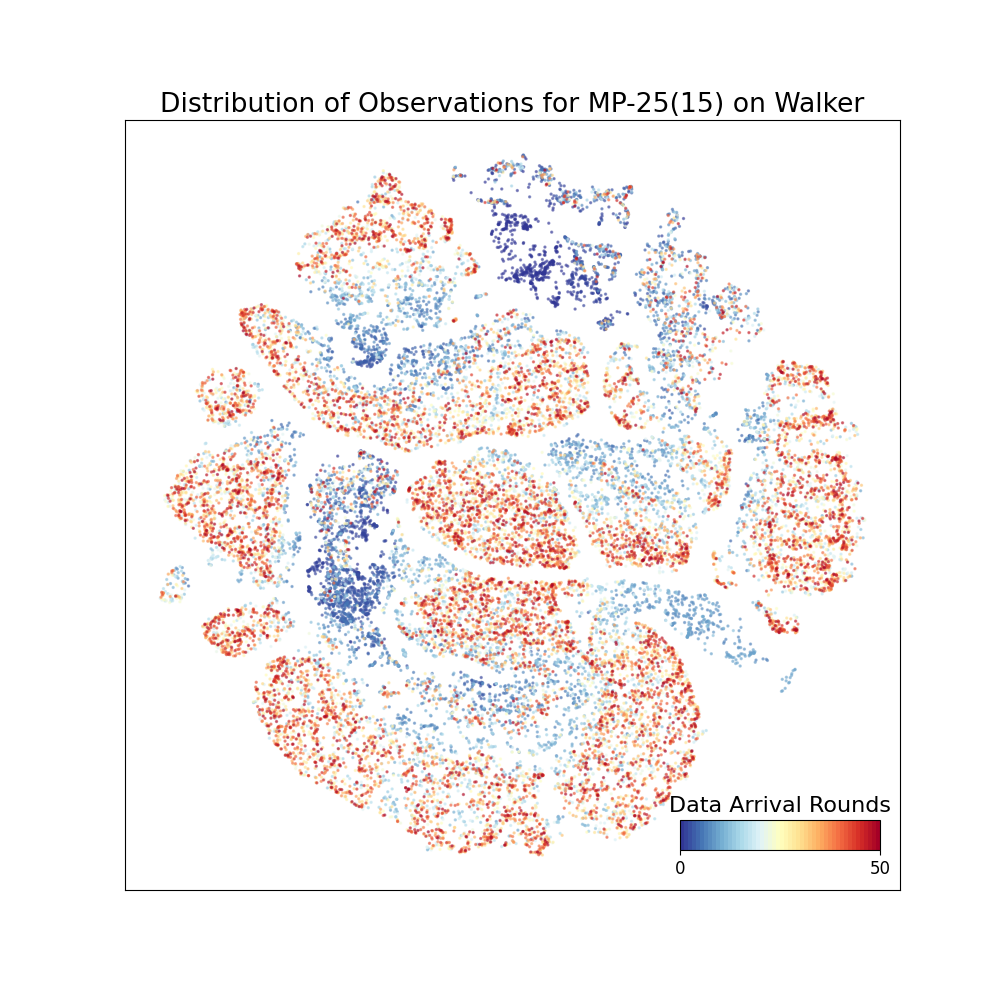}
  }
  \caption{Two-dimensional t-SNE visualizations of Walker environment states collected by different algorithms.}
  \label{fig:walker_tsne}
\end{figure*}


\end{document}